\newcommand*\tcircle[1]{%
  \raisebox{-0.5pt}{%
    \textcircled{\fontsize{9pt}{0}\fontfamily{phv}\selectfont #1}%
  }%
}
\begin{document}



\title{\huge Transformer Learns Optimal Variable Selection in Group-Sparse Classification}

\author
{
Chenyang Zhang\thanks{\scriptsize Department of Statistics \& Actuarial Science, School of Computing \& Data Science, The University of Hong Kong; {\tt chyzhang@connect.hku.hk}}
\qquad 
Xuran Meng\thanks{\scriptsize Department of Biostatistics, University of Michigan, Ann Arbor;
 {\tt xuranm@umich.edu}}
\qquad
Yuan Cao\thanks{\scriptsize Department of Statistics \& Actuarial Science, School of Computing \& Data Science, 
	The University of Hong Kong;  {\tt yuancao@hku.hk}}
}

\date{}

\maketitle

\begin{abstract}
Transformers have demonstrated remarkable success across various applications. However, the success of transformers have not been understood in theory. In this work, we give a case study of how transformers can be trained to learn a classic statistical model with ``group sparsity'', where the input variables form multiple groups, and the label only depends on the variables from one of the groups. We theoretically demonstrate that, a one-layer transformer trained by gradient descent can correctly leverage the attention mechanism to select variables, disregarding irrelevant ones and focusing on those beneficial for classification. We also demonstrate that a well-pretrained one-layer transformer can be adapted to new downstream tasks to achieve good prediction accuracy with a limited number of samples. Our study sheds light on how transformers effectively learn structured data.



\end{abstract}

\section{Introduction}
The Transformer architecture~\citep{vaswani2017attention} has emerged as one of the most popular models in the modern deep learning, demonstrating remarkable success across a wide range of real-world applications, including language processing and language modeling~\citep{vaswani2017attention, radford2019language, openai2023gpt}, computer vision~\citep{dosovitskiy2020image,rao2021dynamicvit}, and reinforcement learning~\citep{jumper2021highly, chen2021decision, janner2021offline}. Despite its empirical achievements, the underlying mechanisms of transformers remain poorly understood due to their complex architecture, especially the mechanism of the self-attention layers.

 As the core of the Transformer architecture, the attention mechanism has consistently been the primary focus of research aimed at understanding how Transformers work. Some empirical research~\citep{xu2019self, vig2019analyzing,rao2021dynamicvit, yao2021self, chen2022structure} demonstrated that the attention layer can effectively extract structure information by assigning different weights to different input tokens. To fully understand such property theoretically, researchers explore the expressive power of transformers across various aspects~\citep{edelman2022inductive, bai2024transformers}. \citet{edelman2022inductive} study the capacity of single-head attention to approximate sparse function by presenting the covering-number of attention function class. \citet{bai2024transformers} demonstrate that the transformers can implement a broad class of standard machine learning algorithms in context with various data distributions. 
 
 In addition to examining the expressive power of Transformers with custom-designed parameters, some recent studies have focused on analyzing the training dynamics of Transformers to determine whether these favorable properties can be attained using popular optimization algorithms in the deep learning community. \citet{zhang2024trained} provide a global convergence analysis of gradient flow on the linear regression in-context tasks, where they consider a linear layer for attention calculation. \citet{huangcontext, siyu2024training} extend this result to one-layer transformers with a softmax attention layer. \citet{wangtransformers, chen2024transformers} study the capacities of transformers to learn sparse linear regression tasks. Specifically, by exploring the optimization trajectory, \citet{wangtransformers} show that one-layer transformers can effectively extract the positional information, and provide a rigorous convergence analysis. At the same time, fully-connected neural networks fail in the worst case. \citet{chen2024transformers} study the mechanisms of multi-head attention, revealing that multi-head attention will exhibit a sparsity in multiple-layer transformers. All the preceding works focus on the regression tasks, and for the classification task, \citet{jelassi2022vision} theoretically demonstrate that the one-layer vision transformer trained by gradient descent can converge on spatially structured data and the attention map exhibit patch association. \citet{ tarzanagh2023transformers, ataee2023max} prove that one-layer attention can converge in direction to the hard margin solution of SVM. \citet{litheoretical} provide a generalization error bound for vision transformers trained by stochastic gradient descent. Although these papers provide some insights into mechanisms of one-layer attention for classification tasks, their theoretical results rely on some specific initialization which is practically infeasible, or strong assumptions that input data follows some particular patterns or certain parameters of attention architecture remain fixed throughout the training.
 
 To further explore how transformers extract structure information in classification tasks, we consider a classic statistical problem with ``group sparsity''.  In this setting, input variables are generated from multiple groups, while the true label of this input is determined by variables from a single group. We investigate the properties of one-layer transformers trained by gradient descent on this data model.
 The major contributions of this paper can be summarized as follows:
\begin{enumerate}[leftmargin = *]
    \item We establish a tight global convergence analysis with a matching lower and upper bound for the population cross-entropy loss of a one-layer transformer trained by gradient descent (Theorem~\ref{thm:main_result}). All parameters in the one-layer transformer are jointly trained with the same learning rate from zero initialization, without imposing any prior knowledge of the desirable patterns of the parameters.
    Specifically, by precisely characterizing the global optimization trajectories of all trainable parameters, we decrypt the working mechanism of each component of the one-layer transformer for learning the ``group sparse'' data model.
    Our results theoretically verify that transformers can learn the optimal variable selections. 
    \item We demonstrate that the well pre-trained one-layer transformers on ``group-sparse" inputs can be efficiently transferred to a downstream task sharing the same group sparsity pattern.
     Specifically, denote the number of variable groups as $D$, the dimension of variables in each group as $d$, and the downstream sample size as $n$. 
     Then we show that the one-layer transformers fine-tuned by online-SGD on the downstream task can achieve $\tilde O\big(\frac{d+D}{n}\big)$ generalization error bound (Theorem~\ref{thm:down_stream}).
    \item We conduct numerical experiments, empirically show that training loss will converge, and verify our conclusions regarding the optimization trajectories of trainable parameters. Specifically, the sparsity of the attention score matrix empirically demonstrates that one-layer transformers can effectively learn the optimal variable selection. Additionally, we transfer the pre-trained one-layer transformers to downstream tasks, and empirically show that it can achieve a good generalization performance with a small sample size. All these empirical observations back up our theoretical findings. 
    
\end{enumerate}

\noindent\textbf{Notation.} Given two sequences $\{x_n\}$ and $\{y_n\}$, we denote $x_n = O(y_n)$ if there exist some absolute constant $C_1 > 0$ and $N > 0$ such that $|x_n|\le C_1 |y_n|$ for all $n \geq N$. Similarly, we denote $x_n = \Omega(y_n)$ if there exist $C_2 >0$ and $N > 0$ such that $|x_n|\ge C_2 |y_n|$ for all $n > N$. We say $x_n = \Theta(y_n)$ if $x_n = O(y_n)$ and $x_n = \Omega(y_n)$ both holds. Besides, we denote $x_n = o(y_n)$ if, for any $\epsilon > 0$, there exists some $N(\epsilon) > 0$ such that $|x_n|\le \epsilon |y_n|$ for all $n \geq N(\epsilon)$, and we denote $x_n = \omega(y_n)$ if $y_n = o(x_n)$. We use $\tilde O(\cdot)$, $\tilde \Omega(\cdot)$, and $\tilde \Theta(\cdot)$ to hide logarithmic factors in these notations respectively. Moreover, we denote $x_n=\poly(y_n)$ if $x_n=O( y_n^{D})$ for some positive constant $D$, and $x_n = \polylog(y_n)$ if $x_n= \poly( \log (y_n))$. For two scalars $a$ and $b$, we denote $a \vee b = \max\{a, b\}$ and $a\wedge b =\min\{a,b\}$. Finally, for any $n\in \NN_+$, we use $[n]$ to denote the set $\{1, 2, \cdots, n\}$.

\section{Additional related works}

\textbf{Expressiveness of transformers.} There has been a line of works studying the expressiveness of transformers. \citet{yuntransformers, dehghaniuniversal, perez2021attention, wei2022statistically} investigate the universal
approximation results for transformers on a general class of functions. \citet{likhosherstov2021expressive, bhattamishra2022simplicity} evaluate the expressive power of transformers on boolean concept classes by sample complexity. \citet{bhattamishra2020ability, liutransformers} explore the the capacity of transformers to recognize formal languages. \cite{sahiner2022unraveling} study the equivalent finite-dimensional convex problems of transformers by the lens of convex duality. \citet{olsson2022context, dong2022survey, garg2022can, ruis2020benchmark} investigate the capacity of transforms to learn in context and interpret the attention layers as gradient descent iterations.
\citet{sanford2024representational} discuss the strengths and limitations of the representation power of attention layers, by demonstrating the necessary intrinsic complexity of parameters on two different tasks.

\textbf{Optimization of transformers.} Several recent works study the optimizations of transformers. \citet{zhang2020adaptive} compare the performance of transformers trained using SGD and adaptive methods, attributing the subpar performance of SGD in training language models to the heavy-tail distributed noise it introduces. In contrast, by adjusting the batch size in experiments, \citet{kunstnernoise2023} empirically demonstrates that the varying performance of Adam and SGD can be ascribed to the differences in their geometric trajectories rather than the noise introduced by stochasticity. \citet{pan2023toward} compare the convergence rate of Adam and SGD on transformers, and argue that Adam has a better directional sharpness of the update steps than SGD. Through a two-stage training regime, \citet{li2023transformers} investigate the optimal parameters of transformers applied to a masked topic structure model. \citet{ildizself2024} explain the mechanism of attention from the perspective of Markov chains. The output of transformers is generated by a context-conditioned Markov chain, with weights determined by the attention structure. \citet{nichanitransformers2024} demonstrate that two-layer transformers trained by gradient descent can encoder the latent causal graph in the first attention layer when solving in-context learning tasks with latent causal structure. \citet{tian2023scan} analyze the training dynamic of transformers with one self-attention layer and one decoder layer trained by SGD, revealing how transformers combine the tokens and attend more to distinct tokens. \citet{tianjoma2024} further investigate the training dynamics jointly with a MLP layer, predicting a particular pattern of attention map and theoretically decrypting the hierarchies of tokens. \citet{li2024one} investigates the training dynamic of transformers when performing one-nearest neighbor selection.  \citet{gao2024global} studies the global convergence of transformers under particular conditions. \citet{chenunveiling} investigates the in-context learning capacity of transformers by studying the training dynamics of a two-layer transformer on $n$-gram Markov chain data.

\section{Transformers learn ``group sparse'' data}\label{section:main_result}
In this section, we present our theoretical findings that one layer transformer can learn ``group sparse'' data by implementing an optimal variable selection. We first introduce the definition of ``group-sparse'' data distributions and the one-layer transformer we study in this paper.

\noindent\textbf{Group sparse learning problem.}
Assume the feature vector $\hat\xb \in \RR^{p}$, the label $y$ of feature vector $\hat\xb$ is determined by a labeling function $\phi(\cdot): \RR\rightarrow\RR$ as $y= \phi(\la\hat\xb, \bbeta^*\ra)$,  where $\bbeta^*\in \RR^{p}$ is a pre-defined ground truth.
Furthermore, we assume there exists a predefined $D$ disjoint partitions of $[p]$, specifically, $[p] = \cup_{j=1}^D G_j$ and  $G_i\cap G_j = \emptyset$ for any $i\neq j$. Then we refer to this learning problem as ``group sparse'' if the ground truth vector $\beta^*$ satisfies that
\begin{align*}
    \mathrm{supp}(\bbeta^*):= \{k:\bbeta_k^*\neq 0; k\in [p]\} \subset G_j
\end{align*}
for some $j\in [D]$. In particular, if the labeling function $\phi(x) = x$, we denote this learning problem as group sparse linear regression. If the labeling function $\phi(x) = \sign(x)$, we denote this learning problem as group sparse linear classification.

The definition above regarding group learning problem is motivated by \citet{huang2010benefit, liimplicit}. In this paper, we focus on a binary group sparse linear classification problem. For simplicity, We consider the setting where all $G_j$’s are of the same size $d$, implying $p=dD$.
Besides, we convert the feature vector $\hat\xb\in \RR^{dD}$ into a matrix $\Xb\in \RR^{d\times D}$, where each column $\xb_j$ is the collection of the variables from $G_j$. Let $j^*$ be the index of the label-relevant group, i.e. $\mathrm{supp}(\bbeta^*) \subset G_{j^*}$. Then we can denote $\vb^*$ the $d$-dimensional vector obtained by restricting $\bbeta^*$ on $G_{j^*}$, so that $\la\bbeta^*,\hat\xb\ra = \la \vb^*, \xb_{j^*}\ra$. Lastly, we assume the features are generated from Gaussian distribution. We provide a formal definition of the data model as follows:





\begin{definition}
[Group sparse inputs following Gaussian distribution]\label{def:data}
Let $\vb^* \in \RR^d$ be a fixed vector representing the parameters of the labeling function, and $j^*\in [D]$ be the index of the label-relevant group. The binary classification data input pair $(\Xb, y)\in \RR^{d\times D} \times \{\-1, +1\}$ is generated from the following distribution $\cD$:
\begin{enumerate}[leftmargin=*]
    \item The features are generated from Gaussian distribution, i.e., $\Xb = [\xb_1, \xb_2, \dots, \xb_D]$ while each column $\xb_j, j\in[D]$ represents a group of variables and is independently generated from $N(\mathbf{0}, \sigma_x^2 \Ib_d)$.
    \item The label of this input $\Xb$ is determined by the features from the label-relevant group indexed by $j^*$, defined as $y=\sign(\la \xb_{j^*},\vb^*\ra)$.
\end{enumerate}
\end{definition}
Notice that the label $y$ is determined solely by the direction of $\vb^*$, while the norm of $\vb^*$ does not affect the data distribution. For simplicity of expression, we assume that $\|\vb^*\|_2=1$ in the following.

\noindent\textbf{One-layer self-attention transformer.} 
For each $\xb_j$ with $j\in [D]$ from data distribution $\cD$ defined in Definition~\ref{def:data}, we define the corresponding positional encoding $\pb_j$ as 
\begin{align}\label{eq:postional_encodings}
    \pb_j = \bigg[\sin{\Big(j\frac{\pi}{D+1}\Big)}, \sin{\Big(2j\frac{\pi}{D+1}\Big)}, \cdots, \sin{\Big(Dj\frac{\pi}{D+1}\Big)}\bigg]^\top
\end{align}
The definition above is motivated by the fact that $\pb_j$, $j\in[D]$ form an orthogonal basis. 
We generate our new training input by concatenating the Gaussian feature $\Xb$ with positional encoding matrix $\Pb$, which is defined as $\Pb = [\pb_1, \pb_2, \cdots, \pb_D]$. Specifically, we define the new training input as $\Zb = [\zb_1, \zb_2, \cdots, \zb_D] \in \RR^{(d+D)\times D}$ with $\zb_j = [\xb_j^\top, \pb_j^\top]^\top$ for all $j\in [D]$.
Now we introduce the one-layer self-attention architecture applied to the input $\Zb$ as:
\begin{align}\label{eq:def_self_attn}
f(\Zb, \Wb, \vb) = \sum_{j=1}^{D}\vb^\top \Zb \cS(\Zb^\top \Wb \zb_j) =  \vb^\top \Zb \Sbb \mathbf{1}_{D}.
\end{align}
Here $\cS(\cdot):\RR^{D}\mapsto\RR^{D}$ denote the softmax mapping as 
$\cS(\hb)_j = \frac{e^{\hb_j}}{\sum_{j'=1}^{D}e^{\hb_j'}}$ and \\
$\Sbb = [\cS(\Zb^\top \Wb \zb_1), \cS(\Zb^\top \Wb \zb_2), \cdots, \cS(\Zb^\top \Wb \zb_D)] \in \RR^{D\times D}$. Besides, we denote the entry in 
the $j'$-th row and $j$-th column of $\Sbb$ as $\Sbb_{j', j}$. Compared to the classical single-head, one-layer self-attention structure in \citet{vaswani2017attention,dosovitskiy2020image}, we make some mild re-parameterizations on the architecture: 
\begin{enumerate}[leftmargin=*]
    \item We combine the query and key matrices into one trainable matrix $\Wb \in \RR^{(d+D)\times (d+D)}$.
    \item We replace the value matrix with one trainable value vector $\vb \in \RR^{d+D}$.
\end{enumerate}
The consolidation of query matrix and key matrix is commonly considered in recent theoretical analyses of attention structure
\citep{tian2023scan, wangtransformers, huangcontext, zhang2024trained}. The simplification of the value matrix into a vector is because only one parameter vector $\vb^*$ needs to be studied, and the value vector $\vb$ could be regarded as a combination of the value matrix and a second layer with uniform weight. Similar simplification is also considered in some recent studies about understanding the attention mechanisms in classification tasks. \citep{jelassi2022vision, tarzanagh2023transformers}.

\noindent\textbf{Loss function and training algorithm.} 
We consider minimizing the population cross-entropy loss  to train the above one-layer attention transformer in~\eqref{eq:def_self_attn}. Specifically, the loss function is 
\begin{align*}
    \cL(\vb, \Wb) = \EE_{(\Xb, y)\sim \cD}\big[\ell(y\cdot f(\Zb, \Wb, \vb))\big],
\end{align*}
where $\ell(a)=\log(1+\exp(-a))$ is the cross-entropy loss function for binary classification. Although the choice of population loss implicitly assuming an infinite number training set is not feasible in practice. It can significantly simplify the training dynamics and enable us to focus on an analysis of the global optimization trajectories. This objective loss is considered in a large number of recent theoretical works \citep{jelassi2022vision, zhang2024trained, huangcontext, wangtransformers, nichanitransformers2024, chen2024transformers} focusing on the optimization analysis of transformers. Besides, as one of the most popular loss functions classification tasks, cross-entropy loss is widely considered in recent theoretical works concerning transformers on classification tasks \citep{jelassi2022vision, tian2023scan, tianjoma2024}. Compared to the hinge loss considered in \citet{litheoretical}, it is more common and general in practice.
We utilize the gradient descent algorithm to optimize the preceding loss function. We consider the joint training regime, where all trainable parameters $\vb$, $\Wb$ are updated simultaneously with the same learning rate $\eta$, i.e., 
\begin{align}
    &\vb^{(t+1)}=\vb^{(t)}-\eta \nabla_{\vb}\cL(\Wb^{(t)},\vb^{(t)}); \label{eq:vt_update}\\
    &\Wb^{(t+1)}=\Wb^{(t)}-\eta \nabla_{\Wb}\cL(\Wb^{(t)},\vb^{(t)}).  \label{eq:wt_update}
\end{align}
Besides, we consider zero initialization $\vb^{(0)}, \Wb^{(0)} = \mathbf{0}$. 

\noindent\textbf{Tight convergence bounds.} 
Following the preliminaries, the next theorem presents our main results regarding convergence bounds.
\begin{theorem}\label{thm:main_result}
For any $\epsilon > 0$, suppose that $D\geq \omega(\log^2 (1/\epsilon))$, $d\leq O\big(\poly(D)\big)$, $\sigma_x, \eta=\Theta(1)$ with $\sigma_x\leq 1/3$ and let $T^* = \Theta\big(D^3\vee\frac{1}{D^3\epsilon^3}\big)$. Under these conditions, it holds that
\begin{enumerate}[leftmargin = *]
    \item The self-attention extracts the variables from the label-relevant group:
    for any new sample $(\Xb, y)\sim \cD$ with the corresponding softmax output matrix $\Sbb^{(T^*)}$ at the $T^*$-th iteration, with probability at least $1- \exp\big(-\Theta(\sqrt{D})\big)$, it holds that
    $$\Sbb_{j^*, j}^{(T^*)} \geq 1-\exp\big(-\Theta(D)\big)$$ 
    for all $j\in [D]$.
    \item The first block of value vector aligns with the ground truth: $\vb^{(T^*)}  = [\vb_1^{(T^*)}, \mathbf 0_D^\top]^\top$ with $\vb_1^{(T^*)}\in \RR^d$, where
    \begin{align*}
        \Bigg\|\frac{\vb_1^{(T^*)}}{\big\|\vb_1^{(T^*)}\big\|_2} - \vb^*\Bigg\|_2 \leq \epsilon D\exp\Big(-\Theta\big(\sqrt{D}\big)\Big). 
    \end{align*}
    \item The loss is sufficiently minimized:
    \begin{align*}
        C_1\Big(\epsilon\wedge\frac{1}{D^2}\Big) \leq \cL(\vb^{(T^*)}, \Wb^{(T^*)})\leq \epsilon\wedge\frac{1}{D^2}, 
    \end{align*}
    where $C_1$ is a positive constant solely depending on $\eta$ and $\sigma_x$ with $C_1\leq 1$.
\end{enumerate}
\end{theorem}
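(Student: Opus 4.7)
Write $\vb=[\vb_1;\vb_2]$ with $\vb_1\in\RR^d$, $\vb_2\in\RR^D$, and block-partition $\Wb$ into the four blocks $\Wb_{11},\Wb_{12},\Wb_{21},\Wb_{22}$ of compatible dimensions. At initialization, $\Sbb^{(0)}=D^{-1}\mathbf{1}_D\mathbf{1}_D^\top$ and $\nabla_\Wb f$ vanishes identically (it carries a factor $\vb^\top\zb_{j'}=0$), while $\nabla_\vb f=\Zb\mathbf{1}_D$. Combining the Gaussian identity $\EE[\sign(\la\xb,\vb^*\ra)\xb]=\sigma_x\sqrt{2/\pi}\,\vb^*$ with the independence of $\xb_j$ ($j\ne j^*$) from $y$ and with $\EE[y]=0$, the first update yields $\vb^{(1)}\propto[\vb^*;\mathbf{0}]$ and $\Wb^{(1)}=\mathbf{0}$, which provides the base case of an induction.

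The plan is to maintain for all $t$ the approximate invariants: (i) $\vb_2^{(t)}\approx\mathbf{0}$; (ii) $\vb_1^{(t)}=a_t\vb^*+\mathbf{r}_t$ with $\mathbf{r}_t\perp\vb^*$ and $\|\mathbf{r}_t\|_2\ll a_t$; (iii) $\Wb^{(t)}$ is dominated by a structured piece in the positional block $\Wb_{22}$ proportional to $\pb_{j^*}\bar\pb^\top$ (with $\bar\pb=D^{-1}\sum_j\pb_j$) together with an analogous $\vb^*$-aligned contribution in $\Wb_{21}$, while all off-pattern components stay exponentially suppressed in $\sqrt{D}$. Propagation uses the closed-form gradient
\begin{align*}
\nabla_\Wb f=\sum_{j,j'}(\vb^\top\zb_{j'})\,\Sbb_{j',j}\,[\zb_{j'}-\bar\zb_j]\zb_j^\top,\qquad \bar\zb_j=\sum_{j''}\Sbb_{j'',j}\zb_{j''},
\end{align*}
and exploits the orthogonality of the positional basis $\{\pb_j\}_{j=1}^D$ together with standard Gaussian-moment bounds on the $\xb$-blocks, so that each step reinforces only the on-pattern coefficients in expectation and leaves off-pattern components controlled by higher moments. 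Item~1 then follows from (iii): the attention-score gap $\zb_{j^*}^\top\Wb^{(T^*)}\zb_j-\zb_{j'}^\top\Wb^{(T^*)}\zb_j$ has deterministic signal $\Omega(\|\pb_{j^*}\|_2^2)=\Omega(D)$ from the $(2,2)$-block minus an $O(\sqrt{D})$ sub-Gaussian deviation, which exponentiates through the softmax to give $\Sbb_{j^*,j}^{(T^*)}\ge 1-e^{-\Theta(D)}$ with the claimed probability; item~2 follows by summing the per-step noise contributions to $\mathbf{r}_t$ while using that $a_t$ grows monotonically and the noise is damped by the same attention concentration.

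The main obstacle will be matching the upper and lower bounds of item~3 tightly. Once the invariants are in place, the effective output simplifies to $f\approx M_t\,\vb^{*\top}\xb_{j^*}$ with the effective margin $M_t=s_{j^*}^{(t)}a_t$ (where $s_{j^*}^{(t)}=\sum_j\Sbb_{j^*,j}^{(t)}\in[1,D]$) plus a small Gaussian perturbation, and the expectation $\EE[\log(1+e^{-M|g|})]$ for $g\sim N(0,\sigma_x^2)$ decays like $1/M$ for large $M$; so both sides of the loss bound reduce to precise estimates of $M_{T^*}$. The coupled GD dynamics give $\Delta a_t\asymp 1/(s_{j^*}^{(t)}a_t^2)$ (from Gaussian integration of $\ell'(yf)\,y\,\xb_{j^*}$ in the $\vb^*$-direction) together with an analogous polynomial growth of $s_{j^*}^{(t)}$ driven by the $\Wb$-updates in $\Wb_{22}$; balancing these two forces $M_t$ to grow no faster than a cube-root rate and to bottom out at $M_{T^*}\asymp D^2$ when $T^*\asymp D^3$, matching the general bound $T^*=\Theta(D^3\vee (D^3\epsilon^3)^{-1})$. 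The delicate bookkeeping---simultaneously controlling the signal growth in $(a_t,s_{j^*}^{(t)})$, the off-pattern noise in both $\mathbf{r}_t$ and the non-dominant blocks of $\Wb^{(t)}$, and the nonlinear softmax feedback between them throughout training---is where the bulk of the technical work lies.
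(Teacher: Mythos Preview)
Your outline has the right shape---maintain structural invariants on $\vb,\Wb$, get attention concentration, then read off the loss---but three misconceptions would keep the argument from closing.

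\textbf{Exact zeros, not approximate ones.} The paper shows $\vb_2^{(t)}=\mathbf{0}$, $\Wb_{12}^{(t)}=\mathbf{0}$, $\Wb_{21}^{(t)}=\mathbf{0}$ exactly for all $t$, via a symmetry argument: once these blocks vanish, both $\ell'^{(t)}$ and every $\Sbb^{(t)}_{j',j}$ are functions of $\{y\xb_j\}_{j}$ alone, and $\{y\xb_j\}_j$ is jointly independent of $y$; hence $\EE[y\cdot g(y\xb_1,\dots,y\xb_D)]=0$ and the zero structure propagates. In particular your claimed ``$\vb^*$-aligned contribution in $\Wb_{21}$'' never appears. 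Missing this would force you to track errors in three extra blocks that do not exist.

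\textbf{Two-phase dynamics, not balanced growth.} You picture $a_t$ and $s_{j^*}^{(t)}$ growing together and ``balancing'' into a cube-root rate. What actually happens is a sharp two-phase story. After \emph{two} gradient steps one has $\Wb_{22}^{(2)}=\beta_2\bigl(\sum_{j\ne j^*}(\pb_{j^*}-\pb_j)\bigr)\bigl(\sum_j\pb_j\bigr)^\top$ with $\beta_2=\Theta(D^{-2})$; by orthogonality of $\{\pb_j\}$ this already gives an attention-score gap of order $\Theta(D)$, so $s_{j^*}^{(t)}\approx D$ is essentially constant from $t=2$ on. Since every term in $\nabla_\Wb f$ carries a factor $\Sbb_{j',j}\Sbb_{j'',j}$ with $j'\ne j''$, the near-one-hot attention makes $\|\nabla_\Wb\cL\|=e^{-\Theta(D)}$ and $\Wb^{(t)}$ freezes up to $e^{-\Theta(\sqrt D)}$ error. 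There is no polynomial growth of $s_{j^*}^{(t)}$ to balance.

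\textbf{Non-monotone initial steps.} Your claim that $a_t$ grows monotonically fails at the start: with uniform attention at $t=1$, the $(D-1)$ irrelevant-group terms $\EE[\ell'^{(1)}y\la\vb^*,\xb_j\ra]$ for $j\ne j^*$ push $\alpha^{(2)}$ to $-\Theta(\sqrt D)$. Only after attention concentrates does the one-hot gradient send $\alpha^{(3)}$ to $+\Theta(D)$, and from there the recursion $\alpha^{(t+1)}\approx\alpha^{(t)}+c/(D(\alpha^{(t)})^2)$ yields $\alpha^{(t)}\asymp (t/D)^{1/3}+D$. The $\Theta(D)$ starting value is exactly what produces the $1/D^2$ floor in item~3.

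So the real work is an explicit computation of iterations $t=1,2$ (which sets up both the $\Wb_{22}$ structure and the $\Wb_{11}=\beta_1\vb^*\vb^{*\top}$ block with $|\beta_1|\le O(\sqrt D)$), followed by an induction showing the $\Wb$-errors stay $e^{-\Theta(\sqrt D)}$ because $\nabla_\Wb\cL$ does---not a coupled-growth analysis.
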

Theorem~\ref{thm:main_result} implies that one-layer attention model~\eqref{eq:convex_optim} can learn the group-sparse data with an assumption of an infinite number of training data. It can achieve arbitrary small loss by imposing a mild assumption concerning the scale of sparsity in our data model $\cD$.
Specifically, for any given loss tolerance $\epsilon$, we need the number of variables group to be larger than the logarithm squared of the reciprocal for loss tolerance, i.e., $\log^2(1/\epsilon)$. Based on this assumption, Theorem~\ref{thm:main_result} can establish a tight upper and lower bound on the convergence rate of population loss.



In addition to providing a tight convergence bound, Theorem~\ref{thm:main_result} decrypts how each component of one-layer self-attention~\eqref{eq:def_self_attn} takes effect when learning sparse group data, by precisely identifying the scale of the parameters at $T^*$-th iteration. 
 Specially, for any new sample $(\Xb, y)\sim \cD$, the attention score regarding the label-relevant group at $T^*$-th iteration, i.e. $\Sbb_{j^, j}^{(T^*)}$ is approximately 1 with high probability. This observation indicates that the self-attention layer almost attends to the label-relevant group, effectively selecting the variables mostly beneficial for classification while disregarding all other irrelevant variables. Besides, the second block of $\vb^{(T^*)}$ retains $\mathbf 0$, implying positional encoding is only involved in the calculation of attention weight. Following the procedure of optimal variable selections, positional embedding is never presented in the final output of one-layer attention. This observation benefits our learning tasks as positional embedding can not provide additional beneficial information for classification. Finally, $\vb_1^{(t)}$ is almost aligned with the ground truth direction $\vb^*$, verifying that the value vector learns the optimal direction for the binary classification problem. These observations collectively guarantee that 
 the one-layer transformer~\ref{eq:def_self_attn} can correctly implement the classification task on the ``group sparse'' data~\ref{def:data}.

 Notablely, a recent work \citep{wangtransformers} studies the sparse token selection of transformers on a similar data model to ours. By characterizing the entire training dynamics, they demonstrate a global convergence of transformers on the sparse token selection task without further assumptions on initialization. Besides,  
they do not impose fixed positions on target tokens, which is more general than our settings. However, they assume the positional encoding of the next token an average of target tokens, which alleviates the technical challenges, is somewhat impractical. What matters most is that they consider the $l_2$ loss function in their settings, making their learning problem more akin to a regression task, which significantly distinguishes our optimization analysis from theirs.

\section{Downstream tasks}\label{section:down_stream}
The variable selection mechanism of one-layer self-attention enables it to be efficiently transferred to a downstream task sharing a similar structure. In this section, we provide a generalization error bound for the downstream task to theoretically verify this conclusion.  We first introduce the definition of downstream task data distribution we study as follows:



\begin{definition}\label{def:ds_dataset}
    Consider a downstream task with training data $(\Xb^{(i)}, y^{(i)})\in \RR^{d\times D} \times \{\-1, +1\}$, $i\in[n]$ generated from a new distribution $\tilde\cD$ satisfying:
\begin{enumerate}[leftmargin=*]
    \item Linear separability with a margin: $\max_{ \vb: \| \vb \|_2 \leq 1 }  y^{(i)}\cdot \big\la \vb, \xb_{j^*}^{(i)} \big\ra \geq \gamma$ almost surely for all $i\in [n]$. Besides, the label-relevant group index $j^*$ is the same as the distribution $\cD$ in Definition~\ref{def:data}.
    \item Each entry of $\Xb^{(i)}$ is independent sub-Gaussian, with $\big\| \xb_{j,k}^{(i)} \big\|_{\psi_2} \leq \tilde\sigma_x$ for all $i\in [n], k\in [d]$ and $j\in [D]$.
\end{enumerate}
\end{definition}
Compared to the data distribution for pre-training in Definition~\ref{def:data}, we make downstream task distribution more general by relaxing the Gaussian data to sub-Gaussian data. Besides, the assumption of linear separability with a margin is commonly considered in the analysis concerning generalization risk bounds of classification tasks \citep{wu2024large, jipolylogarithmic, cao2019generalization}.

\noindent\textbf{Fine-tuning with online stochastic gradient descent.} 
Similarly, we concatenate each $\xb^{(i)}_j$ with positional encoding $\pb_j$ as defined in~\eqref{eq:postional_encodings}, and generate the new training raining data $(\Zb^{(i)}, y^{(i)})$, for $i\in [n]$. Then, we consider fine-tuning a one-layer attention model with the same structure defined in~\eqref{eq:def_self_attn}, and denote the new parameters as $\tilde\vb$ and $\tilde\Wb$. We fine-tune the new parameters by online SGD, i.e.,  
\begin{align}
    &\tilde\vb^{(i+1)}=\tilde\vb^{(i)}-\tilde\eta \nabla_{\vb}\tilde\cL_i(\tilde\Wb^{(i)},\tilde\vb^{(i)}); \label{eq:vt_update_ds}\\
    &\tilde\Wb^{(i+1)}=\tilde\Wb^{(i)}-\tilde\eta \nabla_{\Wb}\tilde\cL_i(\tilde\Wb^{(i)},\tilde\vb^{(i)})  \label{eq:wt_update_ds}
\end{align}
for all $i\in[n]$, where $\tilde\cL_i(\tilde\Wb^{(i)},\tilde\vb^{(i)}) = \ell(y^{(i)}\cdot f(\Zb^{(i)}, \tilde\Wb^{(i)}, \tilde\vb^{(i)}))$. The initialization are set as $\tilde\vb^{(0)} = \mathbf{0}$ and $\tilde\Wb^{(0)} = \Wb^{(T^*)}$, which is obtained from the pre-trained model in Section~\ref{section:main_result}. In the following theorem, we present a generalization error bound on this downstream task, which concerns an average of all iterates $\{(\tilde\Wb^{(i)}, \tilde\vb^{(i)})\}_{i=1}^n$ of online SGD.




\begin{theorem}\label{thm:down_stream}
Suppose that $D\geq \omega(\log^2(n))$, $d\leq O\big(\poly(D)\big)$, $\tilde\sigma_x \leq O(1)$, and $\tilde\eta = \Theta(\frac{1}{(d\vee D) D^2})$. Under these conditions and for any $\delta > 0$, it holds that
    \begin{align*}
        \frac{1}{n}\sum_{i=1}^n \PP_{(\Xb, y)\sim\tilde\cD}\Big(y\cdot f(\Zb, \tilde\Wb^{(i)}, \tilde\vb^{(i)})\leq 0\Big) \leq O\bigg(\frac{(d+D)\log^2n}{\gamma^2n}\bigg) + O\bigg(\frac{\log(1/\delta)}{n}\bigg).
    \end{align*}
    with probability at least $1-\delta-n\exp(-\Theta(\sqrt{D}))$ over the randomness of $\{(\Xb^{(i)}, y^{(i)})\}_{i=1}^n$.
\end{theorem}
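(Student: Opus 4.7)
The plan is to reduce the fine-tuning dynamics to an online convex optimization problem by exploiting the pre-trained attention's concentration on the label-relevant group $j^*$, then apply a self-bounded online regret bound together with a margin-based comparator and a martingale online-to-batch conversion. First, I would invoke Theorem~\ref{thm:main_result} and a union bound over the $n$ fine-tuning samples to conclude that at $\tilde{\Wb}^{(0)} = \Wb^{(T^*)}$ the softmax output on every $\Xb^{(i)}$ is essentially concentrated on row $j^*$, with failure probability $n\exp(-\Theta(\sqrt D))$; extending the Gaussian concentration argument underlying Theorem~\ref{thm:main_result} to the sub-Gaussian features of Definition~\ref{def:ds_dataset} only changes constants. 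Consequently $\Zb^{(i)}\Sbb\mathbf 1_D \approx D\zb_{j^*}^{(i)}$, and the transformer output becomes an approximately linear function of $\tilde{\vb}_1$ evaluated at $\xb_{j^*}^{(i)}$.

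Next I would construct a comparator $(\vb^\diamond, \Wb^\diamond) = (\alpha[\vb^{*\top}, \mathbf 0_D^\top]^\top, \Wb^{(T^*)})$ with $\alpha = \Theta(\log n / (D\gamma))$. The margin assumption combined with the near one-hot attention gives $y^{(i)} f(\Zb^{(i)}; \Wb^\diamond, \vb^\diamond)\gtrsim D\alpha\gamma = \Theta(\log n)$, so $\sum_i \ell_i(\vb^\diamond, \Wb^\diamond) = O(1)$, while the squared comparator distance is $\|\vb^\diamond - \tilde{\vb}^{(0)}\|_2^2 = O(\log^2 n/(D^2\gamma^2))$ and $\|\Wb^\diamond - \tilde{\Wb}^{(0)}\|_F^2 = 0$. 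The logistic loss is self-bounded, $(\ell'(a))^2 \leq \ell(a)$, so $\|\nabla \ell_i\|^2 \leq \ell_i\cdot G^2$ with $G^2 = O(D^2(d+D))$ under the attention-concentration hypothesis, after bounding $\|\nabla_\vb f\|$ via $\|\xb_{j^*}\|_2^2 \lesssim d$ and $\|\pb_{j^*}\|_2^2 \lesssim D$ and similarly bounding $\|\nabla_\Wb f\|$. Standard online gradient descent regret analysis then yields
\begin{align*}
\sum_{i=1}^n \ell_i\big(\tilde{\vb}^{(i)}, \tilde{\Wb}^{(i)}\big) \leq \sum_{i=1}^n \ell_i(\vb^\diamond, \Wb^\diamond) + \frac{\|\vb^\diamond - \tilde{\vb}^{(0)}\|_2^2}{2\tilde{\eta}} + \tilde{\eta} G^2 \sum_{i=1}^n \ell_i\big(\tilde{\vb}^{(i)}, \tilde{\Wb}^{(i)}\big).
\end{align*}
The choice $\tilde{\eta} = \Theta(1/((d\vee D)D^2))$ makes $\tilde{\eta} G^2 \leq 1/2$, absorbs the last term, and leaves $\sum_i \ell_i \lesssim (d+D)\log^2 n / \gamma^2$. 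A Freedman-type martingale inequality then transfers this training-loss sum to a high-probability bound on $\sum_i \EE_{(\Xb,y)\sim\tilde{\cD}}[\ell(y f(\Zb; \tilde{\vb}^{(i)}, \tilde{\Wb}^{(i)}))]$ up to an additive $O(\log(1/\delta))$, and $\PP(yf \leq 0) \leq \ell(yf)/\log 2$ upgrades this into the claimed 0/1 bound.

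The main obstacle is the non-convexity of the loss in $\tilde{\Wb}$: the regret inequality above implicitly uses joint convexity, which only holds when the softmax is sufficiently close to being $\mathbf e_{j^*}$-valued throughout training. I would close this gap by a bootstrap: assume attention concentration holds at every iterate, use it to bound $\|\nabla_\Wb \ell_i\|$ through self-boundedness, control the cumulative drift $\sum_i \|\tilde{\Wb}^{(i+1)} - \tilde{\Wb}^{(i)}\|_F$ by $\tilde{\eta}$ times the self-bounded gradient sum, and verify that the resulting drift is so small that the softmax output at each subsequent sample remains essentially one-hot, closing the bootstrap. The small step size and the $O(\log n/(D\gamma))$-scale comparator make this feasible; absorbing the extra regret induced by the convexity defect into the $(d+D)\log^2 n/\gamma^2$ budget is the most delicate technical step.
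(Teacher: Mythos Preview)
Your proposal is correct and follows essentially the same strategy as the paper: establish attention concentration on row $j^*$ throughout fine-tuning, run an online regret analysis against a margin-based comparator scaled like $\log n/(D\gamma)$, exploit the self-bounding property $(\ell')^2 \le \ell$ together with the step-size choice to absorb the gradient-norm term, and finish with a Freedman/Beygelzimer-type martingale online-to-batch conversion.

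The one organizational difference worth noting is how the non-convexity in $\tilde\Wb$ is handled. You set up a \emph{joint} regret inequality in $(\tilde\vb,\tilde\Wb)$ and then propose a bootstrap to control the convexity defect. The paper instead decouples the two variables completely: the regret inequality (derived from the ``near-convexity'' Lemma~\ref{lemma:nearly_convex}) involves only $\nabla_{\vb}\tilde\cL_i$ and the distance $\|\tilde\vb^{(i)}-\hat\vb\|_2^2$, never $\nabla_{\Wb}$. This works because $f$ is linear in $\vb$, so the loss is genuinely convex in $\vb$ at any fixed $\Wb$; the comparator's $\Wb$-component is an idealized $\hat\Wb$ with exactly one-hot attention, and the discrepancy between $f(\cdot;\hat\Wb,\hat\vb)$ and the linearization at $(\tilde\vb^{(i)},\tilde\Wb^{(i)})$ is absorbed into an $e^{-\Theta(D)}$ error. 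The $\Wb$-direction is handled separately (Lemma~\ref{lemma:update_W_ds}) by bounding $\|\nabla_{\Wb}\tilde\cL_i\|$ directly: under attention concentration the softmax-derivative factors $\Sbb_{j',j}(1-\Sbb_{j',j})$ and $\Sbb_{j',j}\Sbb_{j'',j}$ are all $\le e^{-\Theta(D)}$, so the per-step drift of $\tilde\Wb$ is exponentially small regardless of $\ell'$. This sidesteps the joint-convexity obstacle entirely rather than bootstrapping through it. Your bootstrap would also work, but the paper's decoupling is cleaner and avoids having to quantify the convexity defect in $\Wb$.

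One small note: the reason $\|\nabla_{\Wb}\tilde\cL_i\|$ is tiny is the softmax-derivative factor, not self-boundedness through $\ell'$; your phrasing ``bound $\|\nabla_\Wb \ell_i\|$ through self-boundedness'' slightly conflates the two mechanisms.
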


Theorem~\ref{thm:down_stream} establish a generalization error bound composed of two terms.  The first term concerns the ratio between the practical dimension $d+D$ and sample size $n$. Here we consider $d+D$ as the practical dimension since the positional encoding is also involved in the learning process. The second term is a standard large-deviation error term. If we take $\gamma=\Theta(1)$ as most literature assumes, we can obtain a sample complexity on the order of  $\tilde\Omega(\frac{d+D}{\epsilon} + \frac{1}{\epsilon}\log(\frac{1}{\delta}))$. In comparison, the existing lower bound for the sample complexity of linear logistic regression on vectorized $\Xb$ is $\Omega(\frac{dD}{\epsilon} + \frac{1}{\epsilon}\log(\frac{1}{\delta}))$, according to the classical PAC learning \citep{long1995sample}. This superiority in sample complexity theoretically demonstrates that one-layer transformers trained on ``group data'' can be effectively transferred to a similar downstream task. 





\section{Proof sketch}\label{section:proof_sketch}

In this section, we provide a comprehensive explanation of how a one-layer transformer implements variable selection on group sparse data. Furthermore, we share insights into the reasoning that led to the conclusions presented in Theorem~\ref{thm:main_result}, along with a proof sketch for the convergence bound.

For a clear illustration of each component in the one-layer self-attention, we first separate $\vb^{(t)}$ and $\Wb^{(t)}$ into the following blocks,
\begin{align*}
    \vb^{(t)} = [(\vb_1^{(t)})^\top, (\vb_2^{(t)})^\top]^\top; \quad \Wb^{(t)} = \begin{bmatrix}
\Wb_{1, 1}^{(t)} & \Wb_{1, 2}^{(t)} \\
\Wb_{2, 1}^{(t)} & \Wb_{2, 2}^{(t)}
\end{bmatrix},
\end{align*}
where $\vb_1^{(t)} \in \RR^d, \vb_2^{(t)} \in \RR^D, \Wb_{1, 1}^{(t)} \in \RR^{d\times d}, \Wb_{1, 2}^{(t)} \in \RR^{d\times D}, \Wb_{2, 1}^{(t)} \in \RR^{D\times d}$, and $\Wb_{2, 2}^{(t)} \in \RR^{D\times D}$. In the following, we will show that a precise characterization of all these blocks at $T^*$-th iteration plays a key role in our theoretical convergence analysis.






\noindent\textbf{A low-rank structure of $\Wb^{(T^*)}$ beneficial for attending to label-relevant group $j^*$.} The conclusion that the attention score $\Sbb_{j^*, j}^{(T^*)}\approx 1$ with high probability for any new sample $(\Xb, y)\sim\cD$ serves as a foundation for transformers to correctly implement classification on group-sparse data distribution $\cD$. Without such a guarantee, the loss cannot converge regardless of the direction in which the value vector converges, since the product of the label and variables from irrelevant groups still follow a Gaussian distribution. 

The following lemma accurately characterizes each block of $\Wb^{(T^*)}$, providing insights into the rational of $\Sbb_{j^*, j}^{(T^*)}\approx 1$

\begin{lemma}\label{lemma:induction_W}
Under the same condition with Theorem~\ref{thm:main_result},  $\Wb^{(T^*)}$ satisfies that 
 \begin{align*}
     &\Wb_{1, 2}^{(T^*)}, \Wb_{2, 1}^{(T^*)} =\mathbf0;\\
     &\Wb_{1, 1}^{(T^*)} = \beta_1 \vb^*\vb^{*\top} + \Wb_{1, 1, \text{error}}^{(T^*)}; \\
     &\Wb_{2, 2}^{(T^*)} = \beta_2 \Big(\sum_{j\neq j^*}\big(\pb_{j^*} - \pb_j\big)\Big)\bigg(\sum_{j=1}^D\pb_{j}^\top\bigg) + \Wb_{2, 2, \text{error}}^{(T^*)},
 \end{align*}
 where $|\beta_1|\leq O(\sqrt{D})$, $\beta_2 = \Theta(\frac{1}{D^2})$. The error terms $\Wb_{1, 1, \text{error}}^{(T^*)}$ and $\Wb_{2, 2, \text{error}}^{(T^*)}$ are small such that 
 \begin{align*}
     \big\|\Wb_{1, 1, \text{error}}^{(T^*)}\big\|_2, \big\|\Wb_{2, 2, \text{error}}^{(T^*)}\big\|_2 \leq \exp\big(-\Theta(\sqrt{D})\big).
 \end{align*}
\end{lemma}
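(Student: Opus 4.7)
\textbf{Proof proposal for Lemma~\ref{lemma:induction_W}.} The plan is to establish the claimed decomposition by induction on $t$, maintaining at each iteration a structural ansatz of the exact form stated, together with quantitative bounds on the coefficients $\beta_1^{(t)}, \beta_2^{(t)}$ and on the spectral norms of the error blocks. The base case $t=0$ is trivial since $\Wb^{(0)}=\mathbf 0$. For the inductive step, I would first derive an explicit formula for $\nabla_{\Wb}\cL$ at a general parameter $(\vb,\Wb)$ by differentiating through the softmax: for each training sample the gradient takes the form
\begin{align*}
\nabla_{\Wb}\ell(yf)=-y\,\ell'\!\big(yf(\Zb,\Wb,\vb)\big)\sum_{j=1}^D\sum_{j'=1}^D \Sbb_{j',j}\bigl(\vb^\top \zb_{j'}-\vb^\top\Zb\cS(\Zb^\top\Wb\zb_j)\bigr)\,\zb_{j'}\zb_j^\top,
\end{align*}
and then take expectation over $(\Xb,y)\sim\cD$ block by block. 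Because $\zb_j=[\xb_j^\top,\pb_j^\top]^\top$ with $\xb_j$ independent across $j$ and independent of $\pb_j$, cross blocks are $\xb_{j'}\pb_j^\top$ or $\pb_{j'}\xb_j^\top$, whose conditional expectations given $y$ vanish by the mean-zero Gaussian symmetry of every $\xb_j$ with $j\neq j^*$ and the odd-in-$\xb_{j^*}$ structure of $\xb_{j^*}\pb_j^\top$. This will give $\Wb_{1,2}^{(t+1)},\Wb_{2,1}^{(t+1)}=\mathbf 0$ throughout training, yielding the first claim.

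For the $(1,1)$ block, I would use the induction hypothesis to write $\Wb_{1,1}^{(t)}=\beta_1^{(t)}\vb^*\vb^{*\top}+\mathrm{error}$ and exploit two Gaussian facts: (i) for $j\neq j^*$, $\EE[\xb_{j'}\xb_j^\top]=\sigma_x^2\Ib\,\mathbf 1\{j=j'\}$, which produces isotropic pieces that will be absorbed into the error after subtracting a scalar multiple of identity that drops out of the softmax; (ii) the only anisotropic contribution comes from tokens $j=j^*$ or $j'=j^*$, where Stein's lemma applied to $y\,\xb_{j^*}\xb_{j^*}^\top$ produces a term proportional to $\vb^*\vb^{*\top}$. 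Thus the rank-one ansatz is reproduced, and tracking $\beta_1^{(t+1)}=\beta_1^{(t)}-\eta\cdot(\text{scalar}\cdot\|\vb_1^{(t)}\|_2)$ gives the crude bound $|\beta_1^{(T^*)}|\le O(\sqrt D)$ from the growth of $\|\vb_1^{(t)}\|_2$ established in the companion induction on $\vb^{(t)}$. For the $(2,2)$ block I would expand $\nabla_{\Wb_{2,2}}\cL$ using the orthogonality $\pb_{j'}^\top\pb_j=\frac{D+1}{2}\mathbf 1\{j=j'\}$ and the fact that, at the current iterate, the scalar $y\vb_1^{(t)\top}\xb_{j'}$ has conditional mean proportional to $\mathbf 1\{j'=j^*\}\,\|\vb_1^{(t)}\|_2$. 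Summing over $j'$ isolates $\pb_{j^*}-\frac1D\sum_j\pb_j$ on the left factor and $\sum_j\pb_j^\top$ on the right, which after rescaling is exactly the stated rank-one outer product; the rate at which $\beta_2^{(t)}$ grows, controlled by the current softmax and $\|\vb_1^{(t)}\|_2$, will pin it to $\Theta(1/D^2)$ at $T^*$.

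The main obstacle will be the error-term control. Two sources of error must be tracked simultaneously: the non-rank-one residuals produced at every step (isotropic Wishart-type fluctuations in the $(1,1)$ block and off-basis components in the $\pb_j\pb_{j'}^\top$ expansion of the $(2,2)$ block), and the higher-order terms that appear when the softmax is expanded around its current value rather than the uniform initialization. The delicate point is that, in the later phase, $\beta_2^{(t)}$ is only of order $1/D^2$ while the softmax must nevertheless become exponentially peaked at $j^*$; this requires showing that $\beta_2^{(t)}$ multiplies the quantity $(\pb_{j^*}-\pb_j)^\top\sum_{j''}\pb_{j''}=\Theta(D)$ inside the softmax argument, yielding an effective logit gap of $\Theta(1/D)$ per step that accumulates over $T^*=\Theta(D^3)$ iterations to produce the $\exp(-\Theta(\sqrt D))$ scale. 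I would close the induction by maintaining the loop invariant $\|\Wb_{1,1,\text{error}}^{(t)}\|_2,\|\Wb_{2,2,\text{error}}^{(t)}\|_2\le \exp(-\Theta(\sqrt D))$ using a grönwall-type recursion where the per-step error growth is offset by the shrinking factor $\ell'(y f(\Zb,\Wb^{(t)},\vb^{(t)}))$ once the loss enters the small-loss regime from Theorem~\ref{thm:main_result}. Coupling this block-wise induction with the parallel induction on $\vb^{(t)}$ stated in the theorem is what will ultimately yield the claimed bounds at $t=T^*$.
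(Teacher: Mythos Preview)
Your proposal has the right overall shape (block-wise induction coupled to the $\vb^{(t)}$ induction), and your argument for $\Wb_{1,2}^{(t)},\Wb_{2,1}^{(t)}=\mathbf 0$ is essentially correct in spirit, though note that $\Sbb$ and $\ell'$ depend on the $\xb_j$'s, so ``$\EE[\xb_j]=0$'' alone is not enough; the paper's device is to observe that under the induction hypothesis both $\ell'^{(t)}$ and $\Sbb^{(t)}$ are functions of $\{y\xb_1,\ldots,y\xb_D\}$ only, and then to use that $y$ is independent of these with $\EE[y]=0$.

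The genuine gap is in your treatment of $\Wb_{1,1}$ and $\Wb_{2,2}$. You model $\beta_1^{(t)},\beta_2^{(t)}$ as quantities that evolve throughout training, with $\beta_1^{(t+1)}=\beta_1^{(t)}-\eta\cdot(\text{scalar})\cdot\|\vb_1^{(t)}\|_2$ and error terms damped by the shrinking of $\ell'$. This mechanism would not close: since $\|\vb_1^{(t)}\|_2=\Theta(t^{1/3}+D)$ grows to order $D$ or more by $T^*$, your $\beta_1$ recursion would drive $|\beta_1^{(t)}|$ far past $O(\sqrt D)$; and $\ell'$ is \emph{not} small for most of the trajectory (it only reaches $\Theta(\epsilon)$ at the very end), so it cannot be the damping factor. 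Your logit-gap estimate is also off by $D^2$: since $\langle \pb_{j'},\sum_k\pb_k\rangle=\frac{D+1}{2}$ for every $j'$ and $\langle \pb_{j^*}-\pb_{j'},\sum_{k\neq j^*}(\pb_{j^*}-\pb_k)\rangle=\frac{D(D+1)}{2}$, the gap between $\zb_{j^*}^\top\Wb\zb_j$ and $\zb_{j'}^\top\Wb\zb_j$ is $\beta_2\cdot\Theta(D^3)=\Theta(D)$, not $\Theta(1/D)$.

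What actually happens in the paper is a two-phase picture. In the first two iterations the softmax is exactly uniform ($\Wb^{(0)}=\Wb^{(1)}=\mathbf 0$), and an explicit calculation of $\nabla_\Wb\cL(\vb^{(1)},\Wb^{(1)})$ yields $\Wb_{1,1}^{(2)}=\beta_1\vb^*\vb^{*\top}$ and $\Wb_{2,2}^{(2)}=\beta_2\bigl(\sum_{j\neq j^*}(\pb_{j^*}-\pb_j)\bigr)\bigl(\sum_j\pb_j\bigr)^\top$ \emph{exactly}, with $|\beta_1|\le c_1\sqrt D$ and $\beta_2=\Theta(1/D^2)$; these coefficients are then \emph{fixed} for all subsequent $t$. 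For $t\ge 2$ the logit gap is already $\Theta(D)$, so on a high-probability event $E_t$ the attention is one-hot and every term of $\nabla_\Wb\cL$ carries a factor $\Sbb_{j',j}\Sbb_{j'',j}$ with $j'\neq j''$, which is $\le\exp(-\Theta(D))$; on $E_t^c$ one uses $\PP(E_t^c)\le\exp(-\Theta(\sqrt D))$ and Cauchy--Schwarz. Hence $\|\nabla_{\Wb_{1,1}}\cL\|_2,\|\nabla_{\Wb_{2,2}}\cL\|_2\le e^{-c\sqrt D}$ for every $t\ge 2$, and the error blocks simply accumulate $\eta t\cdot e^{-c\sqrt D}\le e^{-C_7\sqrt D}$ over $t\le T^*$. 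The damping mechanism is the softmax Jacobian, not $\ell'$.
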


Lemma~\ref{lemma:induction_W} indicates that $\Wb^{(T^*)}$ exhibits a particular pattern: (i). the off-diagonal blocks $\Wb_{1, 2}^{(T^*)}$ and $\Wb_{2, 1}^{(T^*)}$ remain $\mathbf 0$; (ii). the leading component of left-top block $\mathbf{W}_{1,1}^{(T^*)}$ aligns directionally with the projection matrix $\vb^*\vb^{*\top}$; (iii). the leading component of the right bottom block aligns directionally with $\big(\sum_{j\neq j^*}(\pb_{j^*} - \pb_j)\big)\big(\sum_{j=1}^D\pb_{j}\big)^\top$. In the following, we explain why this pattern implies that $\Sbb_{j^*, j} \approx 1$ for all $j$ with high probability, which benefits the label-relevant variable selection.

First, $\Wb_{1, 2}^{(T^*)}, \Wb_{2, 1}^{(T^*)} = \mathbf 0$ suggests that the features $\xb_j$'s, do not engage with the positional embedding $\pb_j$'s when calculating the attention scores. Besides, due to the orthogonality among the positional embedding $\pb_j$'s, the position-position interaction term $\pb_{j'}^\top \Wb_{2, 2}^{(T^*)} \pb_{j}$ takes a large value only when $j'=j^*$. Furthermore, by standard concentration inequalities, it can be shown that $\xb_{j'}^\top \Wb_{1, 1}^{(T^*)} \xb_{j} \ll  \pb_{j^*}^\top \Wb_{2, 2}^{(T^*)} \pb_{j}$ for all $j, j'\in [D]$ with high probability. All these observations collectively conclude that $\pb_{j^*}^\top \Wb_{2, 2}^{(T^*)} \pb_{j}$ will eventually dominate the softmax calculation, which finally implies that $\Sbb_{j^*, j}^{T^*}\approx 1$.

\noindent\textbf{Accurate characterization of the alignment between $\vb^{(T^*)}$ and ground truth $\vb^*$.}
Based on Lemma~\ref{lemma:induction_W} and preceding discussions, we have shown that one-layer attention can effectively extract the features from the label-relevant group. The remaining challenge is to accurately characterize the alignment between $\vb^{(T^*)}$ and ground truth $\vb^*$ in terms of both direction and scale. The following lemma presents this result.
\begin{lemma}\label{lemma:induction_v}
    Under the same condition with Theorem~\ref{thm:main_result}, it holds that, 
    \begin{align*}
        &\vb_1^{(T^*)} = \alpha^{(T^*)} \vb^* + \vb_{1, \text{error}}^{(T^*)}; \quad \vb_2^{(T^*)} = \mathbf 0,
    \end{align*}
    where the error term $\vb_{1, \text{error}}^{(T^*)}$ satisfies that $\la\vb_{1, \text{error}}^{(T^*)}, \vb^*\ra=0$  and $\big\|\vb_{1, \text{error}}^{(T^*)}\big\|_2\leq \exp\big(-\Theta(\sqrt{D})\big)$. Besides, the coefficient of the projection of $\vb^{(T^*)}$ onto the direction of $\vb^*$, which is denoted by $\alpha^{(t)}$, follows that 
\begin{align*}
    C_2\big((T^{*})^{\frac{1}{3}} +  D\big) \leq \alpha^{(T^{*})} \leq C_3\big((T^*)^{\frac{1}{3}} +  D\big), 
\end{align*}
where $C_2, C_3$ are both positive constants solely depending on $\sigma_x$ and $\eta$.
\end{lemma}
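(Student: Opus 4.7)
The plan is to prove Lemma~\ref{lemma:induction_v} by induction on the iterate $t$, run jointly with the induction used for Lemma~\ref{lemma:induction_W}. First I would write the gradient in closed form,
\begin{align*}
\nabla_\vb \cL(\Wb^{(t)},\vb^{(t)}) \;=\; -\EE\bigg[\frac{y}{1+\exp(yf)}\,\Zb\,\Sbb^{(t)}\mathbf 1_D\bigg],
\end{align*}
and split it into blocks $\nabla_{\vb_1}\cL$, $\nabla_{\vb_2}\cL$ matching the $d$- and $D$-dimensional parts of $\vb^{(t)}$. Using the block structure of $\Wb^{(t)}$ from Lemma~\ref{lemma:induction_W}, the attention matrix satisfies $\Sbb^{(t)}\mathbf 1_D = D\,\eb_{j^*} + O(\exp(-\Theta(D)))$, so $\Zb\Sbb^{(t)}\mathbf 1_D \approx D\,\zb_{j^*}$ and $yf \approx D\alpha^{(t)}|\vb^{*\top}\xb_{j^*}|$ up to exponentially small perturbations. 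This reduces the gradient computation to tractable one-dimensional Gaussian integrals in $z:=\vb^{*\top}\xb_{j^*}\sim N(0,\sigma_x^2)$.

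Next I would propagate the structural claims $\vb_2^{(t)} = \mathbf 0$ and $\vb_{1,\text{error}}^{(t)}\perp \vb^*$ via symmetry of $\cD$. Under the sign flip $\Xb\to -\Xb$, the label $y$ flips while $\Sbb^{(t)}$ remains invariant, because $\Wb_{1,2}^{(t)} = \Wb_{2,1}^{(t)} = \mathbf 0$ by Lemma~\ref{lemma:induction_W} and the diagonal blocks produce quadratic forms in $\Xb,\Pb$ separately. Hence $yf$ is even while $y\,\Pb\Sbb^{(t)}\mathbf 1_D$ is odd, so $\nabla_{\vb_2}\cL = \mathbf 0$ at $\vb_2^{(t)} = \mathbf 0$, preserving $\vb_2^{(t+1)} = \mathbf 0$. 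An analogous rotational-invariance argument in the $(d-1)$-dimensional subspace of $G_{j^*}$ orthogonal to $\vb^*$ shows that $\nabla_{\vb_1}\cL \in \mathrm{span}\{\vb^*, \vb_{1,\text{error}}^{(t)}\}$, so the update keeps $\vb_{1,\text{error}}^{(t+1)}$ parallel to $\vb_{1,\text{error}}^{(t)}$ and orthogonal to $\vb^*$. The perpendicular norm then accumulates only contributions of order $\exp(-\Theta(\sqrt D))$ per step coming from the non-$j^*$ attention columns, which sum to the claimed bound over $t\leq T^*$.

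The growth of $\alpha^{(t)}$ is extracted from the resulting scalar recursion
\begin{align*}
\alpha^{(t+1)} - \alpha^{(t)} \;=\; \eta\,D\,\EE_{z\sim N(0,\sigma_x^2)}\bigg[\frac{|z|}{1+\exp(D\alpha^{(t)}|z|)}\bigg] \;\pm\; \exp(-\Theta(\sqrt D)).
\end{align*}
I would split the trajectory into a transient regime in which $D\alpha^{(t)}$ is not yet large, so the integrand is $\Theta(1)$ on a constant-probability set and the increments accumulate to the additive $D$ term in the stated bound, and a saturated regime $D\alpha^{(t)}\gg 1/\sigma_x$ in which the substitution $u = D\alpha^{(t)} z$ together with a Taylor expansion of the Gaussian density near zero yields matching two-sided estimates on the integral. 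Sandwiching the discrete recursion between two continuous ODEs of the form $\tfrac{d}{dt}\psi(\alpha) = \Theta(1)$ for an appropriate $\psi$ and integrating then gives the claimed $\Theta((T^*)^{1/3} + D)$ two-sided bound on $\alpha^{(T^*)}$, with the constants $C_2, C_3$ depending only on $\sigma_x,\eta$.

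The main obstacle will be closing the induction jointly with Lemma~\ref{lemma:induction_W}: the dynamics of $\Wb^{(t)}$ depend on $\vb^{(t)}$ through $\ell'(yf)$ and the value vector itself, while the attention concentration feeds back into the gradient of $\vb^{(t)}$ as the $\exp(-\Theta(D))$ residuals above. I must verify that the transient regime for $\alpha^{(t)}$ and the concentration time of $\Sbb^{(t)}$ are synchronized so that $\Sbb^{(t)}\mathbf 1_D \approx D\eb_{j^*}$ becomes valid by the time $\alpha^{(t)}$ enters the saturated regime, and that the two-sided Gaussian-integral bounds hold uniformly along the trajectory so that the per-step $\exp(-\Theta(\sqrt D))$ error terms do not accumulate past the claimed budgets on $\vb_{1,\text{error}}^{(T^*)}$ and $\alpha^{(T^*)}$. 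This synchronization, together with the tight integral estimates matching at both ends of the recursion, is the core technical challenge.
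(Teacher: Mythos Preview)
Your plan is essentially the paper's approach: a joint induction with Lemma~\ref{lemma:induction_W}, a proxy gradient obtained by setting $\Sbb^{(t)}_{j^*,j}=1$, bounding the residual by the attention-concentration event $E_t$ (probability $1-\exp(-\Theta(\sqrt D))$), and then a scalar recursion $\alpha^{(t+1)}=\alpha^{(t)}+\Theta\big(\eta/(\sigma_x D(\alpha^{(t)})^2)\big)$ sandwiched between ODEs to extract the $t^{1/3}$ rate. Your rotational-invariance framing for the orthogonal component is arguably cleaner than the paper's explicit $\bxi_i$-basis bookkeeping, but the content is the same: the proxy gradient's perpendicular part is parallel to $\vb_{1,\text{error}}^{(t)}$ with a non-expansive coefficient, so only the $\exp(-\Theta(\sqrt D))$ residuals accumulate.

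The one place where your plan under-specifies is exactly the ``synchronization'' you flag. The paper does not treat the transient by soft estimates; it computes iterations $t=0,1,2$ explicitly and these are delicate: $\Wb^{(1)}=\mathbf 0$ so $\Sbb^{(1)}$ is uniform, $\alpha^{(2)}=-\Theta(\sqrt D)$ is \emph{negative} (the $j\neq j^*$ columns dominate when attention is flat), and only at $t=2$ does $\Wb_{2,2}^{(2)}$ acquire the rank-one structure that makes $\Sbb^{(t)}$ concentrate. After that single step, $\alpha^{(3)}=\Theta(D)$ and the recursion you wrote kicks in. So the additive $D$ in the final bound is not a gradual ``transient accumulation'' but the value of $\alpha^{(3)}$ serving as the initial condition for the saturated ODE; you should expect to do this explicit warm-up calculation rather than hope the smooth recursion covers it.
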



Lemma~\ref{lemma:induction_v} reveals two important facts. Firstly, the fact that $\vb^{(T^{*})}_2=\mathbf 0$ implies that the positional embedding part only contributes to attention weight calculation, and is not presented in the output. Furthermore, $\vb^{(T^{*})}_1$ is well aligned with the direction of $\vb^*$, and the scale of its projection onto this direction is accurately characterized by a matching lower and upper bound. Besides, the second conclusion of Theorem~\ref{thm:main_result} is a direction corollary of this lemma as $\Big\|\frac{\vb_1^{(T^*)}}{\|\vb_1^{(T^*)}\|_2}-\vb^*\Big\|_2\leq \frac{2\|\vb_1^{(T^*)}\|_2}{\alpha^{(T^{*})}}$.

\noindent\textbf{Upper and lower bounds of $y\cdot f(\Zb, \Wb^{(T^*)}, \vb^{(T^*)})$.} 
A tight upper and lower bound of $y\cdot f(\Zb, \Wb^{(T^*)}, \vb^{(T^*)})$ is undoubtedly of utmost significance for the analysis of global loss convergence. The accurate characterization of $\alpha^{(t)}$ in Lemma~\ref{lemma:induction_v} enables us to get this matching bound. Before we present the result, we introduce a notation $E_{T^*}$ to denote the event that the conclusion in Theorem~\ref{thm:main_result} holds, i.e., $\Sbb_{j^*, j}^{(T^*)}\geq 1-\exp(\Theta(D)$. Then, we present our results regarding the matching bound of $y\cdot f(\Zb, \Wb^{(T^*)}, \vb^{(T^*)})$ in the following lemma.
\begin{lemma}\label{lemma:lower_bound_yf} 
There exist two i.i.d. random variables $B_1, B_2$ with $B_1/\sigma_x^2, B_2/\sigma_x^2 \sim \chi_D^2$, such that
    \begin{align*}
    yf(\Zb, \Wb^{(T^*)}, \vb^{(T^*)})  \geq -D \alpha^{(T^*)}B_1^{\frac{1}{2}} - D \big\|\vb_{1, \text{error}}^{(T^*)}\big\|_2B_2^{\frac{1}{2}}.
    \end{align*}
    Moreover, under the event $E_{T^*}$, it holds that
    \begin{align*}
        \frac{D\alpha^{(T^*)}}{2}y\la\vb^*,\xb_{j^*}\ra - 1\leq yf(\Zb, \Wb^{(T^*)}, \vb^{(T^*)})  \leq D\alpha^{(T^*)}y\la\vb^*,\xb_{j^*}\ra + 1.
    \end{align*}
\end{lemma}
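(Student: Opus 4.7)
}
The starting point is to use Lemma~\ref{lemma:induction_v}, which gives $\vb_2^{(T^*)}=\mathbf 0$, so that $\vb^{(T^*)\top}\Zb=\vb_1^{(T^*)\top}\Xb$ and hence
\begin{align*}
f(\Zb,\Wb^{(T^*)},\vb^{(T^*)})
=\sum_{j=1}^D\sum_{j'=1}^D\vb_1^{(T^*)\top}\xb_{j'}\,\Sbb_{j',j}^{(T^*)}
=\sum_{j'=1}^D w_{j'}\,\vb_1^{(T^*)\top}\xb_{j'},
\end{align*}
where $w_{j'}:=\sum_{j=1}^D\Sbb_{j',j}^{(T^*)}\ge 0$ and $\sum_{j'=1}^D w_{j'}=D$ (since each column of $\Sbb^{(T^*)}$ is a probability vector). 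I would then decompose $\vb_1^{(T^*)}=\alpha^{(T^*)}\vb^*+\vb_{1,\text{error}}^{(T^*)}$ and write $\hat\vb_{\text{err}}:=\vb_{1,\text{error}}^{(T^*)}/\|\vb_{1,\text{error}}^{(T^*)}\|_2$, which is a unit vector orthogonal to $\vb^*$ by Lemma~\ref{lemma:induction_v}.

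For the unconditional lower bound, the core trick is Cauchy--Schwarz applied to $w_{j'}$: since $\sum_{j'}w_{j'}^2\le(\sum_{j'}w_{j'})^2=D^2$, we get
\begin{align*}
|f|\le\alpha^{(T^*)}\sum_{j'}w_{j'}|\langle\vb^*,\xb_{j'}\rangle|+\|\vb_{1,\text{error}}^{(T^*)}\|_2\sum_{j'}w_{j'}|\langle\hat\vb_{\text{err}},\xb_{j'}\rangle|
\le D\alpha^{(T^*)}B_1^{1/2}+D\|\vb_{1,\text{error}}^{(T^*)}\|_2B_2^{1/2},
\end{align*}
with $B_1:=\sum_{j'}\langle\vb^*,\xb_{j'}\rangle^2$ and $B_2:=\sum_{j'}\langle\hat\vb_{\text{err}},\xb_{j'}\rangle^2$. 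Because the $\xb_{j'}$ are i.i.d.\ $N(\mathbf 0,\sigma_x^2\Ib_d)$ and $\vb^*\perp\hat\vb_{\text{err}}$, the pair $(\langle\vb^*,\xb_{j'}\rangle,\langle\hat\vb_{\text{err}},\xb_{j'}\rangle)$ consists of independent $N(0,\sigma_x^2)$ coordinates, and the family across $j'$ is i.i.d., so $B_1$ and $B_2$ are i.i.d.\ with $B_i/\sigma_x^2\sim\chi_D^2$ as claimed. Then $yf\ge-|f|$ gives the first inequality.

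Under $E_{T^*}$, I would split off $j'=j^*$ from the sum: since $\Sbb_{j^*,j}^{(T^*)}\ge 1-\exp(-\Theta(D))$ for every $j$, we have $w_{j^*}\in[D(1-\exp(-\Theta(D))),D]$ and $\sum_{j'\ne j^*}w_{j'}\le D\exp(-\Theta(D))$. The main contribution is $w_{j^*}\,y\,\vb_1^{(T^*)\top}\xb_{j^*}$, whose leading piece $w_{j^*}\alpha^{(T^*)}y\langle\vb^*,\xb_{j^*}\rangle$ equals $w_{j^*}\alpha^{(T^*)}|\langle\vb^*,\xb_{j^*}\rangle|\ge 0$ because $y=\sign(\langle\vb^*,\xb_{j^*}\rangle)$; sandwiching $w_{j^*}$ between $D/2$ (for $D$ large) and $D$ yields the $\tfrac{D\alpha^{(T^*)}}{2}y\langle\vb^*,\xb_{j^*}\rangle$ and $D\alpha^{(T^*)}y\langle\vb^*,\xb_{j^*}\rangle$ pieces. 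It remains to dominate the two residuals $w_{j^*}\,y\langle\vb_{1,\text{error}}^{(T^*)},\xb_{j^*}\rangle$ and $\sum_{j'\ne j^*}w_{j'}\,y\vb_1^{(T^*)\top}\xb_{j'}$ by $1$; for this I would apply standard Gaussian tail bounds to $\|\xb_{j^*}\|_2,|\langle\vb^*,\xb_{j'}\rangle|$ and the chi-type quantities above, and then combine (i) $\|\vb_{1,\text{error}}^{(T^*)}\|_2\le\exp(-\Theta(\sqrt D))$ with (ii) $\sum_{j'\ne j^*}w_{j'}\le D\exp(-\Theta(D))$ and (iii) $\alpha^{(T^*)}\le\poly(D)$ from Lemma~\ref{lemma:induction_v}. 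All residuals come out at most $\poly(D)\exp(-\Theta(\sqrt D))\ll 1$ on the same high-probability event that defines $E_{T^*}$.

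The main obstacle is the second part rather than the first: carefully tracking the multiplicative interplay between polynomial-in-$D$ factors (from $\alpha^{(T^*)}$, $D$, $\|\xb_{j'}\|_2$, $d$) and the exponentially small factors from $E_{T^*}$ and $\|\vb_{1,\text{error}}^{(T^*)}\|_2$, and doing so on a single high-probability event so that both residuals lie below the additive constant $1$ simultaneously. By contrast, the chi-square identification in the first inequality is essentially a one-line Cauchy--Schwarz plus orthogonality argument once the $w_{j'}$ bookkeeping is set up.
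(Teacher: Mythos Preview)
Your approach matches the paper's: the same decomposition $\vb_1^{(T^*)}=\alpha^{(T^*)}\vb^*+\vb_{1,\text{error}}^{(T^*)}$, Cauchy--Schwarz for the unconditional bound (the paper applies it per column via $\sum_{j'}(\Sbb_{j',j}^{(T^*)})^2\le 1$, while you aggregate into row-weights $w_{j'}$ and use $\|w\|_2\le\|w\|_1$; both give $D\,B_i^{1/2}$), and the same isolate-$j^*$-then-bound-residuals scheme under $E_{T^*}$.

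One bookkeeping correction: Lemma~\ref{lemma:induction_v} does \emph{not} give $\alpha^{(T^*)}\le\poly(D)$; it gives $\alpha^{(T^*)}=\Theta\big((T^*)^{1/3}+D\big)$, which under the hypothesis $D\ge\omega(\log^2(1/\epsilon))$ is only $e^{o(\sqrt D)}$. This matters for the residual $\sum_{j'\ne j^*}w_{j'}\,y\,\vb_1^{(T^*)\top}\xb_{j'}$, which carries a factor $\alpha^{(T^*)}$ and therefore must be paired with the $\exp(-\Theta(D))$ coming from $\sum_{j'\ne j^*}w_{j'}$ (exactly as the paper does), not with $\exp(-\Theta(\sqrt D))$; your blanket claim ``all residuals are $\poly(D)\exp(-\Theta(\sqrt D))$'' is slightly off for this term, though the conclusion $\ll 1$ still holds once the pairing is corrected.
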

Based on these lemmas, we are ready to prove the conclusion of convergence bound in Theorem~\ref{thm:main_result}.

\begin{proof}[Proof of Theorem~\ref{thm:main_result}]
We separate $\cL(\vb^{(T^*)}, \Wb^{(T^*)})$ into two parts as 
\begin{align*}
    \cL(\vb^{(T^*)}, \Wb^{(T^*)}) 
    = \underbrace{\EE\big[\ell(yf(\Zb, \Wb^{(T^*)}, \vb^{(T^*)}))\mathbf{1}_{\{E_{T^*}\}}\big]}_{I_1} +\underbrace{\EE\big[\ell(yf(\Zb, \Wb^{(T^*)}, \vb^{(T^*)}))\mathbf{1}_{\{E_{T^*}^c\}}\big]}_{I_2}.
\end{align*}
For $I_1$, by the fact that $\exp(-x) /2 \leq \ell(x)\leq \exp(-x)$ for all $x>0$, and utilizing the results of Lemma~\ref{lemma:lower_bound_yf} for $E_{T^*}$ occurs, it holds that 
\begin{align*}
    &I_1 \leq \EE\Bigg[e\exp\bigg(\frac{D\alpha^{(T^*)}}{2}y\la\vb^*,\xb_{j^*}\ra\bigg)\mathbf{1}_{E_{T^*}}\Bigg]\leq \sqrt{\frac{2}{\pi}}\frac{2e}{\sigma_x D\alpha^{(T^*)}};\\
    &I_1 \geq \EE\bigg[\frac{1}{2e}\exp\big(-D\alpha^{(T^*)}y\la\vb^*,\xb_{j^*}\ra\big)\mathbf{1}_{E_{T^*}}\bigg] \leq \sqrt{\frac{2}{\pi}}\frac{1}{4e\sigma_x D\alpha^{(T^*)}}.
\end{align*}
The last steps of both formulas are derived by basic integral since $y\la\vb^*,\xb_{j^*}\ra$ follows a folded normal distribution with mean $0$ and variance $\sigma_x^2$
 (See more details in Lemma~\ref{lemma:expectation_exp_abs} and Lemma~\ref{lemma:expectation_exp_abs_con}). Furthermore, for $I_2$, we provide an upper-bound as 
\begin{align*}
I_2 &\leq \sqrt{\EE\Bigg[\log^2\bigg(1+\exp\Big(D \alpha^{(T^*)}B_1^{\frac{1}{2}}+ D \big\|\vb_{1, \text{error}}^{(T^*)}\big\|_2B_2^{\frac{1}{2}}\Big)\bigg)\Bigg]}\sqrt{\PP(E_{T^*}^c)}\\
&\leq
2\sqrt{2D^2 \big(\alpha^{(T^*)}\big)^2\EE[B_1] + 2D^2 \big\|\vb_{1, \text{error}}^{(T^*)}\big\|_2^2\EE[B_2]}\sqrt{\PP(E_{T^*}^c)}\leq \frac{2\sqrt{2}\sigma_xD^{\frac{3}{2}}\Big(\alpha^{(T^*)} + \big\|\vb_{1, \text{error}}^{(T^*)}\big\|_2\Big)}{e^{\Theta(\sqrt{D})}}.
\end{align*}
The first inequality is from Cauchy-Schwarz inequality and Lemma~\ref{lemma:lower_bound_yf}. The second inequality holds by the fact that $\log(1+a)\leq 2\log a$ for large $a$, and $(a+b)^2\leq 2(a^2+b^2)$. The third inequality is derived by applying the definition of $B_1$, $B_2$ in Lemma~\ref{lemma:lower_bound_yf} and the fact $\PP(E_{T^*}^c)\leq e^{-\Theta(\sqrt{D})}$ from the first conclusion in Theorem~\ref{thm:main_result}. (We provide a rigorous detail in Lemma~\ref{lemma:attention_one_hot}.) By the upper-bound of $\alpha^{(T^*)}$ in Lemma~\ref{lemma:induction_v}, the definition of $T^*$ in Theorem~\ref{thm:main_result} and our condition $D\geq \omega(\log^2(1/\epsilon))$, we know that $\alpha^{(T^*)} \ll e^{-\Theta(\sqrt{D})}$, indicating that $I_2\ll I_1$. Let $T^* = (\sqrt{\frac{\pi}{2}}\frac{C_2\sigma_x}{4e})^3 \big(D^3\vee\frac{1}{D^3\epsilon^3}\big)$, by applying the lower and upper bounds of $\alpha^{(T^*)}$ in Lemma~\ref{lemma:induction_v}, we can finally obtain that 
\begin{align*}
    &\cL(\vb^{(T^*)}, \Wb^{(T^*)}) \leq 2I_1\leq \sqrt{\frac{2}{\pi}}\frac{4e}{C_2\sigma_x}\frac{1}{D(T^*)^{1/3} + D^2} \leq \epsilon;\\
    &\cL(\vb^{(T^*)}, \Wb^{(T^*)}) \geq I_1\geq \sqrt{\frac{2}{\pi}}\frac{1}{4eC_3\sigma_x}\frac{1}{D(T^*)^{1/3} + D^2} \geq \frac{C_2}{32e^2C_3}\epsilon.
\end{align*}
This completes the proof.
\end{proof}
\section{Numerical experiments}

In this section, we conduct numerical experiments on synthetic data to verify our theoretical conclusions. Two types of synthetic data are generated following the group sparse data distribution defined in Definition~\ref{def:data} and downstream task data distribution defined in Definition~\ref{def:ds_dataset} respectively. For each type of synthetic data, we generate two samples with different sample sizes $n$, variable group numbers $D$, and variable dimensions $d$. Besides, the variable group numbers $D$, and variable dimensions $d$ are matched across these two different types of synthetic data to enable the parameters of one-layer self-attention to be transferred to downstream tasks. 



In the experiments where transformers learn variable selection, we generate data according to Definition~\ref{def:data}, setting $\sigma_x=0.25$ and $j^*=2$, with two different pairs of $(n, d, D)$: $(500, 4, 6)$ and $(200, 2, 4)$. The vector $\vb^*$ is randomly generated and then fixed. We set the learning rate $\eta=0.5$ and train the models for 400 iterations. During the training process, we plot the training loss, the cosine similarity $\frac{\langle \vb^{(t)}_1, \vb^* \rangle}{\|\vb^{(t)}_1\| \|\vb^*\|}$ and the norm ratio $\|\vb^{(t)}_1\|/\|\vb^{(t)}_2\|$. After the training loss converges (at final iteration), we calculate the attention score matrix for each sample and display the heatmap of the average attention score matrix across all samples.
\begin{figure}[t]
    \centering
\subfigure[Training Loss]{
\label{subfig:1}
\includegraphics[width=0.32\textwidth]{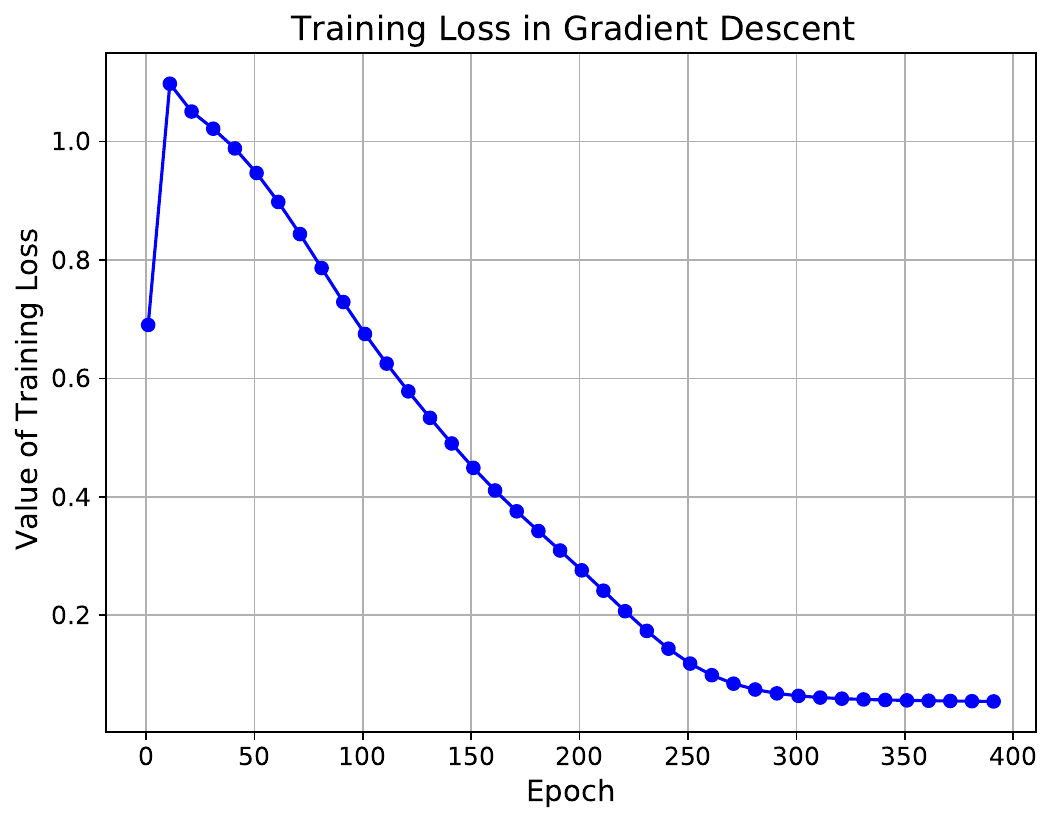}}
\subfigure[Angle in $\vb_1^{(t)}$ and $\vb^*$]{
\label{subfig:2}
\includegraphics[width=0.32\textwidth]{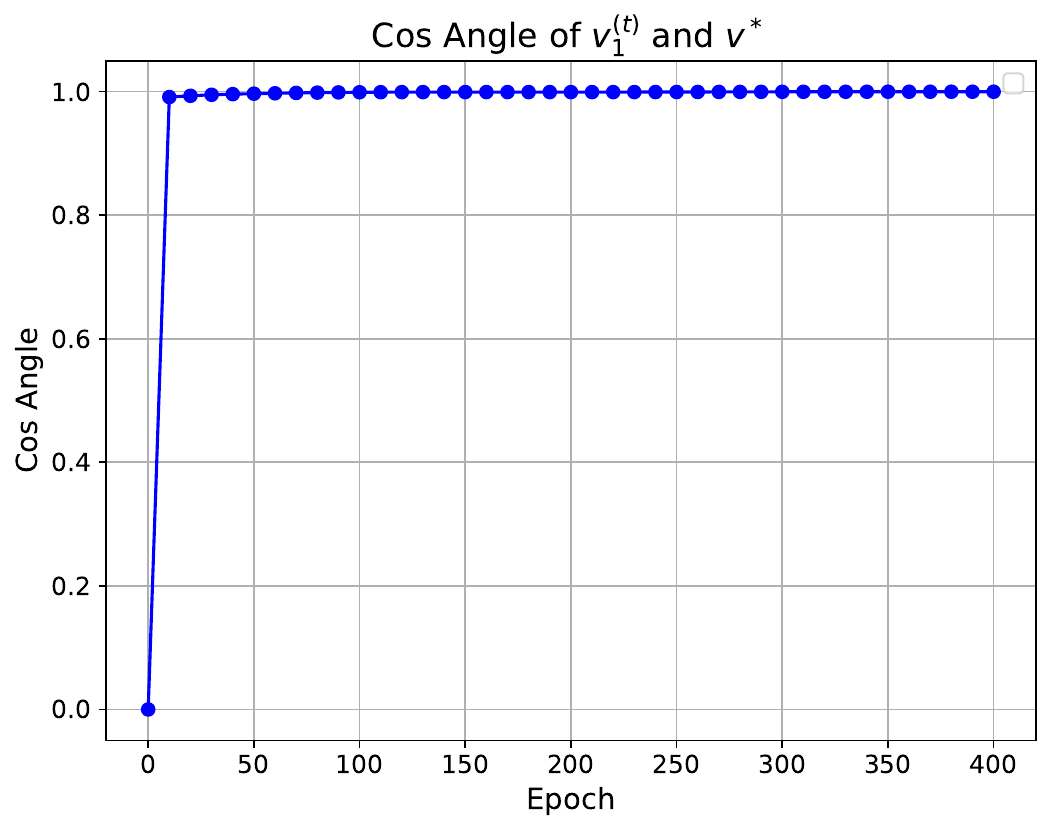}}
\subfigure[ Norm Ratio $\|\vb^{(t)}_2\|/\|\vb^{(t)}_1\| $]{
\label{subfig:3}
\includegraphics[width=0.32\textwidth]{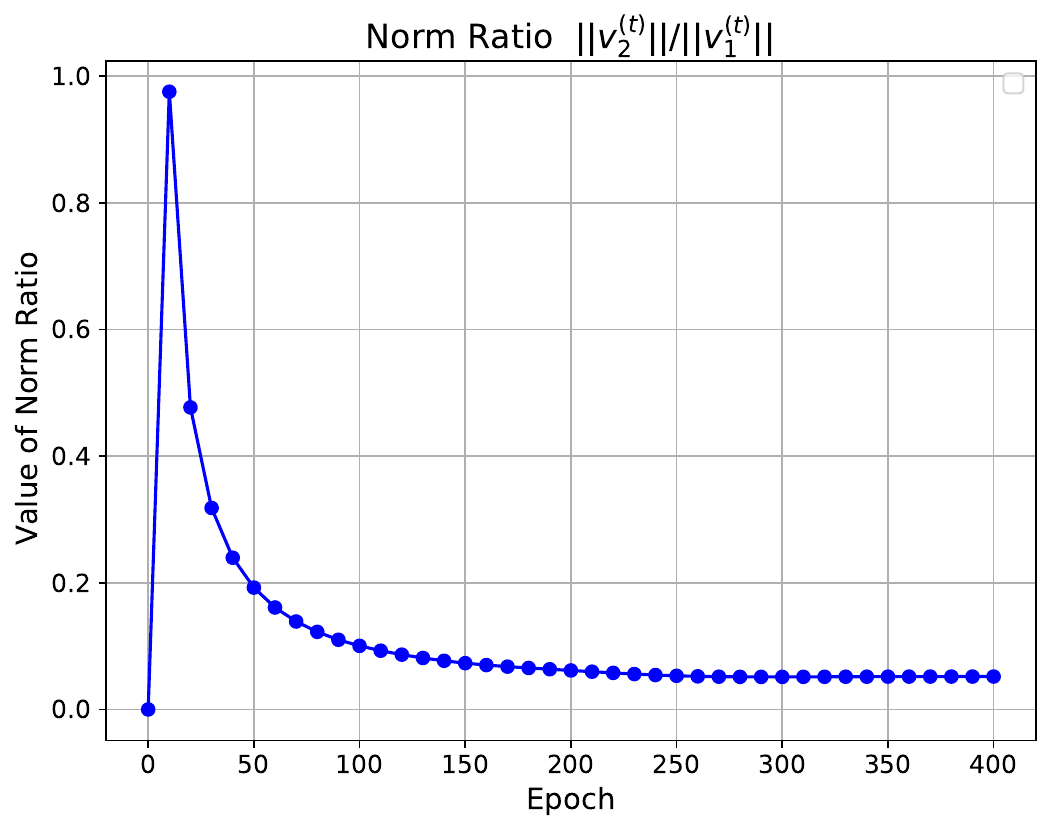}}

\subfigure[Training Loss]{
\label{subfig:4}
\includegraphics[width=0.32\textwidth]{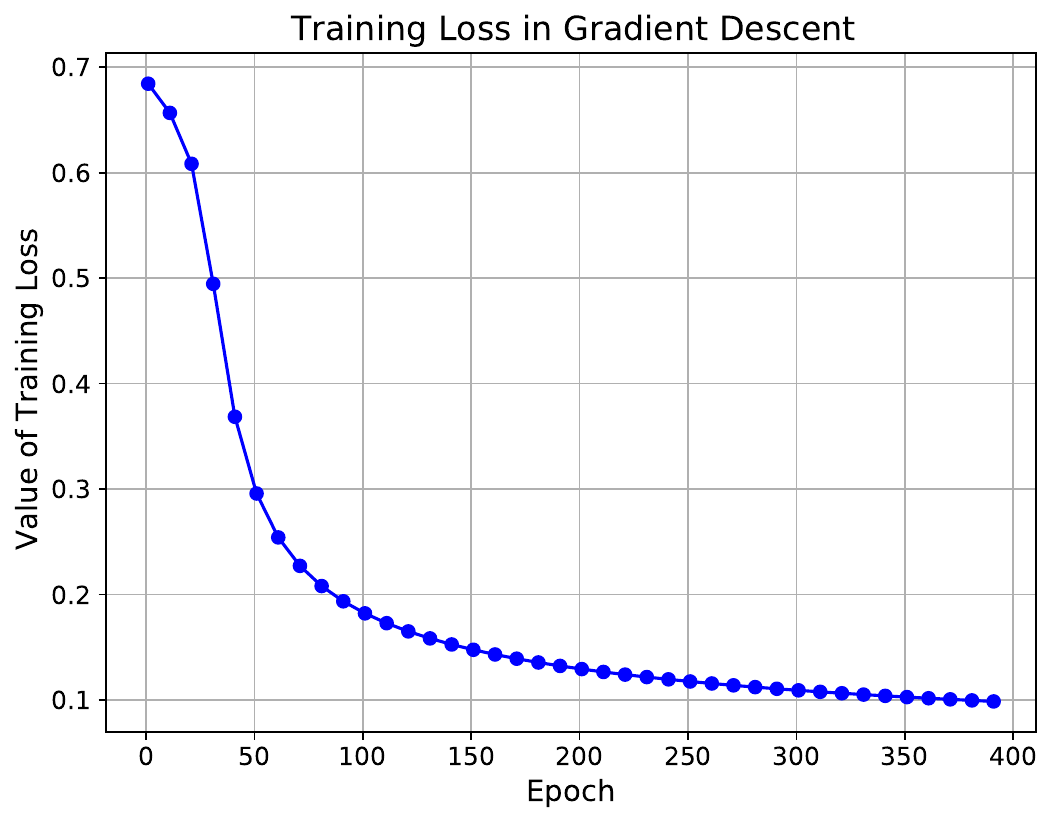}}
\subfigure[Angle in $\vb_1^{(t)}$ and $\vb^*$]{
\label{subfig:5}
\includegraphics[width=0.32\textwidth]{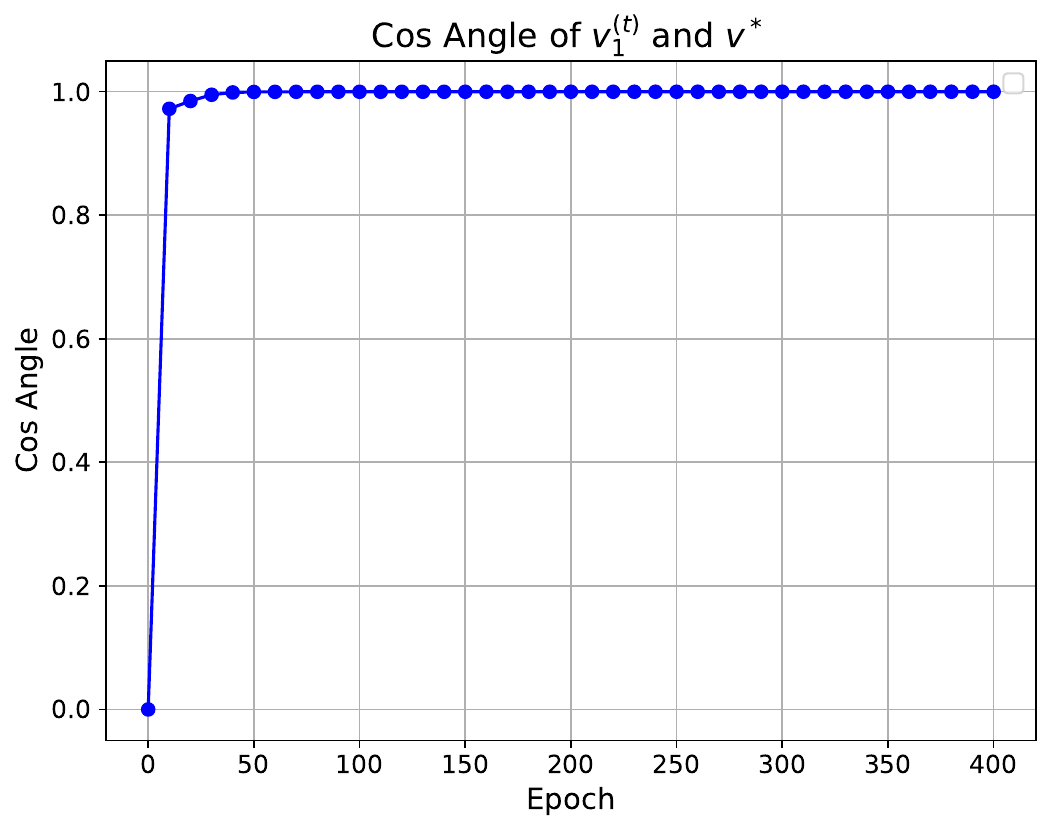}}
\subfigure[ Norm Ratio $\|\vb^{(t)}_2\|/\|\vb^{(t)}_1\| $]{
\label{subfig:6}
\includegraphics[width=0.32\textwidth]{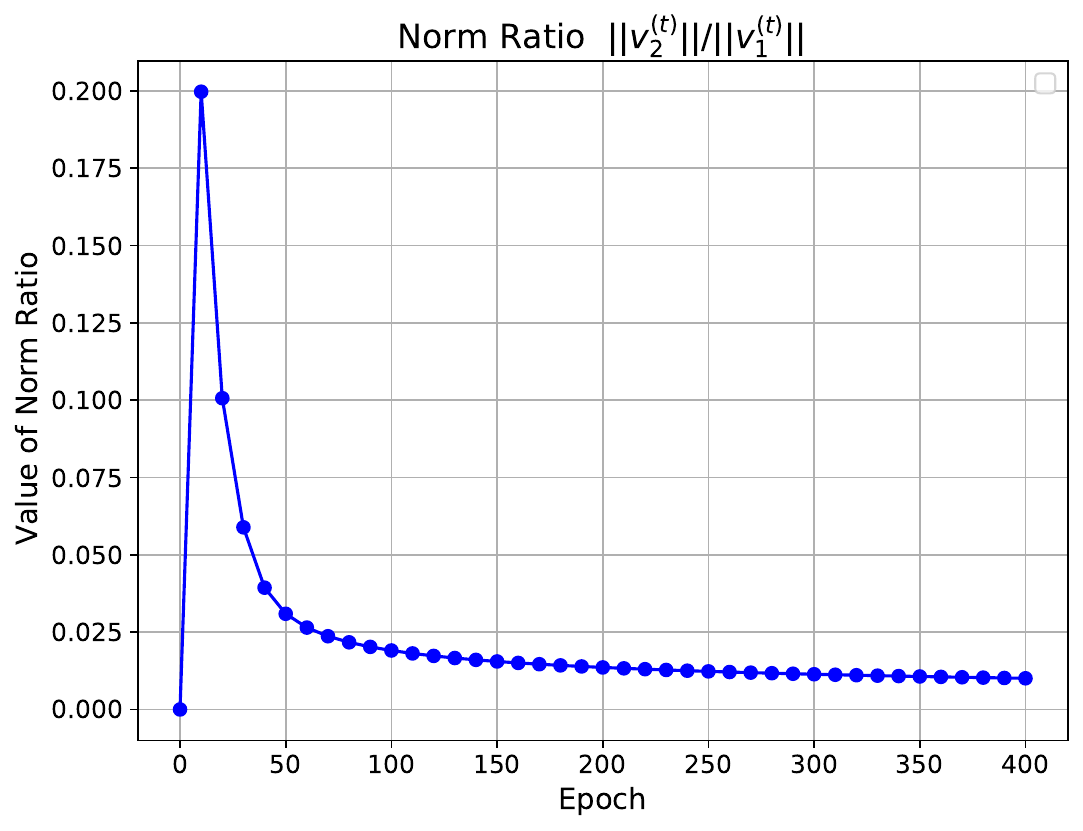}}
\caption{Figures on training loss, cosine similarity and norm ratio. The first line presents the training results with a sample size of 400, 6 variable groups, and a variable dimension of 4. The second line shows the training results for a sample size of 200, with 4 variable groups and a variable dimension of 2.}
\label{fig:simulations1}
\end{figure}

\begin{figure}[ht]
    \centering
\subfigure[Heatmap of Attention Matrix]{
\label{subfig:heat1}
\includegraphics[width=0.48\textwidth]{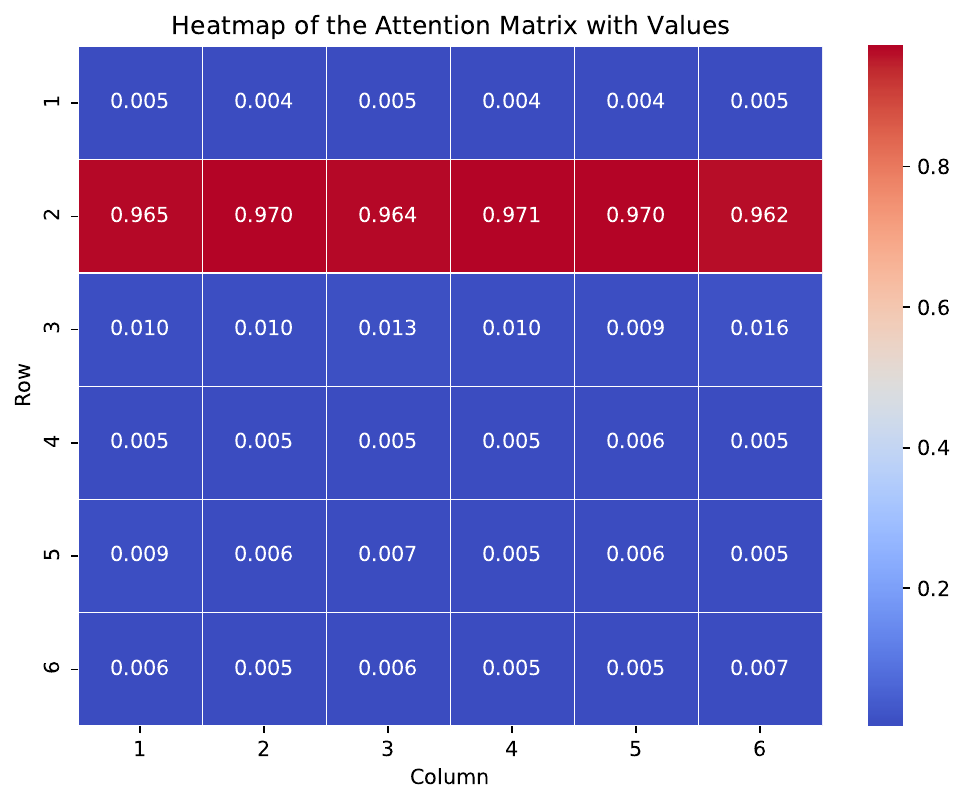}}
\subfigure[Heatmap of Attention Matrix]{
\label{subfig:heat2}
\includegraphics[width=0.48\textwidth]{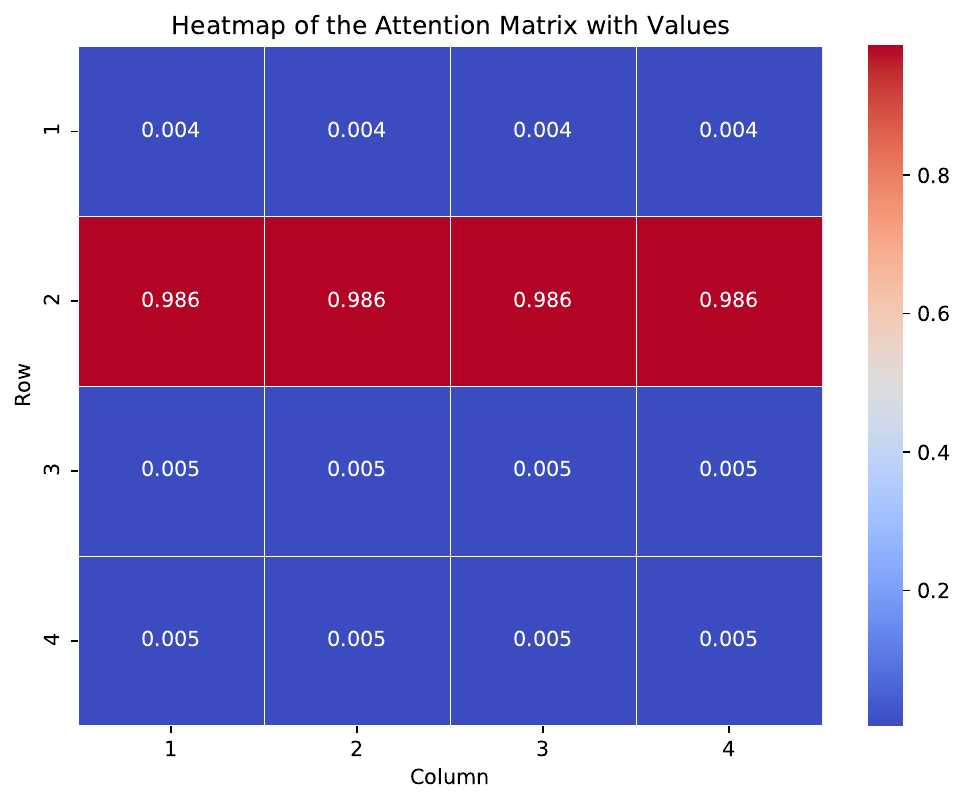}}
\caption{Heatmap of the average attention matrix. Figure~\ref{subfig:heat1} shows the heatmap of the attention matrix corresponding to the 6 variable groups, and Figure~\ref{subfig:heat2} shows the heatmap of the attention matrix corresponding to the 4 variable groups. }
    \label{fig:simulations2}
\end{figure}
As shown in Figures~\ref{fig:simulations1}, the training loss converges to 0 after sufficient iterations, and $\vb^{(t)}_1$ rapidly aligns with the direction of $\vb^*$ early in the training and continues in that direction for the rest of the iterations.  When the training loss converges, we can also observe in Figure~\ref{fig:simulations1} that $\|\vb^{(t)}_2\|$ is relatively small compared to $\|\vb^{(t)}_1\|$. 
When we examine the attention matrix, as shown in Figure~\ref{fig:simulations2}, we observe that the second row is significantly larger than the others, indicating that the attention is predominantly focused on this row. This phenomenon well matches the prediction from our theory. 

We conducted additional experiments on the downstream tasks by generating two new Gaussian samples, each with a different $\vb^*$ from the previous setup. These samples follow Definition~\ref{def:ds_dataset}, with parameters set to $\tilde{\sigma}_x = 1$ and $\gamma = 1$, while using the previous configurations for $d$ and $D$. To ensure linear separability with a margin, we applied a projection to adjust the data accordingly. The test accuracy is calculated after each iteration using  1000 test samples. Both experiments use a sample size of 400, and the learning rate is set to $10^{-3}$. 

As shown in Figure~\ref{fig:test}, the test accuracy initially oscillates during the early stages of training. However, after a longer training period, the test accuracy stabilizes and converges to 1 in both settings. This oscillation is primarily due to the transition in the direction of the vector $\vb^{(t)}_1$ during training. According to our analysis, once $\vb^{(t)}_1$ completes its direction transition, the test error decreases, leading to stable learning in the downstream task. These trends are consistent with the theory, confirming that the pre-trained transformers perform effectively on downstream tasks.
\begin{figure}
    \centering
\subfigure[(d,D)=(4,6)]{
\label{subfig:test1}
\includegraphics[width=0.48\textwidth]{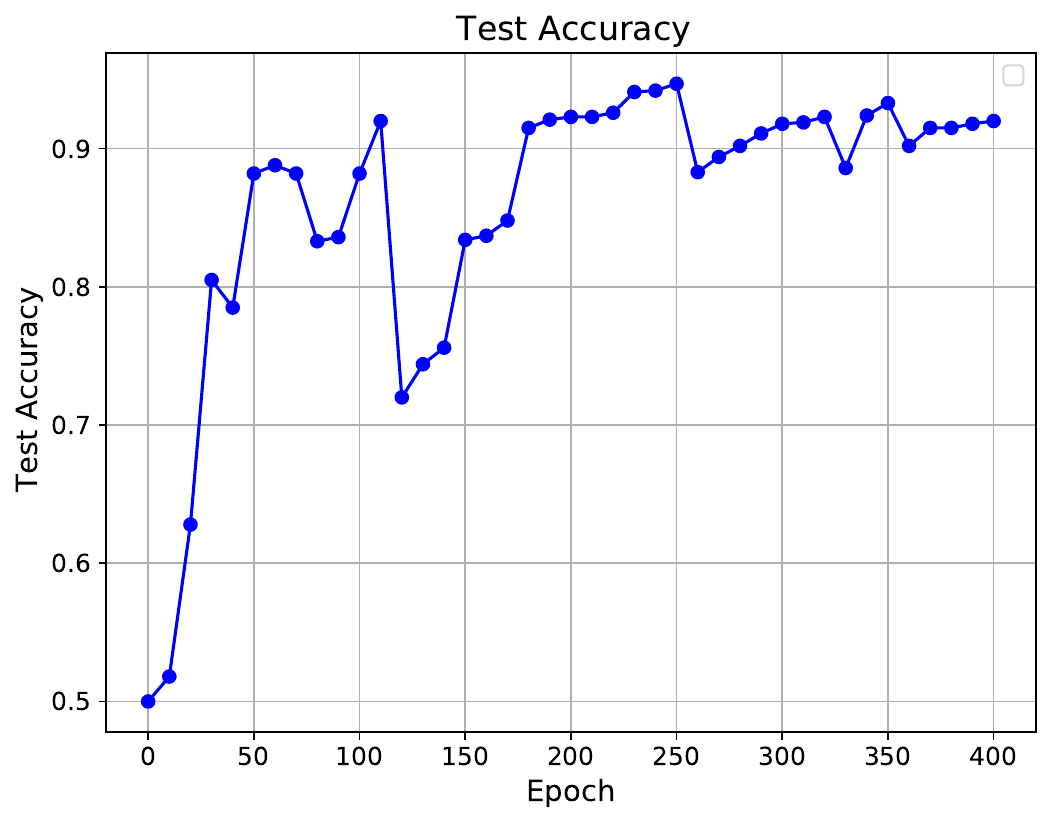}}
\subfigure[(d,D)=(2,4)]{
\label{subfig:test2}
\includegraphics[width=0.48\textwidth]{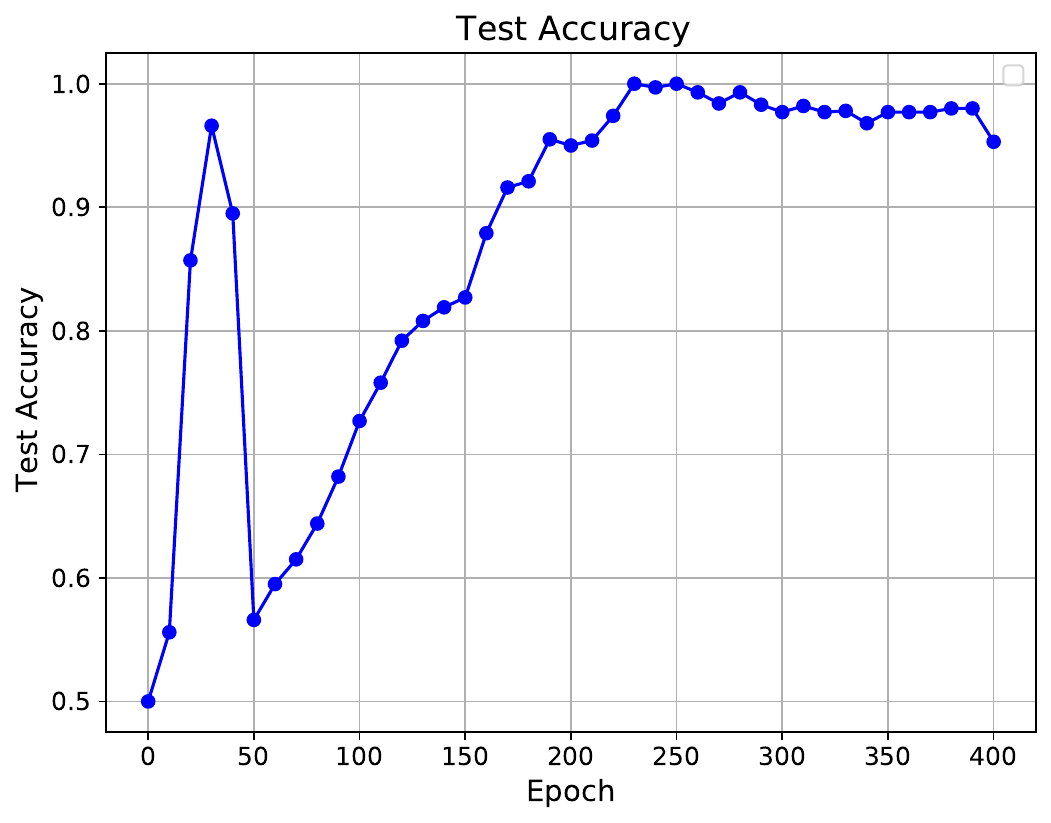}}
    \caption{Test accuracy in the downstream task performance with different variable group numbers and variable dimensions. }
    \label{fig:test}
\end{figure}

\section{Conclusions and limitations}

In this paper, we study how one-layer transformers trained by gradient descent learn a binary classification group sparse data, without imposing any prior knowledge on the initialization. We provide an accurate characterization of optimization trajectories. Based on this result, we theoretically demonstrate that the one-layer transformers can almost attend to the variables from the label-relevant group, and disregard other ones by leveraging the self-attention mechanism. Besides, we establish a tight convergence rate with a matching lower and upper bound for the population loss. Furthermore, we also propose that a pre-trained one layer transformers on a group sparse data can be effectively transferred to a downstream task with the same sparsity pattern. We theoretically demonstrate this conclusion by providing an improved generalization error bound for one-layer transformers, which surpasses that of linear logistic regression applied to vectorized features. Our numerical experiment observations support the theoretical findings. Although our theoretical findings provide insights into the mechanism of self-attention in variable selection, we focus exclusively on a one-layer self-attention model. Future research could be more intriguing if it explored deeper transformer architectures. Additionally, investigating the integration of self-attention with other modules, such as MLPs, ResNets, and normalization layers, presents a promising direction for further work.

\section*{Acknowledgments}
We thank the anonymous reviewers for their helpful comments. Yuan Cao is supported by NSFC 12301657 and HK RGC-ECS 27308624.

\appendix

\section{Detailed comparison with \citet{jelassi2022vision}}


In \citet{jelassi2022vision}, the authors provide a theoretical guarantee that one-layer vision transformers can learn the inner structure of data, which is defined as ``patch association’’  by the authors. This result offers novel insights into the training dynamics of one-layer transformers.

However, both the conclusions and proof techniques of \citet{jelassi2022vision} can not be directly extended or applied to this work due to the distinctive settings among these two studies. Specifically, \citet{jelassi2022vision} considers a one-layer vision transformer defined as
\begin{align*}
    f(\Xb) = \sigma(\vb^\top \Xb \cS(\Ab))\mathbf{1}_D,
\end{align*}
where $\Xb$ is the sequence of tokens/feature groups with each column $\xb_j$ denoting a token/feature group, $\sigma(\cdot)$ represents the activation function, $\vb$ indicates the value vector, $\cS(\cdot)$ represents the softmax function, and $\Ab$ is input matrix of the softmax function. Unlike the common design of transformers, they directly treat the entries of the matrix $\Ab$ as trainable parameters and consider training $\Ab$ with gradient descent. In contrast, we consider softmax attention with the formulation $\cS(\Zb^\top \Wb \Zb)$, and treat the coefficient matrix $\Wb$ as the trainable parameters, which aligns with the general design of transformers. Besides, the initialization of the value vector $\vb$ in \citet{jelassi2022vision} is assumed to strictly align with the direction of ground truth $\vb^*$, which is a strong and impractical assumption. In comparison, we consider general zero initializations.


In addition to the distinctions among the settings of problems, our theoretical results are more precise and refined. While \citet{jelassi2022vision} provides an upper bound on the number of iterations needed to achieve a population loss of $1/\poly(d)$, we offer both matching upper and lower bounds for iterations to reach arbitrarily small population loss. Furthermore, we present a sample complexity analysis for transfer learning, which surpasses the conclusion of linear logistics regression on vectorized inputs from the PAC learning theory. In contrast, \citet{jelassi2022vision} does not include such sample complexity analyses.


\section{Proof in Section~\ref{section:proof_sketch}}
In this section, we provide detailed proof for lemmas in Section~\ref{section:proof_sketch}. Before we demonstrate the proof details, we first introduce some basic gradient calculations of $\cL(\vb, \Wb)$ w.r.t $\vb, \Wb$ as,
\begin{align}
\nabla_{\vb} \cL(\Wb,\vb)&=\EE \Bigg[\ell'(yf(\Zb;\Wb,\vb))\cdot y\cdot \frac{\partial f(\Zb;\Wb,\vb)}{\partial \vb}\Bigg]\notag\\
&=\EE\big[\ell'(yf(\Zb;\Wb,\vb))\cdot y\cdot  \Zb \Sbb \mathbf{1}_{D}\big] = \EE\Bigg[\ell'(yf(\Zb;\Wb,\vb))\sum_{j=1}^D y\cdot  \zb_j \sum_{j'=1}^D \Sbb_{j,j'}\Bigg]; \label{eq:gradient_v}\\
\nabla_{\Wb}\cL(\Wb,\vb)&=\EE \Bigg[\ell'(yf(\Zb;\Wb,\vb))\cdot y\cdot \frac{\partial f(\Zb;\Wb,\vb)}{\partial \Wb}\Bigg]\notag\\
&=\EE\Bigg[\ell'(yf(\Zb;\Wb,\vb))\cdot y \cdot \sum_{j=1}^D\sum_{j'=1}^D\vb^\top\zb_{j'}\sum_{j''\not=j'}\frac{\exp(\zb_{j''}^\top\Wb\zb_j)\cdot\exp(\zb_{j'}^\top\Wb\zb_j)}{[\sum_{j''=0}^D\exp(\zb_{j''}^\top\Wb\zb_j)]^2}(\zb_{j'}-\zb_{j''})\zb_{j}^\top\Bigg]\notag\\
&=\EE\Bigg[\ell'(yf(\Zb;\Wb,\vb))\cdot y \cdot \sum_{j=1}^D\sum_{j'=1}^D\sum_{j''\not=j'}\Sbb_{j', j}\Sbb_{j'',j} \vb^\top\zb_{j'}(\zb_{j'}-\zb_{j''})\zb_{j}^\top\Bigg].\label{eq:gradient_W}
\end{align}

For the following derivation, we first introduce a shorthand notation that $\ell'^{(t)} = \ell'(yf(\Zb, \Wb^{(t)}, \vb^{(t)}))$. Besides, we also use $\Ab$ to denote an orthogonal matrix $\Ab$ with $\vb^*$ being its first column. We denote $\bxi_2, \cdots, \bxi_d$ the rest columns in $\Ab$, i.e., \begin{align}\label{ea:def_Ab_orthogonal_matrix}
    \Ab = [\vb^*, \bxi_2, \cdots, \bxi_d] \in \RR^{d\times d},
\end{align}
where $\la\vb^*, \bxi_i\ra = 0$, $\la\bxi_i, \bxi_{i'}\ra = 0$ and $\|\bxi_i\|_2=1$ for all $i, i'\in \{2, \cdots, d\}$.

\subsection{Restatement of Lemma~\ref{lemma:induction_v} and Lemma~\ref{lemma:induction_W}}
For the sake of conciseness and coherence in the presentation, we rearrange some content from Lemma~\ref{lemma:induction_v} and Lemma~\ref{lemma:induction_W}. Specifically, we consolidate the conclusion that $\vb_2^{(t)}, \Wb_{1, 2}^{(t)}, \Wb_{2, 1}^{(t)} = \mathbf 0$ into Proposition~\ref{prop:interaction_dynamics}. The remaining conclusions of Lemma~\ref{lemma:induction_v} are presented in Lemma~\ref{lemma:update_v}. We also include the remaining conclusions regarding $\Wb_{1, 1}^{(t)}$ from Lemma~\ref{lemma:induction_W} in Lemma~\ref{lemma:update_W_11}, and the conclusions about 
$\Wb_{2, 2}^{(t)}$ from Lemma~\ref{lemma:induction_W} in Lemma~\ref{lemma:update_W_22}. We present these new lemmas in this subsection and proof them respectively in the following subsections of this part.

We first introduce our new Proposition~\ref{prop:interaction_dynamics}.

\begin{proposition}\label{prop:interaction_dynamics}
For iterates $\vb^{(t)}$ and $\Wb^{(t)}$ of gradient descent in~\eqref{eq:vt_update} and~\eqref{eq:wt_update}, it holds that $\vb_2^{(t)}=\mathbf0$, $\Wb_{1, 2}^{(t)}=\mathbf0$ and $\Wb_{2, 1}^{(t)}=\mathbf0$ for all $t\geq 0$.
\end{proposition}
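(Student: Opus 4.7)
My plan is to prove the three claims simultaneously by induction on $t \geq 0$. The base case $t=0$ is immediate from the zero initialization $\vb^{(0)}, \Wb^{(0)} = \mathbf{0}$. For the inductive step, assuming $\vb_2^{(t)} = \mathbf{0}$, $\Wb_{1,2}^{(t)} = \mathbf{0}$, and $\Wb_{2,1}^{(t)} = \mathbf{0}$, the gradient descent updates~\eqref{eq:vt_update} and~\eqref{eq:wt_update} reduce the task to verifying that the $\vb_2$, $\Wb_{1,2}$, and $\Wb_{2,1}$ blocks of $\nabla_{\vb}\cL(\Wb^{(t)},\vb^{(t)})$ and $\nabla_{\Wb}\cL(\Wb^{(t)},\vb^{(t)})$ all vanish.

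The main tool is a distributional symmetry of the population loss induced by the sign flip $\Xb \mapsto -\Xb$. Since each column $\xb_j \sim N(\mathbf{0}, \sigma_x^2 \Ib_d)$ is centrally symmetric, $-\Xb$ has the same law as $\Xb$, while the label transforms as $y \mapsto \sign(\la -\xb_{j^*}, \vb^*\ra) = -y$. Under the inductive hypothesis the cross interactions $\Wb_{1,2}^{(t)}, \Wb_{2,1}^{(t)}$ drop out, so the softmax input reduces to $\zb_{j'}^\top \Wb^{(t)} \zb_j = \xb_{j'}^\top \Wb_{1,1}^{(t)} \xb_j + \pb_{j'}^\top \Wb_{2,2}^{(t)} \pb_j$, a quadratic form in $\Xb$, and every entry of $\Sbb$ is invariant under the flip. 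Likewise $\vb^{(t)\top} \zb_{j'} = \vb_1^{(t)\top} \xb_{j'}$ is linear in $\Xb$, so $f(\Zb;\Wb^{(t)},\vb^{(t)}) \mapsto -f$; consequently $yf$ and hence $\ell'(yf)$ are also invariant under $\Xb \mapsto -\Xb$.

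With these parity facts in hand, every component of the three gradient blocks is odd under the symmetry, and the expectation therefore vanishes by pairing $\Xb$ with $-\Xb$. Concretely, $\nabla_{\vb_2}\cL$ reduces via~\eqref{eq:gradient_v} to $\EE[\ell'(yf)\,y\,\sum_{j,j'}\Sbb_{j',j}\pb_{j'}]$ in which only the factor $y$ changes sign; for the $(a,b)\in [d]\times [D]$ entry of the $\Wb_{1,2}$-block coming from~\eqref{eq:gradient_W}, the integrand carries three sign-flipping factors $y$, $\vb_1^{(t)\top}\xb_{j'}$, and $(\xb_{j'}-\xb_{j''})_a$, while $\Sbb_{j',j}\Sbb_{j'',j}$, $\ell'(yf)$, and $(\pb_j)_b$ are invariant; the $\Wb_{2,1}$-block is analogous with $(\xb_j)_b$ replacing $(\xb_{j'}-\xb_{j''})_a$ as the odd factor and $(\pb_{j'}-\pb_{j''})_a$ now being the invariant positional factor. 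The argument is essentially parity bookkeeping, so the main obstacle is a careful accounting of which factors in~\eqref{eq:gradient_v}--\eqref{eq:gradient_W} are even versus odd under $\Xb \mapsto -\Xb$, together with the observation that the simplifications above follow directly from the inductive hypothesis at step $t$ and from the block decomposition of $\zb_j$.
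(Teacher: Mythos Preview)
Your proposal is correct and follows the same inductive skeleton as the paper: zero initialization gives the base case, and the inductive step reduces to showing the $\vb_2$, $\Wb_{1,2}$, and $\Wb_{2,1}$ gradient blocks vanish. The mechanism you use to kill the expectations is phrased differently, though. The paper argues via independence: under the inductive hypothesis it shows that $\Sbb^{(t)}$ and $\ell'^{(t)}$ can be written as functions of $\{y\xb_1,\dots,y\xb_D\}$ (Lemmas~\ref{lemma:s_yx_copoments} and~\ref{lemma:ell'_yx_copoments}), invokes a separate result (Lemma~\ref{lemma:independence_yx}) that $y$ is independent of each $y\xb_j$, and then factors each gradient block as $\EE[y]\cdot\EE[\text{rest}]=0$. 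Your parity argument under $\Xb\mapsto -\Xb$ encodes exactly the same symmetry---the quantities $y\xb_j$ are precisely the invariants of that map, and ``function of $\{y\xb_j\}$'' is the same as ``even under the flip''---but packages it more directly, without isolating an independence statement. Both routes are equally valid; yours is slightly more self-contained for this proposition, while the paper's independence lemmas are set up because they are reused in later computations (e.g., Lemma~\ref{lemma:ell'_s_y_bxi_x_copoments} and the analysis of $\Wb_{1,1}^{(t)}$).
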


The next Lemma~\ref{lemma:update_v} is a restatement of conclusion of $\vb_1^{(t)}$ in Lemma~\ref{lemma:induction_v}.

\begin{lemma}\label{lemma:update_v}
Under the same condition as Theorem~\ref{thm:main_result}, it holds that 
\begin{align*}
    \vb_1^{(t)} = \alpha^{(t)} \vb^* + \vb_{1, \text{error}}^{(t)}
\end{align*}
for all $3\leq t\leq T^*$, 
where the error term satisfies that $\big\la\vb^*,\vb_{1, \text{error}}^{(t)}\big\ra = 0$ and 
\begin{align}\label{eq:update_v_error_bound}
    \big\|\vb_{1, \text{error}}^{(t)}\big\|_2\leq e^{-C_4\sqrt{D}}
\end{align} 
for some positive constant $C_4$ solely depending on $\sigma_x, \eta$. Besides, the coefficient $\alpha^{(t)}$ satisfy that
\begin{align}\label{eq:alpha_bound}
    \bigg(\sqrt{\frac{2}{\pi}}\frac{3\eta}{2\sigma_x D}(t-3) + \frac{2\eta^3\sigma_x^3}{125\pi}\sqrt{\frac{2}{\pi}}D^3\bigg)^{\frac{1}{3}} \leq \alpha^{(t)} \leq \sqrt{\frac{\pi}{2}}\frac{2}{\eta\sigma_x^3D^3} + \bigg(\sqrt{\frac{2}{\pi}}\frac{6\eta}{\sigma_x D}(t-3) + \frac{2\eta^3\sigma_x^3}{\pi}\sqrt{\frac{2}{\pi}}D^3\bigg)^{\frac{1}{3}}.
\end{align}   
\end{lemma}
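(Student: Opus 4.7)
\noindent\textbf{Proof proposal for Lemma~\ref{lemma:update_v}.}
The plan is to prove both the structural decomposition and the two-sided bound on $\alpha^{(t)}$ simultaneously by strong induction on $t$, using the gradient formula~\eqref{eq:gradient_v} together with Proposition~\ref{prop:interaction_dynamics} (so that $\vb_2^{(t)}=\mathbf{0}$ and $\Wb_{1,2}^{(t)}=\Wb_{2,1}^{(t)}=\mathbf{0}$ hold throughout, decoupling the feature and position blocks) and the structural form of $\Wb_{1,1}^{(t)}$, $\Wb_{2,2}^{(t)}$ in Lemma~\ref{lemma:induction_W}. The inductive hypothesis packages the decomposition $\vb_1^{(t)}=\alpha^{(t)}\vb^*+\vb_{1,\mathrm{error}}^{(t)}$ with $\langle\vb_{1,\mathrm{error}}^{(t)},\vb^*\rangle=0$ and $\|\vb_{1,\mathrm{error}}^{(t)}\|_2\le e^{-C_4\sqrt{D}}$, together with the matching lower/upper bound on $\alpha^{(t)}$. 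Projecting the first-block gradient $\nabla_{\vb_1}\cL=\EE[\ell'^{(t)}\,y\sum_{j}\xb_j(\Sbb^{(t)}\mathbf{1}_D)_j]$ onto the orthonormal basis $\Ab=[\vb^*,\bxi_2,\ldots,\bxi_d]$ from~\eqref{ea:def_Ab_orthogonal_matrix} decouples the two dynamics cleanly.

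\noindent The base case ($t\le 3$) is handled by direct computation: at $t=0$ the softmax is uniform and $\nabla_\Wb\cL|_{0}=\mathbf{0}$ since $\vb^{(0)}=\mathbf{0}$, so $\vb^{(1)}_1=\tfrac{\eta}{2}\sqrt{2/\pi}\,\sigma_x\vb^*$ by the identity $\EE[\sign(\langle\vb^*,\xb_{j^*}\rangle)\xb_{j^*}]=\sqrt{2/\pi}\sigma_x\vb^*$; iterating twice more, using the explicit form of $\Wb^{(2)}$ obtained from~\eqref{eq:gradient_W}, shows the attention begins to tilt toward $j^*$ and yields $\alpha^{(3)}\ge c\,\eta\sigma_x D$, which matches the $D^3$ term inside the cube root of the lower bound. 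For the inductive step on the $\vb^*$-component, I will write
\[
\alpha^{(t+1)}-\alpha^{(t)}=-\eta\,\langle\nabla_{\vb_1}\cL^{(t)},\vb^*\rangle=-\eta\,\EE\!\left[\ell'^{(t)}\,y\langle\vb^*,\xb_{j^*}\rangle(\Sbb^{(t)}\mathbf{1}_D)_{j^*}\right]+\mathrm{(negligible)},
\]
where the negligible term gathers $j\neq j^*$ contributions (which vanish by independence of $\xb_j$ from $y$) and the attention error. On the event $E_t$ from Lemma~\ref{lemma:lower_bound_yf}, $(\Sbb^{(t)}\mathbf{1}_D)_{j^*}\approx D$ and $yf\approx D\alpha^{(t)} y\langle\vb^*,\xb_{j^*}\rangle$, so the expectation reduces to an explicit folded-Gaussian integral of the form $\EE[|G|e^{-D\alpha^{(t)}|G|}]$ with $G\sim N(0,\sigma_x^2)$. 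Laplace-type bounds on this integral (as in Lemmas~\ref{lemma:expectation_exp_abs}--\ref{lemma:expectation_exp_abs_con}) yield the discrete differential inequality
\[
\frac{c_1\eta}{\sigma_x D(\alpha^{(t)})^2}\le \alpha^{(t+1)}-\alpha^{(t)}\le \frac{c_2\eta}{\sigma_x D(\alpha^{(t)})^2},
\]
from which the cube-root bound~\eqref{eq:alpha_bound} follows by telescoping $(\alpha^{(t+1)})^3-(\alpha^{(t)})^3\approx 3(\alpha^{(t)})^2(\alpha^{(t+1)}-\alpha^{(t)})$ and absorbing second-order terms via the base-case bound $\alpha^{(3)}\gtrsim D$.

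\noindent For the orthogonal component, I will project $\nabla_{\vb_1}\cL^{(t)}$ onto each $\bxi_i$. Writing $\xb_{j}=g_j\vb^*+\xb_{j}^{\perp}$ with $g_j=\langle\vb^*,\xb_j\rangle$ and $\xb_j^{\perp}\perp\vb^*$, the structural forms of $\Wb_{1,1}^{(t)}$ and $\Wb_{2,2}^{(t)}$ in Lemma~\ref{lemma:induction_W} imply that the leading part of every attention entry $\Sbb^{(t)}_{j',j}$, and of $y=\sign(g_{j^*})$, depends on the features only through $\{g_j\}$ and through bilinear contractions against $\vb_{1,\mathrm{error}}^{(t)}$ and $\Wb_{1,1,\mathrm{error}}^{(t)}$. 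Conditioning on $\{g_j\}$, the residuals $\xb_j^{\perp}$ are centered Gaussians symmetric under $\xb_j^{\perp}\mapsto -\xb_j^{\perp}$, so the leading-order contribution of $\langle\nabla_{\vb_1}\cL^{(t)},\bxi_i\rangle$ vanishes identically. What remains is a perturbation proportional to the already-controlled error terms of $\Wb^{(t)}$ and $\vb_{1,\mathrm{error}}^{(t)}$, multiplied by high-probability attention-concentration tail events of size $e^{-\Theta(\sqrt{D})}$ (coming from rewriting the bad-event probability in Lemma~\ref{lemma:attention_one_hot}). This yields a Gr\"onwall-type recursion $\|\vb_{1,\mathrm{error}}^{(t+1)}\|_2\le(1+\eta/\mathrm{poly}(D))\|\vb_{1,\mathrm{error}}^{(t)}\|_2+e^{-C'\sqrt{D}}$, which closes the induction for $t\le T^*=\Theta(D^3\vee 1/(D^3\epsilon^3))$ since the multiplicative factor compounded over $T^*$ iterations remains $e^{o(\sqrt{D})}$.

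\noindent The hard part will be the second bullet above: simultaneously tracking that (i) the attention-concentration event $E_t$ holds with probability $1-e^{-\Theta(\sqrt{D})}$ uniformly along the trajectory, (ii) the Laplace-type expansion is accurate when $\alpha^{(t)}$ is still of order $D$ (where $D\alpha^{(t)}$ is large but neither infinite nor in the stationary-point regime), and (iii) the errors in $\Wb_{1,1}^{(t)}$ and $\Wb_{2,2}^{(t)}$ from Lemma~\ref{lemma:induction_W} feed into the orthogonal part of $\nabla_{\vb_1}\cL^{(t)}$ without breaking the exponential bound~\eqref{eq:update_v_error_bound}. Once these three items are compatible, the induction closes and the two-sided bound on $\alpha^{(t)}$ follows from exact integration of the cubic recursion.
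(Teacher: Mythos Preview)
Your strategy for the $\vb^*$-component is essentially the paper's: reduce to a proxy gradient on the event $E_t$, recognize the folded-Gaussian integral, and extract the discrete recursion $\alpha^{(t+1)}-\alpha^{(t)}\asymp \eta/(\sigma_x D(\alpha^{(t)})^2)$ whose integration gives~\eqref{eq:alpha_bound}. That part is fine.

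The gap is in your treatment of the orthogonal component. Your symmetry argument only kills the $\bxi_i$-projection of $\nabla_{\vb_1}\cL^{(t)}$ when $\vb_{1,\mathrm{error}}^{(t)}=\mathbf 0$. Once $\vb_{1,\mathrm{error}}^{(t)}\neq\mathbf 0$, the factor $\ell'^{(t)}$ itself depends on $\langle\vb_{1,\mathrm{error}}^{(t)},\xb_{j^*}\rangle$ even on the good event $E_t$, so the residual term is \emph{not} multiplied by the tail probability $e^{-\Theta(\sqrt{D})}$. Concretely, on $E_t$ the proxy $\bxi_i$-gradient is $D\,\EE\big[\ell'\!\big(D\alpha^{(t)}|g_{j^*}|+D\langle\vb_{1,\mathrm{error}}^{(t)},\xb_{j^*}^\perp\rangle\big)\langle\bxi_i,\xb_{j^*}^\perp\rangle\big]$; Lemma~\ref{lemma:expectation_ell'_j_prime} shows this scales like $a_i^{(t)}\sigma_x^2/\alpha^{(t)}$, so after the factor $\eta D$ the coefficient in front of $a_i^{(t)}$ is $\Theta(\eta D\sigma_x/\alpha^{(t)})=\Theta(1)$ (since $\alpha^{(t)}\asymp D$), not $\eta/\mathrm{poly}(D)$. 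And even if the multiplicative factor were $(1+\eta/\mathrm{poly}(D))$ as you wrote, compounding over $T^*=e^{o(\sqrt{D})}$ iterations gives $\exp(e^{o(\sqrt{D})}/\mathrm{poly}(D))$, which is not $e^{o(\sqrt{D})}$; your Gr\"onwall loop does not close.

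What the paper does instead is observe that this $\Theta(1)$ term has the \emph{right sign}: Lemma~\ref{lemma:expectation_ell'_j_prime} gives (for $a_i^{(t)}\ge 0$) that the expectation lies in $[-6\sigma_x a_i^{(t)}/(\eta D),\,0]$, and the hypothesis $\sigma_x\le 1/3$ then yields the exact contraction $|a_i^{(t+1)}|\le |a_i^{(t)}|$ at the proxy level. Hence the error recursion is purely additive, $\|\vb_{1,\mathrm{error}}^{(t+1)}\|_2\le \|\vb_{1,\mathrm{error}}^{(t)}\|_2+\eta e^{-c\sqrt{D}}$, and after $T^*\le e^{o(\sqrt{D})}$ steps the bound $T^*\eta e^{-c\sqrt{D}}\le e^{-C_4\sqrt{D}}$ follows. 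You should replace the Gr\"onwall step by this sign/contraction argument; it is where the assumption $\sigma_x\le 1/3$ is actually used.
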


In the following Lemma~\ref{lemma:update_W_11} and Lemma~\ref{lemma:update_W_22}, we present the conclusion concerning $\Wb_{1, 1}^{(t)}$ and $\Wb_{2, 2}^{(t)}$ from Lemma~\ref{lemma:induction_W} respectively.

\begin{lemma}\label{lemma:update_W_11}
Under the same condition as Theorem~\ref{thm:main_result}, it holds that 
\begin{align}\label{eq:update_W_11_I}
    \Wb_{1, 1}^{(t)} = \beta_1 \vb^*\vb^{*\top} + \Wb_{1, 1, \text{error}}^{(t)}
\end{align}
for all $3\leq t\leq T^*$,  where $|\beta_1|\leq c_1\sqrt{D}$ for some positive constant $c_1$ solely depending on $\eta, \sigma_x$. Besides, the error term satisfies that 
\begin{align}\label{eq:update_W_11_II}
     \|\Wb_{1, 1, \text{error}}^{(t)}\|_2 \leq e^{-C_7\sqrt{D}}.
\end{align}
for some positive constant $C_7$ solely depending on $\sigma_x, \eta$. 
\end{lemma}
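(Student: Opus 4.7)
The plan is to prove Lemma~\ref{lemma:update_W_11} inside a joint strong induction on $t$ that simultaneously carries the conclusions of Proposition~\ref{prop:interaction_dynamics}, Lemma~\ref{lemma:update_v}, Lemma~\ref{lemma:update_W_22}, together with the attention-concentration event $E_t$ (with $e^{-\Theta(\sqrt D)}$-small failure probability) once $t$ is large enough to activate $E_t$. The joint framework is essential: the gradient with respect to $\Wb_{1,1}$ is evaluated using $\vb_1^{(t)}$, $\vb_2^{(t)}=\mathbf 0$, and the attention scores $\Sbb^{(t)}$, all of which are controlled by the companion statements; meanwhile, $\Wb_{1,1}^{(t)}$ itself reenters the softmax logits $\xb_{j'}^\top\Wb_{1,1}^{(t)}\xb_j$, which must stay dominated by the positional part $\pb_{j'}^\top\Wb_{2,2}^{(t)}\pb_j$ for $E_t$ to persist.

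I would first specialize the gradient formula~\eqref{eq:gradient_W} to the $(1,1)$-block. By Proposition~\ref{prop:interaction_dynamics}, $\vb^{(t)\top}\zb_{j'}=\vb_1^{(t)\top}\xb_{j'}$, and the $(1,1)$-block of $(\zb_{j'}-\zb_{j''})\zb_j^\top$ is $(\xb_{j'}-\xb_{j''})\xb_j^\top$. Setting $\bar\xb_j^{(t)}:=\sum_{j''}\Sbb_{j'',j}^{(t)}\xb_{j''}$ and using $\sum_{j''}\Sbb_{j'',j}^{(t)}=1$, the gradient reduces to
\[
(\nabla_{\Wb_{1,1}}\cL)^{(t)} = \EE\Big[\ell'^{(t)} y \sum_{j=1}^D\sum_{j'=1}^D \Sbb_{j',j}^{(t)}(\vb_1^{(t)\top}\xb_{j'})(\xb_{j'}-\bar\xb_j^{(t)})\xb_j^\top\Big].
\]
Using the orthonormal basis $\Ab=[\vb^*,\bxi_2,\dots,\bxi_d]$ from~\eqref{ea:def_Ab_orthogonal_matrix} and the decomposition $\vb_1^{(t)}=\alpha^{(t)}\vb^*+\vb_{1,\text{error}}^{(t)}$, I would project this gradient onto the four subspaces spanned by $\vb^*\vb^{*\top}$, $\vb^*\bxi_i^\top$, $\bxi_i\vb^{*\top}$, and $\bxi_i\bxi_{i'}^\top$, updating $\beta_1^{(t+1)}$ from the first and absorbing the rest into $\Wb_{1,1,\text{error}}^{(t+1)}$.

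The core estimates split along two regimes. When $E_t$ holds, the bound $\Sbb_{j^*,j}^{(t)}\geq 1-e^{-\Theta(D)}$ forces $\bar\xb_j^{(t)}-\xb_{j^*}$ to be $e^{-\Theta(D)}$ in norm, so the term with $j'=j^*$ carries the exponentially vanishing factor $\xb_{j^*}-\bar\xb_j^{(t)}$, while every term with $j'\neq j^*$ carries an $e^{-\Theta(D)}$ attention weight; combined with Gaussian tail bounds on $\|\xb_j\|_2$, the entire gradient is then $e^{-\Theta(D)}$ in operator norm. In the transient early regime, $\Sbb^{(t)}$ is closer to uniform and the gradient is non-negligible; here I would compute the expectation using the folded-normal identity $\EE[|\vb^{*\top}\xb|\xb\xb^\top]=\sqrt{2/\pi}\sigma_x^3(I+\vb^*\vb^{*\top})$ and show that, after summing over $j'$ against the weighted difference $\xb_{j'}-\bar\xb_j^{(t)}$ and integrating out the independent $\xb_{j''}$ for $j''\neq j^*$, the non-exponentially-small component aligns with $\vb^*\vb^{*\top}$. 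Off-$\vb^*\vb^{*\top}$ projections either vanish by odd-order Gaussian symmetry when $\xb_{j'}$ or $\xb_j$ is independent of $(y,\xb_{j^*})$, or are scaled by $\|\vb_{1,\text{error}}^{(t)}\|_2\leq e^{-\Theta(\sqrt D)}$ from Lemma~\ref{lemma:update_v}. The $E_t^c$ contribution is handled by Cauchy--Schwarz using $\PP(E_t^c)\leq e^{-\Theta(\sqrt D)}$ and the sub-exponential tail of $y\ell'^{(t)}\xb\xb^\top$.

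Summing the scalar $\vb^*\vb^{*\top}$-increments across $t\leq T^*$ and using the bound on $\alpha^{(t)}$ from Lemma~\ref{lemma:update_v} yields $|\beta_1^{(t)}|\leq c_1\sqrt D$, with the $\sqrt D$ scale originating from the brief transient phase before $E_t$ activates (after which the increments become exponentially small and contribute negligibly). The per-step off-direction contributions of order $e^{-\Theta(\sqrt D)}$ sum over $T^*=O(D^3\vee D^{-3}\epsilon^{-3})$ iterations to $e^{-\Theta(\sqrt D)}\cdot\poly(D)/\epsilon^3$, which the regime assumption $D\geq\omega(\log^2(1/\epsilon))$ collapses into the claimed bound $e^{-C_7\sqrt D}$. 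The main technical obstacle I foresee is the tight bookkeeping of the feedback between $\vb_{1,\text{error}}^{(t)}$ and $\Wb_{1,1,\text{error}}^{(t)}$: each error propagates a small gradient contribution into the other, and the induction must be quantitatively strong enough that this coupling does not inflate either error over the full horizon $T^*$.
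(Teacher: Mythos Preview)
Your plan is correct and matches the paper's overall framework: joint induction with Lemma~\ref{lemma:update_v} and Lemma~\ref{lemma:update_W_22}, splitting the expectation on the high-probability event $E_t$, controlling the $E_t^c$ piece by Cauchy--Schwarz against $\PP(E_t^c)\leq e^{-\Theta(\sqrt D)}$, and summing per-step errors over the horizon $T^*$ using $D\geq\omega(\log^2(1/\epsilon))$.

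The paper takes a slightly cleaner route in two places. First, it does not track a time-varying $\beta_1^{(t)}$: it computes $\Wb_{1,1}^{(2)}$ \emph{exactly} in a dedicated lemma (Lemma~\ref{lemma:second_iteration_W11}) at the single transient step $t=1$ (where $\Sbb^{(1)}$ is exactly uniform), showing that $\Wb_{1,1}^{(2)}=\beta_1\vb^*\vb^{*\top}$ with $|\beta_1|\leq c_1\sqrt D$ and \emph{zero} error. The rank-one form there does not come from a generic folded-normal identity but from a cancellation: in the gradient $I_1-I_2$ the diagonal $\bxi_i\bxi_i^\top$ coefficients of the ``self'' part and the ``cross'' part are identical and cancel exactly, leaving only the $\vb^*\vb^{*\top}$ direction. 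Your folded-normal heuristic $\EE[|\vb^{*\top}\xb|\xb\xb^\top]\propto I+\vb^*\vb^{*\top}$ would by itself produce an identity component, so you would need to recover this cancellation explicitly. Second, for $t\geq 2$ the paper does not project the gradient onto subspaces at all: since $E_t$ is already active, it simply bounds the \emph{entire} operator norm $\|\nabla_{\Wb_{1,1}}\cL\|_2\leq e^{-c\sqrt D}$ and absorbs the whole update (including its $\vb^*\vb^{*\top}$ component) into $\Wb_{1,1,\text{error}}^{(t)}$, keeping $\beta_1$ frozen at its $t=2$ value. This spares you the separate bookkeeping of $\beta_1^{(t)}$ and the ``feedback'' concern you raised, since the induction only needs to propagate the single scalar bound $\|\Wb_{1,1,\text{error}}^{(t)}\|_2\leq \eta t\,e^{-c\sqrt D}$.
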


\begin{lemma}\label{lemma:update_W_22}
Under the same condition as Theorem~\ref{thm:main_result}, it holds that
\begin{align}\label{eq:update_W_22_I}
    \Wb_{2, 2}^{(t)} = \beta_2 \Big(\sum_{j\neq j^*}\big(\pb_{j^*} - \pb_j\big)\Big)\bigg(\sum_{j=1}^D\pb_{j}^\top\bigg) + \Wb_{2, 2, \text{error}}^{(t)},
\end{align}
for all $3\leq t\leq T^*$, where $\frac{c_2}{D^2} \leq |\beta_2|\geq \frac{c_3}{D^2}$ for some positive constants $c_2, c_3$ solely depending on $\eta, \sigma_x$, and the error term satisfies that 
\begin{align}\label{eq:update_W_22_II}
     \|\Wb_{2, 2, \text{error}}^{(t)}\|_2 \leq  e^{-C_7\sqrt{D}}.
\end{align}
for some positive constant $C_7$ solely depending on $\sigma_x, \eta$. 
\end{lemma}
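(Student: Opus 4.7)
The plan is to prove Lemma~\ref{lemma:update_W_22} by induction on $t$, run jointly with Proposition~\ref{prop:interaction_dynamics}, Lemma~\ref{lemma:update_v}, and Lemma~\ref{lemma:update_W_11}. Granted those inductive hypotheses at step $t$, I first isolate the lower-right block of~\eqref{eq:gradient_W}; using $\vb_2^{(t)} = \mathbf 0$ from Proposition~\ref{prop:interaction_dynamics}, the inner product $\vb^{(t)\top}\zb_{j'}$ collapses to $\vb_1^{(t)\top}\xb_{j'}$, giving
\begin{align*}
\nabla_{\Wb_{2,2}}\cL^{(t)} = \EE\Big[\ell'^{(t)} y \sum_{j}\sum_{j'}\sum_{j''\neq j'} \Sbb_{j',j}^{(t)}\Sbb_{j'',j}^{(t)}\,(\vb_1^{(t)\top}\xb_{j'})(\pb_{j'}-\pb_{j''})\pb_j^\top\Big].
\end{align*}
The induction then splits the sum over $j'$ into a ``signal'' contribution at $j' = j^*$ and an ``error'' contribution at $j' \neq j^*$. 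In the signal piece, substituting $\vb_1^{(t)} = \alpha^{(t)}\vb^* + \vb_{1,\text{error}}^{(t)}$ from Lemma~\ref{lemma:update_v} and using $y = \sign(\vb^{*\top}\xb_{j^*})$ so that $y\vb^{*\top}\xb_{j^*} = |\vb^{*\top}\xb_{j^*}|$ (a folded Gaussian with positive mean), the leading scalar factor is of order $\alpha^{(t)} \sigma_x (-\ell'^{(t)})$, and the positional factors $(\pb_{j^*}-\pb_{j''})\pb_j^\top$ sum to the claimed rank-one structure $\big(\sum_{j''\neq j^*}(\pb_{j^*}-\pb_{j''})\big)\big(\sum_j \pb_j\big)^\top$ provided $\Sbb_{j^*,j}^{(t)}\Sbb_{j'',j}^{(t)}$ can be approximated by a $(j,j'')$-independent quantity.

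To pin down the scale $\beta_2^{(t)} = \Theta(1/D^2)$, I would distinguish two regimes of the dynamics. In the early regime, $\beta_2^{(t)}D^2 = o(1)$, so the positional cross term $\pb_{j'}^\top \Wb_{2,2}^{(t)} \pb_j$ is $o(1)$; combined with the bound $\|\Wb_{1,1}^{(t)}\|_2 = O(\sqrt D)$ from Lemma~\ref{lemma:update_W_11} (which makes $\xb_{j'}^\top\Wb_{1,1}^{(t)}\xb_j = O(\sqrt D\sigma_x^2)$) and the orthogonality of the positional basis, the softmax is near-uniform, so $\Sbb_{j^*,j}^{(t)}\Sbb_{j'',j}^{(t)} \approx 1/D^2$ uniformly in $j,j''$. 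This yields a per-step increment $\Delta\beta_2^{(t)} = \Theta(\eta \alpha^{(t)}\sigma_x(-\ell'^{(t)})/D^2)$ with a definite sign, so $\beta_2^{(t)}$ grows monotonically. In the saturated regime, once $\beta_2^{(t)}$ reaches $\Theta(1/D^2)$, the positional logit $\pb_{j^*}^\top \Wb_{2,2}^{(t)}\pb_j = \Theta(\beta_2^{(t)}D^2)$ dominates all others, the softmax concentrates with $\Sbb_{j^*,j}^{(t)} \to 1$ and $\Sbb_{j'',j}^{(t)} = e^{-\Theta(D)}$ for $j''\neq j^*$, and the product $\Sbb_{j^*,j}\Sbb_{j'',j}$ becomes exponentially small, so further updates to $\beta_2$ are suppressed. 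The lower bound comes from the accumulated early-phase growth, and the upper bound follows from the late-phase saturation.

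For the error term $\Wb_{2,2,\text{error}}^{(t+1)} - \Wb_{2,2,\text{error}}^{(t)}$ I would control three sources separately. First, the $j' \neq j^*$ summands have $\vb_1^{(t)\top}\xb_{j'}$ mean-zero and independent of $y$; applying Stein's identity to the Gaussian $\xb_{j'}$ reduces the expectation to derivatives of the softmax weight, whose magnitude is controlled via $\nabla_{\xb_{j'}}\Sbb_{\cdot,\cdot}^{(t)} \propto \Wb_{1,1}^{(t)}\xb_j$, combined with Lemma~\ref{lemma:update_W_11} and, crucially in the saturated regime, with the exponential smallness of $\Sbb_{j',j}^{(t)}$ for $j'\neq j^*$. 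Second, the piece involving $\vb_{1,\text{error}}^{(t)\top}\xb_{j^*}$ is bounded directly by $\|\vb_{1,\text{error}}^{(t)}\|_2 \le e^{-\Theta(\sqrt D)}$ from Lemma~\ref{lemma:update_v}. Third, the deviation of $\Sbb_{j^*,j}^{(t)}\Sbb_{j'',j}^{(t)}$ from its idealized $(j,j'')$-independent value is absorbed into the error via Gaussian concentration on the features. Since $T^* = \poly(D,1/\epsilon)$ and $D \geq \omega(\log^2(1/\epsilon))$, the cumulative error stays $e^{-\Theta(\sqrt D)}$. The main obstacle I anticipate is the self-referential nature of the induction: the attention factor $\Sbb^{(t)}$ depends on the very object $\Wb_{2,2}^{(t)}$ we are characterizing, so a clean proof requires a careful case split on the magnitude of $\beta_2^{(t)}$ and verification that the signal direction is preserved across the early/saturated transition; a secondary technical difficulty is ensuring that the Stein-type residuals from $j' \neq j^*$ are bounded uniformly in $t$ by combining the $O(\sqrt D)$ operator-norm bound on $\Wb_{1,1}^{(t)}$ with the saturation-driven decay of the softmax weights.
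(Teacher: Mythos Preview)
Your joint-induction framework and the saturated-regime observation (that $\Sbb_{j^*,j}\Sbb_{j'',j}$ becomes exponentially small, freezing the update) are the right ideas, but the ``early regime'' with gradual monotone growth of $\beta_2^{(t)}$ is a misconception that does not match the actual dynamics. Because $\vb^{(0)}=\mathbf 0$, the $\Wb$-gradient vanishes at $t=0$, so $\Wb^{(1)}=\mathbf 0$ exactly (Lemma~\ref{lemma:first_iteration}); the softmax at $t=1$ is therefore \emph{exactly} uniform, and a single explicit computation (Lemma~\ref{lemma:second_iteration_W22}) already gives $\Wb_{2,2}^{(2)}$ in the claimed rank-one form with $\beta_2\in[c_2/D^2,\,c_3/D^2]$. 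There is no intermediate phase with $\beta_2^{(t)}D^2=o(1)$; and if there were, your claim that the softmax stays near-uniform during it would be in trouble, since $\Wb_{1,1}^{(t)}$ jumps to operator norm $\Theta(\sqrt D)$ at the same step (Lemma~\ref{lemma:second_iteration_W11}), making the feature logits non-negligible.

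Granted this one-step base case, the paper's inductive step is much simpler than what you sketch: it \emph{freezes} $\beta_2=\beta_2^{(2)}$ permanently and shows that for every $t\ge 2$ the \emph{entire} gradient $\nabla_{\Wb_{2,2}}\cL(\vb^{(t)},\Wb^{(t)})$ has norm at most $e^{-c\sqrt D}$, so all subsequent updates are absorbed wholesale into $\Wb_{2,2,\text{error}}^{(t)}$. The point is that every summand in the gradient carries a factor $\Sbb_{j',j}^{(t)}\Sbb_{j'',j}^{(t)}$ with $j'\neq j''$ (equivalently $\Sbb_{j',j}^{(t)}(1-\Sbb_{j',j}^{(t)})$), and on the high-probability event $E_t$ this is at most $e^{-\Theta(D)}$ by Lemma~\ref{lemma:attention_one_hot}; the contribution from $E_t^c$ is handled by Cauchy--Schwarz against $\sqrt{\PP(E_t^c)}\le e^{-\Theta(\sqrt D)}$. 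There is no need to isolate a signal direction (your ``signal'' piece at $j'=j^*$ is itself exponentially small after saturation), no need for Stein's identity on the $j'\neq j^*$ terms, and no need to track a time-varying $\beta_2^{(t)}$. The cumulative error is then $O(\eta T^* e^{-c\sqrt D})\le e^{-C_7\sqrt D}$ by $D\ge\omega(\log^2(1/\epsilon))$ and $T^*=\poly(D,1/\epsilon)$.
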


\begin{lemma}\label{lemma:attention_one_hot} 
Under the same condition with Theorem~\ref{thm:main_result}, with probability at least $1-\exp\big(-C_{8}\sqrt{D}\big)$, it holds that 
\begin{align*}
    &\Sbb_{j^*, j}^{(t)} \geq 1- D\exp(-C_{9}D);\quad \Sbb_{j', j}^{(t)} \leq \exp(-C_{9}D)
\end{align*}
for all $3\leq t\leq T^*$ ,$j\in[D]$ and $j'\neq j^*$. The coefficients $C_{8}, C_{9}$ are all constants solely depending on $\sigma_x$ and $\eta$.
\end{lemma}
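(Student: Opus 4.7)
The plan is to show that the positional interaction $\pb_{j^*}^\top \Wb_{2,2}^{(t)} \pb_j$ dominates every competing logit $\zb_{j'}^\top \Wb^{(t)}\zb_j$, $j'\neq j^*$, by a margin of order $D$, so the softmax collapses onto row $j^*$. Only the feature block is random in the statement; the parameter bounds of Lemma~\ref{lemma:update_W_11} and Lemma~\ref{lemma:update_W_22} hold deterministically for every $t\in[3,T^*]$, so a single high-probability event over $\Xb$ will suffice uniformly over $t$, $j$, and $j'$, with no union bound over iterations.

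First I would invoke Proposition~\ref{prop:interaction_dynamics} to kill the cross-blocks and split each logit as $\zb_{j'}^\top \Wb^{(t)} \zb_j = \xb_{j'}^\top \Wb_{1,1}^{(t)} \xb_j + \pb_{j'}^\top \Wb_{2,2}^{(t)} \pb_j$. Plugging in the form from Lemma~\ref{lemma:update_W_22} and using the orthogonality $\la\pb_k,\pb_{k'}\ra = \tfrac{D+1}{2}\mathbf{1}\{k=k'\}$ yields a position-position leading term equal to $\beta_2(D-1)\|\pb_{j^*}\|^2\|\pb_j\|^2 = \Theta(D)$ when $j'=j^*$ and to $-\beta_2\|\pb_{j'}\|^2\|\pb_j\|^2 = -\Theta(1)$ when $j'\neq j^*$, with residual bounded by $\|\Wb_{2,2,\mathrm{error}}^{(t)}\|_2 \cdot O(D) = o(1)$; this produces a positional gap of at least $c_{\mathrm{pos}} D$ between the $j^*$-th logit and any other. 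For the feature-feature piece, Lemma~\ref{lemma:update_W_11} decomposes $\xb_{j'}^\top \Wb_{1,1}^{(t)}\xb_j = \beta_1\la\vb^*,\xb_{j'}\ra\la\vb^*,\xb_j\ra + \xb_{j'}^\top\Wb_{1,1,\mathrm{error}}^{(t)}\xb_j$. Gaussian tail bounds together with a union bound over $j\in[D]$ give $|\la\vb^*,\xb_j\ra|\leq \sigma_x C_0 D^{1/4}$ simultaneously for all $j$ with failure probability $\exp(-\Theta(\sqrt{D}))$, and on the same event $\|\xb_j\|_2=O(\sqrt d)$ makes the error piece $\mathrm{poly}(D)\cdot e^{-C_7\sqrt{D}}=o(1)$. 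The total feature contribution is then at most $c_1\sigma_x^2 C_0^2 D + o(1)$, and choosing $C_0$ small enough that $2c_1\sigma_x^2 C_0^2 < c_{\mathrm{pos}}/2$ forces
\begin{align*}
\zb_{j^*}^\top \Wb^{(t)}\zb_j - \zb_{j'}^\top \Wb^{(t)}\zb_j \geq C_9 D
\end{align*}
uniformly in $j\in[D]$, $j'\neq j^*$, and $t\in[3,T^*]$. Bounding the softmax normaliser above by $e^{\zb_{j^*}^\top\Wb^{(t)}\zb_j}(1+(D-1)e^{-C_9 D})$ and below by $e^{\zb_{j^*}^\top\Wb^{(t)}\zb_j}$ then delivers the two claimed inequalities $\Sbb_{j^*,j}^{(t)}\geq 1-De^{-C_9 D}$ and $\Sbb_{j',j}^{(t)}\leq e^{-C_9 D}$.

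The main obstacle is calibrating the Gaussian tail level. A naive $|\la\vb^*,\xb_j\ra|\lesssim\sqrt{\log D}$ bound would comfortably beat the positional gap, but the probability would only sharpen to $1-1/\mathrm{poly}(D)$; pushing the tail to $D^{1/4}$ is the exact scale at which the union bound still delivers $1-\exp(-\Theta(\sqrt{D}))$ while keeping the feature contribution at order $D$, strictly below the positional gap. The exponent $\sqrt{D}$ in the failure probability of the lemma is therefore not a slack choice but the tight threshold set by the interplay between the $\sqrt{D}$ bound on $|\beta_1|$ in Lemma~\ref{lemma:update_W_11} and the $\Theta(D)$ positional gap extracted from Lemma~\ref{lemma:update_W_22}; everything else is routine Gaussian concentration and softmax manipulation.
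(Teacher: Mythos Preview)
Your proposal is correct and follows essentially the same argument as the paper: invoke the block structure from Proposition~\ref{prop:interaction_dynamics}, use Lemmas~\ref{lemma:update_W_11} and~\ref{lemma:update_W_22} together with the orthogonality in Lemma~\ref{lemma:positional_encoding} to extract a $\Theta(D)$ positional gap, then control the feature-feature contribution at level $c_1\sigma_x^2 C_0^2 D$ by taking the Gaussian tail for $\la\vb^*,\xb_j\ra$ at scale $D^{1/4}$ (the paper's specific choice is $C_0=\sqrt{c_2/(13c_1\sigma_x^2)}$, cf.\ Lemma~\ref{lemma:concentration_mills_ratio}), and finally bound $\|\xb_j\|_2$ via Lemma~\ref{lemma:concentration_lipschitz continuous} to kill the error pieces. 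Your remark that the event depends only on $\Xb$ and not on $t$, so no union bound over iterations is needed, is exactly how the paper handles it as well.
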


\subsection{Proof of Proposition~\ref{prop:interaction_dynamics}}
We will prove Proposition~\ref{prop:interaction_dynamics} by induction. And we first introduce several lemmas,
which will be used for further proof of Proposition~\ref{prop:interaction_dynamics}.

\begin{lemma}\label{lemma:s_yx_copoments}
    If $\Wb_{1,2}^{(t)}, \Wb_{2,1}^{(t)}=\mathbf0$, then $\Sbb_{j_1, j_2}^{(t)}$ can be expressed as a function of $\{y\cdot \xb_1, \cdots, y\cdot \xb_D\}$, and is independent with $y$ for all $j_1, j_2\in [D]$. 
\end{lemma}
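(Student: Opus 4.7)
The plan is to handle the two parts of the lemma separately: the algebraic representation as a function of $\{y\xb_j\}_{j\in[D]}$, and the probabilistic independence from $y$.

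For the representation, I would expand the pre-softmax inner product $\zb_{j_1}^\top \Wb^{(t)} \zb_{j_2}$ using the block decomposition of $\Wb^{(t)}$ together with $\zb_j = [\xb_j^\top, \pb_j^\top]^\top$, obtaining
\begin{align*}
\zb_{j_1}^\top \Wb^{(t)} \zb_{j_2} = \xb_{j_1}^\top \Wb_{1,1}^{(t)} \xb_{j_2} + \xb_{j_1}^\top \Wb_{1,2}^{(t)} \pb_{j_2} + \pb_{j_1}^\top \Wb_{2,1}^{(t)} \xb_{j_2} + \pb_{j_1}^\top \Wb_{2,2}^{(t)} \pb_{j_2}.
\end{align*}
The two cross terms vanish by hypothesis. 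Since $y\in\{-1,+1\}$ satisfies $y^2=1$, I can insert this factor to rewrite the feature-feature term as $(y\xb_{j_1})^\top \Wb_{1,1}^{(t)} (y\xb_{j_2})$, while the position-position term is deterministic. Hence every pre-softmax score, and after softmax normalization every $\Sbb_{j_1,j_2}^{(t)}$, is a function of $\{y\xb_1,\ldots,y\xb_D\}$ alone.

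For independence from $y$, it suffices to show that the vector $(y\xb_1,\ldots,y\xb_D)$ is jointly independent of $y$. For indices $j\neq j^*$, $\xb_j$ is drawn independently of $(\xb_{j^*},y)$ from the symmetric law $N(\mathbf{0},\sigma_x^2\Ib_d)$, so $y\xb_j$ has the same distribution as $\xb_j$ and is independent of $y$. For $j=j^*$, I would invoke the reflection $\xb\mapsto -\xb$, which preserves $N(\mathbf{0},\sigma_x^2\Ib_d)$ and swaps the half-spaces $\{\langle \vb^*,\xb\rangle>0\}$ and $\{\langle \vb^*,\xb\rangle<0\}$. Combined with $y=\mathrm{sign}(\langle \vb^*,\xb_{j^*}\rangle)$, this symmetrization forces the conditional laws of $y\xb_{j^*}$ given $y=+1$ and given $y=-1$ to coincide, giving independence of $y$.

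The main subtlety, though modest, is the last step: since $y$ is a deterministic function of $\xb_{j^*}$, one has to handle the conditioning carefully to avoid circular reasoning, and the reflection-symmetry argument is precisely what makes this go through. Everything else reduces to direct substitution of the block-zero hypothesis and the identity $y^2=1$.
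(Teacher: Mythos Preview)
Your proposal is correct and follows essentially the same route as the paper: the block decomposition of $\zb_{j_1}^\top \Wb^{(t)} \zb_{j_2}$ together with the $y^2=1$ trick is exactly the paper's argument, and the paper then appeals to a separate lemma (Lemma~\ref{lemma:independence_yx}) for the independence of $y$ from $\{y\xb_j\}_j$, whose content is precisely your reflection-symmetry observation. The only cosmetic difference is that the paper proves that independence lemma via an orthogonal decomposition along $\vb^*$ rather than the global reflection $\xb\mapsto -\xb$, but both are standard symmetry arguments yielding the same conclusion.
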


\begin{proof}[Proof of Lemma~\ref{lemma:s_yx_copoments}]
    When $\Wb_{1,2}^{(t)}, \Wb_{2,1}^{(t)}=\mathbf0$, we can re-write $\Sbb_{j_1, j_2}^{(t)}$ as
    \begin{align*}
        \Sbb_{j_1, j_2}^{(t)} &=\frac{\exp{\big(\zb_{j_1}^\top \Wb^{(t)} \zb_{j_2}\big)}}{\sum_{j_3=1}^D \exp{\big(\zb_{j_3}^\top \Wb^{(t)} \zb_{j_2}\big)}} = \frac{\exp{\big(\xb_{j_1}^\top \Wb_{1,1}^{(t)}\xb_{j_2}+ \pb_{j_1}^\top \Wb_{2,2}^{(t)} \pb_{j_2}\big)}}{\sum_{j_3=1}^D\exp{\big(\xb_{j_3}^\top \Wb_{1,1}^{(t)}\xb_{j_2}+ \pb_{j_3}^\top \Wb_{2,2}^{(t)} \pb_{j_2}\big)}}\\
        &=\frac{\exp{\big(y\cdot\xb_{j_1}^\top \Wb_{1,1}^{(t)}y\cdot\xb_{j_2}+ \pb_{j_1}^\top \Wb_{2,2}^{(t)} \pb_{j_2}\big)}}{\sum_{j_3=1}^D\exp{\big(y\cdot\xb_{j_3}^\top \Wb_{1,1}^{(t)}y\cdot\xb_{j_2}+ \pb_{j_3}^\top \Wb_{2,2}^{(t)} \pb_{j_2}\big)}}.
    \end{align*}
    If we omit all non-random components, we have $\Sbb_{j_1, j_2}^{(t)} = \Sbb_{j_1, j_2}^{(t)}(y\cdot \xb_1, \cdots, y\cdot \xb_D)$. Moreover, by Lemma~\ref{lemma:independence_yx}, we obtain that $\Sbb_{j_1, j_2}^{(t)}$ is independent with $y$.
\end{proof}

\begin{lemma}\label{lemma:ell'_yx_copoments}
    If $\Wb_{1,2}^{(t)}, \Wb_{2,1}^{(t)}=\mathbf0$ and $\vb_2^{(t)} = \mathbf0$, then $\ell'^{(t)}$ can be expressed as a function of $\{y\cdot \xb_1, \cdots, y\cdot \xb_D\}$, and is independent with $y$. 
\end{lemma}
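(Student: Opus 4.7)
\textbf{Proof proposal for Lemma~\ref{lemma:ell'_yx_copoments}.}
The plan is to reduce the claim for $\ell'^{(t)}$ to the claim already established for $\Sbb_{j_1,j_2}^{(t)}$ in Lemma~\ref{lemma:s_yx_copoments}, by first rewriting $y\cdot f(\Zb;\Wb^{(t)},\vb^{(t)})$ entirely in terms of the quantities $\{y\xb_1,\dots,y\xb_D\}$, and then appealing to the fact (Lemma~\ref{lemma:independence_yx}, invoked in the preceding proof) that these products are jointly independent of $y$ because the $\xb_j$'s are symmetric Gaussian vectors and $y\in\{\pm 1\}$.

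First I would expand the output of the transformer under the stated block-structure assumption. Using $\zb_j=[\xb_j^\top,\pb_j^\top]^\top$ together with $\vb_2^{(t)}=\mathbf 0$, one has $\vb^{(t)\top}\zb_{j'}=\vb_1^{(t)\top}\xb_{j'}$, so multiplying by $y$ gives $y\cdot \vb^{(t)\top}\zb_{j'}=\vb_1^{(t)\top}(y\xb_{j'})$, which is a deterministic function of $y\xb_{j'}$ alone. Hence
\begin{align*}
y\cdot f(\Zb;\Wb^{(t)},\vb^{(t)})
\;=\;\sum_{j=1}^{D}\sum_{j'=1}^{D} \bigl[\vb_1^{(t)\top}(y\xb_{j'})\bigr]\,\Sbb^{(t)}_{j',j}.
\end{align*}
By Lemma~\ref{lemma:s_yx_copoments}, each $\Sbb^{(t)}_{j',j}$ is itself a function of $\{y\xb_1,\dots,y\xb_D\}$, so the full sum is likewise a function of $\{y\xb_1,\dots,y\xb_D\}$.

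Next I would compose with $\ell'$. Since $\ell'$ is a fixed scalar function, $\ell'^{(t)}=\ell'\bigl(y\cdot f(\Zb;\Wb^{(t)},\vb^{(t)})\bigr)$ is immediately a function of $\{y\xb_1,\dots,y\xb_D\}$ as well. The independence of $\ell'^{(t)}$ from $y$ then follows from the joint independence of $\{y\xb_1,\dots,y\xb_D\}$ and $y$ guaranteed by Lemma~\ref{lemma:independence_yx}: each $\xb_j\sim N(\mathbf 0,\sigma_x^2 \Ib_d)$ is symmetric and independent of $y$, so $y\xb_j\stackrel{d}{=}\xb_j$ jointly and is independent of $y$.

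I do not expect a real obstacle here; this lemma is essentially the $\ell'$-analogue of Lemma~\ref{lemma:s_yx_copoments}, and the only care needed is to verify that the $y$-factor from the argument of $\ell'$ combines cleanly with the $\vb_1^{(t)\top}\xb_{j'}$ terms (which requires precisely $\vb_2^{(t)}=\mathbf 0$ to kill the positional part of $\zb_{j'}$) and that no residual dependence on $y$ sneaks in through $\Sbb^{(t)}$ (which is guaranteed by the block-zero hypothesis on $\Wb^{(t)}$, again via Lemma~\ref{lemma:s_yx_copoments}). Once the algebraic rewrite above is in place, the measurability and independence conclusions are immediate.
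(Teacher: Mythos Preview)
Your proposal is correct and follows essentially the same approach as the paper: rewrite $y\cdot f$ using $\vb_2^{(t)}=\mathbf 0$ so that $y\,\vb^{(t)\top}\zb_{j'}=\vb_1^{(t)\top}(y\xb_{j'})$, invoke Lemma~\ref{lemma:s_yx_copoments} for the softmax entries, and then conclude via Lemma~\ref{lemma:independence_yx}. The paper's proof is the same computation, written with the explicit sigmoid form of $\ell'$ and with the summation indices swapped.
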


\begin{proof}[Proof of Lemma~\ref{lemma:ell'_yx_copoments}]
    When $\vb_2^{(t)} = \mathbf0$, 
    we express $\ell'^{(t)}$ as,
    \begin{align*}
        \ell'^{(t)}&=\ell'(y f(\Zb, \Wb^{(t)}, \vb^{(t)})) = -\frac{1}{1+\exp{\big(y f(\Zb, \Wb^{(t)}, \vb^{(t)})\big)}}= -\frac{1}{1+\exp{\Big(\sum_{j=1}^D\la \vb_1^{(t)} , y\xb_j\ra\sum_{j'=1}^D \Sbb_{j, j'}^{(t)}\Big)}}.
    \end{align*}
    By Lemma~\ref{lemma:s_yx_copoments}, $\Sbb_{j, j'}^{(t)}$ is a function of $\{y\cdot \xb_1, \cdots, y\cdot \xb_D\}$ for all $j, j' \in [D]$. Therefore, $\ell'^{(t)}$ can also be expressed as a function of $\{y\cdot \xb_1, \cdots, y\cdot \xb_D\}$ when omitting all non-random components. By applying Lemma~\ref{lemma:independence_yx}, we have $\ell'^{(t)}$ is independent with $y$.
\end{proof}

\begin{lemma}\label{lemma:ell'_s_y_bxi_x_copoments}
        If $\Wb_{1,2}^{(t)}, \Wb_{2,1}^{(t)}, \vb_2^{(t)}=\mathbf0$, $\Wb_{1,1}^{(t)} = a\vb^*\vb^{*\top}$ and $\vb_{1}^{(t)} = b\vb^*$ for some scalar $a, b$, then $\Sbb_{j_1, j_2}^{(t)}$ and $\ell'^{(t)}$ can be expressed as functions of $\{y \la \vb^*, \xb_1\ra, \cdots, y\la \vb^*, \xb_D\ra\}$ for all $j_1, j_2\in [D]$, and they are independent with $y\la\bxi_i, \xb_j\ra$ for all $i\in \{2, \cdots, d\}$ and $j\in [D]$.
\end{lemma}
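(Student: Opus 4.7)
The plan is to mirror the structure of Lemmas~\ref{lemma:s_yx_copoments} and~\ref{lemma:ell'_yx_copoments}, but to exploit the additional rank-one structure on $\Wb_{1,1}^{(t)}$ and $\vb_{1}^{(t)}$ in order to collapse the dependence from the full vectors $\{y\cdot\xb_j\}_{j=1}^D$ down to only the scalars $\{y\la\vb^*,\xb_j\ra\}_{j=1}^D$. Once the functional reduction is established, the independence claim will follow from the rotational symmetry of the isotropic Gaussian $\xb_j$ under the orthonormal frame $\{\vb^*,\bxi_2,\dots,\bxi_d\}$.

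First, I would expand $\zb_{j_1}^\top \Wb^{(t)}\zb_{j_2}$ using the block decomposition. Since the off-diagonal blocks vanish and $\Wb_{1,1}^{(t)} = a\vb^*\vb^{*\top}$, the quadratic form collapses to $a\la\vb^*,\xb_{j_1}\ra\la\vb^*,\xb_{j_2}\ra + \pb_{j_1}^\top \Wb_{2,2}^{(t)}\pb_{j_2}$. Inserting $y^2=1$ in the $\xb$-term rewrites this as $a(y\la\vb^*,\xb_{j_1}\ra)(y\la\vb^*,\xb_{j_2}\ra)+\pb_{j_1}^\top\Wb_{2,2}^{(t)}\pb_{j_2}$, which when fed through the softmax shows that every entry $\Sbb_{j_1,j_2}^{(t)}$ is a deterministic function of the $D$-tuple $(y\la\vb^*,\xb_1\ra,\dots,y\la\vb^*,\xb_D\ra)$ alone. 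For $\ell'^{(t)}$, I would combine this with $\vb_{2}^{(t)}=\mathbf 0$ and $\vb_{1}^{(t)}=b\vb^*$ to get $y\cdot f(\Zb,\Wb^{(t)},\vb^{(t)}) = b\sum_j (y\la\vb^*,\xb_j\ra)\sum_{j'}\Sbb_{j,j'}^{(t)}$, which is again a function of the same $D$-tuple, and hence so is $\ell'^{(t)}$.

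For the independence claim, I would use the fact that the orthonormal family $\{\vb^*,\bxi_2,\dots,\bxi_d\}$ decomposes each isotropic Gaussian $\xb_j\sim N(\mathbf 0,\sigma_x^2\Ib_d)$ into mutually independent Gaussian coordinates. In particular, every $\la\bxi_i,\xb_j\ra$ is independent of the full collection $\{\la\vb^*,\xb_{j'}\ra\}_{j'=1}^D$; consequently $\la\bxi_i,\xb_j\ra$ is also independent of $y=\sign(\la\vb^*,\xb_{j^*}\ra)$. By the symmetry of the centered Gaussian law, $y\la\bxi_i,\xb_j\ra$ and $\la\bxi_i,\xb_j\ra$ have the same distribution, and a short conditioning-on-$y$ argument shows that $y\la\bxi_i,\xb_j\ra$ remains independent of the whole $D$-tuple $(y\la\vb^*,\xb_1\ra,\dots,y\la\vb^*,\xb_D\ra)$. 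Combined with the functional reduction above, this gives the desired independence of $\Sbb_{j_1,j_2}^{(t)}$ and $\ell'^{(t)}$ from $y\la\bxi_i,\xb_j\ra$.

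The main obstacle is the index $j=j^*$, where both the label $y$ and the coordinate $\la\bxi_i,\xb_{j^*}\ra$ are built from the same random vector $\xb_{j^*}$, so one must verify that no hidden dependence survives after the sign flip. This is resolved by the orthogonality $\la\bxi_i,\vb^*\ra = 0$, which renders $\la\bxi_i,\xb_{j^*}\ra$ and $\la\vb^*,\xb_{j^*}\ra$ independent as projections of an isotropic Gaussian onto orthogonal directions; in particular, $\la\bxi_i,\xb_{j^*}\ra$ carries no information about $y$. Everything else reduces to routine algebra on the softmax and the logistic derivative, analogous to the proofs of Lemmas~\ref{lemma:s_yx_copoments} and~\ref{lemma:ell'_yx_copoments}.
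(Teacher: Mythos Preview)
Your proposal is correct and follows essentially the same route as the paper's proof: the paper also expands $\zb_{j_1}^\top\Wb^{(t)}\zb_{j_2}$ using the block structure and the rank-one form of $\Wb_{1,1}^{(t)}$ to reduce $\Sbb_{j_1,j_2}^{(t)}$ to a function of $\{y\la\vb^*,\xb_j\ra\}_{j=1}^D$, then plugs $\vb_1^{(t)}=b\vb^*$ into the expression for $yf$ to do the same for $\ell'^{(t)}$, and finally appeals to the orthogonal decomposition of the isotropic Gaussian $\xb_j$ to obtain independence from $y\la\bxi_i,\xb_{j'}\ra$. Your treatment of the independence step is, if anything, slightly more careful than the paper's, which argues pairwise that $y\la\vb^*,\xb_j\ra$ is independent of $y\la\bxi_i,\xb_{j'}\ra$ and then passes directly to functions of the full $D$-tuple; your conditioning-on-$y$ remark makes that passage explicit.
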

\begin{proof}[Proof of Lemma~\ref{lemma:ell'_s_y_bxi_x_copoments}]
When $\Wb_{1,2}^{(t)}, \Wb_{2,1}^{(t)}=\mathbf0$ and $\Wb_{1,1}^{(t)} = a\vb^*\vb^{*\top}$, similar to the proof of Lemma~\ref{lemma:s_yx_copoments}, we can re-write $\Sbb_{j_1, j_2}$ as
\begin{align*}
     \Sbb_{j_1, j_2}^{(t)} &=\frac{\exp{\big(\zb_{j_1}^\top \Wb^{(t)} \zb_{j_2}\big)}}{\sum_{j_3=1}^D \exp{\big(\zb_{j_3}^\top \Wb^{(t)} \zb_{j_2}\big)}} = \frac{\exp{\big(a\xb_{j_1}^\top \vb^*\vb^{*\top}\xb_{j_2}+ \pb_{j_1}^\top \Wb_{2,2}^{(t)} \pb_{j_2}\big)}}{\sum_{j_3=1}^D\exp{\big(a\xb_{j_3}^\top \vb^*\vb^{*\top}\xb_{j_2}+ \pb_{j_3}^\top \Wb_{2,2}^{(t)} \pb_{j_2}\big)}}\\
        &=\frac{\exp{\big(a y\la\vb^*, \xb_{j_1}\ra y\la\vb^*, \xb_{j_2}\ra+ \pb_{j_1}^\top \Wb_{2,2}^{(t)} \pb_{j_2}\big)}}{\sum_{j_3=1}^D\exp{\big(a y\la\vb^*, \xb_{j_3}\ra y\la\vb^*, \xb_{j_2}\ra+ \pb_{j_3}^\top \Wb_{2,2}^{(t)} \pb_{j_2}\big)}}.
\end{align*}
This is a function of $\{y \la \vb^*, \xb_1\ra, \cdots, y\la \vb^*, \xb_D\ra\}$ when omitting all non-random components. Besides, when $\vb_2^{(t)} = \mathbf0$ and $\vb_1 = b\vb^*$, 
    we express $\ell'^{(t)}$ as,
    \begin{align*}
        \ell'^{(t)}&=\ell'(y f(\Zb, \Wb^{(t)}, \vb^{(t)})) = -\frac{1}{1+\exp{\big(y f(\Zb, \Wb^{(t)}, \vb^{(t)})\big)}}= -\frac{1}{1+\exp{\Big(b\sum_{j=1}^Dy\la \vb^* , \xb_j\ra\sum_{j'=1}^D \Sbb_{j, j'}^{(t)}\Big)}}.
    \end{align*}
   Since we have demonstrated that $\Sbb_{j_1, j_2}^{(t)}$ is a function of $\{y \la \vb^*, \xb_1\ra, \cdots, y\la \vb^*, \xb_D\ra\}$ for all $j, j' \in [D]$. Therefore, $\ell'^{(t)}$ can also be expressed as a function of $\{y \la \vb^*, \xb_1\ra, \cdots, y\la \vb^*, \xb_D\ra\}$ when omitting all non-random components. By Lemma~\ref{lemma:standard_normal_independence}, we obtain that $y \la \vb^*, \xb_j\ra$ is independent with $y$ for all $j\in [D]$. Furthermore, by the orthogonality among $\vb^*$ and $\bxi_2, \cdots, \bxi_d$, we can also obtain that $y \la \vb^*, \xb_j\ra$ is independent with $\la\bxi_i, \xb_{j}\ra$ for all $i\in \{2,\cdots, d\}$. While for $j\neq j'$, $y \la \vb^*, \xb_j\ra$ is independent with $\la\bxi_i, \xb_{j'}\ra$ since $\xb_j$ is independent with $\xb_{j'}$. Combining all the preceding results, we conclude that $y \la \vb^*, \xb_j\ra$ is independent with $y \la \bxi_i, \xb_{j'}\ra$ for all $i\in \{2,\cdots, d\}$ and $j, j'\in [D]$. Since $\Sbb_{j_1, j_2}^{(t)}$'s, $\ell'^{(t)}$ are functions of $\{y \la \vb^*, \xb_1\ra, \cdots, y\la \vb^*, \xb_D\ra\}$, we finally prove that they are independent with all $y\la\bxi_i, \xb_j\ra$'s.

\end{proof}
Now, we are ready to prove Proposition~\ref{prop:interaction_dynamics}.

\begin{proof}[Proof of Proposition~\ref{prop:interaction_dynamics}]
Since the iterates of $\vb^{(t)}$ and $\Wb^{(t)}$ start with $\vb^{(0)} = \mathbf0$ and $\Wb^{(0)} = \mathbf0$, it suffices to show that $\nabla_{\vb_2}\cL(\vb^{(t)}, \Wb^{(t)}) = \mathbf0$, $\nabla_{\Wb_{1,2}}\cL(\vb^{(t)}, \Wb^{(t)}) = \mathbf0$ and $\nabla_{\Wb_{2, 1}}\cL(\vb^{(t)}, \Wb^{(t)}) = \mathbf0$ given $\vb_2^{(t)}=\mathbf0$, $\Wb_{1, 2}^{(t)}=\mathbf0$ and $\Wb_{2, 1}^{(t)}=\mathbf0$ by inductions.
We first prove that $\nabla_{\vb_2}\cL(\vb^{(t)}, \Wb^{(t)}) = \mathbf0$. By~\eqref{eq:gradient_v}, we have
\begin{align*}
    \nabla_{\vb_2}\cL(\vb^{(t)}, \Wb^{(t)}) = \sum_{j=1}^D\sum_{j'=1}^D \EE\Big[y\ell'^{(t)} \Sbb_{j',j}^{(t)}\Big] \pb_j =\sum_{j=1}^D \sum_{j'=1}^D \EE[y]\EE\Big[\ell'^{(t)} \Sbb_{j',j}^{(t)}\Big] \pb_j = \mathbf0.
\end{align*}
The second equality holds because $\ell'^{(t)} \Sbb_{j',j}^{(t)}$ is a function of $\{y\cdot \xb_1, \cdots, y\cdot \xb_D\}$ for all $j, j'\in [D]$ by Lemma~\ref{lemma:s_yx_copoments} and Lemma~\ref{lemma:ell'_yx_copoments} when $\vb_2^{(t)}=\mathbf0$, $\Wb_{1, 2}^{(t)}=\mathbf0$ and $\Wb_{2, 1}^{(t)}=\mathbf0$. Then by Lemma~\ref{lemma:independence_yx}, we have $y$ is independent with $\ell'^{(t)}\Sbb_{j',j}^{(t)}$. And the last equality holds since $\EE[y] = 0$. Next, we prove that $\nabla_{\Wb_{1,2}}\cL(\vb^{(t)}, \Wb^{(t)}) = \mathbf0$, and we skip the proof for $\nabla_{\Wb_{2, 1}}\cL(\vb^{(t)}, \Wb^{(t)})$ since it's similar. By~\eqref{eq:gradient_W}, we have 
\begin{align*}
    \nabla_{\Wb_{1,2}}\cL(\vb^{(t)}, \Wb^{(t)}) &= \sum_{j=1}^D\sum_{j'=1}^D\sum_{j''\not=j'}\EE\Big[y\ell'^{(t)}   \Sbb_{j', j}^{(t)}\Sbb_{j'',j}^{(t)} \la\vb^{(t)},\zb_{j'}\ra(\xb_{j'}-\xb_{j''})\Big]\pb_{j}^\top\\
    &=\sum_{j=1}^D\sum_{j'=1}^D\sum_{j''\not=j'}\EE\Big[y \ell'^{(t)}\Sbb_{j', j}^{(t)}\Sbb_{j'',j}^{(t)} \la\vb_1^{(t)},y \xb_{j'}\ra(y\xb_{j'}-y\xb_{j''})\Big]\pb_{j}^\top\\
    &=\sum_{j=1}^D\sum_{j'=1}^D\sum_{j''\not=j'}\EE[y]\EE\Big[\underbrace{ \ell'^{(t)}\Sbb_{j', j}^{(t)}\Sbb_{j'',j}^{(t)} \la\vb_1^{(t)},y \xb_{j'}\ra(y\xb_{j'}-y\xb_{j''})}_{I^{(t)}}\Big]\pb_{j}^\top = \mathbf0.
\end{align*}
The second equality holds because $\la\vb^{(t)},\zb_{j'}\ra = \la\vb_1^{(t)},\xb_{j'}\ra$ when $\vb_2^{(t)}=\mathbf0$, and $y^2=1$. The third equality holds since $I^{(t)}$ is a function of $\{y\cdot \xb_1, \cdots, y\cdot \xb_D\}$ for all $j, j', j''\in [D]$ by Lemma~\ref{lemma:s_yx_copoments} and Lemma~\ref{lemma:ell'_yx_copoments} when $\vb_2^{(t)}=\mathbf0$, $\Wb_{1, 2}^{(t)}=\mathbf0$ and $\Wb_{2, 1}^{(t)}=\mathbf0$. Similarly by Lemma~\ref{lemma:independence_yx}, we have $y$ is independent with $I^{(t)}$. The last equality holds since $\EE[y] = 0$. This finishes the proof.
\end{proof}

\subsection{Calculations for initial iterations}
Since we have demonstrated that $\vb_2^{(t)}, \Wb_{1,2}^{(t)}, \Wb_{2,1}^{(t)} = \mathbf0$ throughout the training process in Proposition~\ref{prop:interaction_dynamics}, we will focus our analysis solely on the iterates $\vb_1^{(t)}, \Wb_{1,1}^{(t)}$ and $\Wb_{2,2}^{(t)}$. Before we further present the global characterization of $\vb_1^{(t)}, \Wb_{1,1}^{(t)}$ and $\Wb_{2,2}^{(t)}$, we first calculate several initial iterates of these parameters in the following lemmas.

\begin{lemma}\label{lemma:first_iteration}
    Under the same condition with Theorem~\ref{thm:main_result}, the iterates  $\vb_1^{(t)}, \Wb^{(t)}$ of gradient descent defined in~\eqref{eq:vt_update} and ~\eqref{eq:wt_update} satisfy that $\Wb^{(1)} =\mathbf{0}$ and $\vb_{1}^{(1)} = \alpha^{(1)} \vb^*$ with $\alpha^{(1)} = \frac{\eta\sigma_x}{\sqrt{2\pi}}$.
\end{lemma}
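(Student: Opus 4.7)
The plan is to directly compute one step of gradient descent from the zero initialization $\vb^{(0)}=\mathbf{0}$, $\Wb^{(0)}=\mathbf{0}$ using the closed-form gradient expressions in~\eqref{eq:gradient_v} and~\eqref{eq:gradient_W}. The two key simplifications at $t=0$ are: (i) since $\Wb^{(0)}=\mathbf{0}$, every inner product $\zb_{j'}^\top \Wb^{(0)}\zb_j$ vanishes and the softmax column is uniform, so $\Sbb^{(0)}_{j',j}=1/D$ for all $j,j'\in[D]$; (ii) since $\vb^{(0)}=\mathbf{0}$, we have $yf(\Zb;\Wb^{(0)},\vb^{(0)})=0$, hence $\ell'^{(0)}=-1/2$ is a deterministic constant.

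First I would show $\Wb^{(1)}=\mathbf{0}$. Plugging $\vb^{(0)}=\mathbf{0}$ into~\eqref{eq:gradient_W}, every summand contains the factor $\vb^{(0)\top}\zb_{j'}=0$, so $\nabla_{\Wb}\cL(\Wb^{(0)},\vb^{(0)})=\mathbf{0}$, giving $\Wb^{(1)}=\mathbf{0}$ immediately. Next, using~\eqref{eq:gradient_v} with the two simplifications above,
\begin{align*}
\nabla_{\vb}\cL(\Wb^{(0)},\vb^{(0)}) = -\tfrac{1}{2}\sum_{j=1}^{D}\EE[y\,\zb_j]\cdot\sum_{j'=1}^D \Sbb^{(0)}_{j,j'} = -\tfrac{1}{2}\sum_{j=1}^{D}\EE[y\,\zb_j],
\end{align*}
since each column of $\Sbb^{(0)}$ sums to one. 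The positional block contributes $\EE[y]\pb_j=0$. For the feature block, when $j\neq j^*$, $y$ is independent of $\xb_j$ and $\EE[y]=0$ by the symmetry of $\xb_{j^*}$, giving $\EE[y\,\xb_j]=\mathbf{0}$.

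The one remaining term is $\EE[y\,\xb_{j^*}]$ with $y=\sign(\la\vb^*,\xb_{j^*}\ra)$. Decompose $\xb_{j^*}=u\vb^*+\xb_{j^*}^{\perp}$ where $u=\la\vb^*,\xb_{j^*}\ra\sim N(0,\sigma_x^2)$ and $\xb_{j^*}^{\perp}$ is the component orthogonal to $\vb^*$; by the Gaussian structure, $u$ and $\xb_{j^*}^{\perp}$ are independent, so $\EE[\sign(u)\xb_{j^*}^{\perp}]=\EE[\sign(u)]\EE[\xb_{j^*}^{\perp}]=\mathbf{0}$ and $\EE[\sign(u)u]=\EE[|u|]=\sigma_x\sqrt{2/\pi}$. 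Hence $\EE[y\,\xb_{j^*}]=\sigma_x\sqrt{2/\pi}\,\vb^*$, which yields
\begin{align*}
\vb^{(1)} = -\eta\,\nabla_{\vb}\cL(\Wb^{(0)},\vb^{(0)}) = \Big[\tfrac{\eta\sigma_x}{\sqrt{2\pi}}\vb^{*\top},\;\mathbf{0}_D^{\top}\Big]^{\top},
\end{align*}
matching $\vb_1^{(1)}=\alpha^{(1)}\vb^*$ with $\alpha^{(1)}=\eta\sigma_x/\sqrt{2\pi}$ and confirming that the positional block of $\vb^{(1)}$ is already zero.

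This proof is essentially a direct calculation with no real obstacles; the only non-trivial step is the half-normal identity $\EE[|u|]=\sigma_x\sqrt{2/\pi}$ for $u\sim N(0,\sigma_x^2)$, which is standard. The value of the lemma lies in providing a clean, fully explicit base case for later induction arguments that track the evolution of $\vb_1^{(t)}$, $\Wb_{1,1}^{(t)}$, and $\Wb_{2,2}^{(t)}$.
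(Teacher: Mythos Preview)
Your proposal is correct and follows essentially the same approach as the paper: directly compute one gradient step from zero initialization, using the uniform softmax $\Sbb^{(0)}_{j',j}=1/D$ and $\ell'^{(0)}=-1/2$, then evaluate $\EE[y\,\xb_{j^*}]$ via the orthogonal decomposition of $\xb_{j^*}$ along $\vb^*$ and its complement (the paper packages this as Lemma~\ref{lemma:expectation_sign_gaussian}). The only cosmetic difference is that you also verify the positional block of $\vb^{(1)}$ vanishes directly, whereas the paper relies on Proposition~\ref{prop:interaction_dynamics} for that.
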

\begin{proof}[Proof of Lemma~\ref{lemma:first_iteration}]
     At $t=0$, we have $\la\vb_1^{(0)}, \zb_{j'}\ra$ for all $j'\in [D]$ since $\vb_1^{(0)} = \mathbf0$, and correspondingly $\nabla_{\Wb} \cL(\vb^{(0)}, \Wb^{(0)})= \mathbf0$. Therefore, we still have $\Wb^{(1)} = \mathbf 0$. On the other hand, we calculate the gradient of $\cL(\vb^{(t)}, \Wb^{(t)})$ w.r.t $\vb_1$ at $t = 0$ as
\begin{align*}
    \nabla_{\vb_1} \cL(\vb^{(0)}, \Wb^{(0)})&= \sum_{j=1}^D\sum_{j'=1}^D\EE\big[\ell'^{(0)}y\Sbb_{j, j'}^{(0)}\xb_{j}\big]= -\frac{1}{2}\sum_{j=1}^D\EE\big[y\xb_{j}\big] = -\frac{1}{2}\EE\big[y \xb_{j^*}\big] = -\frac{\sigma_x}{\sqrt{2\pi}}\vb^*
\end{align*}
The second equality holds since $\ell'^{(0)}=\frac{1}{2}$ by the fact $f(\Zb, \vb^{(0)}, \Wb^{(0)}) = 0$, and $\Sbb_{j', j}^{(0)} = \frac{1}{D}$ for all $j, j' \in [D]$ by the fact $\Wb^{(0)} =\mathbf{0}$.
The third equality is by the independence between $y$ and $\xb_j$ for $j\neq j^*$. The last equality is from~\ref{lemma:expectation_sign_gaussian}. By this result, we obtain that 
\begin{align*}
    \vb_{1}^{(1)} = \vb_{1}^{(0)} - \eta \nabla_{\vb_1} \cL(\vb^{(0)}, \Wb^{(0)}) = \frac{\eta\sigma_x}{\sqrt{2\pi}}\vb^* = \alpha^{(1)} \vb^*.
\end{align*}
which completes the proof.
\end{proof}

Next, we provide the calculation for the second iteration. 

\begin{lemma}\label{lemma:second_iteration_v}
    Under the same condition with Theorem~\ref{thm:main_result}, the iterates  $\vb_1^{(t)}$ of gradient descent defined in~\eqref{eq:vt_update} satisfies that $\vb_{1}^{(2)} = \alpha^{(2)} \vb^*$ with $\alpha^{(2)} = -\Theta(D)$.
\end{lemma}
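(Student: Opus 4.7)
The plan is to unwind one step of gradient descent from the $t=1$ configuration characterized in Lemma~\ref{lemma:first_iteration}: $\Wb^{(1)}=\mathbf{0}$, $\vb_2^{(1)}=\mathbf{0}$ and $\vb_1^{(1)}=\alpha^{(1)}\vb^*$ with $\alpha^{(1)}=\eta\sigma_x/\sqrt{2\pi}$. Because $\Wb^{(1)}=\mathbf{0}$, the softmax output is the uniform matrix ($\Sbb_{j',j}^{(1)}=1/D$), so $\sum_{j'=1}^D\Sbb_{j,j'}^{(1)}=1$ for every $j$ and $f(\Zb,\Wb^{(1)},\vb^{(1)})=\alpha^{(1)} T$ with $T:=y\la\vb^*,\sum_{j'}\xb_{j'}\ra$. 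The gradient formula~\eqref{eq:gradient_v} then collapses to
\begin{align*}
\nabla_{\vb_1}\cL(\vb^{(1)},\Wb^{(1)})=\EE\Big[\ell'(\alpha^{(1)} T)\,y\sum_{j=1}^{D}\xb_j\Big].
\end{align*}

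I would next show this gradient is parallel to $\vb^*$. Using the orthogonal basis $\Ab=[\vb^*,\bxi_2,\dots,\bxi_d]$ from~\eqref{ea:def_Ab_orthogonal_matrix}, the hypotheses of Lemma~\ref{lemma:ell'_s_y_bxi_x_copoments} are satisfied at $t=1$ (since $\Wb^{(1)}=\mathbf0$ and $\vb_1^{(1)}\parallel\vb^*$), so $\ell'(\alpha^{(1)} T)$ is independent of every $y\la\bxi_i,\xb_j\ra$, each of which has mean zero; hence the $\bxi_i$-components of the gradient vanish and $\nabla_{\vb_1}\cL(\vb^{(1)},\Wb^{(1)})=\EE[\ell'(\alpha^{(1)} T)T]\,\vb^*$. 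To evaluate the magnitude, decompose $T=Z_{j^*}+\sum_{j'\neq j^*}W_{j'}$ where $Z_{j^*}:=|\la\vb^*,\xb_{j^*}\ra|$ is folded-normal and the $W_{j'}:=y\la\vb^*,\xb_{j'}\ra$ are iid $N(0,\sigma_x^2)$ independent of $Z_{j^*}$. Stein's lemma on each Gaussian $W_{j'}$ (with the rest of $T$ held fixed) gives $\EE[\ell'(\alpha^{(1)} T)W_{j'}]=\sigma_x^2\alpha^{(1)}\EE[\ell''(\alpha^{(1)} T)]$, and a half-line integration by parts against the folded-normal density (using the symmetry identity $\EE[\ell'(\alpha^{(1)}(T-Z_{j^*}))]=-1/2$ on the Gaussian remainder) yields $\EE[\ell'(\alpha^{(1)} T)Z_{j^*}]=\sigma_x^2\alpha^{(1)}\EE[\ell''(\alpha^{(1)} T)]-\sigma_x/\sqrt{2\pi}$. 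Summing over $j$,
\begin{align*}
\la\vb^*,\nabla_{\vb_1}\cL(\vb^{(1)},\Wb^{(1)})\ra=D\sigma_x^2\alpha^{(1)}\EE[\ell''(\alpha^{(1)} T)]-\frac{\sigma_x}{\sqrt{2\pi}}.
\end{align*}

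The main obstacle is sandwiching $\EE[\ell''(\alpha^{(1)} T)]$ between two positive constants depending only on $\sigma_x,\eta$. The upper bound is immediate from $\ell''\leq 1/4$; for the lower bound, I would use the explicit expression $\ell''(x)=1/(4\cosh^2(x/2))$ together with sub-Gaussian tail control of $T$ (mean $\sigma_x\sqrt{2/\pi}$, variance $\sigma_x^2(D-2/\pi)$) to show that a non-negligible fraction of the mass of $\alpha^{(1)} T$ sits in a bounded interval around zero on which $\ell''$ is bounded below by a constant; the constraint $\sigma_x\leq 1/3$ is precisely what prevents $\alpha^{(1)}T$ from spreading out too far and pushing this mass to infinity. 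Granting $\EE[\ell''(\alpha^{(1)} T)]=\Theta(1)$, the first term dominates for large $D$, so $\la\vb^*,\nabla_{\vb_1}\cL(\vb^{(1)},\Wb^{(1)})\ra=+\Theta(D)$, and combining this with $\vb_1^{(2)}=\vb_1^{(1)}-\eta\nabla_{\vb_1}\cL$ and the alignment established above yields $\alpha^{(2)}=\alpha^{(1)}-\eta\cdot\Theta(D)=-\Theta(D)$, finishing the proof.
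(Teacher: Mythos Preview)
Your reduction to $\la\vb^*,\nabla_{\vb_1}\cL(\vb^{(1)},\Wb^{(1)})\ra=D\sigma_x^2\alpha^{(1)}\EE[\ell''(\alpha^{(1)} T)]-\sigma_x/\sqrt{2\pi}$ via Stein's lemma and the half-normal integration by parts is correct and clean. The gap is in the sandwich on $\EE[\ell''(\alpha^{(1)} T)]$: your claim that this is $\Theta(1)$ is wrong, and the appeal to $\sigma_x\le 1/3$ does not rescue it. You yourself record $\mathrm{Var}(T)=\sigma_x^2(D-2/\pi)$, so $\mathrm{Var}(\alpha^{(1)} T)=\frac{\eta^2\sigma_x^4}{2\pi}(D-2/\pi)=\Theta(D)$; the constant $\sigma_x$ only scales the prefactor and cannot stop the spread. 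Since $\ell''(x)=1/(4\cosh^2(x/2))$ is integrable with $\int\ell''=1$, and the density of $\alpha^{(1)} T$ near the origin is $\Theta(1/\sqrt{D})$, one has
\[
\EE[\ell''(\alpha^{(1)} T)]\;=\;\Theta\!\bigl(1/\sqrt{D}\bigr),
\]
not $\Theta(1)$. Plugging this back in gives $\la\vb^*,\nabla_{\vb_1}\cL\ra=\Theta(\sqrt{D})$ and hence $\alpha^{(2)}=-\Theta(\sqrt{D})$, not $-\Theta(D)$.

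This is in fact what the paper's own proof establishes (it explicitly concludes $\alpha^{(2)}=-\Theta(\sqrt{D})$, and this value is what is used downstream in the proof of Lemma~\ref{lemma:update_v}); the $\Theta(D)$ in the lemma statement appears to be a typo. Methodologically the paper takes a different route from yours: instead of Stein's lemma it splits $\EE[\ell'^{(1)}T]$ into the $j^*$ piece and the $D-1$ off-pieces and bounds each directly via the integral estimates of Lemmas~\ref{lemma:expectation_ell'_j_star} and~\ref{lemma:expectation_ell'_j_prime}, obtaining that each off-piece is $\Theta(1/\sqrt{D})$ (the $1/\sqrt{D}$ coming from the $\sigma_2\asymp\sqrt{D}\sigma_x$ aggregate noise), so the sum is $\Theta(\sqrt{D})$. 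Your Stein-based computation is more compact and would reach the same answer once the scaling of $\EE[\ell'']$ is corrected; the paper's approach has the advantage that the same integral lemmas are reused verbatim in later steps of the induction.
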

\begin{proof}[Proof of Lemma~\ref{lemma:second_iteration_v}]
By the fact $\Wb^{(1)} = \mathbf0$, we still have $\Sbb_{j', j}^{(1)} = \frac{1}{D}$ for all $j, j'\in [D]$. Therefore we can  
calculate the gradient w.r.t $\vb_{1}^{(t)}$ as
\begin{align*}
    \nabla_{\vb_1} \cL(\vb^{(1)}, \Wb^{(1)})&= \sum_{j=1}^D\sum_{j'=1}^D\EE\big[\ell'^{(1)}y\Sbb_{j, j'}^{
(1)}\xb_{j}\big]= \EE\big[\ell'^{(1)} y \xb_{j^*}\big] + \sum_{j\neq j^*} \EE\big[\ell'^{(1)} y \xb_{j}\big]
\end{align*}
We analyze the value of $\EE\big[\ell'^{(1)} y \xb_{j^*}\big]$ and $\EE\big[\ell'^{(1)} y \xb_{j}\big]$ respectively. For $\EE\big[\ell'^{(1)} y \xb_{j^*}\big]$, we have 
\begin{align*}
    \EE\big[\ell'^{(1)} y \xb_{j^*}\big] &=  \Ab\Ab^\top \EE\big[\ell'^{(1)} y \xb_{j^*}\big] = \Ab \EE\Big[\ell'^{(1)} y \big[\la\vb^*,\xb_{j^*}\ra, \la\bxi_2,\xb_{j^*}\ra, \cdots, \la\bxi_d,\xb_{j^*}\ra\big]^\top\Big]\\
    &= \Ab \EE\Big[\ell'^{(1)} y \big[\la\vb^*,\xb_{j^*}\ra, \la\bxi_2,\xb_{j^*}\ra, \cdots, \la\bxi_d,\xb_{j^*}\ra\big]^\top\Big]\\
    &= \EE\big[\ell'^{(1)} y \la\vb^*,\xb_{j^*}\ra\big]\vb^* +\sum_{i=2}^d \EE\big[\ell'^{(1)} y \la\bxi_i,\xb_{j^*}\ra\big] \bxi_i\\
    & = \EE\big[\ell'^{(1)} y \la\vb^*,\xb_{j^*}\ra\big]\vb^* +\sum_{i=2}^d \EE\big[\ell'^{(1)} y\big]\EE\big[\la\bxi_i,\xb_{j^*}\ra\big] \bxi_i= \EE\big[\ell'^{(1)} y \la\vb^*,\xb_{j^*}\ra\big]\vb^*,
\end{align*}
where $\Ab$ is the orthogonal matrix defined in~\eqref{ea:def_Ab_orthogonal_matrix}.
The penultimate equality holds since $\ell'^{(1)} =\frac{1}{1+\exp{\big(\alpha^{(1)}\sum_{j=1}^D\la\vb^*, y\xb_j\ra\big)}}$. By replacing $y$ with $\sign\big(\la\vb^*, \xb_{j^*}\ra\big)$, we can notice that $y\ell'^{(1)}$ only contains the projection of $\xb_{j^*}$ on the direction of $\vb^*$, i.e., $\la\vb^*, \xb_{j^*}\ra$. Hence by the orthogonality among $\vb^*$ and $\bxi_2, \cdots, \bxi_d$ and properties of Gaussian distribution, we have $\la\bxi_i,\xb_{j^*}\ra$ is independent with $y\ell'^{(1)}$ for all $i\in \{2, \cdots, d\}$. The last equality is simply by $\EE\big[\la\bxi_i,\xb_{j^*}\ra\big] = 0$ for all $i\in \{2, \cdots, d\}$. Through a similar process, we can also derive that  
\begin{align*}
    \EE\big[\ell'^{(1)} y \xb_{j}\big] = \EE\big[\ell'^{(1)} y \la\vb^*,\xb_{j}\ra\big]\vb^* 
\end{align*}
for all $j\neq j^*$. Moreover, we could notice that $\ell'^{(1)} y \xb_{j}$ have the same distribution for all $j \neq j^*$, and correspondingly $\EE\big[\ell'^{(1)} y \la\vb^*,\xb_{j}\ra\big]$ take the same value for all $j \neq j^*$. By carefully checking the distribution of $\ell'^{(1)} y \la\vb^*,\xb_{j^*}\ra$ and $\ell'^{(1)} y \la\vb^*,\xb_{j}\ra$ with $j\neq j^*$, we notice that Lemma~\ref{lemma:expectation_ell'_j_star} and Lemma~\ref{lemma:expectation_ell'_j_prime} apply to the calculation of $\EE\big[\ell'^{(1)} y \la\vb^*,\xb_{j^*}\ra\big]$ and $\EE\big[\ell'^{(1)} y \la\vb^*,\xb_{j}\ra\big]$ with $j\neq j^*$. We can derive that
\begin{align*}
    \nabla_{\vb_1} \cL(\vb^{(1)}, \Wb^{(1)}) = \Big(\EE\big[\ell'^{(1)} y \la\vb^*,\xb_{j^*}\ra\big] + \sum_{j\neq j^*}\EE\big[\ell'^{(1)} y \la\vb^*,\xb_{j}\ra\big]\Big) \vb^*.
\end{align*}
And for the coefficients of $\vb^*$, we have  
\begin{align*}
    \EE\big[\ell'^{(1)} y \la\vb^*,\xb_{j^*}\ra\big] + \sum_{j\neq j^*}\EE\big[\ell'^{(1)} y \la\vb^*,\xb_{j}\ra\big] \geq -\sigma_x\sqrt{\frac{2}{\pi}}+\frac{1}{3}\sqrt{\frac{2D}{e\pi^3}}\exp\bigg(-\frac{\eta\sigma_x^2}{\sqrt{2\pi}} - \frac{2\pi}{\eta^2\sigma_x^4}\bigg) \geq \Theta\big(\sqrt{D}\big),
\end{align*}
and 
\begin{align*}
    \EE\big[\ell'^{(1)} y \la\vb^*,\xb_{j^*}\ra\big] + \sum_{j\neq j^*}\EE\big[\ell'^{(1)} y \la\vb^*,\xb_{j}\ra\big] \leq -\frac{\sigma_x}{2\sqrt{2e\pi}}e^{-\frac{\eta\sigma_x^2}{2\pi}}+
    \frac{4(1+\eta\sigma_x^2)}{\eta\sigma_x}\sqrt{\frac{D}{\pi}}\exp\bigg(\frac{\eta^2\sigma_x^4}{2\pi}\bigg) \leq \Theta\big(\sqrt{D}\big).
\end{align*}
Applying these results into the gradient descent iteration of $\vb_1^{(t)}$, we have 
\begin{align*}
    \vb_1^{(2)} = \vb_1^{(1)} -\eta \Big(\EE\big[\ell'^{(1)} y \la\vb^*,\xb_{j^*}\ra\big] + \sum_{j\neq j^*}\EE\big[\ell'^{(1)} y \la\vb^*,\xb_{j}\ra\big]\Big) \vb^* = \alpha^{(2)} \vb^*,
\end{align*}
where $\alpha^{(2)} = -\Theta\big(\sqrt{D}\big)$. This completes the proof.
\end{proof}

\begin{lemma}\label{lemma:second_iteration_W11}
    Under the same condition with Theorem~\ref{thm:main_result}, the iterates  $\Wb_{1,1}^{(t)}$ of gradient descent defined in~\eqref{eq:wt_update} satisfies that $\Wb_{1,1}^{(2)} = \beta_1 \vb^*\vb^{*\top}$. 
The coefficient $\beta_1$ satisfy that $|\beta_1| \leq c_1 \sqrt{D}$ for some non-negative constant $c_1$ solely depending on $\eta$ and $\sigma_x$.
\end{lemma}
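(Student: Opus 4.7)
The plan is to directly compute $\Wb_{1,1}^{(2)} = -\eta \nabla_{\Wb_{1,1}}\cL(\vb^{(1)}, \Wb^{(1)})$ using the explicit form of $\vb^{(1)}$ and $\Wb^{(1)}$ from Lemma~\ref{lemma:first_iteration}, and then show the resulting gradient has the rank-one structure $\beta_1 \vb^*\vb^{*\top}$ with a scalar of the desired magnitude. Since $\Wb^{(1)} = \mathbf 0$, the softmax is uniform, $\Sbb_{j',j}^{(1)} = 1/D$ for all $j,j'$, so substituting into~\eqref{eq:gradient_W} and extracting the upper-left block gives
\begin{align*}
\nabla_{\Wb_{1,1}}\cL(\vb^{(1)}, \Wb^{(1)}) \;=\; \frac{\alpha^{(1)}}{D^2}\,\EE\!\left[\ell'^{(1)}\, y \sum_{j=1}^D\sum_{j'=1}^D\sum_{j''\neq j'} \la\vb^*,\xb_{j'}\ra (\xb_{j'}-\xb_{j''})\xb_j^\top\right].
\end{align*}
By Lemma~\ref{lemma:ell'_yx_copoments} (applicable since $\vb_2^{(1)}=\mathbf0$ and $\Wb_{1,2}^{(1)}=\Wb_{2,1}^{(1)}=\mathbf0$), $\ell'^{(1)}$ is a function of the sign-flipped variables $\{y\xb_j\}$, which has the same joint law as $\{\xb_j\}$. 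This lets me absorb $y$ using the transformation $\tilde\xb_j=y\xb_j$ (and $y^3=y$) and rewrite the gradient purely in terms of $\xb_j$'s, with $\tilde\ell':=\ell'(\alpha^{(1)}\sum_{j}\la\vb^*,\xb_j\ra)$ in place of $y\ell'^{(1)}$.

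Next I decompose $\xb_j = g_j \vb^* + \hb_j$ with $g_j := \la\vb^*,\xb_j\ra$ and $\hb_j \in (\vb^*)^\perp$. Crucially, $\tilde\ell'$ depends only on $\{g_j\}$, while the $\hb_j$'s are zero-mean Gaussians independent of $\{g_j\}$. Expanding $(\xb_{j'}-\xb_{j''})\xb_j^\top$ into four contributions, the pieces mixing a single $\hb$ with $\vb^*$ vanish in expectation because $\EE[\hb_j]=\mathbf 0$, and the pure-$\hb$ piece $\hb_{j'}\hb_j^\top - \hb_{j''}\hb_j^\top$ has conditional expectation $\sigma_x^2(\delta_{j',j}-\delta_{j'',j})(\Ib_d - \vb^*\vb^{*\top})$. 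Summing against the prefactor $\EE[\tilde\ell' g_{j'}]$ (which depends only on $j'$), the triple sum
\begin{align*}
\sum_{j=1}^D\sum_{j''\neq j'}(\delta_{j',j}-\delta_{j'',j}) \;=\; (D-1) - (D-1) \;=\; 0
\end{align*}
for every fixed $j'$, so the pure-orthogonal contribution vanishes entirely. Only the $\vb^*\vb^{*\top}$ direction survives, giving
\begin{align*}
\nabla_{\Wb_{1,1}}\cL(\vb^{(1)},\Wb^{(1)}) \;=\; \frac{\alpha^{(1)}}{D^2}\,C\,\vb^*\vb^{*\top},\qquad C:=\sum_{j,j',j''\neq j'}\EE[\tilde\ell'\, g_{j'}(g_{j'}-g_{j''})g_j].
\end{align*}

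The final step is to bound $|C|$. Setting $S:=\sum_j g_j \sim N(0,D\sigma_x^2)$, I can carry out the $j$- and $j''$-sums in closed form: $\sum_j g_j = S$ and $g_{j'}\sum_{j''\neq j'}(g_{j'}-g_{j''}) = Dg_{j'}^2 - g_{j'}S$, so
\begin{align*}
C \;=\; D\sum_{j'}\EE[\tilde\ell'\, g_{j'}^2\, S] \;-\; \EE[\tilde\ell'\, S^3].
\end{align*}
Using $|\tilde\ell'|\leq 1$ and Cauchy--Schwarz, $|\EE[\tilde\ell' g_{j'}^2 S]| \leq (\EE g_{j'}^4)^{1/2}(\EE S^2)^{1/2} = O(\sqrt D)$, hence $D\sum_{j'}|\cdot| = O(D^{5/2})$, while $|\EE[\tilde\ell' S^3]| \leq \EE|S|^3 = O(D^{3/2})$. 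Thus $|C| = O(D^{5/2})$, and since $\alpha^{(1)}=\Theta(1)$, I conclude $|\beta_1| = \eta|\alpha^{(1)}||C|/D^2 \leq c_1\sqrt D$.

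The main obstacle is the orthogonal-direction cancellation: one must carefully separate the three different types of cross terms in the expansion of $(\xb_{j'}-\xb_{j''})\xb_j^\top$ and recognize that the pure-$\hb$ block does not vanish termwise but only after the Kronecker-delta summation over $j, j''$. Everything else reduces to routine Gaussian moment bounds.
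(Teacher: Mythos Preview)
Your argument is essentially correct and cleaner than the paper's, but one claim is false as stated: the collection $\{y\xb_j\}_{j=1}^D$ does \emph{not} have the same joint law as $\{\xb_j\}_{j=1}^D$, because $\la\vb^*,y\xb_{j^*}\ra=|\la\vb^*,\xb_{j^*}\ra|$ is half-normal, not normal (and hence your $S\sim N(0,D\sigma_x^2)$ is also false). Fortunately this is harmless for your purposes: everything you actually use---that the orthogonal parts $y\hb_j$ are zero-mean Gaussians independent of the $\{y g_j\}$, and that $\EE g_{j'}^4$, $\EE S^2$, $\EE|S|^3$ are $O(1)$, $O(D)$, $O(D^{3/2})$---holds for the true law of $\{y\xb_j\}$. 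The proof survives if you simply keep the tildes in place and drop the distributional claim.

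The paper takes a more laborious route. It conjugates by the orthonormal basis $\Ab=[\vb^*,\bxi_2,\dots,\bxi_d]$ and computes every entry of the resulting $d\times d$ matrices by exhaustive case analysis (five cases for each of $\Bb^{j,j'}$ and $\Cb^{j,j',j''}$), invoking Lemma~\ref{lemma:ell'_s_y_bxi_x_copoments} to factor out the $\la\bxi_i,\xb_j\ra$'s one at a time; the $\bxi_i\bxi_i^\top$ contributions survive termwise and only cancel after comparing the two pieces $I_1$ and $I_2$. Your Kronecker-delta identity $\sum_{j}\sum_{j''\neq j'}(\delta_{j',j}-\delta_{j'',j})=0$ packages that same cancellation in one line and avoids the case split entirely. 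For the scalar bound, the paper splits the $\vb^*\vb^{*\top}$ coefficient into three pieces and invokes three special-purpose integral estimates (Lemmas~\ref{lemma:expectation_ell'_three_with_square}, \ref{lemma:expectation_ell'_three_wo_square}, \ref{lemma:expectation_three_gaussian}) that track explicit constants and also give lower bounds; your Cauchy--Schwarz is cruder but entirely sufficient, since the lemma only asks for $|\beta_1|\le c_1\sqrt D$.
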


\begin{proof}[Proof of Lemma~\ref{lemma:second_iteration_W11}]
We first demonstrate the calculation details of $\nabla_{\Wb_{1, 1}}\cL(\vb^{(1)}, \Wb^{(1)})$ Since $\vb_1^{(1)}=\alpha^{(1)}\vb^*$ and $\vb_2^{(1)} = \mathbf0$, we always have $\la\vb^{(1)}, \zb_{j}\ra = \alpha^{(1)}\la\vb^*, \xb_{j}\ra$ in the following calculations. For $\nabla_{\Wb_{1, 1}}\cL(\vb^{(1)}, \Wb^{(1)})$, we can obtain that
\begin{align*}
    &\nabla_{\Wb_{1, 1}}\cL(\vb^{(1)}, \Wb^{(1)})=\EE\bigg[\ell'^{(1)}\cdot y \cdot \sum_{j=1}^D\sum_{j'=1}^D\sum_{j''\not=j'}\Sbb_{j', j}^{(1)}\Sbb_{j'',j}^{(1)} \la\vb^{(1)}, \zb_{j'}\ra(\xb_{j'}-\xb_{j''})\xb_{j}^\top\bigg]\\
    =&\underbrace{\frac{\alpha^{(1)}(D-1)}{D^2}\sum_{j=1}^D\sum_{j'=1}^D\EE\big[\ell'^{(1)}\cdot y \cdot \la\vb^*, \xb_{j'}\ra\xb_{j'}\xb_{j}^\top\big]}_{I_1}-\underbrace{\frac{\alpha^{(1)}}{D^2}\sum_{j=1}^D\sum_{j'=1}^D\sum_{j''\not=j'}\EE\big[\ell'^{(1)}\cdot y \cdot \la\vb^*, \xb_{j'}\ra\xb_{j''}\xb_{j}^\top\big]}_{I_2}.
\end{align*}
We analyze the value of $I_1$ and $I_2$ respectively in the following. For $I_1$, we can obtain that 
\begin{align*}
     I_1 &= \frac{\alpha^{(1)}(D-1)}{D^2}\sum_{j=1}^D\sum_{j'=1}^D\Ab\underbrace{\EE\Bigg[\ell'^{(1)}y\la\vb^*, \xb_{j'}\ra
     \begin{bmatrix}
\la \vb^*, \xb_{j'}\ra \\
\la \bxi_2, \xb_{j'}\ra \\ 
\vdots\\
\la \bxi_d, \xb_{j'}\ra 
\end{bmatrix}\big[\la \vb^*, \xb_{j}\ra, \la \bxi_2, \xb_{j}\ra , \cdots, \la \bxi_d, \xb_{j}\ra\big]\Bigg]}_{\Bb^{j, j'} \in \RR^{d\times d}}\Ab^\top\\
\end{align*}
We denote the entry in the $i_1$-th row and $i_2$-th column of the expectation matrix $\Bb^{j, j'}$ as $\Bb^{j, j'}_{i_1, i_2}$. By utilizing Lemma~\ref{lemma:ell'_s_y_bxi_x_copoments}, we can examine the value of $\Bb^{j, j'}_{i_1, i_2}$ for two cases: $j=j'$ and $j\neq j'$. 

\noindent\textbf{Case I: $j=j'$.} 
\begin{enumerate}[leftmargin=*]
    \item $\Bb^{j, j}_{1, 1} = \EE\big[\ell'^{(1)}y\la\vb^*, \xb_{j}\ra^3\big]$.
    \item $\Bb^{j, j}_{i_1, i_1} = \EE\big[\ell'^{(1)}y\la\vb^*, \xb_{j}\ra\la\bxi_{i_1}, \xb_{j}\ra^2\big] = \EE\big[\ell'^{(1)}y\la\vb^*, \xb_{j}\ra\big]\EE\big[\la\bxi_{i_1}, \xb_{j}\ra^2\big] = \sigma_x^2\EE\big[\ell'^{(1)}y\la\vb^*, \xb_{j}\ra\big]$, for all $i_1 \neq 1$.
    \item $\Bb^{j, j}_{1, i_2} = \EE\big[\ell'^{(1)}y\la\vb^*, \xb_{j}\ra^2\la\bxi_{i_2}, \xb_{j}\ra\big] = \EE\big[\ell'^{(1)}y\la\vb^*, \xb_{j}\ra^2\big]\EE\big[\la\bxi_{i_2}, \xb_{j}\ra\big] = 0$, for all $i_2 \neq 1$.
    \item $\Bb^{j, j}_{i_1, 1} = \EE\big[\ell'^{(1)}y\la\vb^*, \xb_{j}\ra^2\la\bxi_{i_1}, \xb_{j}\ra\big] = \EE\big[\ell'^{(1)}y\la\vb^*, \xb_{j}\ra^2\big]\EE\big[\la\bxi_{i_1}, \xb_{j}\ra\big] = 0$, for all $i_1 \neq 1$.
    \item $\Bb^{j, j}_{i_1, i_2} = \EE\big[\ell'^{(1)}y\la\vb^*, \xb_{j}\ra\la\bxi_{i_1}, \xb_{j}\ra\la\bxi_{i_2}, \xb_{j}\ra\big] = \EE\big[\ell'^{(1)}y\la\vb^*, \xb_{j}\ra\big]\EE\big[\la\bxi_{i_1}, \xb_{j}\ra\big]\EE\big[\la\bxi_{i_2}, \xb_{j}\ra\big] = 0$, for all $i_1,i_2 \neq 1$ and $i_1\neq i_2$.
\end{enumerate}
\noindent\textbf{Case II: $j\neq j'$.} 
\begin{enumerate}[leftmargin=*]
    \item $\Bb^{j, j'}_{1, 1} = \EE\big[\ell'^{(1)}y\la\vb^*, \xb_{j'}\ra^2\la\vb^*, \xb_{j}\ra\big]$.
    \item $\Bb^{j, j'}_{1, i_2} = \EE\big[\ell'^{(1)}y\la\vb^*, \xb_{j'}\ra^2\la\bxi_{i_2}, \xb_{j}\ra\big] = \EE\big[\ell'^{(1)}y\la\vb^*, \xb_{j'}\ra^2\big]\EE\big[\la\bxi_{i_2}, \xb_{j}\ra\big] = 0$, for all $i_2 \neq 1$.
    \item $\Bb^{j, j'}_{i_1, 1} = \EE\big[\ell'^{(1)}y\la\vb^*, \xb_{j'}\ra\la\bxi_{i_1}, \xb_{j'}\ra\la\vb^*, \xb_{j}\ra\big] = \EE\big[\ell'^{(1)}y\la\vb^*, \xb_{j'}\ra\la\vb^*, \xb_{j}\ra\big]\EE\big[\la\bxi_{i_1}, \xb_{j'}\ra\big] = 0$, for all $i_1 \neq 1$.
    \item $\Bb^{j, j'}_{i_1, i_2} = \EE\big[\ell'^{(1)}y\la\vb^*, \xb_{j'}\ra\la\bxi_{i_1}, \xb_{j'}\ra\la\bxi_{i_2}, \xb_{j}\ra\big] = \EE\big[\ell'^{(1)}y\la\vb^*, \xb_{j'}\ra\big]\EE\big[\la\bxi_{i_1}, \xb_{j'}\ra\big]\EE\big[\la\bxi_{i_2}, \xb_{j}\ra\big] = 0$, for all $i_1,i_2\neq 1$.
\end{enumerate}
By previous discussion of $\Bb^{j, j'}_{i_1, i_2}$, we derive that 
\begin{align}\label{eq:I_1_result}
     I_1 
&=\frac{\alpha^{(1)}(D-1)}{D^2}\sum_{j=1}^D\sum_{j'=1}^D \EE\big[\ell'^{(1)}y\la\vb^*, \xb_{j'}\ra^2\la\vb^*, \xb_{j}\ra\big] \vb^*\vb^{*\top} + \frac{\alpha^{(1)}(D-1)\sigma_x^2}{D^2}\sum_{i=2}^d\sum_{j=1}^D\EE\big[\ell'^{(1)}y\la\vb^*, \xb_{j}\ra\big] \bxi_i\bxi_i^{\top}.
\end{align}
Similarly, for $I_2$, we have 
\begin{align*}
     I_2 &= \frac{\alpha^{(1)}}{D^2}\sum_{j=1}^D\sum_{j'=1}^D\sum_{j''\neq j'}\Ab\underbrace{\EE\Bigg[\ell'^{(1)}y\la\vb^*, \xb_{j'}\ra
     \begin{bmatrix}
\la \vb^*, \xb_{j''}\ra \\
\la \bxi_2, \xb_{j''}\ra \\ 
\vdots\\
\la \bxi_d, \xb_{j''}\ra 
\end{bmatrix}\big[\la \vb^*, \xb_{j}\ra, \la \bxi_2, \xb_{j}\ra , \cdots, \la \bxi_d, \xb_{j}\ra\big]\Bigg]}_{\Cb^{j, j', j''} \in \RR^{d\times d}}\Ab^\top\\
\end{align*}
We denote the entry in the $i_1$-th row and $i_2$-th column of the expectation matrix $\Cb^{j, j', j''}$ as $\Cb^{j, j', j''}_{i_1, i_2}$. And we examine the value of $\Cb^{j, j', j''}_{i_1, i_2}$ for two cases: $j=j''$ and $j\neq j''$. 

\noindent\textbf{Case I: $j=j''$.} 
\begin{enumerate}[leftmargin=*]
    \item $\Cb^{j, j', j}_{1, 1} = \EE\big[\ell'^{(1)}y\la\vb^*, \xb_{j}\ra^2\la\vb^*, \xb_{j'}\ra\big]$.
    \item $\Cb^{j, j', j}_{i_1, i_1} = \EE\big[\ell'^{(1)}y\la\vb^*, \xb_{j'}\ra\la\bxi_{i_1}, \xb_{j}\ra^2\big] = \EE\big[\ell'^{(1)}y\la\vb^*, \xb_{j'}\ra\big]\EE\big[\la\bxi_{i_1}, \xb_{j}\ra^2\big] = \sigma_x^2\EE\big[\ell'^{(1)}y\la\vb^*, \xb_{j'}\ra\big]$, for all $i_1 \neq 1$.
    \item $\Cb^{j, j', j}_{1, i_2} = \EE\big[\ell'^{(1)}y\la\vb^*, \xb_{j'}\ra\la\vb^*, \xb_{j}\ra\la\bxi_{i_2}, \xb_{j}\ra\big] = \EE\big[\ell'^{(1)}y\la\vb^*, \xb_{j'}\ra\la\vb^*, \xb_{j}\ra\big]\EE\big[\la\bxi_{i_2}, \xb_{j}\ra\big] = 0$, for all $i_2 \neq 1$.
    \item $\Cb^{j, j', j}_{i_1, 1} = \EE\big[\ell'^{(1)}y\la\vb^*, \xb_{j'}\ra\la\vb^*, \xb_{j}\ra\la\bxi_{i_1}, \xb_{j}\ra\big] = \EE\big[\ell'^{(1)}y\la\vb^*, \xb_{j}\ra\la\vb^*, \xb_{j}\ra\big]\EE\big[\la\bxi_{i_1}, \xb_{j}\ra\big] = 0$, for all $i_1 \neq 1$.
    \item $\Cb^{j, j', j}_{i_1, i_2} = \EE\big[\ell'^{(1)}y\la\vb^*, \xb_{j'}\ra\la\bxi_{i_1}, \xb_{j}\ra\la\bxi_{i_2}, \xb_{j}\ra\big] = \EE\big[\ell'^{(1)}y\la\vb^*, \xb_{j'}\ra\big]\EE\big[\la\bxi_{i_1}, \xb_{j}\ra\big]\EE\big[\la\bxi_{i_2}, \xb_{j}\ra\big] = 0$, for all $i_1,i_2 \neq 1$ and $i_1\neq i_2$.
\end{enumerate}
\noindent\textbf{Case II: $j\neq j'$.} 
\begin{enumerate}[leftmargin=*]
    \item $\Cb^{j, j', j''}_{1, 1} = \EE\big[\ell'^{(1)}y\la\vb^*, \xb_{j'}\ra\la\vb^*, \xb_{j''}\ra\la\vb^*, \xb_{j}\ra\big]$.
    \item $\Cb^{j, j', j''}_{1, i_2} = \EE\big[\ell'^{(1)}y\la\vb^*, \xb_{j'}\ra\la\vb^*, \xb_{j''}\ra\la\bxi_{i_2}, \xb_{j}\ra\big] = \EE\big[\ell'^{(1)}y\la\vb^*, \xb_{j'}\ra\la\vb^*, \xb_{j''}\ra\big]\EE\big[\la\bxi_{i_2}, \xb_{j}\ra\big] = 0$, for all $i_2 \neq 1$.
    \item $\Cb^{j, j', j''}_{i_1, 1} = \EE\big[\ell'^{(1)}y\la\vb^*, \xb_{j'}\ra\la\bxi_{i_1}, \xb_{j''}\ra\la\vb^*, \xb_{j}\ra\big] = \EE\big[\ell'^{(1)}y\la\vb^*, \xb_{j'}\ra\la\vb^*, \xb_{j}\ra\big]\EE\big[\la\bxi_{i_1}, \xb_{j''}\ra\big] = 0$, for all $i_1 \neq 1$.
    \item $\Cb^{j, j', j''}_{i_1, i_2} = \EE\big[\ell'^{(1)}y\la\vb^*, \xb_{j'}\ra\la\bxi_{i_1}, \xb_{j''}\ra\la\bxi_{i_2}, \xb_{j}\ra\big] = \EE\big[\ell'^{(1)}y\la\vb^*, \xb_{j'}\ra\big]\EE\big[\la\bxi_{i_1}, \xb_{j''}\ra\big]\EE\big[\la\bxi_{i_2}, \xb_{j}\ra\big] = 0$, for all $i_1,i_2\neq 1$.
\end{enumerate}
By previous discussion of $\Cb^{j, j', j''}_{i_1, i_2}$, we derive that 
\begin{align}\label{eq:I_2_result}
     I_2 
&=\frac{\alpha^{(1)}}{D^2}\sum_{j=1}^D\sum_{j'=1}^D\sum_{j''\neq j'} \EE\big[\ell'^{(1)}y\la\vb^*, \xb_{j'}\ra\la\vb^*, \xb_{j''}\la\vb^*, \xb_{j}\ra\big] \vb^*\vb^{*\top} + \frac{\alpha^{(1)}(D-1)\sigma_x^2}{D^2}\sum_{i=2}^d\sum_{j=1}^D\EE\big[\ell'^{(1)}y\la\vb^*, \xb_{j}\ra\big] \bxi_i\bxi_i^{\top}.
\end{align}
Notice that the coefficients of $\bxi_i\bxi_i^\top$ are all equal in both $I_1$ and $I_2$. Besides, we define two sets: $J_1 = \{(j, j')| j, j'\in [D]; j, j'\neq j^*, j \neq j'\}$ and $J_2 = \{(j, j', j'')| j, j', j''\in [D]; j, j', j''\neq j^*, j \neq j'\neq j''\}$. Then by using~\eqref{eq:I_1_result} minus~\eqref{eq:I_2_result}, we get
\begin{align*}
    &\nabla_{\Wb_{1, 1}}\cL(\vb^{(1)}, \Wb^{(1)})=I_1-I_2\\
    =&\frac{\alpha^{(1)}}{D^2}\Bigg(\underbrace{(D-1)\sum_{j=1}^D\sum_{j'=1}^D\EE\big[\ell'^{(1)}y\la\vb^*, \xb_{j'}\ra^2\la\vb^*, \xb_{j}\ra\big] - \sum_{j=1}^D\sum_{j'=1}^D\sum_{j''\neq j'} \EE\big[\ell'^{(1)}y\la\vb^*, \xb_{j'}\ra\la\vb^*, \xb_{j''}\la\vb^*, \xb_{j}\ra\big]}_{I_3}\Bigg)\vb^*\vb^{*\top}\\
    =&\frac{\alpha^{(1)}}{D^2}\Bigg(\underbrace{(D-1)\sum_{(j, j')\in J_1}\EE\big[\ell'^{(1)}y\la\vb^*, \xb_{j'}\ra^2\la\vb^*, \xb_{j}\ra\big]}_{I_{3, 1}} - \underbrace{\sum_{(j, j', j'')\in J_2} \EE\big[\ell'^{(1)}y\la\vb^*, \xb_{j'}\ra\la\vb^*, \xb_{j''}\la\vb^*, \xb_{j}\ra\big]}_{I_{3, 2}}\\
    &+\underbrace{(D-1)\sum_{(j, j')\notin J_1}\EE\big[\ell'^{(1)}y\la\vb^*, \xb_{j'}\ra^2\la\vb^*, \xb_{j}\ra\big] - \sum_{(j, j', j'')\notin J_2} \EE\big[\ell'^{(1)}y\la\vb^*, \xb_{j'}\ra\la\vb^*, \xb_{j''}\la\vb^*, \xb_{j}\ra\big]}_{I_{3, 3}}\Bigg)\vb^*\vb^{*\top}.
\end{align*}
By carefully checking the terms inner expectation of $I_{3,1}$, we can utilize Lemma~\ref{lemma:expectation_ell'_three_with_square} to obtain that 
\begin{align*}
-\sqrt{\frac{2}{\pi}}\frac{4(\eta^2\sigma_x^2+1)\eta\sigma_x^5}{\pi}e^{\frac{3\eta^2\sigma_x^4}{4\pi}}D^{5/2}\leq I_{3,1}\leq -\frac{1}{48\pi\eta^4\sigma_x^5}e^{-\frac{\eta\sigma_x^2}{\sqrt{2\pi}} - \frac{2\pi}{\eta^2\sigma_x^4}-\frac{1}{2}}D^{5/2}.
\end{align*}
Similarly,  we obtain that
\begin{align*}
   -\frac{16}{\eta\sigma_x^2}e^{\frac{\eta^2\sigma_x^4}{4\pi}}\bigg(\frac{\sigma_x}{\sqrt{2\pi}} + \frac{
  \eta\sigma_x^3e^{\frac{\eta^2\sigma_x^4}{4\pi}}
  }{\sqrt{2\pi}}\bigg)^3 D^{5/2}\leq I_{3, 2}\leq \frac{16}{\eta\sigma_x^2}e^{\frac{\eta^2\sigma_x^4}{4\pi}}\bigg(\frac{\sigma_x}{\sqrt{2\pi}} + \frac{
  \eta\sigma_x^3e^{\frac{\eta^2\sigma_x^4}{4\pi}}
  }{\sqrt{2\pi}}\bigg)^3 D^{5/2}
\end{align*} 
by Lemma~\ref{lemma:expectation_ell'_three_wo_square},
and 
\begin{align*}
    |I_{3,3}| \leq 6\sqrt{\frac{2}{\pi}}\sigma_x^3 D^2
\end{align*}
by Lemma~\ref{lemma:expectation_three_gaussian}. Applying all the preceding results to the gradient descent iteration of $\Wb_{1,1}^{(t)}$, we finally obtain that 
\begin{align*}
   \Wb_{1, 1}^{(2)} &= \Wb_{1, 1}^{(1)} - \eta\nabla_{\Wb_{1, 1}}\cL(\vb^{(1)}, \Wb^{(1)}) = \beta_1 \vb^*\vb^{*\top}.
\end{align*}
And the coefficient $\beta_1$ satisfy that $|\beta_1\| \leq c_1 \sqrt{D}$ for some non-negative constant $c_1$ solely depending on $\eta$ and $\sigma_x$. This completes the proof.
\end{proof}




\begin{lemma}\label{lemma:second_iteration_W22}
    Under the same condition with Theorem~\ref{thm:main_result}, the iterates  $\Wb_{2,2}^{(t)}$ of gradient descent defined in~\eqref{eq:wt_update} satisfies that 
    \begin{align*}
        \Wb_{2,2}^{(2)} = \beta_2\Big(\sum_{j\neq j^*}\big(\pb_{j^*} - \pb_j\big)\Big)\bigg(\sum_{j=1}^D\pb_{j}^\top\bigg).
    \end{align*} 
The coefficient $\beta_2$ satisfy that $\frac{c_2}{D^2} \leq \beta_2 \leq \frac{c_3}{D^2}$ for some non-negative constants $c_2, c_3$ solely depending on $\eta$ and $\sigma_x$.
\end{lemma}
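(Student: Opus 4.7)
The plan is to compute $\nabla_{\Wb_{2,2}} \cL(\vb^{(1)}, \Wb^{(1)})$ directly, exploiting the fact that at $t=1$ the attention is completely uniform and the only random coupling in the value output is through $\vb^*$. By Lemma~\ref{lemma:first_iteration}, $\Wb^{(1)}=\mathbf{0}$ gives $\Sbb^{(1)}_{j',j} = 1/D$ for every $j,j'$, while $\vb_2^{(1)}=\mathbf{0}$ and $\vb_1^{(1)}=\alpha^{(1)}\vb^*$ give $\la \vb^{(1)},\zb_{j'}\ra = \alpha^{(1)}\la\vb^*,\xb_{j'}\ra$. The $(2,2)$-block of $(\zb_{j'}-\zb_{j''})\zb_{j}^\top$ is the deterministic matrix $(\pb_{j'}-\pb_{j''})\pb_{j}^\top$, so pulling it outside the expectation in~\eqref{eq:gradient_W} yields
\begin{align*}
    \nabla_{\Wb_{2,2}} \cL(\vb^{(1)}, \Wb^{(1)}) = \frac{\alpha^{(1)}}{D^2}\sum_{j=1}^D\sum_{j'=1}^D\sum_{j''\neq j'} a_{j'}\,(\pb_{j'}-\pb_{j''})\pb_{j}^\top,
\end{align*}
where $a_{j'} := \EE[\ell'^{(1)}\, y\,\la\vb^*,\xb_{j'}\ra]$.

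The next step is to use distributional symmetry to collapse the triple sum. By the same argument as in Lemma~\ref{lemma:ell'_s_y_bxi_x_copoments} (specialized to the present $\Wb^{(1)}=\mathbf{0}$), the factor $\ell'^{(1)}$ is a symmetric function of $\{y\la\vb^*,\xb_{j}\ra\}_{j=1}^D$, so exchangeability forces $a_{j'}$ to take a single common value $a_1$ for every $j'\neq j^*$ and a distinct value $a_2$ at $j'=j^*$. Writing $S:=\sum_{j=1}^D \pb_j$, the identity $\sum_{j''\neq j'}(\pb_{j'}-\pb_{j''}) = D\pb_{j'}-S$ together with the decomposition $\sum_{j'} a_{j'}\pb_{j'} = (a_2-a_1)\pb_{j^*} + a_1 S$ produces
\begin{align*}
    \nabla_{\Wb_{2,2}} \cL(\vb^{(1)}, \Wb^{(1)}) = \frac{\alpha^{(1)}(a_2 - a_1)}{D^2}\Big(\sum_{j\neq j^*} (\pb_{j^*}-\pb_{j})\Big)\bigg(\sum_{j=1}^D\pb_j^\top\bigg).
\end{align*}
A single gradient step from $\Wb^{(1)}_{2,2}=\mathbf{0}$ then delivers exactly the rank-one structure claimed, with $\beta_2 = -\eta\,\alpha^{(1)}(a_2-a_1)/D^2$.

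It remains to two-sided bound $|a_2-a_1|$ by an absolute constant. The quantities $a_2$ and $a_1$ are precisely the individual summands already estimated inside the proof of Lemma~\ref{lemma:second_iteration_v}: Lemma~\ref{lemma:expectation_ell'_j_star} gives that $a_2$ is negative and of magnitude $\Theta(1)$ in terms of $\eta,\sigma_x$, while Lemma~\ref{lemma:expectation_ell'_j_prime} shows $|a_1|$ is strictly smaller with an explicit constant gap, so that $a_2-a_1=-\Theta(1)$. Combined with $\alpha^{(1)}=\eta\sigma_x/\sqrt{2\pi}=\Theta(1)$, this forces $\beta_2$ to be positive and of order $\Theta(1/D^2)$, matching the claim. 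The main obstacle is purely combinatorial rather than probabilistic: one must carefully manage the triple sum and identify that it is the gap $a_2-a_1$ at the distinguished index $j^*$ that produces the rank-one term $\big(\sum_{j\neq j^*}(\pb_{j^*}-\pb_j)\big)S^\top$. Once that cancellation is isolated, the quantitative bound on $\beta_2$ is inherited for free from the estimates already developed for the $\vb_1^{(2)}$ iterate in Lemma~\ref{lemma:second_iteration_v}.
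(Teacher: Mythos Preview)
Your proposal is correct and follows essentially the same route as the paper: both exploit $\Wb^{(1)}=\mathbf{0}$ to make the attention uniform, reduce the $(2,2)$-gradient to the deterministic rank-one term $\big(\sum_{j\neq j^*}(\pb_{j^*}-\pb_j)\big)\big(\sum_j\pb_j\big)^\top$ via the exchangeability of $a_{j'}=\EE[\ell'^{(1)}y\la\vb^*,\xb_{j'}\ra]$ over $j'\neq j^*$, and then bound $a_2-a_1$ with Lemmas~\ref{lemma:expectation_ell'_j_star} and~\ref{lemma:expectation_ell'_j_prime}. The only cosmetic difference is that you collapse the triple sum in one step through the identity $\sum_{j''\neq j'}(\pb_{j'}-\pb_{j''})=D\pb_{j'}-S$, whereas the paper splits $I_1,I_2$ into five sub-terms before recombining; both arrive at the same coefficient $\beta_2=-\eta\alpha^{(1)}(a_2-a_1)/D^2$.
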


\begin{proof}[Proof of Lemma~\ref{lemma:second_iteration_W22}]
\end{proof}
For $\nabla_{\Wb_{2, 2}}\cL(\vb^{(1)}, \Wb^{(1)})$, we can derive that
\begin{align*}
    &\nabla_{\Wb_{2, 2}}\cL(\vb^{(1)}, \Wb^{(1)})=\EE\bigg[\ell'^{(1)}\cdot y \cdot \sum_{j=1}^D\sum_{j'=1}^D\sum_{j''\not=j'}\Sbb_{j', j}^{(1)}\Sbb_{j'',j}^{(1)} \la\vb^{(1)}, \zb_{j'}\ra(\pb_{j'}-\pb_{j''})\pb_{j}^\top\bigg]\\
    =&\underbrace{\frac{\alpha^{(1)}(D-1)}{D^2}\sum_{j=1}^D\sum_{j'=1}^D\EE\big[\ell'^{(1)}\cdot y \cdot \la\vb^*, \xb_{j'}\ra\big]\pb_{j'}\pb_{j}^\top}_{I_1}-\underbrace{\frac{\alpha^{(1)}}{D^2}\sum_{j=1}^D\sum_{j'=1}^D\sum_{j''\not=j'}\EE\big[\ell'^{(1)}\cdot y \cdot \la\vb^*, \xb_{j'}\ra\big]\pb_{j''}\pb_{j}^\top}_{I_2}. 
\end{align*}
We discuss the value of $I_1$ and $I_2$ respectively in the following. For $I_1$, we can obtain that
\begin{align*}
    I_1 = \underbrace{\frac{\alpha^{(1)}(D-1)}{D^2}\EE\big[\ell'^{(1)} y  \la\vb^*, \xb_{j^*}\ra\big]\pb_{j^*}\bigg(\sum_{j=1}^D\pb_{j}^\top\bigg)}_{I_{1, 1}} + \underbrace{\frac{\alpha^{(1)}(D-1)}{D^2}\sum_{j'\neq j^*}\EE\big[\ell'^{(1)} y\la\vb^*, \xb_{j'}\ra\big]\pb_{j'}\bigg(\sum_{j=1}^D\pb_{j}^\top\bigg)}_{I_{1, 2}}.
\end{align*}
While for $I_2$, we can also obtain that 
\begin{align*}
    I_2 &=\underbrace{\frac{\alpha^{(1)}}{D^2}\EE\big[\ell'^{(1)} y  \la\vb^*, \xb_{j^*}\ra\big]\sum_{j''\neq j^* }\pb_{j''}\bigg(\sum_{j=1}^D\pb_{j}^\top\bigg)}_{I_{2, 1}} + \underbrace{\frac{\alpha^{(1)}}{D^2}\sum_{j'\neq j^*}\EE\big[\ell'^{(1)} y\la\vb^*, \xb_{j'}\ra\big]\pb_{j^*}\bigg(\sum_{j=1}^D\pb_{j}^\top\bigg)}_{I_{2, 2}}\\
    &\ + \underbrace{\frac{\alpha^{(1)}}{D^2}\sum_{j''\neq j^*}\sum_{j'\neq j'', j^*}\EE\big[\ell'^{(1)} y\la\vb^*, \xb_{j'}\ra\big]\pb_{j''}\bigg(\sum_{j=1}^D\pb_{j}^\top\bigg)}_{I_{2, 3}}.
\end{align*}
As we discussed earlier, $\EE\big[\ell'^{(1)} y \la \vb^*, \xb_{j} \ra\big]$ takes the same value for all $j \neq j^*$. Therefore, we can obtain that
\begin{align*}
    &\nabla_{\Wb_{2, 2}}\cL(\vb^{(1)}, \Wb^{(1)})=I_1-I_2 = \big(I_{1, 1} - I_{2, 1}\big) + \Big(\big(I_{1, 2} - I_{2, 3}\big)- I_{2, 2}\Big)\\
    =&\frac{\alpha^{(1)}}{D^2}\Big(\EE\big[\ell'^{(1)} y\la\vb^*, \xb_{j^*}\ra\big]- \EE\big[\ell'^{(1)} y\la\vb^*, \xb_{j'}\ra\big]\Big)\Big(\sum_{j\neq j^*}\big(\pb_{j^*} - \pb_j\big)\Big)\bigg(\sum_{j=1}^D\pb_{j}^\top\bigg).
\end{align*}
Furthermore, we can utilize Lemma~\ref{lemma:expectation_ell'_j_star} and Lemma~\ref{lemma:expectation_ell'_j_prime} to obtain that,
\begin{align*}
    \EE\big[\ell'^{(1)} y \la\vb^*,\xb_{j^*}\ra\big] -\EE\big[\ell'^{(1)} y \la\vb^*,\xb_{j}\ra\big] \geq -\sigma_x\sqrt{\frac{2}{\pi}}-\frac{4(1+\eta\sigma_x^2)}{\eta\sigma_x\sqrt{D\pi}}\exp\bigg(\frac{\eta^2\sigma_x^4}{2\pi}\bigg),
\end{align*}
and 
\begin{align*}
     \EE\big[\ell'^{(1)} y \la\vb^*,\xb_{j^*}\ra\big] -\EE\big[\ell'^{(1)} y \la\vb^*,\xb_{j}\ra\big] \leq -\frac{\sigma_x}{2\sqrt{2e\pi}}e^{-\frac{\eta\sigma_x^2}{2\pi}}.
\end{align*}
Applying all the preceding results to the gradient descent iteration of $\Wb_{2,2}^{(t)}$, we finally obtain that 
\begin{align*}
   \Wb_{2, 2}^{(2)} &= \Wb_{2, 2}^{(1)} - \eta\nabla_{\Wb_{2, 2}}\cL(\vb^{(1)}, \Wb^{(1)}) \\
   &= \frac{\eta\alpha^{(1)}}{D^2}\Big(-\EE\big[\ell'^{(1)} y\la\vb^*, \xb_{j^*}\ra\big]+ \EE\big[\ell'^{(1)} y\la\vb^*, \xb_{j'}\ra\big]\Big)\Big(\sum_{j\neq j^*}\big(\pb_{j^*} - \pb_j\big)\Big)\bigg(\sum_{j=1}^D\pb_{j}^\top\bigg)\\
   & = \beta_2\Big(\sum_{j\neq j^*}\big(\pb_{j^*} - \pb_j\big)\Big)\bigg(\sum_{j=1}^D\pb_{j}^\top\bigg),
\end{align*}
where 
\begin{align*}
\frac{1}{D^2}\frac{\eta^2\sigma_x^2}{4\pi\sqrt{e}}e^{-\frac{\eta\sigma_x^2}{2\pi}} \leq  \beta_2 \leq \frac{1}{D^2}\frac{\eta^2\sigma_x}{\pi}\bigg(\sigma_x+\frac{4(1+\eta\sigma_x^2)}{\eta\sigma_x\sqrt{2D}}e^{\frac{\eta^2\sigma_x^4}{2\pi}}\bigg)\leq \frac{1}{D^2}\frac{2\eta^2\sigma_x^2}{\pi}.
\end{align*}
This completes the proof.

\subsection{Proof of Lemma~\ref{lemma:attention_one_hot}, Lemma~\ref{lemma:update_v}, Lemma~\ref{lemma:update_W_11} and Lemma~\ref{lemma:update_W_22}}\label{section:proof_proof_sketch}

In this subsection, we provide complete proof for  Lemma~\ref{lemma:attention_one_hot}, Lemma~\ref{lemma:update_v}, Lemma~\ref{lemma:update_W_11} and Lemma~\ref{lemma:update_W_22}
We first prove Lemma~\ref{lemma:attention_one_hot}, given that the result concerning $\Wb^{(t)}$ in Proposition~\ref{prop:interaction_dynamics},  Lemma~\ref{lemma:update_W_11} and Lemma~\ref{lemma:update_W_22} holds. Then 
 we use Lemma~\ref{lemma:attention_one_hot} to prove  Lemma~\ref{lemma:update_v}, Lemma~\ref{lemma:update_W_11} and Lemma~\ref{lemma:update_W_22} by induction. We would like to clarify that this is not circular reasoning, since we are utilizing induction. It's reasonable to assume that all conclusions hold for each iteration and verify all conclusions still hold for the next iteration, as long as we can rigorously demonstrate these conclusions hold at the beginning.

\begin{proof}[Proof of Lemma~\ref{lemma:attention_one_hot}]
 By Lemma~\ref{lemma:update_W_11} and Lemma~\ref{lemma:update_W_22}, there exists constants $c_1, c_2$ solely depending on $\eta, \sigma_x$ such that $|\beta_1| \leq c_1\sqrt{D}$ and $\beta_2\geq c_2\frac{1}{D^2}$. Then further combining with Lemma~\ref{lemma:concentration_mills_ratio}, Lemma~\ref{lemma:concentration_lipschitz continuous} and Lemma~\ref{lemma:positional_encoding}, with probability at least $1-\sqrt{\frac{26c_1\sigma_x^2}{c_2\pi}}De^{-\frac{c_2}{26c_1\sigma_x^2}\sqrt{D}}- De^{-\frac{D}{2}}$,  we can obtain that 
 \begin{align*}
     \zb_{j'}^\top \Wb^{(t)} \zb_j &= \xb_{j'}^\top \Wb_{1, 1}^{(t)} \xb_j + \pb_{j'}^\top \Wb_{2, 2}^{(t)}\pb_{j'}^\top \\
     &= \beta_1\la\vb^*,\xb_{j'}\ra\la\vb^*,\xb_{j}\ra -\frac{(D+1)^2}{4} \beta_2 + \xb_{j'}^\top \Wb_{1, 1,\text{error}}^{(t)} \xb_j + \pb_{j'}^\top \Wb_{2, 2,\text{error}}^{(t)}\pb_{j}^\top\\
     &\leq \big|\beta_1\la\vb^*,\xb_{j'}\ra\la\vb^*,\xb_{j}\ra\big| + \big\|\Wb_{1, 1,\text{error}}^{(t)}\big\|_2\|\xb_{j}\|_2\|\xb_{j'}\|_2 + \big\|\Wb_{2, 2,\text{error}}^{(t)}\big\|_2\|\pb_{j}\|_2\|\pb_{j'}\|_2\\
     &\leq c_1 \sqrt{D} \Bigg(\sqrt{\frac{c_2}{13c_1\sigma_x^2}}\sigma_xD^{\frac{1}{4}}\Bigg)^2 + \frac{1}{e^{C_2\sqrt{D}}}\sigma_x^2(\sqrt{d}+\sqrt{D})^2 + \frac{1}{e^{C_2\sqrt{D}}}\bigg(\sqrt{\frac{D+1}{2}}\bigg)^2\\
     &\leq \frac{c_2}{12}D
 \end{align*}
 holds for all $j'\neq j^*$ and $j\in [D]$, and 
 \begin{align*}
     \zb_{j^*}^\top \Wb^{(T)} \zb_j &= \xb_{j^*}^\top \Wb_{1, 1}^{(T)} \xb_j + \pb_{j^*}^\top \Wb_{2, 2}^{(T)}\pb_{j'}^\top \\
     &= \beta_1\la\vb^*,\xb_{j^*}\ra\la\vb^*,\xb_{j}\ra +\frac{(D-1)(D+1)^2}{4} \beta_2 + \xb_{j^*}^\top \Wb_{1, 1,\text{error}}^{(T)} \xb_j + \pb_{j^*}^\top \Wb_{2, 2,\text{error}}^{(T)}\pb_{j}^\top\\
     &\geq \frac{c_2D}{4}-\big|\beta_1\la\vb^*,\xb_{j'}\ra\la\vb^*,\xb_{j}\ra\big| - \big\|\Wb_{1, 1,\text{error}}^{(T)}\big\|_2\|\xb_{j}\|_2\|\xb_{j'}\|_2 - \big\|\Wb_{2, 2,\text{error}}^{(T)}\big\|_2\|\pb_{j}\|_2\|\pb_{j'}\|_2\\
     &\geq \frac{c_2D}{4}-c_1 \sqrt{D} \Bigg(\sqrt{\frac{c_2}{13c_1\sigma_x^2}}\sigma_xD^{\frac{1}{4}}\Bigg)^2 -  \frac{1}{e^{C_2\sqrt{D}}}\sigma_x^2(\sqrt{d}+\sqrt{D})^2 - \frac{1}{e^{C_2\sqrt{D}}}\bigg(\sqrt{\frac{D+1}{2}}\bigg)^2\\
     &\geq \frac{c_2}{6}D
 \end{align*}
  holds for all $j\in [D]$. Therefore, we can obtain that
  \begin{align*}
      &\Sbb_{j^*, j}^{(T)} \geq \frac{e^{\frac{c_2}{6}D}}{e^{\frac{c_2}{6}D}+ (D-1)e^{\frac{c_2}{12}D}} \geq 1- De^{-\frac{c_2}{12}D};\\
      &\Sbb_{j', j}^{(T)} \leq \frac{e^{\frac{c_2}{12}D}}{e^{\frac{c_2}{6}D}+ (D-1)e^{\frac{c_2}{12}D}} \leq e^{-\frac{c_2}{12}D},
  \end{align*}
  holds for all $j'\neq j^*$ and $j\in [D]$ with probability at least $1-\sqrt{\frac{26c_1\sigma_x^2}{c_2\pi}}D^{3/4}e^{-\frac{c_2}{26c_1\sigma_x^2}\sqrt{D}} - De^{-\frac{D}{2}} \geq 1-e^{-C_8\sqrt{D}}$ for some non-negative constant $C_8$ solely depending on $\eta, \sigma_x$.
\end{proof}

Next, we prove  Lemma~\ref{lemma:update_v}, Lemma~\ref{lemma:update_W_11} and Lemma~\ref{lemma:update_W_22} by induction. 
When we prove Lemma~\ref{lemma:update_v}, we will assume that Lemma~\ref{lemma:update_W_11} and Lemma~\ref{lemma:update_W_22} hold at current iteration. The same situation still holds for proof of  Lemma~\ref{lemma:update_W_11} and Lemma~\ref{lemma:update_W_22}. As we discussed earlier, this is not circular reasoning by the essence of induction. Rigorously, all the conclusions from these three lemmas could be composed into a big induction. However, for simplicity and consistency, we present them respectively. Besides, we denote $E_{t}$ the event that $|\la\vb^*, \xb_j\ra| \leq c_4 D^{1/4}$ and $\|\xb_j\|_2\leq \sigma_x(\sqrt{d}+\sqrt{D})$ for some constant $c_4$ solely depending on $\eta,\sigma_x$ and all $j\in [D]$, and denote $E_{t}^{c}$ the complement event. By Lemma~\ref{lemma:attention_one_hot}, the occurrence of $E_{t}$ can imply that 
$\Sbb_{j^*, j}^{(t)} \geq 1- D\exp(-C_9 D)$ and $\Sbb_{j', j}^{(t)} \leq \exp(-C_9 D)$ for all $j'\neq j^*$ and $j\in [D]$. And the probability of $E_{t}$ follows that $\PP\big(E_{s}^{(t)}\big) \geq 1-\exp(-C_8 \sqrt{D})$. 

In the following, we present the proof for Lemma~\ref{lemma:update_v}.
\begin{proof}[Proof of Lemma~\ref{lemma:update_v}]
By Lemma~\ref{lemma:second_iteration_W11} and Lemma~\ref{lemma:second_iteration_W22}, we have $\Wb_{1, 1}^{(2)} = \beta_1 \vb^*\vb^{*\top}$ with $|\beta_1|\leq c_1\sqrt{D}$, and $\Wb_{2, 2}^{(2)} = \beta_2\Big(\sum_{j\neq j^*}\big(\pb_{j^*} - \pb_j\big)\Big)\bigg(\sum_{j=1}^D\pb_{j}^\top\bigg)$ with $c_2\frac{1}{D^2}\leq \beta_2 \leq c_3\frac{1}{D^2}$, aligning with the formulas of $\Wb_{1, 1}^{(t)}, \Wb_{2, 2}^{(t)}$ in Lemma~\ref{lemma:update_W_11} and Lemma~\ref{lemma:update_W_22}. We assume the conclusions of Lemma~\ref{lemma:update_W_11} and Lemma~\ref{lemma:update_W_22} still hold for any $t<T^*$, then by Lemma~\ref{lemma:attention_one_hot}, we have $P\big(E_{s}^{(t)}\big) \geq 1-\exp(-C_8 \sqrt{D})$.
Based on this result, we define a proxy gradient $\cG_v^{(t)}$ which is calculated by assuming
$\Sbb_{j^*, j}^{(t)} =1$, i.e., 
\begin{align*}
    \cG_v^{(t)} = D \EE\big[\ell'\big(D\alpha^{(t)}y\la\vb^*,\xb_{j^*}\ra + Dy\la\vb_{1, \text{error}}^{(t)}, \xb_{j^*}\ra\big)y\xb_{j^*}\big].
\end{align*}
Besides, since $\vb_{1, \text{error}}^{(t)}$ is perpendicular to $\vb^*$ (which is $\mathbf 0$ at $t=2$), it is inner the linear subspace spanned by the $\{\bxi_2, \cdots, \bxi_d\}$, and we denote its decomposition as $\vb_{1, \text{error}}^{(t)} = \sum_{i=2}^d a_i^{(t)} \bxi_i$. By the orthogonality among $\bxi_2, \cdots, \bxi_d$, we have $\sum_{i=2}^d \big(a_i^{(t)}\big)^2 =\big\|\vb_{1, \text{error}}^{(t)}\big\|_2^2$. We also denote $\cP_{\bxi}$ the projection matrix of the linear subspace spanned by the $\{\bxi_2, \cdots, \bxi_d\}$. Then, we can decompose $\cG_v^{(t)}$ by a similar process in Lemma~\ref{lemma:second_iteration_v} as
\begin{align*}
    \cG_v^{(t)} &= D \EE\Big[\ell'\big(D\alpha^{(t)}y\la\vb^*,\xb_{j^*}\ra + Dy\la\vb_{1, \text{error}}^{(t)}, \xb_{j^*}\ra\big)y\xb_{j^*}\Big] \\
    &= D \Ab\Ab^\top\EE\Big[\ell'\big(D\alpha^{(t)}y\la\vb^*,\xb_{j^*}\ra + Dy\sum_{i=2}^d a_i^{(t)}\la\bxi_i, \xb_{j^*}\ra\big)y\xb_{j^*}\Big] \\
    &= D\Ab \EE\Big[\ell'\big(D\alpha^{(t)}y\la\vb^*,\xb_{j^*}\ra + Dy\sum_{i=2}^d a_i^{(t)}\la\bxi_i, \xb_{j^*}\ra\big) y \big[\la\vb^*,\xb_{j^*}\ra, \la\bxi_2,\xb_{j^*}\ra, \cdots, \la\bxi_d,\xb_{j^*}\ra\big]^\top\Big] \\
    &= D\EE\Big[\ell'\big(D\alpha^{(t)}y\la\vb^*,\xb_{j^*}\ra + D\sum_{i=2}^d a_i^{(t)}y\la\bxi_i, \xb_{j^*}\ra\big) y \la\vb^*,\xb_{j^*}\ra\Big] \vb^* \\
    & \ + \sum_{i=2}^d D\EE\Big[\ell'\big(D\alpha^{(t)}y\la\vb^*,\xb_{j^*}\ra + D\sum_{i=2}^d a_i^{(t)}y\la\bxi_i, \xb_{j^*}\ra\big) y \la\bxi_i,\xb_{j^*}\ra\Big] \bxi_i.
\end{align*}
And we can upper bound the difference term as
\begin{align*}
    &\Big\|\nabla_{\vb_1} \cL(\vb^{(t)}, \Wb^{(t)})- \cG_v^{(t)}\Big\|_2 = \bigg\| \EE\Big[D\ell'\big(D\alpha^{(t)}y\la\vb^*,\xb_{j^*}\ra + Dy\la\vb_{1, \text{error}}^{(t)}, \xb_{j^*}\ra\big)y\xb_{j^*} - \sum_{j'=1}^D\sum_{j=1}^D\ell'^{(t)}y\xb_{j'}\Sbb_{j', j}^{(t)}\Big]\bigg\|_2\\
    \leq& \EE\Bigg[\bigg\|\Big(D\ell'\big(D\alpha^{(t)}y\la\vb^*,\xb_{j^*}\ra + Dy\la\vb_{1, \text{error}}^{(t)}, \xb_{j^*}\ra\big)y\xb_{j^*} - \sum_{j'=1}^D\sum_{j=1}^D\ell'^{(t)}y\xb_{j'}\Sbb_{j', j}^{(t)}\Big)\mathbf{1}_{\{E_t\}}\bigg\|_2\Bigg]\\ 
    &+\EE\Bigg[\bigg\|\Big(D\ell'\big(D\alpha^{(t)}y\la\vb^*,\xb_{j^*}\ra + Dy\la\vb_{1, \text{error}}^{(t)}, \xb_{j^*}\ra\big)y\xb_{j^*} - \sum_{j'=1}^D\sum_{j=1}^D\ell'^{(t)}y\xb_{j'}\Sbb_{j', j}^{(t)}\Big)\mathbf{1}_{\{E_t^c\}}\bigg\|_2\Bigg]\\
    \leq &\underbrace{D\EE\bigg[\Big|\ell'\big(D\alpha^{(t)}y\la\vb^*,\xb_{j^*}\ra + Dy\la\vb_{1, \text{error}}^{(t)}, \xb_{j^*}\ra\big) - \ell'^{(t)}\Big|\|\xb_{j^*}\|_2\mathbf{1}_{\{E_t\}}\bigg]}_{I_1} + \underbrace{\EE\Bigg[\bigg(D-\sum_{j=1}^D\Sbb_{j^*, j}^{(t)}\bigg)\big\|\xb_{j^*}\big\|_2\mathbf{1}_{\{E_t\}}\Bigg]}_{I_2} \\
    & + \underbrace{\sum_{j'\neq j^*}\sum_{j=1}^D\EE\Big[\big\|\xb_{j'}\big\|_2\Sbb_{j', j}^{(t)}\mathbf{1}_{\{E_t\}}\Big]}_{I_3} +\underbrace{D\EE\Big[\big\|\xb_{j^*}\big\|_2\mathbf{1}_{\{E_t^c\}}\Big]}_{I_4} + \underbrace{D\sum_{j'=1}^D\EE\Big[\big\|\xb_{j'}\big\|_2\mathbf{1}_{\{E_t^c\}}\Big]}_{I_5},\\
\end{align*}
where the inequalities hold by triangle inequality and the fact that $|\ell'|\leq 1$ and $|\Sbb_{j', j}|\leq 1$. Next, we demonstrate our analysis on two cases: $t=2$ and $t\geq 3$.

When $t=2$, $\vb_1^{(2)} = \alpha^{(2)} \vb^*$ implies that $\vb_{1, \text{error}}^{(2)} = \mathbf 0$. Therefore, we have
\begin{align*}
    \cG_v^{(2)} = D\EE\Big[\ell'\big(D\alpha^{(2)}y\la\vb^*,\xb_{j^*}\ra\big) y \la\vb^*,\xb_{j^*}\ra\Big] \vb^*.
\end{align*}
And we provide the upper bounds for each term of $\Big\|\nabla_{\vb_1} \cL(\vb^{(2)}, \Wb^{(2)})- \cG_v^{(2)}\Big\|_2$ respectively. 
Specially, for $I_1$, we can derive that
\begin{align*}
    I_1\leq& D\EE\bigg[\Big|\ell'\big(D\alpha^{(t)}y\la\vb^*,\xb_{j^*}\ra + Dy\la\vb_{1, \text{error}}^{(t)}, \xb_{j^*}\ra\big) - \ell'^{(2)}\Big|\|\xb_{j^*}\|_2\mathbf{1}_{\{E_t\}}\bigg]\\
    \leq& \frac{D}{4}\EE\Bigg[\bigg|D\alpha^{(2)}y\la\vb^*,\xb_{j^*}\ra -\alpha^{(2)}\sum_{j'=1}^D\sum_{j=1}^Dy\la\vb^*,\xb_{j'}\ra\Sbb_{j', j}^{(t)}\bigg|\big\|\xb_{j^*}\big\|_2\mathbf{1}_{\{E_t\}}\Bigg]\\
    \leq& \frac{D}{4}\big|\alpha^{(2)}\big|\EE\Bigg[\bigg(D-\sum_{j=1}^D\Sbb_{j^*, j}^{(t)}\bigg)\big|\la\vb^*,\xb_{j^*}\ra\big| \big\|\xb_{j^*}\big\|_2\mathbf{1}_{\{E_t\}}\Bigg] + \frac{D}{4}\big|\alpha^{(2)}\big|\sum_{j'\neq j^*}\sum_{j=1}^D\EE\Big[\big|\la\vb^*,\xb_{j'}\ra\big|\big\|\xb_{j^*}\big\|_2\Sbb_{j', j}^{(t)}\mathbf{1}_{\{E_t\}}\Big] \\ 
    \leq &\frac{c_3\sigma_xD^{\frac{13}{4}}\big|\alpha^{(2)}\big|(\sqrt{d}+\sqrt{D})}{2e^{C_9D}} \leq \frac{1}{5\eta e^{C_4\sqrt{D}}}.
\end{align*}
The first inequality is because $\ell'$ is Lipschitz continuous with $\frac{1}{4}$. The penultimate inequality holds since $|\la\vb^*, \xb_j\ra| \leq c_3 D^{1/4}$, $\|\xb_j\|_2\leq \sigma_x(\sqrt{d}+\sqrt{D})$,
$\Sbb_{j^*, j}^{(t)} \geq 1- D\exp(-C_9 D)$ and $\Sbb_{j', j}^{(t)} \leq \exp(-C_9 D)$ when $\mathbf{1}_{\{E_t\}} = 1$. And the last inequality holds since $\alpha^{(2)} = -\Theta(\sqrt{D})$. Similarly, we also have
\begin{align*}
    &I_2\leq\frac{\sigma_xD(\sqrt{d}+\sqrt{D})}{e^{C_9D}}\leq \frac{1}{5\eta e^{C_4\sqrt{D}}};\quad \text{and} \quad I_3\leq\frac{\sigma_xD^2(\sqrt{d}+\sqrt{D})}{e^{cD}}\leq \frac{1}{5\eta e^{C_4\sqrt{D}}}.  
\end{align*}
For $I_4$ and $I_5$, by applying Cauchy-Schwarz inequality, we have 
\begin{align*}
    &I_4\leq D\sqrt{\EE\big[\|\xb_{j^*}\|_2^2\big]}\sqrt{\EE\big[\mathbf{1}_{\{E_t^c\}}^2\big]} \leq \frac{D\sqrt{d}}{e^{\frac{C_8}{2}\sqrt{D}}}\leq \frac{1}{5\eta e^{C_4\sqrt{D}}};\\
    &I_5\leq D\sum_{j'=1}^D\sqrt{\EE\big[\|\xb_{j'}\|_2^2\big]}\sqrt{\EE\big[\mathbf{1}_{\{E_t^c\}}^2\big]} \leq \frac{D^2\sqrt{d}}{e^{\frac{C_8}{2}\sqrt{D}}}\leq \frac{1}{5\eta e^{C_4\sqrt{D}}}.  
\end{align*}
Combining all preceding results, we have $\big\|\nabla_{\vb_1} \cL(\vb^{(2)}, \Wb^{(2)})- \cG_v^{(2)}\big\|_2\leq \frac{1}{e^{C_4\sqrt{D}}}$ for some constant $C_4$ only depending on $\sigma_x$ and $\eta$. Furthermore, we can derive that
\begin{align*}
    \vb_1^{(3)} &= \vb_1^{(2)} -\eta\nabla_{\vb_1} \cL(\vb^{(2)}, \Wb^{(2)}) = \alpha^{(2)}\vb^* - \eta\cG_v^{(2)} - \eta\Big(\nabla_{\vb_1} \cL(\vb^{(2)}, \Wb^{(2)})- \cG_v^{(2)}\Big)\\
    &=\bigg(\alpha^{(2)} + D\eta\EE\Big[-\ell'\big(D\alpha^{(2)}y\la\vb^*,\xb_{j^*}\ra\big) y \la\vb^*,\xb_{j^*}\ra\Big] + \Big\la\vb^*, \eta\cG_v^{(2)} - \eta\nabla_{\vb_1} \cL(\vb^{(2)}, \Wb^{(2)})\Big\ra\bigg) \vb^* \\
    &\quad + \eta\cP_{\bxi}\Big(\cG_v^{(2)} -\nabla_{\vb_1} \cL(\vb^{(2)}, \Wb^{(2)})\Big)\\
    &= \alpha^{(3)} \vb^* + \vb_{1, \text{error}}^{(3)}.
\end{align*}
For $\alpha^{(3)}$, we can utilize Lemma~\ref{lemma:expectation_ell'_j_star} to derive that 
\begin{align*}
&\alpha^{(3)}\geq \frac{\eta\sigma_x}{4}\sqrt{\frac{2}{\pi}}D +\alpha^{(2)} - \eta\big\|\nabla_{\vb_1} \cL(\vb^{(2)}, \Wb^{(2)})- \cG_v^{(2)}\big\|_2\geq \frac{\eta\sigma_x}{5}\sqrt{\frac{2}{\pi}}D; \\ 
&\alpha^{(3)}\leq \eta\sigma_x\sqrt{\frac{2}{\pi}}D +\alpha^{(2)}+ \eta\big\|\nabla_{\vb_1} \cL(\vb^{(2)}, \Wb^{(2)})- \cG_v^{(2)}\big\|_2 \leq \eta\sigma_x\sqrt{\frac{2}{\pi}}D.
\end{align*}
And we also have $\big\|\vb_{1, \text{error}}^{(3)}\big\|_2\leq \eta\big\|\nabla_{\vb_1} \cL(\vb^{(2)}, \Wb^{(2)})- \cG_v^{(2)}\big\|_2\leq e^{-C_4\sqrt{D}}$, which completes the proof at $t=3$. 

Since we have derived that~\eqref{eq:alpha_bound} and~\eqref{eq:update_v_error_bound} hold at $t=3$. We will use induction to prove the case when $3 < t \leq T^*$. Instead of directly proving~\eqref{eq:update_v_error_bound}, we prove the following inequality
\begin{align}\label{eq:update_v_error_boundII}
     \big\|\vb_{1, \text{error}}^{(t)}\big\|_2\leq \frac{\eta t}{e^{c\sqrt{D}}},
\end{align}
where $c$ is some positive constant solely depending on $\sigma_x$ and $\eta$. Then we assume ~\eqref{eq:alpha_bound} and~\eqref{eq:update_v_error_boundII} hold at any $3 < t \leq T^*-1$. Similarly, we can bound $I_1$ in  $\Big\|\nabla_{\vb_1} \cL(\vb^{(t)}, \Wb^{(t)})- \cG_v^{(t)}\Big\|_2$ as
\begin{align*}
    I_1\leq& D\EE\bigg[\Big|\ell'\big(D\alpha^{(t)}y\la\vb^*,\xb_{j^*}\ra + Dy\la\vb_{1, \text{error}}^{(t)}, \xb_{j^*}\ra\big) - \ell'^{(t)}\Big|\|\xb_{j^*}\|_2\mathbf{1}_{\{E_t\}}\bigg]\\
    \leq& \frac{D}{4}\EE\Bigg[\bigg|D\alpha^{(t)}y\la\vb^*,\xb_{j^*}\ra + Dy\la\vb_{1, \text{error}}^{(t)}, \xb_{j^*}\ra -\sum_{j'=1}^D\sum_{j=1}^Dy\la\alpha^{(t)}\vb^* + \vb_{1, \text{error}}^{(t)},\xb_{j'}\ra\Sbb_{j', j}^{(t)} \bigg|\big\|\xb_{j^*}\big\|_2\mathbf{1}_{\{E_t\}}\Bigg]\\
    \leq& \frac{D\alpha^{(t)}}{4}\EE\Bigg[\bigg(D-\sum_{j=1}^D\Sbb_{j^*, j}^{(t)}\bigg)\big|\la\vb^*,\xb_{j^*}\ra\big| \big\|\xb_{j^*}\big\|_2\mathbf{1}_{\{E_t\}}\Bigg] + \frac{D\alpha^{(t)}}{4}\sum_{j'\neq j^*}\sum_{j=1}^D\EE\Big[\big|\la\vb^*,\xb_{j'}\ra\big|\big\|\xb_{j^*}\big\|_2\Sbb_{j', j}^{(t)}\mathbf{1}_{\{E_t\}}\Big] \\ 
    & + \frac{D}{4}\EE\Bigg[\bigg(D-\sum_{j=1}^D\Sbb_{j^*, j}^{(t)}\bigg)\big\|\vb_{1, \text{error}}^{(t)}\big\|_2 \big\|\xb_{j^*}\big\|_2^2\mathbf{1}_{\{E_t\}}\Bigg] + 
    \frac{D}{4}\sum_{j'\neq j^*}\sum_{j=1}^D\EE\Big[\big\|\vb_{1, \text{error}}^{(t)}\big\|_2\big\|\xb_{j'}\big\|_2\big\|\xb_{j^*}\big\|_2\mathbf{1}_{\{E_t\}}\Big]\\
    \leq &\frac{c_3\sigma_xD^{\frac{13}{4}}\alpha^{(t)}(\sqrt{d}+\sqrt{D}) + 4\sigma_x^2D^{3}\big\|\vb_{1, \text{error}}^{(t)}\big\|_2\max\{d, D\}}{2e^{C_9D}} \leq \frac{1}{5e^{c\sqrt{D}}}
\end{align*}
The first inequality is because $\ell'$ is Lipschitz continuous with $\frac{1}{4}$. The penultimate inequality holds since $|\la\vb^*, \xb_j\ra| \leq c_3 D^{1/4}$, $\|\xb_j\|_2\leq \sigma_x(\sqrt{d}+\sqrt{D})$,
$\Sbb_{j^*, j}^{(t)} \geq 1- D\exp(-C_9 D)$ and $\Sbb_{j', j}^{(t)} \leq \exp(-C_9 D)$ when $\mathbf{1}_{\{E_t\}} = 1$. And the last inequality holds since $\alpha^{(t)} \leq O((T^*)^{1/3} +D) \ll \exp(C_9D)$ by~\eqref{eq:alpha_bound} and our definition of $T^*$ in Theorem~\ref{thm:main_result}. For other terms in $\Big\|\nabla_{\vb_1} \cL(\vb^{(t)}, \Wb^{(t)})- \cG_v^{(t)}\Big\|_2$, we can obtain the same upper bound with $t=2$. Therefore, by combining these results, we obtain that $\big\|\nabla_{\vb_1} \cL(\vb^{(t)}, \Wb^{(t)})- \cG_v^{(t)}\big\|_2\leq \frac{1}{e^{c\sqrt{D}}}$ for some constant $c$ only depending on $\sigma_x$ and $\eta$. Now, we are ready to derive the gradient descent update for $\vb^{(t+1)}$ as
\begin{align*}
    \vb_1^{(t+1)} =& \vb_1^{(t)} -\eta\nabla_{\vb_1} \cL(\vb^{(t)}, \Wb^{(t)}) = \alpha^{(t)}\vb^* + \vb_{1, \text{error}}^{(t)}- \eta\cG_v^{(t)} - \eta\Big(\nabla_{\vb_1} \cL(\vb^{(t)}, \Wb^{(t)})- \cG_v^{(t)}\Big)\\
    =&\bigg(\alpha^{(t)} + D\eta\EE\Big[-\ell'\big(D\alpha^{(t)}y\la\vb^*,\xb_{j^*}\ra + Dy\la\vb_{1, \text{error}}^{(t)}, \xb_{j^*}\ra\big) y \la\vb^*,\xb_{j^*}\ra\Big] + \Big\la\vb^*, \eta\cG_v^{(t)} - \eta\nabla_{\vb_1} \cL\Big\ra\bigg) \vb^* \\
     & + \sum_{i=2}^d \bigg(a_i^{(t)} + \eta D\EE\Big[-\ell'\big(D\alpha^{(t)}y\la\vb^*,\xb_{j^*}\ra + D\sum_{i=2}^d a_i^{(t)}y\la\bxi_i, \xb_{j^*}\ra\big) y \la\bxi_i,\xb_{j^*}\ra\Big]\bigg) \bxi_i + \eta\cP_{\bxi}\Big(\cG_v^{(t)} - \nabla_{\vb_1}\cL\Big)\\
    =& \alpha^{(t+1)} \vb^* + \vb_{1, \text{error}}^{(t+1)}.
\end{align*}
For $\alpha^{(t+1)}$, we can utilize Lemma~\ref{lemma:expectation_ell'_j_star} to derive the following iterative formulas 
\begin{align*}
\alpha^{(t+1)} &\geq \alpha^{(t)} + \eta D\sqrt{\frac{2}{\pi}} \bigg(\frac{1}{\sigma_x D^2 \big(\alpha^{(t)}\big)^2} - \frac{1}{\sigma_x^3 D^4 \big(\alpha^{(t)}\big)^4}\bigg) - \eta\big\|\nabla_{\vb_1} \cL(\vb^{(t)}, \Wb^{(t)})- \cG_v^{(t)}\big\|_2
\\
&\geq \alpha^{(t)} + \sqrt{\frac{2}{\pi}}\frac{\eta}{2\sigma_x D}\frac{1}{\big(\alpha^{(t)}\big)^2},
\end{align*}
and
\begin{align*}
\alpha^{(t+1)} &\leq \alpha^{(t)} + \eta D\sqrt{\frac{2}{\pi}} \frac{1}{\sigma_x D^2 \big(\alpha^{(t)}\big)^2}e^{\frac{\sigma_x^2 D^2\big\|\vb_{1, \text{error}}^{(t)}\big\|_2^2}{2}}+  \eta\big\|\nabla_{\vb_1} \cL(\vb^{(t)}, \Wb^{(t)}) - \cG_v^{(t)}\big\|_2\\ &\leq  \alpha^{(t)} + \sqrt{\frac{2}{\pi}}\frac{2\eta}{\sigma_x D}\frac{1}{\big(\alpha^{(t)}\big)^2}
\end{align*}
hold for all $t\geq 3$. Then combined with the fact about  initialization that $\frac{\eta\sigma_x}{5}\sqrt{\frac{2}{\pi}}D\leq \alpha^{(3)} \leq \eta\sigma_x\sqrt{\frac{2}{\pi}}D$, Lemma~\ref{lemma:polynomial_sequence} and comparison theorem, we finally get that 
\begin{align*}
    \bigg(\sqrt{\frac{2}{\pi}}\frac{3\eta}{2\sigma_x D}(t-2) + \frac{2\eta^3\sigma_x^3}{125\pi}\sqrt{\frac{2}{\pi}}D^3\bigg)^{\frac{1}{3}} \leq \alpha^{(t+1)} \leq \sqrt{\frac{\pi}{2}}\frac{2}{\eta\sigma_x^3D^3} + \bigg(\sqrt{\frac{2}{\pi}}\frac{6\eta}{\sigma_x D}(t-2) + \frac{2\eta^3\sigma_x^3}{\pi}\sqrt{\frac{2}{\pi}}D^3\bigg)^{\frac{1}{3}},
\end{align*}
which completes the proof for~\eqref{eq:alpha_bound}. Next, we demonstrate the upper bound for $\big\|\vb_{1, \text{error}}^{(t+1)}\big\|_2$. In order to provide this result, we first analyze the coefficient of each $\bxi_i$, and W.L.O.G, we assume $a_i^{(t)} \geq 0$. Then by Lemma~\ref{lemma:expectation_ell'_j_prime}, we have 
\begin{align*}
\EE\Big[-\ell'\big(D\alpha^{(t)}y\la\vb^*,\xb_{j^*}\ra + Dy\la\vb_{1, \text{error}}^{(t)}, \xb_{j^*}\ra\big) y \la\vb^*,\xb_{j^*}\ra\Big]\leq 0,
\end{align*}
and 
\begin{align*}
    \EE\Big[-\ell'\big(D\alpha^{(t)}y\la\vb^*,\xb_{j^*}\ra + Dy\la\vb_{1, \text{error}}^{(t)}, \xb_{j^*}\ra\big) y \la\vb^*,\xb_{j^*}\ra\Big]
    &\geq - \sqrt{\frac{2}{\pi}}\frac{\sigma_x^2a_i^{(t)}}{\alpha^{(t)}}e^{\frac{\sigma_x^2 D^2\big\|\vb_{1, \text{error}}^{(t)}\big\|_2^2}{2}}\geq -\frac{6\sigma_x a_i^{(t)}}{\eta D},
\end{align*}
where the last inequality holds by $\alpha^{(t)} \geq \frac{\eta\sigma_x}{5}\sqrt{\frac{2}{\pi}}D$. Therefore the coefficient of $\bxi_i$ follows that 
\begin{align*}
    \bigg|a_i^{(t)} + \eta D\EE\Big[-\ell'\big(D\alpha^{(t)}y\la\vb^*,\xb_{j^*}\ra + D\sum_{i=2}^d a_i^{(t)}y\la\bxi_i, \xb_{j^*}\ra\big) y \la\bxi_i,\xb_{j^*}\ra\Big]\bigg| \leq \big|a_i^{(t)}\big|,
\end{align*}
by $\sigma_x\leq \frac{1}{3}$ from conditions of Theorem~\ref{thm:main_result}
Based on these results, we can finally provide the upper-bound for $\big\|\vb_{1, \text{error}}^{(t+1)}\big\|_2$ as
\begin{align*}
    \big\|\vb_{1, \text{error}}^{(t+1)}\big\|_2 \leq& \Bigg\|\sum_{i=2}^d \bigg(a_i^{(t)} + \eta D\EE\Big[-\ell'\big(D\alpha^{(t)}y\la\vb^*,\xb_{j^*}\ra + D\sum_{i=2}^d a_i^{(t)}y\la\bxi_i, \xb_{j^*}\ra\big) y \la\bxi_i,\xb_{j^*}\ra\Big]\bigg) \bxi_i \Bigg\|_2 \\
    & + \eta\Big\|\cG_v^{(t)} - \nabla_{\vb_1} \cL(\vb^{(t)}, \Wb^{(t)})\Big\|_2\\
    \leq & \sqrt{\sum_{i=2}^d \bigg(a_i^{(t)} + \eta D\EE\Big[-\ell'\big(D\alpha^{(t)}y\la\vb^*,\xb_{j^*}\ra + D\sum_{i=2}^d a_i^{(t)}y\la\bxi_i, \xb_{j^*}\ra\big) y \la\bxi_i,\xb_{j^*}\ra\Big]\bigg)^2} + \frac{\eta}{e^{c\sqrt{D}}}\\
    \leq& \sqrt{\sum_{i=2}^d \big(a_i^{(t)}\big)^2} + \frac{\eta}{e^{c\sqrt{D}}} = \big\|\vb_{1, \text{error}}^{(t)}\big\|_2 + \frac{\eta}{e^{c\sqrt{D}}} \leq \frac{(t+1)\eta}{e^{c\sqrt{D}}}, 
\end{align*}
which completes the proof for~\eqref{eq:update_v_error_boundII}. Further by our definition of $T^*$ in Theorem~\ref{thm:main_result}, we can obtain
\begin{align*}
    \big\|\vb_{1, \text{error}}^{(t)}\big\|_2 \leq  \frac{T^*\eta}{e^{c\sqrt{D}}}\leq e^{-C_4\sqrt{D}}.
\end{align*}
This completes the proof for ~\eqref{eq:update_v_error_bound}.
\end{proof}

Next, we present the proof for Lemma~\ref{lemma:update_W_11}.

\begin{proof}[Proof of Lemma~\ref{lemma:update_W_11}]
By calculations in Lemma~\ref{lemma:second_iteration_W11}, we have obtained that $\Wb_{1, 1}^{(2)}$ follows~\eqref{eq:update_W_11_I} with $\Wb_{1, 1, \text{error}}^{(2)} = 0$. Instead of directly proving \eqref{eq:update_W_11_II}, we prove the following inequality,
\begin{align}\label{eq:update_W_11_III}
      \|\Wb_{1, 1, \text{error}}^{(t)}\|_2 \leq \frac{\eta t}{e^{c\sqrt{D}}}.
\end{align}
for some constant $c$ solely depending on $\eta$ and $\sigma_x$. 
Therefore by induction, we assume it holds for $t$, and prove it still holds for $t+1$. Actually, it suffices to show that $\big\|\nabla_{\Wb_{1, 1}}\cL(\vb^{(t)}, \Wb^{(t)}) \big\|_2\leq e^{-c\sqrt{D}}$. By~\eqref{eq:gradient_W}, we have 
\begin{align*}
    &\nabla_{\Wb_{1, 1}}\cL(\vb^{(t)}, \Wb^{(t)}) \\=&\EE\bigg[\ell'^{(t)}\sum_{j=1}^D\sum_{j'=1}^D\sum_{j''\not=j'}\Sbb_{j', j}^{(t)}\Sbb_{j'',j}^{(t)} y\la\vb_{1}^{(t)},\xb_{j'}\ra(\xb_{j'}-\xb_{j''})\xb_{j}^\top\bigg] \\
    =&\underbrace{\EE\bigg[\ell'^{(t)}\sum_{j=1}^D\sum_{j'=1}^D\Sbb_{j', j}^{(t)}\big(1 - \Sbb_{j',j}^{(t)}\big) y\la\vb_{1}^{(t)},\xb_{j'}\ra\xb_{j'}\xb_{j}^\top\bigg]}_{I_1} - \underbrace{\EE\bigg[\ell'^{(t)}\sum_{j=1}^D\sum_{j'=1}^D\sum_{j''\not=j'}\Sbb_{j', j}^{(t)}\Sbb_{j'',j}^{(t)} y\la\vb_{1}^{(t)},\xb_{j'}\ra\xb_{j''}\xb_{j}^\top\bigg]}_{I_2}\\
    =&\underbrace{\EE\bigg[\ell'^{(t)}\sum_{j=1}^D\sum_{j'=1}^D\Sbb_{j', j}^{(t)}\big(1 - \Sbb_{j',j}^{(t)}\big) y\la\vb_{1}^{(t)},\xb_{j'}\ra\xb_{j'}\xb_{j}^\top\mathbf{1}_{\{E_t\}}\bigg]}_{I_{1, 1}} + \underbrace{\EE\bigg[\ell'^{(t)}\sum_{j=1}^D\sum_{j'=1}^D\Sbb_{j', j}^{(t)}\big(1 - \Sbb_{j',j}^{(t)}\big) y\la\vb_{1}^{(t)},\xb_{j'}\ra\xb_{j'}\xb_{j}^\top\mathbf{1}_{\{E_t^c\}}\bigg]}_{I_{1, 2}} \\
    &-\underbrace{\EE\bigg[\ell'^{(t)}\sum_{j=1}^D\sum_{j'=1}^D\sum_{j''\not=j'}\Sbb_{j', j}^{(t)}\Sbb_{j'',j}^{(t)} y\la\vb_{1}^{(t)},\xb_{j'}\ra\xb_{j''}\xb_{j}^\top\mathbf{1}_{\{E_t\}}\bigg]}_{I_{2, 1}}-\underbrace{\EE\bigg[\ell'^{(t)}\sum_{j=1}^D\sum_{j'=1}^D\sum_{j''\not=j'}\Sbb_{j', j}^{(t)}\Sbb_{j'',j}^{(t)} y\la\vb_{1}^{(t)},\xb_{j'}\ra\xb_{j''}\xb_{j}^\top\mathbf{1}_{\{E_t^c\}}\bigg]}_{I_{2, 2}}\\
\end{align*}
For $I_{1, 1}$, we have 
\begin{align*}
    \big\|I_{1, 1}\big\|_2 &\leq \sum_{j=1}^D\sum_{j'=1}^D\EE\bigg[\Sbb_{j', j}^{(t)}\big(1 - \Sbb_{j',j}^{(t)}\big) \Big(\alpha^{(t)}\big|\la\vb^*,\xb_{j'}\ra\big|  + \big\|\vb_{1, \text{error}}^{(t)}\big\|_2\big\|\xb_{j'}\big\|_2\Big)\big\|\xb_{j'}\big\|_2\big\|\xb_{j}\big\|_2\mathbf{1}_{\{E_t\}}\bigg]\\
    &\leq \frac{8D^2\max\{d, D\}\big(c_3\alpha^{(t)}D^{1/4} + \big\|\vb_{1, \text{error}}^{(t)}\big\|_2(\sqrt{d} +\sqrt{D})\big)}{e^{c_9D}} \leq \frac{1}{4e^{c\sqrt{D}}}.
\end{align*}
The first inequality is by triangle inequality and $|y|, |\ell'| \leq 1$. The second inequality holds because $|\la\vb^*, \xb_j\ra| \leq c_3 D^{1/4}$, $\|\xb_j\|_2\leq \sigma_x(\sqrt{d}+\sqrt{D})$,
$\Sbb_{j^*, j}^{(t)} \geq 1- D\exp(-c D)$ and $\Sbb_{j', j}^{(t)} \leq \exp(-c D)$ when $\mathbf{1}_{\{E_t\}} = 1$. The second inequality holds since $\alpha^{(t)} \leq O((T^*)^{1/3} +D) \ll \exp(C_9D)$ and $\big\|\vb_{1, \text{error}}^{(t)}\big\|_2\leq e^{-C_4\sqrt{D}}$ by Lemma~\ref{lemma:update_v} and definition of $T^*$ in Theorem~\ref{thm:main_result}. Similarly, for $I_{2, 1}$, we can obtain that
\begin{align*}
     \big\|I_{2, 1}\big\|_2 &\leq \sum_{j=1}^D\sum_{j'=1}^D\sum_{j''\neq j'}^D\EE\bigg[\Sbb_{j', j}^{(t)}\Sbb_{j'',j}^{(t)}\Big(\alpha^{(t)}\big|\la\vb^*,\xb_{j'}\ra\big|  + \big\|\vb_{1, \text{error}}^{(t)}\big\|_2\big\|\xb_{j'}\big\|_2\Big)\big\|\xb_{j''}\big\|_2\big\|\xb_{j}\big\|_2\mathbf{1}_{\{E_t\}}\bigg]\\
    &\leq \frac{9D^2\max\{d, D\}\big(c_3\alpha^{(t)}D^{1/4} + \big\|\vb_{1, \text{error}}^{(t)}\big\|_2(\sqrt{d} +\sqrt{D})\big)}{e^{cD}} \leq \frac{1}{4e^{c\sqrt{D}}}.
\end{align*}
In the following, we also denote $\vb^*$ by $\bxi_1$ in summation calculation for simplicity of expression. Then by applying Cauchy–Schwarz inequality, we can obtain an upper bound as
\begin{align*}
    \big\|I_{1, 2}\big\|_2 &=\Bigg\|\Ab\EE\bigg[\ell'^{(t)}\sum_{j=1}^D\sum_{j'=1}^D\Sbb_{j', j}^{(t)}\big(1 - \Sbb_{j',j}^{(t)}\big) y\la\vb_{1}^{(t)},\xb_{j'}\ra\begin{bmatrix}
\la \bxi_1, \xb_{j'}\ra \\
\la \bxi_2, \xb_{j'}\ra \\ 
\vdots\\
\la \bxi_d, \xb_{j'}\ra 
\end{bmatrix}\big[\la \bxi_1, \xb_{j}\ra, \la \bxi_2, \xb_{j}\ra , \cdots, \la \bxi_d, \xb_{j}\ra\big]\mathbf{1}_{\{E_t^c\}}\bigg]\Ab^\top\Bigg\|_2\\
&=\Bigg\|\sum_{i=1}^d\sum_{i'=1}^d\EE\bigg[\ell'^{(t)}\sum_{j=1}^D\sum_{j'=1}^D\Sbb_{j', j}^{(t)}\big(1 - \Sbb_{j',j}^{(t)}\big) y\la\vb_{1}^{(t)},\xb_{j'}\ra\la\bxi_i,\xb_{j'}\ra\la\bxi_{i'},\xb_{j}\ra\mathbf{1}_{\{E_t^c\}}\bigg]\bxi_i\bxi_{i'}\Bigg\|_2\\
&=\sqrt{\sum_{i=1}^d\sum_{i'=1}^d\EE\bigg[\ell'^{(t)}\sum_{j=1}^D\sum_{j'=1}^D\Sbb_{j', j}^{(t)}\big(1 - \Sbb_{j',j}^{(t)}\big) y\la\vb_{1}^{(t)},\xb_{j'}\ra\la\bxi_i,\xb_{j'}\ra\la\bxi_{i'},\xb_{j}\ra\mathbf{1}_{\{E_t^c\}}\bigg]^2}\\
&\leq \sqrt{\sum_{i=1}^d\sum_{i'=1}^d\EE\Bigg[\bigg(\sum_{j=1}^D\sum_{j'=1}^D\Sbb_{j', j}^{(t)}\big(1 - \Sbb_{j',j}^{(t)}\big) \la\vb_{1}^{(t)},\xb_{j'}\ra\la\bxi_i,\xb_{j'}\ra\la\bxi_{i'},\xb_{j}\ra\bigg)^2\Bigg]}\sqrt{\PP(E_t^c)}\\
&\leq \sqrt{\sum_{i=1}^d\sum_{i'=1}^d\EE\Bigg[\bigg(\sum_{j=1}^D\la\bxi_{i'},\xb_{j}\ra\bigg)^2\bigg(\sum_{j'=1}^D\la\vb_{1}^{(t)},\xb_{j'}\ra^2\la\bxi_i,\xb_{j'}\ra^2\bigg)\Bigg]}\sqrt{\PP(E_t^c)}\\
&\leq \sqrt{\sum_{i=1}^d\sum_{i'=1}^d\EE\Bigg[\bigg(\sum_{j=1}^D\big|\la\bxi_{i'},\xb_{j}\ra\big|\bigg)^2\bigg(\sum_{j'=1}^D\la\vb_{1}^{(t)},\xb_{j'}\ra^2\la\bxi_i,\xb_{j'}\ra^2\bigg)\Bigg]}\sqrt{\PP(E_t^c)}\\
&\leq\big(\alpha^{(t)} + \big\|\vb_{1, \text{error}}^{(t)}\big\|_2\big)\sqrt{\sum_{i=1}^d\sum_{i'=1}^d\sum_{j_1=1}^D\sum_{j_2=1}^D\sum_{j'=1}^D\EE\Big[\big|\la\bxi_{i'},\xb_{j_1}\ra\big|\big|\la\bxi_{i'},\xb_{j_2}\ra\big|\la\vb_{1}^{(t)}/\|\vb_{1}^{(t)}\|_2,\xb_{j'}\ra^2\la\bxi_i,\xb_{j'}\ra^2\Big]}\sqrt{\PP(E_t^c)}\\
&\leq \frac{\sqrt{15}\sigma_x^3dD^{\frac{3}{2}}\big(\alpha^{(t)} + \big\|\vb_{1, \text{error}}^{(t)}\big\|_2\big)}{e^{C_8\sqrt{D}}} \leq \frac{1}{4e^{c\sqrt{D}}}.
\end{align*}
The first and second inequality is derived by Cauchy–Schwarz inequality, and the facts that $\sum_{j'=1}^D \big(\Sbb_{j', j}^{(t)}\big)^2\leq 1$ and $\big(1-\Sbb_{j', j}^{(t)}\big)^2, y^2, \ell'^2\leq 1$. The last inequality holds since $\alpha^{(t)} \leq O((T^*)^{1/3} +D) \ll \exp(C_8\sqrt{D})$ and $\big\|\vb_{1, \text{error}}^{(t)}\big\|_2\leq e^{-C_4\sqrt{D}}$ by Lemma~\ref{lemma:update_v} and definition of $T^*$ in Theorem~\ref{thm:main_result}. Similarly for $I_{2, 2}$, we also have
\begin{align*}
     \big\|I_{2, 2}\big\|_2 &=\Bigg\|\Ab\EE\bigg[\ell'^{(t)}\sum_{j=1}^D\sum_{j'=1}^D\sum_{j\neq j'}\Sbb_{j', j}^{(t)}\Sbb_{j'',j}^{(t)}y\la\vb_{1}^{(t)},\xb_{j'}\ra\begin{bmatrix}
\la \bxi_1, \xb_{j''}\ra \\
\la \bxi_2, \xb_{j''}\ra \\ 
\vdots\\
\la \bxi_d, \xb_{j''}\ra 
\end{bmatrix}\big[\la \bxi_1, \xb_{j}\ra, \la \bxi_2, \xb_{j}\ra , \cdots, \la \bxi_d, \xb_{j}\ra\big]\mathbf{1}_{\{E_t^c\}}\bigg]\Ab^\top\Bigg\|_2\\
&=\Bigg\|\sum_{i=1}^d\sum_{i'=1}^d\EE\bigg[\ell'^{(t)}\sum_{j=1}^D\sum_{j'=1}^D\sum_{j\neq j'}\Sbb_{j', j}^{(t)}\Sbb_{j'',j}^{(t)}y\la\vb_{1}^{(t)},\xb_{j'}\ra\la\bxi_i,\xb_{j''}\ra\la\bxi_{i'},\xb_{j}\ra\mathbf{1}_{\{E_t^c\}}\bigg]\bxi_i\bxi_{i'}\Bigg\|_2\\
&=\sqrt{\sum_{i=1}^d\sum_{i'=1}^d\EE\bigg[\ell'^{(t)}\sum_{j=1}^D\sum_{j'=1}^D\sum_{j\neq j'}\Sbb_{j', j}^{(t)}\Sbb_{j'',j}^{(t)}y\la\vb_{1}^{(t)},\xb_{j'}\ra\la\bxi_i,\xb_{j''}\ra\la\bxi_{i'},\xb_{j}\ra\mathbf{1}_{\{E_t^c\}}\bigg]^2}\\
&\leq \sqrt{\sum_{i=1}^d\sum_{i'=1}^d\EE\Bigg[\bigg(\sum_{j=1}^D\sum_{j'=1}^D\sum_{j\neq j'}\Sbb_{j', j}^{(t)}\Sbb_{j'',j}^{(t)}\la\vb_{1}^{(t)},\xb_{j'}\ra\la\bxi_i,\xb_{j''}\ra\la\bxi_{i'},\xb_{j}\ra\bigg)^2\Bigg]}\sqrt{\PP(E_t^c)}\\
&\leq \sqrt{\sum_{i=1}^d\sum_{i'=1}^d\EE\Bigg[\bigg(\sum_{j=1}^D\la\bxi_{i'},\xb_{j}\ra\bigg)^2\bigg(\sum_{j'=1}^D\la\vb_{1}^{(t)},\xb_{j'}\ra^2\bigg)\bigg(\sum_{j''=1}^D\la\bxi_i,\xb_{j''}\ra^2\bigg)\Bigg]}\sqrt{\PP(E_t^c)}\\
&\leq \sqrt{\sum_{i=1}^d\sum_{i'=1}^d\EE\Bigg[\bigg(\sum_{j=1}^D\big|\la\bxi_{i'},\xb_{j}\ra\big|\bigg)^2\bigg(\sum_{j'=1}^D\la\vb_{1}^{(t)},\xb_{j'}\ra^2\bigg)\bigg(\sum_{j''=1}^D\la\bxi_i,\xb_{j''}\ra^2\bigg)\Bigg]}\sqrt{\PP(E_t^c)}\\
&=\big\|\vb_{1}^{(t)}\big\|_2\sqrt{\sum_{i=1}^d\sum_{i'=1}^d\sum_{j_1=1}^D\sum_{j_2=1}^D\sum_{j'=1}^D\sum_{j''=1}^D\EE\Big[\big|\la\bxi_{i'},\xb_{j_1}\ra\big|\big|\la\bxi_{i'},\xb_{j_2}\ra\big|\la\vb_{1}^{(t)}/\|\vb_{1}^{(t)}\|_2,\xb_{j'}\ra^2\la\bxi_i,\xb_{j''}\ra^2\Big]}\sqrt{\PP(E_t^c)}\\
&\leq \frac{\sqrt{15}\sigma_x^3dD^{2}\big(\alpha^{(t)} + \big\|\vb_{1, \text{error}}^{(t)}\big\|_2\big)}{e^{C_8\sqrt{D}}} \leq \frac{1}{4e^{c\sqrt{D}}}.
\end{align*}
Combining all the preceding results, we have
\begin{align*}
    \Big\|\Wb_{1, 1, \text{error}}^{(t+1)}\Big\|_2 &\leq  \Big\|\Wb_{1, 1, \text{error}}^{(t)}\Big\|_2 + \eta \big\|\nabla_{\Wb_{1, 1}}\cL(\vb^{(t)}, \Wb^{(t)})\big\|_2\\
    &\leq \Big\|\Wb_{1, 1, \text{error}}^{(t)}\Big\|_2 + \eta \big\|I_{1, 1}\big\|_2 + \eta \big\|I_{1, 2}\big\|_2+\eta \big\|I_{2, 1}\big\|_2+\eta \big\|I_{2, 2}\big\|_2\\
    &\leq \frac{\eta t}{e^{c\sqrt{D}}} +  \frac{4\eta}{4e^{c\sqrt{D}}} =\frac{\eta (t+1)}{e^{c\sqrt{D}}},
\end{align*}
which completes the proof for~\eqref{eq:update_W_11_III}. Further by our definition of $T^*$ in Theorem~\ref{thm:main_result}, we can obtain
\begin{align*}
    \Big\|\Wb_{1, 1, \text{error}}^{(t)}\Big\|_2 \leq  \frac{T^*\eta}{e^{c\sqrt{D}}}\leq e^{-C_7\sqrt{D}}.
\end{align*}
This completes the proof for ~\eqref{eq:update_W_11_II}.
\end{proof}

Next, we present the proof for Lemma~\ref{lemma:update_W_22}.
\begin{proof}[Proof of Lemma~\ref{lemma:update_W_22}]
By calculations in Lemma~\ref{lemma:second_iteration_W22}, we have obtained that $\Wb_{2, 2}^{(2)}$ follows~\eqref{eq:update_W_22_I} with $\Wb_{2, 2, \text{error}}^{(2)} = 0$. 
Instead of directly proving \eqref{eq:update_W_22_II}, we prove the following inequality,
\begin{align}\label{eq:update_W_22_III}
      \|\Wb_{2, 2, \text{error}}^{(t)}\|_2 \leq \frac{\eta t}{e^{c\sqrt{D}}}.
\end{align}
for some constant $c$ solely depending on $\eta$ and $\sigma_x$. Therefore by induction, we assume it holds for $t$, and prove it still holds for $t+1$. Actually, it suffices to show that $\big\|\nabla_{\Wb_{2, 2}}\cL(\vb^{(t)}, \Wb^{(t)}) \big\|_2\leq e^{-c\sqrt{D}}$. By~\eqref{eq:gradient_W}, we have 
\begin{align*}
    &\nabla_{\Wb_{2, 2}}\cL(\vb^{(t)}, \Wb^{(t)})\\=&\EE\bigg[\ell'^{(t)}\sum_{j=1}^D\sum_{j'=1}^D\sum_{j''\not=j'}\Sbb_{j', j}^{(t)}\Sbb_{j'',j}^{(t)} y\la\vb_{1}^{(t)},\xb_{j'}\ra(\pb_{j'}-\pb_{j''})\pb_{j}^\top\bigg] \\
    =&\underbrace{\EE\bigg[\ell'^{(t)}\sum_{j=1}^D\sum_{j'=1}^D\Sbb_{j', j}^{(t)}\big(1 - \Sbb_{j',j}^{(t)}\big) y\la\vb_{1}^{(t)},\xb_{j'}\ra\pb_{j'}\pb_{j}^\top\bigg]}_{I_1} - \underbrace{\EE\bigg[\ell'^{(t)}\sum_{j=1}^D\sum_{j'=1}^D\sum_{j''\not=j'}\Sbb_{j', j}^{(t)}\Sbb_{j'',j}^{(t)} y\la\vb_{1}^{(t)},\xb_{j'}\ra\pb_{j''}\pb_{j}^\top\bigg]}_{I_2}\\
    =&\underbrace{\sum_{j=1}^D\sum_{j'=1}^D\EE\Big[\ell'^{(t)}\Sbb_{j', j}^{(t)}\big(1 - \Sbb_{j',j}^{(t)}\big) y\la\vb_{1}^{(t)},\xb_{j'}\ra\mathbf{1}_{\{E_t\}}\Big]\pb_{j'}\pb_{j}^\top}_{I_{1, 1}} + \underbrace{\sum_{j=1}^D\sum_{j'=1}^D\EE\Big[\ell'^{(t)}\Sbb_{j', j}^{(t)}\big(1 - \Sbb_{j',j}^{(t)}\big) y\la\vb_{1}^{(t)},\xb_{j'}\ra\mathbf{1}_{\{E_t^c\}}\Big]\pb_{j'}\pb_{j}^\top}_{I_{1, 2}} \\
    &-\underbrace{\sum_{j=1}^D\sum_{j'=1}^D\EE\Big[\ell'^{(t)}\Sbb_{j', j}^{(t)}\big(1 - \Sbb_{j',j}^{(t)}\big)y\la\vb_{1}^{(t)},\xb_{j'}\ra\mathbf{1}_{\{E_t\}}\Big]\pb_{j'}\pb_{j}^\top}_{I_{2, 1}}-\underbrace{\sum_{j=1}^D\sum_{j'=1}^D\sum_{j''\not=j'}\EE\Big[\ell'^{(t)}\Sbb_{j', j}^{(t)}\Sbb_{j'',j}^{(t)} y\la\vb_{1}^{(t)},\xb_{j'}\ra\mathbf{1}_{\{E_t^c\}}\Big]\pb_{j''}\pb_{j}^\top}_{I_{2, 2}}\\
\end{align*}
For $I_{1, 1}$, we have 
\begin{align*}
    \big\|I_{1, 1}\big\|_2 &\leq \sum_{j=1}^D\sum_{j'=1}^D\EE\Big[\Sbb_{j', j}^{(t)}\big(1 - \Sbb_{j',j}^{(t)}\big) \Big(\alpha^{(t)}\big|\la\vb^*,\xb_{j'}\ra\big| + \big\|\vb_{1, \text{error}}^{(t)}\big\|_2\big\|\xb_{j'}\big\|_2\Big)\mathbf{1}_{\{E_t\}}\Big]\big\|\pb_{j'}\big\|_2\big\|\pb_{j}\big\|_2\\
    &\leq \frac{D^2(D+1)\big(c_3\alpha^{(t)}D^{1/4} + \big\|\vb_{1, \text{error}}^{(t)}\big\|_2(\sqrt{d} +\sqrt{D})\big)}{e^{C_9D}} \leq \frac{1}{4e^{c\sqrt{D}}}.
\end{align*}
The first inequality is by triangle inequality and $|y|, |\ell'| \leq 1$. The second inequality holds because $|\la\vb^*, \xb_j\ra| \leq c_3 D^{1/4}$, $\|\xb_j\|_2\leq \sigma_x(\sqrt{d}+\sqrt{D})$, 
$\Sbb_{j^*, j}^{(t)} \geq 1- D\exp(-C_9 D)$ and $\Sbb_{j', j}^{(t)} \leq \exp(-C_9 D)$ when $\mathbf{1}_{\{E_t\}} = 1$, and $\|\pb_j\|_2=\sqrt{(D+1)/2}$ by Lemma~\ref{lemma:positional_encoding}. The second inequality holds since $\alpha^{(t)} \leq O((T^*)^{1/3} +D) \ll \exp(C_9D)$ and $\big\|\vb_{1, \text{error}}^{(t)}\big\|_2\leq e^{-C_4\sqrt{D}}$ by Lemma~\ref{lemma:update_v} and definition of $T^*$ in Theorem~\ref{thm:main_result}. Similarly, for $I_{2, 1}$, we can obtain that
\begin{align*}
     \big\|I_{2, 1}\big\|_2 &\leq \sum_{j=1}^D\sum_{j'=1}^D\EE\Big[\Sbb_{j', j}^{(t)}\Sbb_{j'',j}^{(t)}\Big(\alpha^{(t)}\la\vb^*,\xb_{j'}\ra + \big\|\vb_{1, \text{error}}^{(t)}\big\|_2\big\|\xb_{j'}\big\|_2\Big)\mathbf{1}_{\{E_t\}}\Big]\big\|\pb_{j''}\big\|_2\big\|\pb_{j}\big\|_2\\
    &\leq \frac{2D^2(D+1)\big(c_3\alpha^{(t)}D^{1/4} + \big\|\vb_{1, \text{error}}^{(t)}\big\|_2(\sqrt{d} +\sqrt{D})\big)}{e^{C_9D}} \leq \frac{1}{4e^{c\sqrt{D}}}.
\end{align*}
By applying Cauchy–Schwarz inequality, we can obtain an upper bound for $I_{1, 2}$ as
\begin{align*}
    \big\|I_{1, 2}\big\|_2 
&=\Bigg\|\sum_{j=1}^D\sum_{j'=1}^D\EE\Big[\ell'^{(t)}\Sbb_{j', j}^{(t)}\big(1 - \Sbb_{j',j}^{(t)}\big)y\la\vb_{1}^{(t)},\xb_{j'}\ra\mathbf{1}_{\{E_t^c\}}\Big]\pb_{j'}\pb_{j}^\top\Bigg\|_2\\
&=\sqrt{\sum_{j=1}^D\sum_{j'=1}^D\EE\Big[\ell'^{(t)}\Sbb_{j', j}^{(t)}\big(1 - \Sbb_{j',j}^{(t)}\big)y\la\vb_{1}^{(t)},\xb_{j'}\ra\mathbf{1}_{\{E_t^c\}}\Big]^2\big\|\pb_{j'}\big\|_2^2\big\|\pb_{j}\big\|_2^2}\\
&\leq \frac{(D+1)\sqrt{\PP(E_t^c)}}{2}\sqrt{\sum_{j=1}^D\sum_{j'=1}^D\EE\Big[\big(\Sbb_{j', j}^{(t)}\big)^2 \la\vb_{1}^{(t)},\xb_{j'}\ra^2\Big]}\\
&\leq \frac{(D+1)\big(\alpha^{(t)} + \big\|\vb_{1, \text{error}}^{(t)}\big\|_2\big)\sqrt{\PP(E_t^c)}}{2}\sqrt{\sum_{j=1}^D\sqrt{\sum_{j'=1}^D\EE\big[ \la\vb_{1}^{(t)}/\|\vb_{1}^{(t)}\|_2,\xb_{j'}\ra^4\big]}}\\
&\leq \frac{3^{\frac{1}{4}}\sigma_x(D+1)D^{\frac{3}{4}}\big(\alpha^{(t)} + \big\|\vb_{1, \text{error}}^{(t)}\big\|_2\big)}{2e^{C_8\sqrt{D}}}\leq  \frac{1}{4e^{c\sqrt{D}}}.
\end{align*}
The first inequality is derived by Cauchy–Schwarz inequality, $\big\|\pb_j\big\|_2=\sqrt{(D+1)/2}$ by Lemma~\ref{lemma:positional_encoding}, The second inequality is derived by Cauchy–Schwarz inequality, and $\sum_{j'=1}^D \big(\Sbb_{j', j}^{(t)}\big)^4\leq 1$. The last inequality holds since $\alpha^{(t)} \leq O((T^*)^{1/3} +D) \ll \exp(C_8\sqrt{D})$ and $\big\|\vb_{1, \text{error}}^{(t)}\big\|_2\leq e^{-C_4\sqrt{D}}$ by Lemma~\ref{lemma:update_v} and definition of $T^*$ in Theorem~\ref{thm:main_result}.  Similarly for $I_{2, 2}$, we also have
\begin{align*}
    \big\|I_{2, 2}\big\|_2 
&=\Bigg\|\sum_{j=1}^D\sum_{j'=1}^D\sum_{j''\neq j'}\EE\Big[\ell'^{(t)}\Sbb_{j', j}^{(t)}\Sbb_{j'', j}^{(t)}y\la\vb_{1}^{(t)},\xb_{j'}\ra\mathbf{1}_{\{E_t^c\}}\Big]\pb_{j''}\pb_{j}^\top\Bigg\|_2\\
&=\sqrt{\sum_{j=1}^D\sum_{j'=1}^D\sum_{j''\neq j'}\EE\Big[\ell'^{(t)}\Sbb_{j', j}^{(t)}\Sbb_{j'', j}^{(t)}y\la\vb_{1}^{(t)},\xb_{j'}\ra\mathbf{1}_{\{E_t^c\}}\Big]^2\big\|\pb_{j'}\big\|_2^2\big\|\pb_{j}\big\|_2^2}\\
&\leq \frac{(D+1)\sqrt{\PP(E_t^c)}}{2}\sqrt{\sum_{j=1}^D\sum_{j'=1}^D\EE\bigg[\big(\Sbb_{j', j}^{(t)}\big)^2\la\vb_{1}^{(t)},\xb_{j'}\ra^2\sum_{j''\neq j'} \big(\Sbb_{j'', j}^{(t)}\big)^2\bigg]}\\
&\leq \frac{(D+1)\big(\alpha^{(t)} + \big\|\vb_{1, \text{error}}^{(t)}\big\|_2\big)\sqrt{\PP(E_t^c)}}{2}\sqrt{\sum_{j=1}^D\sqrt{\sum_{j'=1}^D\EE\big[ \la\vb_{1}^{(t)}/\|\vb_{1}^{(t)}\|_2,\xb_{j'}\ra^4\big]}}\\
&\leq \frac{3^{\frac{1}{4}}\sigma_x(D+1)D^{\frac{3}{4}}\big(\alpha^{(t)} + \big\|\vb_{1, \text{error}}^{(t)}\big\|_2\big)}{2e^{C_8\sqrt{D}}}\leq  \frac{1}{4e^{c\sqrt{D}}}.
\end{align*}
Combining all the preceding results, we have
\begin{align*}
    \Big\|\Wb_{2, 2, \text{error}}^{(t+1)}\Big\|_2 &\leq  \Big\|\Wb_{2, 2, \text{error}}^{(t)}\Big\|_2 + \eta \big\|\nabla_{\Wb_{2, 2}^{(t)}}\cL \big\|_2\\
    &\leq \Big\|\Wb_{2, 2, \text{error}}^{(t)}\Big\|_2 + \eta \big\|I_{1, 1}\big\|_2 + \eta \big\|I_{1, 2}\big\|_2+\eta \big\|I_{2, 1}\big\|_2+\eta \big\|I_{2, 2}\big\|_2\\
    &\leq \frac{\eta t}{e^{c\sqrt{D}}} +  \frac{4\eta}{4e^{c\sqrt{D}}} =\frac{\eta (t+1)}{e^{c\sqrt{D}}},
\end{align*}
which completes the proof for~\eqref{eq:update_W_22_III}. Further by our definition of $T^*$ in Theorem~\ref{thm:main_result}, we can obtain
\begin{align*}
    \Big\|\Wb_{2, 2, \text{error}}^{(t)}\Big\|_2 \leq  \frac{T^*\eta}{e^{c\sqrt{D}}}\leq e^{-C_7\sqrt{D}}.
\end{align*}
This completes the proof for ~\eqref{eq:update_W_22_II}.
\end{proof}

Finally, we provide the proof for Lemma~\ref{lemma:lower_bound_yf}

\begin{proof}[Proof of Lemma~\ref{lemma:lower_bound_yf}]
By Lemma~\ref{lemma:update_v} and Proposition~\ref{prop:interaction_dynamics}, we can re-write $yf(\Zb, \Wb^{(t)}, \vb^{(t)})$ as
    \begin{align*}
        yf(\Zb, \Wb^{(t)}, \vb^{(t)}) = \alpha^{(t)}\sum_{j=1}^D\sum_{j'=1}^D y\la\vb^*, \xb_{j'}\ra\Sbb_{j', j}^{(t)} + \sum_{j=1}^D\sum_{j'=1}^D y\big\la\vb_{1, \text{error}}^{(t)}, \xb_{j'}\big\ra \Sbb_{j', j}^{(t)}.
    \end{align*}
By the fact that $\sum_{j'=1}^D\big(\Sbb_{j', j}^{(t)}\big)^2 \leq 1$ and Cauchy-Schwarz inequality, we can get a worst-case lower bound for $yf(\Zb, \Wb^{(t)}, \vb^{(t)})$ as
\begin{align*}
    yf(\Zb, \Wb^{(t)}, \vb^{(t)}) \geq& -\alpha^{(t)}\sum_{j=1}^D\Bigg(\bigg(\sum_{j'=1}^D \la\vb^*, \xb_{j'}\ra^2\bigg)^{\frac{1}{2}}\bigg(\sum_{j'=1}^D \big(\Sbb_{j', j}^{(t)}\big)^2\bigg)^{\frac{1}{2}}\Bigg) \\
    &- \big\|\vb_{1, \text{error}}^{(t)}\big\|_2\sum_{j=1}^D\Bigg(\bigg(\sum_{j'=1}^D\Big\la\frac{\vb_{1, \text{error}}^{(t)}}{\|\vb_{1, \text{error}}^{(t)}\|_2}, \xb_{j'}\Big\ra^2\bigg)^{\frac{1}{2}}\bigg(\sum_{j'=1}^D \big(\Sbb_{j', j}^{(t)}\big)^2\bigg)^{\frac{1}{2}}\Bigg)\\
    \geq& -D \alpha^{(t)}\bigg(\sum_{j'=1}^D \la\vb^*, \xb_{j'}\ra^2\bigg)^{\frac{1}{2}} - D \big\|\vb_{1, \text{error}}^{(t)}\big\|_2\Bigg(\sum_{j'=1}^D\bigg\la\frac{\vb_{1, \text{error}}^{(t)}}{\big\|\vb_{1, \text{error}}^{(t)}\big\|_2}, \xb_{j'}\bigg\ra^2\Bigg)^{\frac{1}{2}}.
\end{align*}
Since $\la\vb^*, \vb_{1, \text{error}}^{(t)}\ra = 0$, we can derive that $\sum_{j'=1}^D \la\vb^*, \xb_{j'}\ra^2$ and $\sum_{j'=1}^D\Big\la\frac{\vb_{1, \text{error}}^{(t)}}{\|\vb_{1, \text{error}}^{(t)}\|_2}, \xb_{j'}\Big\ra^2$ are i.i.d. random variables and $\frac{1}{\sigma_x^2}\sum_{j'=1}^D \la\vb^*, \xb_{j'}\ra^2, \frac{1}{\sigma_x^2}\sum_{j'=1}^D\Big\la\frac{\vb_{1, \text{error}}^{(t)}}{\|\vb_{1, \text{error}}^{(t)}\|_2}, \xb_{j'}\Big\ra^2\sim \chi_D^2$ by the properties of Gaussian distribution.
Besides, when $E_t$ holds, we also have 
\begin{align*}
    yf(\Zb, \Wb^{(t)}, \vb^{(t)}) &\geq \alpha^{(t)}y\la\vb^*,\xb_{j^*}\ra \sum_{j=1}^D\Sbb_{j^*, j}^{(t)} - \alpha^{(t)}\sum_{j'\neq j^*}\big\|\xb_{j'}\big\|_2\sum_{j=1}^D\Sbb_{j', j}^{(t)}-\big\|\vb_{1, \text{error}}^{(t)}\big\|_2\sum_{j'=1}^D\big\|\xb_{j'}\big\|_2\sum_{j=1}^D\Sbb_{j', j}^{(t)}\notag\\
    &\geq D\alpha^{(t)}\bigg(1-\frac{D}{e^{C_9D}}\bigg)y\la\vb^*,\xb_{j^*}\ra - \frac{\sigma_x\alpha^{(t)}D^2\big(\sqrt{d}+\sqrt{D}\big)}{e^{C_9D}}-\frac{\sigma_xD\big(\sqrt{d}+\sqrt{D}\big)}{e^{C_4\sqrt{D}}}\notag\\
    &\geq D\alpha^{(t)}\bigg(1-\frac{D}{e^{cD}}\bigg)y\la\vb^*,\xb_{j^*}\ra - 1.
\end{align*}
The second inequality holds because $E_t$ implies that $\Sbb_{j^*, j}^{(t)}\geq 1-De^{-C_9D}$,  $\Sbb_{j', j}^{(t)}\leq e^{-C_9D}$, and $\|\xb_{j'}\|_2\leq \sigma_x(\sqrt{d}+\sqrt{D})$ by Lemma~\ref{lemma:attention_one_hot} and Lemma~\ref{lemma:concentration_lipschitz continuous}. Besides, Lemma~\ref{lemma:induction_v} guarantees that $\big\|\vb_{1, \text{error}}^{(t)}\big\|_2\leq \frac{1}{e^{C_4\sqrt{D}}}$. The last inequality holds because $\alpha^{(t)} \leq O\big((T^*)^{1/3} + D\big)$ by Lemma~\ref{lemma:update_v}, which is much smaller than $e^{C_9D}$ by our definition of $T^*$ in Theorem~\ref{thm:main_result}. Therefore, the last two terms are much smaller than $1$. Similarly, we can also obtain that 
\begin{align*}
    yf(\Zb, \Wb^{(t)}, \vb^{(t)}) &\leq \alpha^{(t)}y\la\vb^*,\xb_{j^*}\ra \sum_{j=1}^D\Sbb_{j^*, j}^{(t)} + \alpha^{(t)}\sum_{j'\neq j^*}\big\|\xb_{j'}\big\|_2\sum_{j=1}^D\Sbb_{j', j}^{(t)}+\big\|\vb_{1, \text{error}}^{(t)}\big\|_2\sum_{j'=1}^D\big\|\xb_{j'}\big\|_2\sum_{j=1}^D\Sbb_{j', j}^{(t)}\notag\\
    &\leq D\alpha^{(t)}y\la\vb^*,\xb_{j^*}\ra + \frac{\sigma_x\alpha^{(t)}D^2\big(\sqrt{d}+\sqrt{D}\big)}{e^{C_9D}}+\frac{\sigma_xD\big(\sqrt{d}+\sqrt{D}\big)}{e^{C_4\sqrt{D}}}\notag\\
    &\leq D\alpha^{(t)}y\la\vb^*,\xb_{j^*}\ra + 1.
\end{align*}
This completes the proof.
\end{proof}

\section{Proof of Theorem~\ref{thm:down_stream}}
This section provides a complete proof for Theorem~\ref{thm:down_stream}. Before we demonstrate the proof for Theorem~\ref{thm:down_stream}, we first introduce and prove several lemmas which will be utilized for further proof. The following two lemmas are very similar to Lemma~\ref{lemma:update_W_11}, Lemma~\ref{lemma:update_W_22} and Lemma~\ref{lemma:attention_one_hot}.

\begin{lemma}\label{lemma:update_W_ds}
Under the same conditions of Theorem~\ref{thm:down_stream}, with probability at least $1-ne^{-C_3\sqrt{D}}$ over the randomness of $\big(\Xb^{(i)}y^{(i)}\big)_{i=1}^n$, it holds that 
\begin{align}\label{eq:update_W_ds}
    &\tilde\Wb^{(i)} = \begin{bmatrix}
\beta_1 \vb^*\vb^{*\top} & \mathbf 0 \\
\mathbf 0 & \beta_2 \Big(\sum_{j\neq j^*}\big(\pb_{j^*} - \pb_j\big)\Big)\bigg(\sum_{j=1}^D\pb_{j}^\top\bigg)
\end{bmatrix} + \tilde\Wb_{\mathrm{error}}^{(i)},
\end{align}
where $|\beta_1|\leq c_1\sqrt{D}$, $\beta_2 \geq \frac{c_2}{D^2}$ for some constant $c_1, c_2$ solely depending on $\eta, \sigma_x$, and the error term satisfies that 
\begin{align}\label{eq:update_W_ds_II}
     \big\|\tilde\Wb_{\mathrm{error}}^{(i)}\big\|_2 \leq \exp\big(-C_{10}\sqrt{D}\big).
\end{align}
The coefficients $C_{3}, C_{10}$ are both positive constants  solely depending on $\sigma_x, \tilde\sigma_x$ and $\eta$.
\end{lemma}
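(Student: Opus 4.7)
The plan is to prove Lemma~\ref{lemma:update_W_ds} by induction on the iteration index $i$, closely mirroring the inductive architecture used for Lemma~\ref{lemma:update_W_11} and Lemma~\ref{lemma:update_W_22}, but with three modifications to accommodate (i) stochastic per-sample updates instead of population gradients, (ii) sub-Gaussian rather than Gaussian features, and (iii) accumulation of error over $n$ SGD steps. The base case $i=0$ is free: since $\tilde\Wb^{(0)}=\Wb^{(T^*)}$, Lemma~\ref{lemma:induction_W} directly yields the decomposition \eqref{eq:update_W_ds} with $|\beta_1|\leq O(\sqrt D)$, $\beta_2=\Theta(1/D^2)$, and $\|\tilde\Wb_{\mathrm{error}}^{(0)}\|_2\leq \exp(-\Theta(\sqrt D))$. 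In parallel with the inductive hypothesis on $\tilde\Wb^{(i)}$, I will also maintain the auxiliary hypothesis that $\tilde\vb_2^{(i)}$ remains of order $\exp(-\Theta(\sqrt D))$ (its exact zeroness is lost under per-sample SGD on a sub-Gaussian law), handled through a companion induction almost identical to the one planned here.

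For the inductive step, I first transfer the attention-concentration conclusion of Lemma~\ref{lemma:attention_one_hot} to the downstream setting. Under the hypothesis at step $i$, the quadratic form $\zb_j^\top \tilde\Wb^{(i)}\zb_{j'}$ splits into a feature-feature piece involving $\xb^\top \beta_1\vb^*\vb^{*\top}\xb$ and a position-position piece involving the rank-one object $(\sum_{j\neq j^*}(\pb_{j^*}-\pb_j))(\sum_j\pb_j^\top)$. Invoking the sub-Gaussian assumption on $\xb_{j,k}^{(i)}$ with $\|\xb_{j,k}^{(i)}\|_{\psi_2}\leq \tilde\sigma_x$, Hanson--Wright gives $|\xb^\top\beta_1\vb^*\vb^{*\top}\xb|\leq O(\sqrt D\cdot D^{1/2})$ with probability $1-\exp(-\Theta(\sqrt D))$, while Lemma~\ref{lemma:positional_encoding} makes the position-position contribution $\Theta(D)$ for $j'=j^*$ and $-\Theta(1)$ otherwise. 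Hence $\tilde\Sbb_{j^*,j}^{(i)}\geq 1-\exp(-\Theta(D))$ and $\tilde\Sbb_{j',j}^{(i)}\leq \exp(-\Theta(D))$ on this good event, which gets union-bounded across the $n$ samples to produce the $n\exp(-\Theta(\sqrt D))$ failure probability in the target statement.

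Next I control the per-step increment by evaluating \eqref{eq:gradient_W} on a single sample. Every block of $\nabla_{\Wb}\tilde\cL_i$ carries a factor of either $\tilde\Sbb_{j^*,j}^{(i)}(1-\tilde\Sbb_{j^*,j}^{(i)})$ (for self-interaction terms) or $\tilde\Sbb_{j',j}^{(i)}$ for $j'\neq j^*$ (for cross terms), both of which are $\exp(-\Theta(D))$ on the good event above. Combined with sub-Gaussian tail bounds of order $\mathrm{poly}(d,D)$ on $\|\xb^{(i)}\|_2$ and $|\langle\tilde\vb^{(i)},\xb^{(i)}\rangle|$, this yields $\|\nabla_{\Wb}\tilde\cL_i\|_2\leq \mathrm{poly}(d,D)\exp(-\Theta(D))$. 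Hence the recursion
\begin{align*}
\big\|\tilde\Wb_{\mathrm{error}}^{(i+1)}\big\|_2 \leq \big\|\tilde\Wb_{\mathrm{error}}^{(i)}\big\|_2 + \tilde\eta\cdot\mathrm{poly}(d,D)\exp(-\Theta(D))
\end{align*}
holds, and summing $n$ steps with $\tilde\eta=\Theta\big(1/((d\vee D)D^2)\big)$ and $n\leq\mathrm{poly}(D)$ (implied by $D\geq\omega(\log^2 n)$) keeps the total error bounded by $\exp(-\Theta(\sqrt D))$ as required by \eqref{eq:update_W_ds_II}; the leading coefficients $\beta_1$ and $\beta_2$ shift by at most the same amount, preserving their orders.

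The principal obstacle is that Proposition~\ref{prop:interaction_dynamics} relied crucially on population expectation plus the symmetry identity $\EE[y\cdot g(y\xb)]=0$ to keep the off-diagonal blocks $\tilde\Wb_{1,2},\tilde\Wb_{2,1}$ exactly zero; under single-sample SGD on an asymmetric sub-Gaussian distribution that identity is unavailable, so these blocks become small but non-zero. My strategy is to forgo exact zeroness and instead absorb them into $\tilde\Wb_{\mathrm{error}}^{(i)}$: explicitly, show that each SGD increment to the off-diagonal blocks is itself bounded by $\tilde\eta\cdot\mathrm{poly}(d,D)\exp(-\Theta(D))$ (using the same attention-concentration estimate, plus the companion induction that $\tilde\vb_2^{(i)}$ stays $\exp(-\Theta(\sqrt D))$-small so that the $\langle\tilde\vb^{(i)},\zb_{j'}\rangle$ factor does not blow up), then accumulate over $n$ steps. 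Once this is in place, the structural decomposition \eqref{eq:update_W_ds} follows with the off-diagonal contamination swept into the error term, closing the induction.
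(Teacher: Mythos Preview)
Your overall inductive strategy matches the paper's, and your observation that the off-diagonal blocks of $\tilde\Wb^{(i)}$ lose their exact zero structure under per-sample SGD (and must be absorbed into $\tilde\Wb_{\mathrm{error}}^{(i)}$) is correct and is implicitly what the paper does by bounding the full $\|\nabla_{\Wb}\tilde\cL_i\|_2$. However, the companion induction you propose on $\tilde\vb_2^{(i)}$ rests on a false premise and fails already at the first step. With attention concentrated at $j^*$, the per-sample gradient in the $\vb_2$-block reads
\[
\nabla_{\vb_2}\tilde\cL_i \;=\; \ell'^{(i)}\,y^{(i)}\sum_{j'=1}^D\pb_{j'}\sum_{j=1}^D\Sbb_{j',j}^{(i)} \;\approx\; \ell'^{(i)}\,y^{(i)}\,D\,\pb_{j^*},
\]
which has norm $\Theta(D^{3/2})$ since $|\ell'^{(0)}|=\tfrac12$ at $\tilde\vb^{(0)}=\mathbf 0$. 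Hence already $\|\tilde\vb_2^{(1)}\|_2=\Theta\big(\tilde\eta D^{3/2}\big)=\Theta\big(1/((d\vee D)\sqrt{D})\big)$, polynomially rather than exponentially small; softmax concentration cannot help here because the dominant weight sits on $j^*$, not off it. The mechanism that made $\nabla_{\vb_2}\cL$ vanish in the pre-training analysis (Proposition~\ref{prop:interaction_dynamics}) was the population identity $\EE[y]=0$, which is unavailable for a single sample.

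The paper's fix is simpler than your companion induction: bound the entire $\|\tilde\vb^{(i)}\|_2$ crudely by the triangle inequality on the SGD updates, giving $\|\tilde\vb^{(i)}\|_2\leq \tilde\eta\, i\, D\max_{j}\|\zb_j^{(i)}\|_2 = O\big(n/(D\sqrt{d+D})\big)$. This bound is only polynomial in $n$, but that is harmless: every term in $\nabla_{\Wb}\tilde\cL_i$ carries a factor $\Sbb_{j',j}^{(i)}(1-\Sbb_{j',j}^{(i)})$ or $\Sbb_{j',j}^{(i)}\Sbb_{j'',j}^{(i)}$ of order $\exp(-\Theta(D))$, and since $D\geq\omega(\log^2 n)$ gives $n\leq\exp(o(\sqrt D))$ (not $n\leq\mathrm{poly}(D)$ as you wrote---a separate minor slip), the product $n\cdot\mathrm{poly}(d,D)\cdot\exp(-\Theta(D))$ is still $\exp(-\Theta(D))$. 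With this correction in place the remainder of your plan goes through essentially unchanged.
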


\begin{lemma}\label{lemma:attention_one_hot_ds} 
Under the same conditions of Theorem~\ref{thm:down_stream}, then with probability at least $1-ne^{-C_3\sqrt{D}}$ over the randomness of $\big(\Xb^{(i)}y^{(i)}\big)_{i=1}^n$, it holds that 
\begin{align*}
    &\Sbb_{j^*, j}^{(i)} \geq 1- D\exp(-C_{11}D);\\
    &\Sbb_{j', j}^{(i)} \leq \exp(-C_{11}D)
\end{align*}
for all $i \in [n]$ ,$j\in[D]$ and $j'\neq j^*$. The coefficients $C_{3}, C_{11}$ are both positive constants solely depending on $\sigma_x, \tilde\sigma_x$ and $\eta$.
\end{lemma}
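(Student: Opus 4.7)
The plan is to mirror the proof of Lemma~\ref{lemma:attention_one_hot}, adjusting for two new features of the downstream setting: the features $\xb_{j,k}^{(i)}$ are only sub-Gaussian (rather than Gaussian), and the bound must hold simultaneously for all $n$ samples via a union bound. The starting point is Lemma~\ref{lemma:update_W_ds}, which already guarantees that throughout fine-tuning each iterate $\tilde\Wb^{(i)}$ decomposes into the same structured leading part as in pre-training plus an error bounded by $e^{-C_{10}\sqrt D}$. So with this structural control in hand, the task reduces to a per-sample, high-probability analysis of the quadratic forms $\zb_{j'}^{(i)\top}\tilde\Wb^{(i)}\zb_j^{(i)}$.

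First I would expand
\begin{align*}
\zb_{j'}^{(i)\top}\tilde\Wb^{(i)}\zb_j^{(i)}
= \beta_1\la\vb^*,\xb_{j'}^{(i)}\ra\la\vb^*,\xb_{j}^{(i)}\ra
+ \beta_2\,\pb_{j'}^\top\Big(\sum_{k\ne j^*}(\pb_{j^*}-\pb_k)\Big)\Big(\sum_{k'=1}^D\pb_{k'}^\top\Big)\pb_j + R_{j',j}^{(i)},
\end{align*}
where $R_{j',j}^{(i)}$ collects all contributions from $\tilde\Wb_{\text{error}}^{(i)}$. Using the orthogonality of the positional encodings from Lemma~\ref{lemma:positional_encoding}, the middle term evaluates exactly: it equals $\beta_2(D-1)(D+1)^2/4=\Theta(D)$ when $j'=j^*$ and $-\beta_2(D+1)^2/4=\Theta(1)$ otherwise, producing a $\Theta(D)$ gap between the $j^*$-row and every other row of the attention logits.

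Next I would control the feature contribution and the error. By the sub-Gaussian assumption $\|\xb_{j,k}^{(i)}\|_{\psi_2}\le\tilde\sigma_x$, standard tail bounds give
\begin{align*}
\PP\Big(|\la\vb^*,\xb_j^{(i)}\ra|>c_4 D^{1/4}\Big)\le 2e^{-c_4^2\sqrt D/(2\tilde\sigma_x^2)},
\qquad \PP\big(\|\xb_j^{(i)}\|_2>C\tilde\sigma_x(\sqrt d+\sqrt D)\big)\le e^{-\Theta(d+D)},
\end{align*}
for a constant $c_4$ chosen so that $|\beta_1|\cdot c_4^2\sqrt D\le (c_2/12)D$; this turns the first term of the expansion into an $O(\sqrt D)\ll D$ perturbation. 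The residual $R_{j',j}^{(i)}$ is bounded by $\|\tilde\Wb_{\text{error}}^{(i)}\|_2$ times the $\ell_2$ norms of $\xb^{(i)}$ and $\pb$, hence at most $e^{-C_{10}\sqrt D}\cdot\poly(d+D)=e^{-\Theta(\sqrt D)}$ using $d\le\poly(D)$. Intersecting these good events for every $i\in[n]$ and every pair $(j,j')\in[D]^2$ costs a union-bound factor of $nD^2$, yielding a dominant logit of order at least $c_2 D/6$ at $j'=j^*$ and every competing logit at most $c_2 D/12$. Converting to softmax outputs gives $\Sbb_{j^*,j}^{(i)}\ge 1-De^{-C_{11}D}$ and $\Sbb_{j',j}^{(i)}\le e^{-C_{11}D}$ for $j'\ne j^*$, with overall failure probability $nD^2 e^{-\Theta(\sqrt D)}\le ne^{-C_3\sqrt D}$.

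The main obstacle is bookkeeping the union bound so that the probability target $ne^{-C_3\sqrt D}$ is achievable. This is exactly why Theorem~\ref{thm:down_stream} assumes $D\ge\omega(\log^2 n)$: the per-sample failure scales like $e^{-\Theta(\sqrt D)}$, and absorbing the polynomial-in-$D$ prefactor from union-bounding over $(j,j')$ and the logarithm-in-$n$ from union-bounding over samples requires $\sqrt D\gg\log n$. A secondary, more subtle point is that the independence arguments used in the Gaussian pre-training analysis (in particular those based on $\la\vb^*,\xb_j\ra$ being independent of $\la\bxi_i,\xb_j\ra$) no longer apply here; fortunately this step of the proof needs only scalar tail bounds on fixed quadratic forms, so generic sub-Gaussian concentration is enough and no moment-type computation is required.
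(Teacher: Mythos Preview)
Your proposal is correct and follows essentially the same approach as the paper: invoke Lemma~\ref{lemma:update_W_ds} for the structural decomposition of $\tilde\Wb^{(i)}$, use Lemma~\ref{lemma:positional_encoding} to evaluate the positional contribution exactly (yielding the $\Theta(D)$ gap), control the feature and error terms via sub-Gaussian concentration (the paper's Lemma~\ref{lemma:concentration_Bernstein}), and union-bound over the $n$ samples. Your arithmetic for the logit thresholds $c_2D/6$ versus $c_2D/12$ and the resulting softmax bounds matches the paper's computation line for line.
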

\begin{proof}[Proof of Lemma~\ref{lemma:attention_one_hot_ds}]
 By Lemma~\ref{lemma:update_W_ds}, there exists constants $c_1, c_2$ solely depending on $\eta, \sigma_x$ such that $|\beta_1| \leq c_1\sqrt{D}$ and $\beta_2\geq c_2\frac{1}{D^2}$. Then further combining with Lemma~\ref{lemma:concentration_Bernstein} and Lemma~\ref{lemma:positional_encoding}, with probability at least $1-ne^{-C_1\sqrt{D}}$,  we can obtain that 
 \begin{align*}
     \big(\zb_{j'}^{(i)}\big)^\top \tilde\Wb^{(i)} \zb_j^{(i)} 
     &= \beta_1\big\la\vb^*,\xb_{j'}^{(i)}\big\ra\big\la\vb^*,\xb_{j}^{(i)}\big\ra -\frac{(D+1)^2}{4} \beta_2 + \big(\zb_{j'}^{(i)}\big)^\top \tilde\Wb_{\mathrm{error}}^{(i)} \zb_j^{(i)} \\
     &\leq \Big|\beta_1\big\la\vb^*,\xb_{j'}^{(i)}\big\ra\big\la\vb^*,\xb_{j}^{(i)}\big\ra\Big| + \big\|\tilde\Wb_{\mathrm{error}}^{(i)}\big\|_2\big\|\zb_{j}^{(i)}\big\|_2\big\|\zb_{j'}^{(i)}\big\|_2 \\
     &\leq c_1 \sqrt{D} \Bigg(\sqrt{\frac{c_2}{13c_1\tilde\sigma_x^2}}\tilde\sigma_xD^{\frac{1}{4}}\Bigg)^2 + \frac{2\tilde\sigma_x^2(\sqrt{d}+\sqrt{D})^2}{e^{C_{10}\sqrt{D}}}\\
     &\leq \frac{c_2}{12}D
 \end{align*}
 holds for all $j'\neq j^*$, $j\in [D]$ and $i\in [D]$, and 
 \begin{align*}
     \big(\zb_{j^*}^{(i)}\big)^\top \tilde\Wb^{(i)} \zb_j^{(i)}  
     &= \beta_1\big\la\vb^*,\xb_{j^*}^{(i)}\big\ra\big\la\vb^*,\xb_{j}^{(i)}\big\ra + \frac{(D-1)(D+1)^2}{4}\beta_2 + \big(\zb_{j'}^{(i)}\big)^\top \tilde\Wb_{\mathrm{error}}^{(i)} \zb_j^{(i)} \\
     &\geq \frac{c_2 D}{4}-\Big|\beta_1\big\la\vb^*,\xb_{j^*}^{(i)}\big\ra\big\la\vb^*,\xb_{j}^{(i)}\big\ra\Big| - \big\|\tilde\Wb_{\mathrm{error}}^{(i)}\big\|_2\big\|\zb_{j}^{(i)}\big\|_2\big\|\zb_{j'}^{(i)}\big\|_2\\
     &\geq \frac{c_2 D}{4}-c_1 \sqrt{D} \Bigg(\sqrt{\frac{c_2}{13c_1\sigma_x^2}}\sigma_xD^{\frac{1}{4}}\Bigg)^2 - \frac{2\tilde\sigma_x^2(\sqrt{d}+\sqrt{D})^2}{e^{C_{10}\sqrt{D}}}\\
     &\geq \frac{c_2}{6}D
 \end{align*}
  holds for all $j\in [D]$ and $i\in [D]$. Therefore, we can obtain that
  \begin{align*}
      &\Sbb_{j^*, j}^{(i)} \geq \frac{e^{\frac{c_2}{6}D}}{e^{\frac{c_2}{6}D}+ (D-1)e^{\frac{c_2}{12}D}} \geq 1- De^{-\frac{c_2}{12}D};\\
      &\Sbb_{j', j}^{(i)} \leq \frac{e^{\frac{c_2}{12}D}}{e^{\frac{c_2}{6}D}+ (D-1)e^{\frac{c_2}{12}D}} \leq e^{-\frac{c_2}{12}D}.
  \end{align*}
  This completes the proof.
\end{proof}

\begin{proof}[Proof of Lemma~\ref{lemma:update_W_ds}]
By Lemma~\ref{lemma:update_W_11} and Lemma~\ref{lemma:update_W_22}, we have obtained that $\tilde\Wb^{(1)} = \Wb^{(T^*)}$ follows~\eqref{eq:update_W_ds} with $\big\|\tilde\Wb_{\text{error}}^{(1)}\big\|_2 \leq  \max\Big\{\big\|\Wb_{1, 1, \text{error}}^{(T^*)}\big\|_2, \big\|\Wb_{2, 2,\text{error}}^{(T^*)}\big\|_2\Big\} \leq \exp\big(-C_7\sqrt{D}\big)$. Instead of directly proving \eqref{eq:update_W_ds_II}, we prove the following inequality,
\begin{align}\label{eq:update_W_ds_III}
     \big\|\tilde\Wb_{\mathrm{error}}^{(i)}\big\|_2 \leq \exp\big(-C_7\sqrt{D}\big) + \frac{\tilde\eta i}{\exp(cD)},
\end{align}
where $c$ is a constant solely depending on $\sigma_x,\tilde \sigma_x,\eta$.
Based on our previous discussion, we know it holds for $i=1$. Therefore by induction, we assume it holds for $i\leq n-1$, and prove it still holds for $i+1$. Actually, it suffices to show that $\big\|\nabla_{\Wb}\tilde\cL_i(\tilde \vb^{(i)}, \tilde \Wb^{(i)}) \big\|_2\leq e^{-cD}$. And we firstly provide an upper-bound for $\big\|\tilde\vb^{(i)}\big\|_2$ as
\begin{align*}
    \big\|\tilde\vb^{(i)}\big\|_2 &= \bigg\|\tilde\eta\sum_{k=1}^i\sum_{j=1}^D\sum_{j'=1}^D  \ell'^{(i)} y^{(i)}\cdot  \zb_j^{(i)}  \Sbb_{j,j'}^{(t)}\bigg\|_2 \\
    &\leq \big(\sqrt{2}\tilde\sigma_x + 1\big)\tilde\eta nD\big(\sqrt{d}+\sqrt{D}\big) \leq \frac{n}{\big(\sqrt{2}\tilde\sigma_x + 1\big) D\big(\sqrt{d}+\sqrt{D}\big)}.
\end{align*}
The first inequality is because $\|\zb_j^{(i)}\| \leq \big(\sqrt{2}\tilde\sigma_x + 1\big)\big(\sqrt{d}+\sqrt{D}\big)$ by Lemma~\ref{lemma:concentration_Bernstein} and Lemma~\ref{lemma:positional_encoding}. The last inequality holds by the scale of $\tilde\eta$ from the conditions of Theorem~\ref{thm:down_stream}. 
Based on this upper-bound, we can further demonstrate the upper-bound of $\big\|\nabla_{\Wb}\tilde\cL_i(\tilde \vb^{(i)}, \tilde \Wb^{(i)}) \big\|_2$
Then by~\eqref{eq:gradient_W}, we have 
\begin{align*}
    &\nabla_{\Wb}\tilde\cL_i(\tilde \vb^{(i)}, \tilde \Wb^{(i)})\\=&\ell'^{(i)}\sum_{j=1}^D\sum_{j'=1}^D\sum_{j''\not=j'}\Sbb_{j', j}^{(i)}\Sbb_{j'',j}^{(i)} y\big\la\tilde\vb^{(i)},\zb_{j'}^{(i)}\big\ra\big(\zb_{j'}^{(i)}-\zb_{j''}^{(i)}\big)\big(\zb_{j}^{(i)}\big)^\top \\
    =&\underbrace{\ell'^{(i)}\sum_{j=1}^D\sum_{j'=1}^D\Sbb_{j', j}^{(i)}\big(1-\Sbb_{j',j}^{(i)}\big) y\big\la\tilde\vb^{(i)},\zb_{j'}^{(i)}\big\ra\zb_{j'}^{(i)}\big(\zb_{j}^{(i)}\big)^\top}_{I_1} - \underbrace{\ell'^{(i)}\sum_{j=1}^D\sum_{j'=1}^D\sum_{j''\not=j'}\Sbb_{j', j}^{(i)}\Sbb_{j'',j}^{(i)} y\big\la\tilde\vb^{(i)},\zb_{j'}^{(i)}\big\ra\zb_{j''}^{(i)}\big(\zb_{j}^{(i)}\big)^\top}_{I_2}.
\end{align*}
For $I_{1}$, we have 
\begin{align*}
    \big\|I_{1}\big\|_2 &\leq \sum_{j=1}^D\sum_{j'=1}^D\Sbb_{j', j}^{(i)}\big(1 - \Sbb_{j',j}^{(i)}\big) \big\|\tilde\vb^{(i)}\big\|_2\big\|\zb_{j'}^{(i)}\big\|_2^2\big\|\zb_{j}^{(i)}\big\|_2\leq \frac{2\big(\sqrt{2}\tilde\sigma_x + 1\big)^2D n\big(\sqrt{d}+\sqrt{D}\big)^2}{e^{C_{11}D}} \leq \frac{1}{2e^{cD}}.
\end{align*}
The first inequality is by triangle inequality and $|y|, |\ell'| \leq 1$. The second inequality holds because $\|\zb_j^{(i)}\| \leq \big(\sqrt{2}\tilde\sigma_x + 1\big)\big(\sqrt{d}+\sqrt{D}\big)$, 
$\Sbb_{j^*, j}^{(i)} \geq 1- D\exp(-C_{11} D)$ and $\Sbb_{j', j}^{(i)} \leq \exp(-C_{11} D)$ by Lemma~\ref{lemma:attention_one_hot_ds} and upper-bound for $\big\|\tilde\vb^{(i)}\big\|_2$. The second inequality holds by the condition concerning $n$ from Theorem~\ref{thm:down_stream}. Similarly, for $I_{2}$, we can obtain that
\begin{align*}
     \big\|I_{2}\big\|_2 &\leq  \sum_{j=1}^D\sum_{j'=1}^D\sum_{j''\neq j'}\Sbb_{j', j}^{(i)}\Sbb_{j'',j}^{(i)}\big\|\tilde\vb^{(i)}\big\|_2\big\|\zb_{j'}^{(i)}\big\|_2\big\|\zb_{j''}^{(i)}\big\|_2\big\|\zb_{j}^{(i)}\big\|_2\leq \frac{2\big(\sqrt{2}\tilde\sigma_x + 1\big)^2D n\big(\sqrt{d}+\sqrt{D}\big)^2}{e^{C_{11}D}} \leq \frac{1}{2e^{cD}}.
\end{align*}
Combining all the preceding results, we have
\begin{align*}
    \big\|\tilde\Wb_{\text{error}}^{(i+1)}\big\|_2 &\leq  \big\|\tilde\Wb_{\text{error}}^{(i)}\big\|_2 + \tilde\eta \big\|\nabla_{\Wb}\tilde\cL_i(\tilde \vb^{(i)}, \tilde \Wb^{(i)}) \big\|_2\\
    &\leq \big\|\tilde\Wb_{\text{error}}^{(i)}\big\|_2 + \tilde\eta \big\|I_{1}\big\|_2 + \tilde\eta \big\|I_{2}\big\|_2\\
    &\leq \exp\big(-C_7\sqrt{D}\big) + \frac{\tilde\eta i}{e^{cD}} +  \frac{2\tilde\eta}{2e^{cD}} =\exp\big(-C_7\sqrt{D}\big) + \frac{\tilde\eta (i+1)}{e^{cD}},
\end{align*}
which proves that~\eqref{eq:update_W_ds_III}. Then by conditions concerning $n$ from Theorem~\ref{thm:down_stream}, we further have
\begin{align*}
    \big\|\tilde\Wb_{\text{error}}^{(i)}\big\|_2 \leq \exp\big(-C_7\sqrt{D}\big) + \frac{\tilde\eta i}{e^{cD}} \leq \exp(-C_{10}\sqrt{D}).
\end{align*}
This completes the proof.
\end{proof}

For further proof in this section, we introduce the following several notations. We denote $\tilde\vb^* = \argmax_{ \vb: \| \vb \|_2 \leq 1 }  \min_{i\in [n]} y^{(i)}\cdot \big\la \vb, \xb_{j^*}^{(i)} \big\ra$, and $\hat\vb = \big[(\frac{\log n}{\gamma D}\tilde\vb^*)^\top, \mathbf{0_D}^\top\big]^\top \in \RR^{d+D}$. And we define $\hat\Wb$ is an idealized matrix such that $\Sbb^{(i)}_{j^*, j} = 1$ and $\Sbb^{(i)}_{j', j} =0$ for all $j'\neq j^*$ and $j\in [D]$ (Such a matrix exists by Lemma~\ref{lemma:positional_encoding}). Furthermore, we define a proxy loss by $-\ell'$ as $\cR_i(\tilde\vb, \tilde\Wb) = -\ell'(y^{(i)}\cdot f(\Zb^{(i)}, \tilde\Wb, \tilde\vb))$ and $\cR(\tilde\vb, \tilde\Wb) = \EE_{(\Xb, y)\sim\tilde\cD}\big[-\ell'(y\cdot f(\Zb, \tilde\Wb, \tilde\vb))\big]$. Based on these notations, we can finish the remaining proof. Firstly, we prove that $\tilde\cL_i(\tilde\vb, \tilde\Wb)$ is nearly convex.\

\begin{lemma}\label{lemma:nearly_convex}
    With probability at least $1-ne^{-C_3\sqrt{D}}$, it holds that 
    \begin{align*}
        \cL_i(\hat\vb, \hat\Wb) - \cL_i(\tilde\vb^{(i)}, \tilde\Wb^{(i)}) \geq \big\la\nabla_{\vb}\cL_i(\tilde\vb^{(i)}, \tilde\Wb^{(i)}), \hat\vb-\tilde\vb^{(i)}\big\ra - \frac{1}{\gamma e^{C_{12}D}}
    \end{align*}
    for all $i\in [n]$, where $C_{12}$ is a constant solely depending on $\sigma_x,\tilde \sigma_x,\eta$
\end{lemma}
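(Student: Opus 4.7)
The plan is to exploit two structural facts: (i) the scalar loss $\ell(a)=\log(1+\exp(-a))$ is convex, and (ii) for any fixed attention matrix $\Wb$, the network output $f(\Zb,\Wb,\vb)=\vb^\top \Zb\,\Sbb\,\mathbf 1_D$ is \emph{linear} in $\vb$. Combining these, the only obstruction to an exact first-order inequality is that the attention pattern under $\tilde\Wb^{(i)}$ differs from the idealized one-hot pattern produced by $\hat\Wb$; Lemma~\ref{lemma:attention_one_hot_ds} tells us this difference is exponentially small in $D$.

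Concretely, first I would apply the convexity of $\ell$ to write
\begin{align*}
\cL_i(\hat\vb,\hat\Wb)-\cL_i(\tilde\vb^{(i)},\tilde\Wb^{(i)})
\geq \ell'^{(i)}\,y^{(i)}\bigl(f(\Zb^{(i)},\hat\Wb,\hat\vb)-f(\Zb^{(i)},\tilde\Wb^{(i)},\tilde\vb^{(i)})\bigr).
\end{align*}
Next, since $f(\Zb,\Wb,\cdot)$ is linear in $\vb$, the chain rule combined with~\eqref{eq:gradient_v} yields
\begin{align*}
\bigl\la\nabla_{\vb}\cL_i(\tilde\vb^{(i)},\tilde\Wb^{(i)}),\hat\vb-\tilde\vb^{(i)}\bigr\ra
=\ell'^{(i)}\,y^{(i)}\bigl(f(\Zb^{(i)},\tilde\Wb^{(i)},\hat\vb)-f(\Zb^{(i)},\tilde\Wb^{(i)},\tilde\vb^{(i)})\bigr).
\end{align*}
Subtracting these two identities reduces the problem to bounding the single residual
\begin{align*}
R_i:=\ell'^{(i)}\,y^{(i)}\bigl(f(\Zb^{(i)},\hat\Wb,\hat\vb)-f(\Zb^{(i)},\tilde\Wb^{(i)},\hat\vb)\bigr),
\end{align*}
and it suffices to show $|R_i|\le 1/(\gamma e^{C_{12}D})$ uniformly on the good event.

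To bound $R_i$, I would expand $f(\Zb^{(i)},\Wb,\hat\vb)=\sum_{j=1}^D\sum_{j'=1}^D\hat\vb^\top\zb^{(i)}_{j'}\,\Sbb_{j',j}$, noting that $\hat\Wb$ yields $\Sbb_{j^*,j}=1$ and $\Sbb_{j',j}=0$ for $j'\neq j^*$, while Lemma~\ref{lemma:attention_one_hot_ds} shows the corresponding scores under $\tilde\Wb^{(i)}$ differ from this by at most $De^{-C_{11}D}$, with probability at least $1-ne^{-C_3\sqrt D}$. Because $\hat\vb=\bigl[(\tfrac{\log n}{\gamma D}\tilde\vb^*)^\top,\mathbf 0_D^\top\bigr]^\top$, we have $|\hat\vb^\top\zb^{(i)}_{j'}|\le \tfrac{\log n}{\gamma D}\|\xb^{(i)}_{j'}\|_2$, and the sub-Gaussian tail bound (Lemma~\ref{lemma:concentration_Bernstein}) gives $\|\xb^{(i)}_{j'}\|_2\le O(\tilde\sigma_x(\sqrt d+\sqrt D))$ on the same event. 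Summing the $D^2$ error contributions and using $|\ell'^{(i)}|\le 1$ gives
\begin{align*}
|R_i|\le \frac{\log n\cdot O(\sqrt{d+D})\cdot D\cdot D e^{-C_{11}D}}{\gamma D}
\le \frac{1}{\gamma e^{C_{12}D}},
\end{align*}
where the last step absorbs the polynomial factors (together with $\log n\le o(\sqrt D)$ from the assumption $D\ge\omega(\log^2 n)$) into a strictly smaller exponential constant $C_{12}<C_{11}$. A union bound over $i\in[n]$ preserves the stated failure probability $ne^{-C_3\sqrt D}$.

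The main subtlety is bookkeeping: the residual $R_i$ mixes the discrepancy in the attention matrix (which is exponentially small, but multiplied by $D^2$ terms) with the magnitude of $\hat\vb$ (which scales like $\log n/\gamma$). The assumption $D\geq\omega(\log^2 n)$ is precisely what is needed so that $\log n$ and the polynomial factors $d,D,\sqrt{d+D}$ are all dominated by $e^{C_{11}D}$, allowing the residual to collapse into the single clean bound $1/(\gamma e^{C_{12}D})$. No convexity in $\Wb$ is ever invoked, which is essential because the softmax renders $f$ highly nonconvex in $\Wb$; the lemma's structure of linearizing only in $\vb$ and absorbing the $\Wb$-gap into the exponentially small remainder is what makes this possible.
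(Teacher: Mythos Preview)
Your proposal is correct and follows essentially the same route as the paper's own proof: both apply scalar convexity of $\ell$, exploit linearity of $f$ in $\vb$ to identify the gradient term, and then bound the residual $R_i=\ell'^{(i)}y^{(i)}\bigl(f(\Zb^{(i)},\hat\Wb,\hat\vb)-f(\Zb^{(i)},\tilde\Wb^{(i)},\hat\vb)\bigr)$ using the near-one-hot attention from Lemma~\ref{lemma:attention_one_hot_ds} together with the sub-Gaussian norm bound from Lemma~\ref{lemma:concentration_Bernstein}. The paper's presentation differs only in that it writes out the residual explicitly as the two pieces $(D-\sum_{j'}\Sbb^{(i)}_{j^*,j'})\langle\hat\vb,\zb^{(i)}_{j^*}\rangle$ and $\sum_{j\ne j^*}\sum_{j'}\Sbb^{(i)}_{j,j'}\langle\hat\vb,\zb^{(i)}_j\rangle$ before bounding, which is the same computation you describe in words.
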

\begin{proof}[Proof of Lemma~\ref{lemma:nearly_convex}]
By convexity of $\ell$, we can obtain that
\begin{align*}
    &\cL_i(\hat\vb, \hat\Wb) - \cL_i(\tilde\vb^{(i)}, \tilde\Wb^{(i)})\\
    =&\ell\big(y^{(i)}f(\Zb^{(i)}, \hat\Wb, \hat\vb)\big) - \ell\big(y^{(i)}f(\Zb^{(i)}, \tilde\Wb^{(i)}, \tilde\vb^{(i)})\big)\\
    \geq& \ell'\big(y^{(i)}f(\Zb^{(i)}, \tilde\Wb^{(i)}, \tilde\vb^{(i)})\big)y^{(i)}\big(f(\Zb^{(i)}, \hat\Wb, \hat\vb)- f(\Zb^{(i)}, \tilde\Wb^{(i)}, \tilde\vb^{(i)})\big)\\
    =&\ell'^{(i)}y^{(i)}\bigg(\sum_{j=1}^D\sum_{j'=1}^D\Sbb_{j, j'}^{(i)}\big\la\hat\vb - \tilde\vb^{(i)}, \zb_{j}^{(i)}\big\ra\bigg) + \ell'^{(i)}y^{(i)}\bigg(D-\sum_{j'=1}\Sbb_{j^*,j'}^{(i)}\bigg)\big\la\hat\vb, \zb_{j^*}^{(i)}\big\ra + \ell'^{(i)}y^{(i)}\sum_{j\neq j^*}\sum_{j'=1}^D\Sbb_{j,j'}^{(i)}\big\la\hat\vb,\zb_{j}^{(i)}\big\ra\\
    =&\big\la\nabla_{\vb}\cL_i(\tilde\vb^{(i)}, \tilde\Wb^{(i)}),\hat\vb - \tilde\vb^{(i)}\big\ra + \underbrace{\ell'^{(i)}y^{(i)}\bigg(D-\sum_{j'=1}\Sbb_{j^*,j'}^{(i)}\bigg)\big\la\hat\vb, \zb_{j^*}^{(i)}\big\ra + \ell'^{(i)}y^{(i)}\sum_{j\neq j^*}\sum_{j'=1}^D\Sbb_{j,j'}^{(i)}\big\la\hat\vb,\zb_{j}^{(i)}\big\ra}_{I}
\end{align*}
And we can bound $|I|$ as
\begin{align*}
    |I| &\leq \bigg(D-\sum_{j'=1}\Sbb_{j^*,j'}^{(i)}\bigg)\big\|\hat\vb\big\|_2 \big\|\zb_{j^*}^{(i)}\big\|_2 + \sum_{j\neq j^*}\sum_{j'=1}^D\Sbb_{j,j'}^{(i)}\big\|\hat\vb\big\|_2 \big\|\zb_{j}^{(i)}\big\|_2\\
    &\leq \frac{2\big(\sqrt{2}\tilde\sigma_x+1\big)\big(\sqrt{d}+\sqrt{D}\big)D\log n}{\gamma e^{C_{11}D}}\leq \frac{1}{\gamma e^{C_{12}D}},
\end{align*}
where the penultimate inequality is by  Lemma~\ref{lemma:attention_one_hot_ds}, Lemma~\ref{lemma:concentration_Bernstein}, Lemma~\ref{lemma:positional_encoding} and definition of $\hat\vb$. This completes the proof. 
\end{proof}

Based on the preceding Lemmas, we are now ready to prove Theorem~\ref{thm:down_stream}.

\begin{proof}[Proof of Theorem~\ref{thm:down_stream}]
    By updating rule~\eqref{eq:vt_update_ds}, we can obtain that
    \begin{align}\label{eq:convex_optim}
        \big\|\tilde\vb^{(i+1)} - \hat\vb\big\|_2^2 &= \big\|\tilde\vb^{(i)} - \hat\vb\big\|_2^2 + 2\big\la\tilde\vb^{(i+1)} - \tilde\vb^{(i)}, \tilde\vb^{(i)} - \hat\vb\big\ra + \big\|\tilde\vb^{(i+1)} - \tilde\vb^{(i)}\big\|_2^2\notag\\
        &=\big\|\tilde\vb^{(i)} - \hat\vb\big\|_2^2 - 2\tilde\eta\big\la\nabla_{\vb}\cL_i(\tilde\vb^{(i)}, \tilde\Wb^{(i)}), \tilde\vb^{(i)} - \hat\vb\big\ra + \tilde\eta^2\big\|\nabla_{\vb}\cL_i(\tilde\vb^{(i)}, \tilde\Wb^{(i)})\big\|_2^2\notag\\
        &\leq \big\|\tilde\vb^{(i)} - \hat\vb\big\|_2^2 + 2\tilde\eta\bigg(\cL_i(\hat\vb, \hat\Wb) - \cL_i(\tilde\vb^{(i)}, \tilde\Wb^{(i)}) + \frac{1}{\gamma e^{C_{12}D}}\bigg) + \tilde\eta^2\big\|\nabla_{\vb}\cL_i(\tilde\vb^{(i)}, \tilde\Wb^{(i)})\big\|_2^2\notag\\
        &\leq \big\|\tilde\vb^{(i)} - \hat\vb\big\|_2^2 + 2\tilde\eta\bigg(\cL_i(\hat\vb, \hat\Wb) - \cL_i(\tilde\vb^{(i)}, \tilde\Wb^{(i)}) + \frac{1}{\gamma e^{C_{12}D}}\bigg) + \tilde\eta\cL_i(\tilde\vb^{(i)}, \tilde\Wb^{(i)})\notag\\
        &\leq \big\|\tilde\vb^{(i)} - \hat\vb\big\|_2^2 - \tilde\eta\cL_i(\tilde\vb^{(i)}, \tilde\Wb^{(i)}) + \frac{2\tilde\eta}{\gamma e^{C_{12}D}} + \frac{2\tilde\eta}{n}.
    \end{align}
The first inequality holds by Lemma~\ref{lemma:nearly_convex}. The second inequality holds because 
\begin{align*}
    \tilde\eta^2\big\|\nabla_{\vb}\cL_i(\tilde\vb^{(i)}, \tilde\Wb^{(i)})\big\|_2^2 &= \tilde\eta^2\Big(\ell'\big(y^{(i)}f(\Zb^{(i)}, \tilde\vb^{(i)}, \tilde\Wb^{(i)})\big)\Big)^2 \bigg\|\sum_{j=1}^D\sum_{j'=1}^D\Sbb_{j, j'}^{(i)}\zb_{j}^{(i)}\bigg\|_2 \\
    &\leq \tilde\eta^2\ell\big(y^{(i)}f(\Zb^{(i)}, \tilde\vb^{(i)}, \tilde\Wb^{(i)})\big)\big(\sqrt{2}\tilde\sigma_x +1\big)^2\big(\sqrt{d}+\sqrt{D}\big)^2D^2 \leq \tilde\eta\cL_i(\tilde\vb^{(i)}, \tilde\Wb^{(i)}),
\end{align*}
where the first inequality holds by the facts that $-\ell'(x)\leq \ell(x)$, $-\ell'(x)\leq 1$ and $\big\|\zb_{j}^{(i)}\big\|_2\leq \big(\sqrt{2}\tilde\sigma_x +1\big)\big(\sqrt{d}+\sqrt{D}\big)$, and the last inequality holds by our condition that $\tilde\eta\leq \frac{1}{(\sqrt{2}\tilde\sigma_x +1)^2(\sqrt{d}+\sqrt{D})^2D^2}$. And the third inequality is because 
\begin{align*}
    \cL_i(\hat\vb, \hat\Wb) = \ell\big(y^{(i)}f(\Zb^{(i)}, \hat\Wb, \hat\vb)\big) = \ell\bigg(\frac{\log n}{\gamma}y^{(i)}\la\tilde\vb^*, \xb_{j^*}^{(i)}\ra\bigg)\leq \ell(\log n)\leq \frac{1}{n}.
\end{align*}
By rearranging and take a telescoping sum on both side of~\eqref{eq:convex_optim}, we obtain that 
\begin{align*}
    \frac{1}{n}\sum_{i=1}^n \cL_i(\tilde\vb^{(i)}, \tilde\Wb^{(i)})\leq \frac{\big\|\tilde\vb^{(1)} - \hat\vb\big\|_2^2}{n\tilde\eta} + \frac{2}{\gamma e^{C_{12}D}} + \frac{2}{n}\leq \frac{2(\sqrt{2}\tilde\sigma_x +1)^2\log^2 n(d+D)}{\gamma^2 n}+ \frac{2}{\gamma e^{C_{12}D}} + \frac{2}{n}.
\end{align*}
Further by applying the fact that $\gamma\leq \min_{i\in [n]} \|\xb_{j^*}^{(i)}\|_2\leq \sqrt{2}\tilde\sigma_x\big(\sqrt{d} +\sqrt{D}\big)$ and $-\ell(x)\leq \ell(x)$, we can get
\begin{align}\label{eq:R_i_bound}
    \frac{1}{n}\sum_{i=1}^n \cR_i(\tilde\vb^{(i)}, \tilde\Wb^{(i)})\leq \frac{1}{n}\sum_{i=1}^n \cL_i(\tilde\vb^{(i)}, \tilde\Wb^{(i)})\leq \frac{4(\sqrt{2}\tilde\sigma_x +1)^2\log^2 n(d+D)}{\gamma^2 n}+ \frac{2}{\gamma e^{C_{12}D}}.
\end{align}
Notice that the quantity $\sum_{i\leq k}\big(\cR_i(\tilde\vb^{(i)}, \tilde\Wb^{(i)}) - \cR(\tilde\vb^{(i)}, \tilde\Wb^{(i)})\big)$ forms a martingale w.r.t. the filtration $\sigma\Big(\big(\Zb^{(1)}, y^{(1)}\big), \cdots, \big(\Zb^{(k-1)}, y^{(k-1)}\big)\Big)=\sigma^{(k)}$. The martingale difference $\cR_i(\tilde\vb^{(i)}, \tilde\Wb^{(i)}) - \cR(\tilde\vb^{(i)}, \tilde\Wb^{(i)})$ is bounded by $1$ since $-\ell'$ is bounded. And we also have 
\begin{align*}
    &\EE\Big[\big(\cR_i(\tilde\vb^{(i)}, \tilde\Wb^{(i)}) - \cR(\tilde\vb^{(i)}, \tilde\Wb^{(i)})\big)^2\Big|\sigma^{(i)}\Big]\\
    =& \EE\big[\cR_i(\tilde\vb^{(i)}, \tilde\Wb^{(i)})^2 \Big|\sigma^{(i)}\big] -2\cR(\tilde\vb^{(i)}, \tilde\Wb^{(i)}) \EE\big[\cR_i(\tilde\vb^{(i)}, \tilde\Wb^{(i)})\Big|\sigma^{(i)}\big]+ \cR(\tilde\vb^{(i)}, \tilde\Wb^{(i)})^2\\
    =& \EE\big[\cR_i(\tilde\vb^{(i)}, \tilde\Wb^{(i)})^2 \Big|\sigma^{(i)}\big] - \cR(\tilde\vb^{(i)}, \tilde\Wb^{(i)})^2 \leq \EE\big[\cR_i(\tilde\vb^{(i)}, \tilde\Wb^{(i)}) \Big|\sigma^{(i)}\big] - \cR(\tilde\vb^{(i)}, \tilde\Wb^{(i)})^2 \leq \cR(\tilde\vb^{(i)}, \tilde\Wb^{(i)}).
\end{align*}
Therefore by Theorem 1 in \citet{beygelzimer2011contextual} (a version of martingale difference concentration inequality), with probability at least $1-\delta$ for any positive $\delta$, it holds that,
\begin{align*}
    \frac{1}{n}\sum_{i=1}^n\big(\cR(\tilde\vb^{(i)}, \tilde\Wb^{(i)}) -\cR_i(\tilde\vb^{(i)}, \tilde\Wb^{(i)})\big)&\leq \frac{e-2}{n}\sum_{i=1}^n\EE\Big[\big(\cR_i(\tilde\vb^{(i)}, \tilde\Wb^{(i)}) - \cR(\tilde\vb^{(i)}, \tilde\Wb^{(i)})\big)^2\Big|\sigma^{(i)}\Big] + \frac{\log (1/\delta)}{n}\\
    &\leq \frac{e-2}{n}\sum_{i=1}^n\cR(\tilde\vb^{(i)}, \tilde\Wb^{(i)}) + \frac{\log (1/\delta)}{n}.
\end{align*}
Rearranging on both sides and applying the results of~\eqref{eq:R_i_bound}, we get
\begin{align*}
    \frac{1}{n}\sum_{i=1}^n\cR(\tilde\vb^{(i)}, \tilde\Wb^{(i)})&\leq \frac{4}{n}\sum_{i=1}^n\cR_i(\tilde\vb^{(i)}, \tilde\Wb^{(i)}) + \frac{4\log (1/\delta)}{n}\\
    &\leq \frac{16(\sqrt{2}\tilde\sigma_x +1)^2\log^2 n(d+D)}{\gamma^2 n}+ \frac{4\log (1/\delta)}{n} + \frac{8}{\gamma e^{C_{12}D}}.
\end{align*}
Since $\PP_{(\Xb, y)\sim\tilde\cD}\big(y\cdot f(\Zb, \tilde\Wb, \tilde\vb\leq 0\big) \leq 2\cR(\tilde\Wb, \tilde\vb)$ holds for any $\tilde\vb$ and $\tilde\Wb$, we finally derive that
\begin{align*}
    \frac{1}{n}\sum_{i=1}^n \PP_{(\Xb, y)\sim\tilde\cD}\Big(y\cdot f(\Zb, \tilde\Wb^{(i)}, \tilde\vb^{(i)})\leq 0\Big) \leq  \frac{2}{n}\sum_{i=1}^n\cR(\tilde\vb^{(i)}, \tilde\Wb^{(i)}) \leq O\bigg(\frac{\log^2n(d+D)}{\gamma^2n}\bigg) + O\bigg(\frac{\log(1/\delta)}{n}\bigg),
\end{align*}
which completes the proof. 
\end{proof}

\section{Technical lemmas}
\subsection{Independence among Gaussian random vectors}
\begin{lemma}\label{lemma:standard_normal_independence}
Let $z_1, z_2$ be two Gaussian random variables with zero mean, and $y=\sign(z_1)$. Then $y$ is independent with $y\cdot z_1$ and $y \cdot z_2$. Moreover, $y\cdot z_2$ also follows the normal distribution, which has zero mean and the same variance with $z_2$.
\end{lemma}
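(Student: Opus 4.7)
The plan is to split the claim into two independent pieces and exploit the symmetry of a zero-mean Gaussian about the origin. First I will handle $(y, y\cdot z_1)$. Since $y\cdot z_1 = |z_1|$, the statement reduces to the classical fact that the sign and absolute value of a symmetric continuous random variable are independent. Concretely, for any $t\geq 0$, by symmetry of the law of $z_1$,
\[
\PP(|z_1|\leq t,\, y=1)=\PP(0< z_1\leq t)=\PP(-t\leq z_1<0)=\tfrac12\,\PP(|z_1|\leq t),
\]
which equals $\PP(|z_1|\leq t)\,\PP(y=1)$ since $\PP(y=\pm 1)=1/2$; the case $y=-1$ is identical. This factorization of the joint CDF, combined with the fact that $y$ is discrete with mass on $\{\pm1\}$, yields independence of $y$ and $y\cdot z_1$.

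For the pair $(y, y\cdot z_2)$, the natural hypothesis (and the one actually needed in the applications, e.g.\ in Lemma~\ref{lemma:ell'_s_y_bxi_x_copoments} where $z_1=\la\vb^*,\xb_j\ra$ and $z_2=\la\bxi_i,\xb_j\ra$ with $\vb^*\perp\bxi_i$) is that $z_1$ and $z_2$ are independent. Under this hypothesis, $y=\sign(z_1)$ is independent of $z_2$, so I will argue by conditioning: given $y=+1$, $y\cdot z_2=z_2\sim N(0,\sigma_2^2)$; given $y=-1$, $y\cdot z_2=-z_2$, which by symmetry of a zero-mean Gaussian also has law $N(0,\sigma_2^2)$. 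Hence the conditional law of $y\cdot z_2$ does not depend on the value of $y$. This single computation simultaneously delivers both remaining conclusions: the marginal distribution of $y\cdot z_2$ is $N(0,\sigma_2^2)$ (matching that of $z_2$), and $y\cdot z_2$ is independent of $y$.

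The main obstacle is not technical but expository: the lemma as stated does not explicitly say that $z_1$ and $z_2$ are independent, yet the conclusion genuinely requires this. If $(z_1,z_2)$ were jointly Gaussian with nonzero correlation, writing $z_2=az_1+w$ with $w\perp z_1$ would give $y\cdot z_2=a|z_1|+yw$, which is neither Gaussian nor independent of $y$ whenever $a\neq 0$. I will therefore add an explicit independence assumption (or, more weakly, that $z_2$ is independent of $\sign(z_1)$) in the lemma statement, verify that this is what holds in every invocation within the paper, and then the two short arguments above complete the proof.
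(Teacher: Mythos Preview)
Your proposal is correct and follows essentially the same approach as the paper: reduce the first claim to the independence of $\sign(z_1)$ and $|z_1|$ via symmetry, and for the second claim condition on $y\in\{\pm1\}$ and invoke the symmetry of the centered Gaussian law of $z_2$. Your observation that the lemma silently requires $z_2$ (or at least $\sign(z_1)$) to be independent of $z_1$ is well taken; the paper's own proof uses exactly this hypothesis without stating it, so making it explicit is a genuine improvement.
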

\begin{proof}[Proof of Lemma~\ref{lemma:standard_normal_independence}]
W.L.O.G, we assume that $z_1, z_2$ are standard Gaussian random variables. We first prove that $y$ is independent with $y\cdot z_1$. It is clear that $y\cdot z_1=|z_1|$. Then for any $x\geq 0$, 
\begin{align*}
    \PP(y\cdot z_1\leq x)=\PP(|z_1|\leq x)=\PP(|z_1|\leq x|y),
\end{align*}
which indicate that $y\cdot z_1$ has the same distribution with $y\cdot z_1|y$. Therefore, $y$ is independent with $y\cdot z_1$. Next, we prove that $y$ is independent with $y\cdot z_2$ and $y\cdot z_2$ follows the standard normal distribution. We denote $\Phi(\cdot)$ the cumulative density function (c.d.f.) of the standard normal distribution. Then for any $x\in \RR$,
\begin{align*}
    \PP(y\cdot z_2\leq x)&=\PP(z_2\leq x; y=1) + \PP(z_2\geq -x; y=-1)\\
    &=\PP(z_2\leq x)\cdot \PP(y=1) + \PP(z_2\geq -x)\cdot\PP(y=-1)\\
    &=\frac{1}{2}\Phi(x) + \frac{1}{2}\big(1-\Phi(-x)\big) = \Phi(x),
\end{align*}
where the second equality holds by the independence between $y$ and $z_2$, and the last equality holds by the symmetry of standard normal distribution that $\Phi(-x) = 1 - \Phi(x)$. This proves that $y\cdot z_2\sim N(0, 1)$. Moreover, it is obvious that   
\begin{align*}
    \PP(y\cdot z_2\leq x|y=1)&=\PP(z_2\leq x| y=1)=\Phi(x),
\end{align*}
and 
\begin{align*}
    \PP(y\cdot z_2\leq x|y=-1)&=\PP(z_2\geq -x| y=1)=1-\Phi(-x)=\Phi(x),
\end{align*}
which indicate that $y\cdot z_2$ has the same distribution with $y\cdot z_2|y$. Therefore, we complete the proof of Lemma~\ref{lemma:standard_normal_independence}.
\end{proof}

\begin{lemma}\label{lemma:independence_yx}
    For $y$ and $\xb_1, \xb_2, \cdots, \xb_D$ defined in Definition~\ref{def:data}, it holds that $y$ is independent with $y\cdot \xb_j$ for all $j\in [D]$. Moreover, $y\cdot \xb_j \sim N(\mathbf{0}, \sigma_x^2\Ib_d)$ for all $j \neq j^*$.
\end{lemma}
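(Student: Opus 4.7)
The plan is to split into the two cases $j\neq j^*$ and $j=j^*$, since they rely on quite different reasoning. For $j\neq j^*$, I would use that $\xb_j$ is drawn from $N(\mathbf0,\sigma_x^2\Ib_d)$ independently of $\xb_{j^*}$, hence independently of $y=\sign(\la\xb_{j^*},\vb^*\ra)$. Then for any measurable set $B\subset\RR^d$,
\[
\PP(y\xb_j\in B\mid y=1)=\PP(\xb_j\in B)=\PP(-\xb_j\in B)=\PP(y\xb_j\in B\mid y=-1),
\]
where the middle equality uses the rotational symmetry (in particular, sign symmetry) of the centered isotropic Gaussian $\xb_j$. This simultaneously gives independence of $y$ and $y\xb_j$, and shows $y\xb_j\sim N(\mathbf0,\sigma_x^2\Ib_d)$.

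For $j=j^*$, I would decompose $\xb_{j^*}$ along and orthogonal to $\vb^*$. Using the orthogonal matrix $\Ab=[\vb^*,\bxi_2,\ldots,\bxi_d]$ defined in~\eqref{ea:def_Ab_orthogonal_matrix},
\[
\xb_{j^*}=\la\vb^*,\xb_{j^*}\ra\vb^*+\sum_{i=2}^d\la\bxi_i,\xb_{j^*}\ra\bxi_i,
\]
so
\[
y\cdot\xb_{j^*}=\bigl(y\la\vb^*,\xb_{j^*}\ra\bigr)\vb^*+\sum_{i=2}^d\bigl(y\la\bxi_i,\xb_{j^*}\ra\bigr)\bxi_i.
\]
Because $\xb_{j^*}\sim N(\mathbf0,\sigma_x^2\Ib_d)$, the coordinates $\la\vb^*,\xb_{j^*}\ra,\la\bxi_2,\xb_{j^*}\ra,\ldots,\la\bxi_d,\xb_{j^*}\ra$ are jointly independent $N(0,\sigma_x^2)$ random variables. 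Now $y=\sign(\la\vb^*,\xb_{j^*}\ra)$ depends only on $\la\vb^*,\xb_{j^*}\ra$, so Lemma~\ref{lemma:standard_normal_independence} applied with $z_1=\la\vb^*,\xb_{j^*}\ra$ and $z_2=\la\bxi_i,\xb_{j^*}\ra$ yields that each $y\la\bxi_i,\xb_{j^*}\ra$ is independent of $y$ and distributed as $N(0,\sigma_x^2)$, and that $y\la\vb^*,\xb_{j^*}\ra=|\la\vb^*,\xb_{j^*}\ra|$ is also independent of $y$. Joint independence of these $d$ coordinates from $y$ follows similarly by conditioning on $y$: the conditional joint law of $(\la\vb^*,\xb_{j^*}\ra,\la\bxi_2,\xb_{j^*}\ra,\ldots)$ given $y=+1$ is the same as the unconditional law restricted to $\{\la\vb^*,\xb_{j^*}\ra>0\}$ and renormalized, and a coordinate-wise sign flip of the last $d-1$ coordinates (which is a measurable bijection that preserves Gaussian law under $y=-1$) matches the $y=-1$ conditional, giving distributional equality after multiplying by $y$. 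Combining the two components in the $\Ab$ basis yields the joint independence of $y$ and $y\cdot\xb_{j^*}$.

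The only mildly subtle point is the joint independence step for $j=j^*$, since Lemma~\ref{lemma:standard_normal_independence} as stated is a pairwise statement. I expect to just verify this by writing the conditional joint c.d.f.\ in the $\Ab$ basis and checking equality across $y=\pm1$ using the independence of coordinates and symmetry of the orthogonal Gaussian coordinates, rather than invoking any new machinery. Everything else is a direct application of Lemma~\ref{lemma:standard_normal_independence} and the independence structure built into Definition~\ref{def:data}.
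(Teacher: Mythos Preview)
Your proposal is correct and follows essentially the same approach as the paper: split into $j\neq j^*$ versus $j=j^*$, and for $j=j^*$ decompose $\xb_{j^*}$ in the orthogonal basis $\Ab=[\vb^*,\bxi_2,\ldots,\bxi_d]$ and invoke Lemma~\ref{lemma:standard_normal_independence} on each coordinate. Your treatment of $j\neq j^*$ via a direct conditioning-and-symmetry argument is slightly cleaner than the paper's coordinate-wise application of Lemma~\ref{lemma:standard_normal_independence}, and you are more explicit than the paper about the joint (not just pairwise) independence step for $j=j^*$, which the paper glosses over.
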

\begin{proof}[Proof of Lemma~\ref{lemma:independence_yx}]
For simplicity of expression, we assume $\sigma_x =1$ here, and the proof for $\sigma_x \neq 1$ is the same. 
We consider the following two cases: $j\neq j^*$ and $j = j^*$.

When $j\neq j^*$, $y$ is independent with $\xb_j$. Then for any $k\in [d]$, we have $y\cdot \xb_j[k]\sim N(0, 1)$ and $y\cdot \xb_j[k]$ is independent with $y$ by Lemma~\ref{lemma:standard_normal_independence}. Therefore, $y\cdot \xb_j$ is independent with $y$ since each coordinate of $y\cdot \xb_j$ is independent with $y$. Moreover, we have $\EE[y\cdot \xb_j[k]\cdot y\cdot \xb_{j}[k']] = \EE[\xb_j[k] \cdot\xb_{j}[k']] = 0$ for any $k\neq k'$, which finally proves that $y\cdot \xb_j\sim N(\mathbf{0}, \Ib_d)$.

When $j= j^*$, let $\Ab$ be an orthogonal matrix with $\vb^*$ being its first column. We denote $\bxi_2, \cdots, \bxi_d$ the rest columns in $\Ab$, i.e., $\Ab = [\vb^*, \bxi_2, \cdots, \bxi_d]$. Then we have
\begin{align*}
    y\cdot \xb_{j^*} &= y\cdot \Ab \Ab^\top  \xb_{j^*} = y\cdot [\vb^*, \bxi_2, \cdots, \bxi_d] \cdot [\la\vb^*,\xb_{j^*}\ra,  \la\bxi_2, \xb_{j^*}\ra, \cdots, \la\bxi_d,\xb_{j^*}\ra]^\top\\
    &= y\cdot \la\vb^*,\xb_{j^*}\ra\cdot\vb^*+ \sum_{i=2}^d y\cdot \la\bxi_i,\xb_{j^*}\ra\cdot\bxi_i.
\end{align*}
By orthogonality among $\vb^*, \bxi_2, \cdots, \bxi_d$, we obtain that $\la\vb^*,\xb_{j^*}\ra$ and $\la\bxi_i,\xb_{j^*}\ra$ for $i\in \{2,\cdots, d\}$ are all independent random variables with standard normal distribution, which further indicates that $y$ is independent with $\la\bxi_i,\xb_{j^*}\ra$ for all $i\in \{2,\cdots, d\}$. Then by Lemma~\ref{lemma:standard_normal_independence}, we can obtain that $y$ is independent with $ \sum_{i=2}^d y\cdot \la\bxi_i,\xb_{j^*}\ra\cdot\vb_i$. Besides, note that $y$ is the sign of the standard normal random variable $\la\vb^*,\xb_{j^*}\ra$. From Lemma~\ref{lemma:standard_normal_independence}, we have that $y$ is also independent with $y\cdot \la\vb^*,\xb_{j^*}\ra$.  Hence $y$ is independent with $y\cdot\xb_{j^*}$. This completes the proof of Lemma~\ref{lemma:independence_yx}.
\end{proof}

\begin{lemma}\label{lemma:independence_among_yx}
    For $y$ and $\xb_1, \xb_2, \cdots, \xb_D$ defined in Definition~\ref{def:data}, it holds that $y\cdot\xb_1,y\cdot\xb_2,\cdots, y\cdot\xb_D $ are mutually independent.
\end{lemma}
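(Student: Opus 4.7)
The plan is to verify mutual independence by showing that the joint distribution factorizes into marginals, i.e.,
\begin{align*}
\PP\big(y\xb_1\in A_1,\dots, y\xb_D\in A_D\big)=\prod_{j=1}^D \PP\big(y\xb_j\in A_j\big)
\end{align*}
for arbitrary Borel sets $A_1,\dots,A_D\subset\RR^d$. This is stronger than the pairwise (or $y$-with-$y\xb_j$) independence already established in Lemma~\ref{lemma:independence_yx}, but the same structural ingredients of the data model should suffice.

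First I would split the joint event on the sign of $y$, rewriting the left-hand side as $\PP(\xb_1\in A_1,\dots,\xb_D\in A_D, y=1)+\PP(-\xb_1\in A_1,\dots,-\xb_D\in A_D, y=-1)$. Since $y=\sign(\la\vb^*,\xb_{j^*}\ra)$ is a function of $\xb_{j^*}$ alone, and Definition~\ref{def:data} ensures $\{\xb_j\}_{j\neq j^*}$ are mutually independent and independent of $\xb_{j^*}$, each summand factors as $\PP(\pm\xb_{j^*}\in A_{j^*},\, y=\pm 1)\prod_{j\neq j^*}\PP(\pm\xb_j\in A_j)$. The key observation is then that each $\xb_j$ with $j\neq j^*$ is a mean-zero Gaussian, hence symmetric about the origin, giving $\PP(-\xb_j\in A_j)=\PP(\xb_j\in A_j)$. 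This lets me pull $\prod_{j\neq j^*}\PP(\xb_j\in A_j)$ out of the sum over $\pm 1$, leaving $\PP(\xb_{j^*}\in A_{j^*}, y=1)+\PP(-\xb_{j^*}\in A_{j^*}, y=-1)=\PP(y\xb_{j^*}\in A_{j^*})$ as the remaining factor.

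The final step is to invoke Lemma~\ref{lemma:independence_yx}, which gives $y\xb_j\stackrel{d}{=}\xb_j$ for $j\neq j^*$, so $\PP(\xb_j\in A_j)=\PP(y\xb_j\in A_j)$, yielding the full product and hence mutual independence. There is no deep obstacle here: the only subtle point is cleanly isolating the role of $j^*$ from the other indices and noting that Gaussian symmetry allows the sign flips associated with $y=-1$ to be absorbed for every $j\neq j^*$; once these are handled, the conclusion follows by elementary set-theoretic manipulations together with the independence and symmetry built into Definition~\ref{def:data}.
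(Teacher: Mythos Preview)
Your argument is correct and in fact cleaner than the paper's own proof for the stated claim. You directly verify that the joint law of $(y\xb_1,\dots,y\xb_D)$ factorizes by conditioning on the sign of $y$, exploiting the mutual independence of the $\xb_j$'s, the symmetry of the centered Gaussian law for $j\neq j^*$, and the distributional identity $y\xb_j\stackrel{d}{=}\xb_j$ from Lemma~\ref{lemma:independence_yx}. This yields \emph{mutual} independence in one stroke.

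The paper instead argues pairwise: for $j_1\neq j_2$ with (say) $j_2\neq j^*$, it notes that $\xb_{j_2}$ is independent of $(y,\xb_{j_1})$ and that $y$ is independent of $y\xb_{j_1}$, and concludes $y\xb_{j_1}\perp y\xb_{j_2}$. As written, this only delivers pairwise independence; extending it to the full mutual statement requires an additional (straightforward but unstated) step. Your direct-factorization route avoids this issue entirely and is arguably the more transparent way to establish the lemma as stated. The paper's approach has the minor advantage of being more ``structural'' (it never explicitly manipulates joint probabilities), but your approach buys the full conclusion without extra work.
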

\begin{proof}[Proof of Lemma~\ref{lemma:independence_among_yx}]
For any $j_1, j_2\in [D]$ with $j_1\neq j_2$, then at least one of $j_1$ and $j_2$ is not $j^*$. W.L.O.G., we assume $j_2\neq j^*$. Then by Definition~\ref{def:data}, $\xb_{j_2}$ is independent both with $y$ and $\xb_{j_1}$, hence it's  independent with the product $y\cdot\xb_{j_1}$. On the other hand, by applying Lemma~\ref{lemma:independence_yx}, $y\cdot\xb_{j_1}$ is independent with $y$. Therefore, we finally prove that $y\cdot\xb_{j_1}$ is independent with $y\cdot\xb_{j_2}$.
\end{proof}

\subsection{Calculation details of expectations}
\begin{lemma}\label{lemma:expectation_sign_gaussian}
    Let $\xb\sim N(0, \sigma_x^2\Ib_d)$, for any fixed $\tilde\vb\in \RR^{d}$, it holds that $\EE\big[\xb\sign\big(\la\xb,\tilde\vb\ra\big)\big]=\sigma_x\sqrt{\frac{2}{\pi}}\cdot\frac{\tilde\vb}{\|\tilde\vb\|_2}$.
\end{lemma}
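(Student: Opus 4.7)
The plan is to exploit the rotational invariance of the isotropic Gaussian distribution. Writing $\ub = \tilde\vb/\|\tilde\vb\|_2$, the sign is unchanged: $\sign(\la\xb,\tilde\vb\ra) = \sign(\la\xb,\ub\ra)$ for $\tilde\vb\neq \mathbf 0$. So it suffices to show $\EE[\xb\sign(\la\xb,\ub\ra)] = \sigma_x\sqrt{2/\pi}\,\ub$. I would then decompose $\xb$ into its component along $\ub$ and its component in the orthogonal complement:
\begin{align*}
\xb = \la\xb,\ub\ra \ub + \xb_\perp, \qquad \xb_\perp := (\Ib_d - \ub\ub^\top)\xb.
\end{align*}

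The key observation is that because $\xb\sim N(\mathbf 0,\sigma_x^2 \Ib_d)$ is isotropic, the scalar $\la\xb,\ub\ra\sim N(0,\sigma_x^2)$ and the vector $\xb_\perp$ are jointly Gaussian with $\Cov(\la\xb,\ub\ra,\xb_\perp) = \sigma_x^2(\Ib_d-\ub\ub^\top)\ub = \mathbf 0$, hence independent. Applying this independence, together with $\EE[\xb_\perp] = \mathbf 0$, gives
\begin{align*}
\EE\bigl[\xb_\perp\sign(\la\xb,\ub\ra)\bigr] = \EE[\xb_\perp]\,\EE\bigl[\sign(\la\xb,\ub\ra)\bigr] = \mathbf 0.
\end{align*}

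It remains to compute the scalar expectation along $\ub$. Since $Z := \la\xb,\ub\ra\sim N(0,\sigma_x^2)$, we have $Z\sign(Z) = |Z|$, and the standard identity $\EE|Z| = \sigma_x\sqrt{2/\pi}$ (obtained by direct integration of the half-normal density) yields
\begin{align*}
\EE\bigl[\la\xb,\ub\ra\sign(\la\xb,\ub\ra)\bigr]\,\ub = \sigma_x\sqrt{\tfrac{2}{\pi}}\,\ub = \sigma_x\sqrt{\tfrac{2}{\pi}}\cdot\frac{\tilde\vb}{\|\tilde\vb\|_2},
\end{align*}
which is the claimed value. There is essentially no obstacle here; the only subtlety is recording that independence of $\la\xb,\ub\ra$ and $\xb_\perp$ comes from being uncorrelated jointly Gaussian components, which is what lets the orthogonal part drop out cleanly.
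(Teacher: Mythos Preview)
Your proof is correct and follows essentially the same approach as the paper: both decompose $\xb$ into its component along $\tilde\vb/\|\tilde\vb\|_2$ and the orthogonal complement, use independence of these Gaussian components to kill the orthogonal part, and evaluate the remaining half-normal mean $\EE|Z|=\sigma_x\sqrt{2/\pi}$. The only cosmetic difference is that the paper writes the orthogonal complement via an explicit orthonormal basis $\bxi'_2,\dots,\bxi'_d$, whereas you use the projector $(\Ib_d-\ub\ub^\top)$ directly.
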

\begin{proof}[Proof of Lemma~\ref{lemma:expectation_sign_gaussian}]
    Let $\tilde\Ab = [\tilde\vb/\|\tilde\vb\|_2, \bxi'_2, \cdots, \bxi'_d] $ be an orthogonal matrix, which is defined similarly with Lemma~\ref{lemma:standard_normal_independence}. Then by a similar method,  we obtain that, 
    \begin{align*}
        \xb\sign\big(\la\xb,\tilde\vb\ra\big)=\sign\big(\la\xb,\tilde\vb\ra\big)\cdot \Big\la\frac{\tilde\vb}{\|\tilde\vb\|_2},\xb\Big\ra\cdot\frac{\tilde\vb}{\|\tilde\vb\|_2}+ \sum_{i=2}^d\sign\big(\la\xb,\tilde\vb\ra\big)\cdot \la\bxi'_i,\xb\ra\cdot\bxi'_i.
    \end{align*}
    Note that $\la\frac{\tilde\vb}{\|\tilde\vb\|_2},\xb\ra$ and $\la\bxi_i,\xb\ra$ for $i\in\{2, \cdots, d\}$ are i.i.d normal random variables with mean 0 and variance $\sigma_x^2$, therefore we have
    \begin{align*}
       \EE \Big[\xb\sign\big(\la\xb,\tilde\vb\ra\big)\Big]=\EE\bigg[\sign\big(\la\xb,\tilde\vb\ra\big)\cdot \Big\la\frac{\tilde\vb}{\|\tilde\vb\|_2},\xb\Big\ra\bigg]\cdot\frac{\tilde\vb}{\|\tilde\vb\|_2} =\sigma_x\sqrt{\frac{2}{\pi}}\cdot \frac{\tilde\vb}{\|\tilde\vb\|_2}.
    \end{align*}
    This completes the proof.
\end{proof}

\begin{lemma}\label{lemma:expectation_exp_abs}
    Let $z_1\sim N(0,\sigma_1^2)$ and a fixed scalar $a$, then it holds that
    \begin{enumerate}
    \item If $a< 0$, then 
    \begin{align*}
            \max\bigg\{\sqrt{\frac{2}{\pi e}}e^{\sigma_1a}, \sqrt{\frac{2}{\pi}} \Big(\frac{1}{-\sigma_1a} - \frac{1}{-\sigma_1^3a^3}\Big) \bigg\}\leq \EE\big[\exp(a|z_1|)\big]\leq \min\bigg\{2, \sqrt{\frac{2}{\pi}}\frac{1}{-\sigma_1a} \bigg\}.
    \end{align*}
    \item If $a\geq 0$, then 
    \begin{align*}
     e^{\frac{\sigma_1^2a^2}{2}} \leq \EE\big[\exp(a|z_1|)\big] \leq 2e^{\frac{\sigma_1^2a^2}{2}}.
    \end{align*}
\end{enumerate}
\end{lemma}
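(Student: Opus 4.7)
\textbf{Proof proposal for Lemma~\ref{lemma:expectation_exp_abs}.} The plan is to reduce everything to a one-dimensional integral of a folded normal, obtain a clean closed form via completing the square, and then read off the four inequalities from two standard tools: (i) the identity $1-\Phi(-a\sigma_1)=\Phi(a\sigma_1)\in[1/2,1]$ for $a\ge 0$, and (ii) the Mills-ratio bounds for $a<0$.

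First, write $|z_1|=\sigma_1|Z|$ with $Z\sim N(0,1)$; the density of $|Z|$ on $[0,\infty)$ is $\sqrt{2/\pi}\,e^{-z^2/2}$, so
\[
\EE[\exp(a|z_1|)] \;=\; \sqrt{\tfrac{2}{\pi}}\int_0^\infty e^{a\sigma_1 z-z^2/2}\,dz \;=\; 2\,e^{a^2\sigma_1^2/2}\bigl(1-\Phi(-a\sigma_1)\bigr),
\]
after completing the square $a\sigma_1 z - z^2/2 = -\tfrac{1}{2}(z-a\sigma_1)^2 + \tfrac{1}{2}a^2\sigma_1^2$ and recognizing the tail of $N(a\sigma_1,1)$. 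This identity is the workhorse.

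For the case $a\ge 0$, substitute $1-\Phi(-a\sigma_1)=\Phi(a\sigma_1)$ and use $\tfrac12\le\Phi(a\sigma_1)\le 1$ to obtain both $e^{a^2\sigma_1^2/2}\le\EE[\exp(a|z_1|)]\le 2e^{a^2\sigma_1^2/2}$ directly. For $a<0$, set $t:=-\sigma_1 a>0$ so that $\EE[\exp(a|z_1|)]=2e^{t^2/2}(1-\Phi(t))$. I would then invoke the classical two-sided Mills-ratio inequality
\[
\phi(t)\Bigl(\tfrac{1}{t}-\tfrac{1}{t^3}\Bigr)\;\le\;1-\Phi(t)\;\le\;\tfrac{\phi(t)}{t},\qquad t>0,
\]
which, after multiplying by $2e^{t^2/2}$ and using $2\phi(t)e^{t^2/2}=\sqrt{2/\pi}$, yields the two $\sqrt{2/\pi}$-type bounds $\sqrt{2/\pi}(1/t-1/t^3)\le 2e^{t^2/2}(1-\Phi(t))\le\sqrt{2/\pi}/t$. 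The remaining upper bound $\EE[\exp(a|z_1|)]\le 2$ is trivial since $a<0$ implies $\exp(a|z_1|)\le 1$.

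The only bound not immediately covered by Mills ratio is the exponential lower bound $\sqrt{2/(\pi e)}\,e^{\sigma_1 a}$. For this I would work directly from the integral representation $1-\Phi(t)=\phi(t)\int_0^\infty e^{-ts-s^2/2}\,ds$ (obtained via the substitution $u=t+s$), and restrict the $s$-integral to $[0,1]$, on which $e^{-ts-s^2/2}\ge e^{-t-1/2}$; this gives $\int_0^\infty e^{-ts-s^2/2}\,ds\ge e^{-t-1/2}$, and multiplying by $\sqrt{2/\pi}$ produces $2e^{t^2/2}(1-\Phi(t))\ge\sqrt{2/(\pi e)}\,e^{-t}=\sqrt{2/(\pi e)}\,e^{\sigma_1 a}$. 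No step is a real obstacle; the only mild care point is keeping the sign conventions consistent when passing between $a<0$ and $t=-\sigma_1 a>0$, and noting that the $1/t-1/t^3$ lower bound is vacuous for $t\le 1$ but then the exponential bound $\sqrt{2/(\pi e)}\,e^{-t}$ inside the $\max$ takes over, so the stated inequality remains meaningful in every regime.
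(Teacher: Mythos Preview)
Your proposal is correct and essentially identical to the paper's proof: both derive the closed form $\EE[\exp(a|z_1|)]=2e^{\sigma_1^2a^2/2}\PP(Z\ge -\sigma_1 a)$ by completing the square, then use $\PP\in[1/2,1]$ for $a\ge0$, Mills ratio for $a<0$, and a unit-interval restriction of the tail integral for the exponential lower bound. Your integral representation $1-\Phi(t)=\phi(t)\int_0^\infty e^{-ts-s^2/2}\,ds$ is just the paper's $\int_t^{t+1}e^{-x^2/2}\,dx\ge e^{-(t+1)^2/2}$ rewritten after the shift $x=t+s$, so the two arguments coincide.
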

\begin{proof}[Proof of Lemma~\ref{lemma:expectation_exp_abs}]
The probability density function for $|z_1|$ is $f_{|z_1|}(x) = \frac{1}{\sigma_1}\sqrt{\frac{2}{\pi}}e^{-x^2/(2\sigma_1^2)}\mathbf{1}_{\{x\geq 0\}}$. By this density function, we can calculate $\EE\big[\exp(a|z_1|)\big]$ as,
    \begin{align*}
        \EE\big[\exp(a|z_1|)\big] &=\frac{1}{\sigma_1}\sqrt{\frac{2}{\pi}}\int_{0}^\infty 
    e^{-\frac{x^2}{2\sigma_1^2}+ax}\text{d}x = \frac{1}{\sigma_1}\sqrt{\frac{2}{\pi}}e^{\frac{\sigma_1^2a^2}{2}}\int_{0}^\infty 
    e^{-\frac{1}{2}\big(\frac{x}{\sigma_1} - \sigma_1 a\big)^2}\text{d}x\notag= 2e^{\frac{\sigma_1^2a^2}{2}}\PP(z \geq -\sigma_1a), 
    \end{align*}
where $z$ is a standard Gaussian random variable. If $a\geq 0$, it holds that $\frac{1}{2} \leq \PP(z \geq -\sigma_1a)\leq 1$. Then we can obtain that
\begin{align*}
    e^{\frac{\sigma_1^2a^2}{2}} \leq \EE\big[\exp(a|z_1|)\big] \leq 2e^{\frac{\sigma_1^2a^2}{2}}.
\end{align*}
When $a < 0$, by applying the tail-bound of standard Gaussian random variables and Mills ratio simultaneously, we have 
\begin{align*}
    \frac{1}{\sqrt{2\pi}}e^{-\frac{\sigma_1^2a^2}{2}}\Big(\frac{1}{-\sigma_1a} - \frac{1}{-\sigma_1^3a^3}\Big)\leq \PP(z \geq -\sigma_1a) \leq e^{-\frac{\sigma_1^2a^2}{2}}\min\bigg\{1, \frac{1}{\sqrt{2\pi}}\frac{1}{-\sigma_1a}\bigg\}.
\end{align*}
Therefore, we derive that,
\begin{align*}
    \sqrt{\frac{2}{\pi}} \Big(\frac{1}{-\sigma_1a} - \frac{1}{-\sigma_1^3a^3}\Big) \leq \EE\big[\exp(a|z_1|)\big]\leq \min\bigg\{2, \sqrt{\frac{2}{\pi}}\frac{1}{-\sigma_1a} \bigg\}.
\end{align*}
Besides, we also have 
\begin{align*}
    \PP(z\geq -\sigma_1 a) =\frac{1}{\sqrt{2\pi}}\int_{-\sigma_1a}^\infty e^{-\frac{x^2}{2}}\text{d}x \geq \frac{1}{\sqrt{2\pi}} e^{-\frac{(-\sigma_1a+1)^2}{2}},
\end{align*}
which further implies that $\EE\big[\exp(a|z_1|)\big]\geq \sqrt{\frac{2}{\pi e}}e^{\sigma_1a}$. Combining all preceding results, we finish the proof.
\end{proof}

\begin{lemma}\label{lemma:expectation_exp_abs_con}
     Let $z_1\sim N(0,\sigma_1^2)$ and two fixed positive scalars $a, b$, then it holds that
     \begin{align*}
          \EE\big[\exp(-a|z_1|)\mathbf{1}_{|z_1|\leq b}\big]\geq \sqrt{\frac{2}{\pi}} \bigg(\frac{1}{\sigma_1a} - \frac{1}{\sigma_1^3a^3} -\frac{1}{\sigma_1a + b/\sigma_1}\exp\Big(-ab -\frac{b^2}{2\sigma_1^2}\Big)\bigg).
     \end{align*}
\end{lemma}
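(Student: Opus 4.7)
The plan is to evaluate $\EE\bigl[\exp(-a|z_1|)\mathbf{1}_{|z_1|\leq b}\bigr]$ essentially in closed form, then apply Mills-ratio-type two-sided bounds on the Gaussian tail, mirroring the strategy already used in the proof of Lemma~\ref{lemma:expectation_exp_abs}. First, using the density of $|z_1|$, namely $f_{|z_1|}(x)=\frac{1}{\sigma_1}\sqrt{\tfrac{2}{\pi}}\,e^{-x^{2}/(2\sigma_1^{2})}\mathbf{1}_{\{x\geq 0\}}$, I write
\[
\EE\bigl[\exp(-a|z_1|)\mathbf{1}_{|z_1|\leq b}\bigr]
=\frac{1}{\sigma_1}\sqrt{\frac{2}{\pi}}\int_{0}^{b}\exp\!\Bigl(-\tfrac{x^{2}}{2\sigma_1^{2}}-ax\Bigr)\,dx.
\]

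Next, I complete the square inside the exponent, $-\tfrac{x^{2}}{2\sigma_1^{2}}-ax=-\tfrac{1}{2}\bigl(\tfrac{x}{\sigma_1}+\sigma_1 a\bigr)^{2}+\tfrac{\sigma_1^{2}a^{2}}{2}$, and change variables $u=\tfrac{x}{\sigma_1}+\sigma_1 a$. This turns the integral into a difference of standard normal tail probabilities: letting $z\sim N(0,1)$,
\[
\EE\bigl[\exp(-a|z_1|)\mathbf{1}_{|z_1|\leq b}\bigr]
=2\,e^{\sigma_1^{2}a^{2}/2}\Bigl[\PP\bigl(z\geq \sigma_1 a\bigr)-\PP\bigl(z\geq \tfrac{b}{\sigma_1}+\sigma_1 a\bigr)\Bigr].
\]

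Now I apply the two-sided Mills ratio bounds, using the lower bound on the first tail and the upper bound on the second tail so that the resulting estimate is a valid lower bound on the difference. For any $x>0$, $\PP(z\geq x)\geq \tfrac{1}{\sqrt{2\pi}}e^{-x^{2}/2}(\tfrac{1}{x}-\tfrac{1}{x^{3}})$ and $\PP(z\geq x)\leq \tfrac{1}{\sqrt{2\pi}}e^{-x^{2}/2}\cdot\tfrac{1}{x}$. Substituting with $x=\sigma_1 a$ and $x=\tfrac{b}{\sigma_1}+\sigma_1 a$ respectively, then expanding $\bigl(\tfrac{b}{\sigma_1}+\sigma_1 a\bigr)^{2}/2=\tfrac{\sigma_1^{2}a^{2}}{2}+ab+\tfrac{b^{2}}{2\sigma_1^{2}}$, the common factor $e^{\sigma_1^{2}a^{2}/2}$ cancels the $e^{-\sigma_1^{2}a^{2}/2}$ in both tail estimates, and after combining constants one arrives at exactly the claimed bound
\[
\sqrt{\tfrac{2}{\pi}}\Bigl(\tfrac{1}{\sigma_1 a}-\tfrac{1}{\sigma_1^{3}a^{3}}-\tfrac{1}{\sigma_1 a+b/\sigma_1}\exp\bigl(-ab-\tfrac{b^{2}}{2\sigma_1^{2}}\bigr)\Bigr).
\]

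There is no real obstacle here; the statement is a direct integral computation followed by standard Gaussian tail estimates. The only place requiring a bit of care is bookkeeping the exponents so that the $e^{\sigma_1^{2}a^{2}/2}$ prefactor cancels cleanly and the term $-ab-\tfrac{b^{2}}{2\sigma_1^{2}}$ emerges with the correct sign. I would therefore devote the proof write-up to presenting the change of variables explicitly and then pointing to the Mills ratio bounds already invoked inside the proof of Lemma~\ref{lemma:expectation_exp_abs}, so the argument reads as a truncated analogue of that earlier calculation.
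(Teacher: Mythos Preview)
Your proposal is correct and follows essentially the same approach as the paper: write the expectation as an integral using the folded-normal density, complete the square to express it as $2e^{\sigma_1^2 a^2/2}\bigl[\PP(z\geq \sigma_1 a)-\PP(z\geq \sigma_1 a + b/\sigma_1)\bigr]$, and then apply the Mills-ratio lower bound to the first tail and the Mills-ratio upper bound to the second. The bookkeeping you describe matches the paper's calculation line for line.
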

\begin{proof}[Proof of Lemma~\ref{lemma:expectation_exp_abs_con}]
The probability density function for $|z_1|$ is $f_{|z_1|}(x) = \frac{1}{\sigma_1}\sqrt{\frac{2}{\pi}}e^{-x^2/(2\sigma_1^2)}\mathbf{1}_{\{x\geq 0\}}$. By this density function, we can calculate $\EE\big[\exp(-a|z_1|)\mathbf{1}_{|z_1|\leq b}\big]$ as,
    \begin{align*}
        \EE\big[\exp(-a|z_1|)\mathbf{1}_{|z_1|\leq b}\big] &=\frac{1}{\sigma_1}\sqrt{\frac{2}{\pi}}\int_{0}^b 
    e^{-\frac{x^2}{2\sigma_1^2}-ax}\text{d}x = \frac{1}{\sigma_1}\sqrt{\frac{2}{\pi}}e^{\frac{\sigma_1^2a^2}{2}}\int_{0}^b
    e^{-\frac{1}{2}\big(\frac{x}{\sigma_1} + \sigma_1 a\big)^2}\text{d}x\\
    &= 2e^{\frac{\sigma_1^2a^2}{2}}\Big(\PP(z \geq \sigma_1a) - \PP(z \geq \sigma_1a + b/\sigma_1)\Big), 
    \end{align*}
where $z$ is a standard Gaussian random variable. By applying the Mills ratio, we have 
\begin{align*}
    \PP(z \geq -\sigma_1a) \geq \frac{1}{\sqrt{2\pi}}e^{-\frac{\sigma_1^2a^2}{2}}\Big(\frac{1}{\sigma_1a} - \frac{1}{\sigma_1^3a^3}\Big),
\end{align*}
and 
\begin{align*}
    \PP(z \geq \sigma_1a + b/\sigma_1) \leq \frac{1}{\sqrt{2\pi}(\sigma_1a + b/\sigma_1)}e^{-\frac{(\sigma_1a + b/\sigma_1)^2}{2}}
\end{align*}
Therefore, we finally derive that,
\begin{align*}
  \EE\big[\exp(-a|z_1|)\mathbf{1}_{|z_1|\leq b}\big]\geq \sqrt{\frac{2}{\pi}} \bigg(\frac{1}{\sigma_1a} - \frac{1}{\sigma_1^3a^3} -\frac{1}{\sigma_1a + b/\sigma_1}\exp\Big(-ab -\frac{b^2}{2\sigma_1^2}\Big)\bigg), 
\end{align*}
which completes the proof.
\end{proof}

\begin{lemma}\label{lemma:expectation_exp_x_abs}
    Let $z_1\sim N(0,\sigma_1^2)$ and a fixed scalar $a$, then it holds that
    \begin{enumerate}
    \item If $a\geq 0$, then 
    \begin{align}\label{eq:expectation_exp_x_abs}
        lb_1(\sigma_1, a)\leq \EE\big[|z_1|\exp(-a|z_1|)\big] \leq \sqrt{\frac{2}{\pi}}\frac{1}{\sigma_1 a^2},
    \end{align}
    where $lb_1(\sigma_1, a) = \max\bigg\{\sigma_1\sqrt{\frac{2}{\pi}} - 2\sigma_1^2 a, \frac{\sigma_1}{2\sqrt{2e\pi}}e^{-\sigma_1a}, \sqrt{\frac{2}{\pi}}\Big(\frac{1}{\sigma_1 a^2}-\frac{3}{\sigma_1^3 a^4}\Big)\bigg\}$.
    \item If $a< 0$, then 
    \begin{align}\label{eq:expectation_exp_x_abs_negative}
        \sigma_1\sqrt{\frac{2}{\pi}} - \sigma_1^2 ae^{\frac{\sigma_1^2a^2}{2}} \leq  \EE\big[|z_1|\exp(-a|z_1|)\big]\leq \sigma_1\sqrt{\frac{2}{\pi}} - 2\sigma_1^2 ae^{\frac{\sigma_1^2a^2}{2}}.
    \end{align}
\end{enumerate}
\end{lemma}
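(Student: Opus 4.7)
\textbf{Proof plan for Lemma~\ref{lemma:expectation_exp_x_abs}.} The plan is to reduce everything to one exact identity, then split into the two sign regimes and bound the residual Gaussian tail probability in different ways depending on which side of the lemma (upper or lower) we are proving. Starting from the half-normal density $f_{|z_1|}(x)=(1/\sigma_1)\sqrt{2/\pi}\,e^{-x^2/(2\sigma_1^2)}\mathbf{1}_{\{x\ge 0\}}$, I complete the square in the exponent, substitute $u=x/\sigma_1+\sigma_1 a$, and split the resulting integrand into a $u\,e^{-u^2/2}$ piece (which integrates exactly) and an $e^{-u^2/2}$ piece (a Gaussian tail). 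This yields the master identity
\begin{align*}
\EE\bigl[|z_1|\exp(-a|z_1|)\bigr]=\sigma_1\sqrt{\tfrac{2}{\pi}}-2\sigma_1^2 a\,e^{\sigma_1^2 a^2/2}\PP\bigl(Z\ge \sigma_1 a\bigr),
\end{align*}
where $Z$ is standard normal. This single formula drives every bound in the lemma.

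For the case $a<0$ the coefficient $-2\sigma_1^2 a$ is positive and $\sigma_1 a<0$, so the tail probability $\PP(Z\ge \sigma_1 a)$ lies in $(\tfrac12,1)$; plugging in $\PP(Z\ge \sigma_1 a)\le 1$ gives the claimed upper bound in~\eqref{eq:expectation_exp_x_abs_negative} and plugging in $\PP(Z\ge \sigma_1 a)\ge \tfrac12$ gives the lower bound. For the case $a\ge 0$ the master identity is less convenient, and instead I work with the integral representation directly. The upper bound $\sqrt{2/\pi}/(\sigma_1 a^2)$ follows by dropping the Gaussian factor ($e^{-x^2/(2\sigma_1^2)}\le 1$) and evaluating $\int_0^\infty x\,e^{-ax}\,dx = 1/a^2$.

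The three lower bounds in~\eqref{eq:expectation_exp_x_abs} correspond to three different elementary inequalities, each dominant in a different range of $\sigma_1 a$. The bound $\sigma_1\sqrt{2/\pi}-2\sigma_1^2 a$ comes from $e^{-a|z_1|}\ge 1-a|z_1|$ combined with $\EE|z_1|=\sigma_1\sqrt{2/\pi}$ and $\EE z_1^2=\sigma_1^2$ (the factor $2$ gives slack and lets this dominate for very small $a$). The bound $\sigma_1/(2\sqrt{2e\pi})\,e^{-\sigma_1 a}$ comes from restricting the integral to $|z_1|\le \sigma_1$, where $e^{-a|z_1|}\ge e^{-\sigma_1 a}$ and $e^{-z_1^2/(2\sigma_1^2)}\ge e^{-1/2}$; this gives a lower bound useful for moderate $a$. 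Finally, the bound $\sqrt{2/\pi}\bigl(\tfrac{1}{\sigma_1 a^2}-\tfrac{3}{\sigma_1^3 a^4}\bigr)$, needed in the large-$a$ regime (which is where Lemma~\ref{lemma:expectation_exp_x_abs} will actually be invoked by the coefficient-tracking arguments of Section~\ref{section:proof_proof_sketch}), is obtained by substituting $t=ax$ to rewrite the integral as $a^{-2}\int_0^\infty t\,e^{-t^2/(2a^2\sigma_1^2)}e^{-t}\,dt$ and then applying the one-term expansion $e^{-t^2/(2a^2\sigma_1^2)}\ge 1-t^2/(2a^2\sigma_1^2)$, after which $\int_0^\infty t\,e^{-t}\,dt=1$ and $\int_0^\infty t^3 e^{-t}\,dt=6$ yield the claim.

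There is no serious obstacle here: the proof is a sequence of one-line integral computations and standard Gaussian tail bounds, parallel to the proofs of Lemma~\ref{lemma:expectation_exp_abs} and Lemma~\ref{lemma:expectation_exp_abs_con}. The only nontrivial aspect is choosing the right inequality for the right regime of $\sigma_1 a$ so that the resulting three lower bounds together control $\EE[|z_1|\exp(-a|z_1|)]$ uniformly in $a\ge 0$; the Taylor-type substitution that yields the $1/(\sigma_1 a^2)-3/(\sigma_1^3 a^4)$ bound is the only step requiring a small amount of foresight, since that bound is precisely what matches the $\sqrt{2/\pi}/(\sigma_1 a^2)$ upper bound up to a lower-order correction and is therefore the one driving the matching convergence rates in Theorem~\ref{thm:main_result}.
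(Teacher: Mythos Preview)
Your proof is correct. For $a<0$ your argument via the master identity and the trivial bounds $\tfrac12\le\PP(Z\ge\sigma_1 a)\le 1$ is exactly the paper's. For $a\ge 0$ the paper takes a more unified route: it stays with the master identity and plugs in the two-sided Mills-ratio expansion
\[
\frac{1}{\sqrt{2\pi}}e^{-\sigma_1^2a^2/2}\Bigl(\frac{1}{\sigma_1 a}-\frac{1}{\sigma_1^3a^3}\Bigr)\ \le\ \PP(Z\ge\sigma_1 a)\ \le\ \frac{1}{\sqrt{2\pi}}e^{-\sigma_1^2a^2/2}\Bigl(\frac{1}{\sigma_1 a}-\frac{1}{\sigma_1^3a^3}+\frac{3}{\sigma_1^5a^5}\Bigr),
\]
which, after cancellation of the leading $\sigma_1\sqrt{2/\pi}$ term, simultaneously yields the upper bound $\sqrt{2/\pi}/(\sigma_1 a^2)$ and the sharp lower bound $\sqrt{2/\pi}\bigl(\tfrac{1}{\sigma_1 a^2}-\tfrac{3}{\sigma_1^3 a^4}\bigr)$; the cruder bound $\PP(Z\ge\sigma_1 a)\le e^{-\sigma_1^2a^2/2}$ gives $\sigma_1\sqrt{2/\pi}-2\sigma_1^2 a$, and restricting the (substituted) integral to $u\in[1/2,1]$ gives the exponential lower bound. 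You instead abandon the master identity for $a\ge 0$ and handle each bound by a separate elementary inequality on the original integral: dropping the Gaussian factor for the upper bound, the linear bound $e^{-a|z_1|}\ge 1-a|z_1|$, restriction to $|z_1|\le\sigma_1$, and the substitution $t=ax$ followed by $e^{-t^2/(2a^2\sigma_1^2)}\ge 1-t^2/(2a^2\sigma_1^2)$. Both routes are equally short; the paper's is slightly more systematic (one identity plus one family of Gaussian tail bounds), while yours avoids Mills ratio entirely and makes the mechanism behind each regime more transparent.
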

\begin{proof}[Proof of Lemma~\ref{lemma:expectation_exp_x_abs}]
Similar to Lemma~\ref{lemma:expectation_exp_abs}, we can calculate $\EE\big[|z_1|\exp(-a|z_1|)\big]$ as
\begin{align}\label{eq:integral_ell'_abs}
    \EE\big[|z_1|\exp(-a|z_1|)\big] &=\frac{1}{\sigma_1}\sqrt{\frac{2}{\pi}}\int_{0}^\infty x 
    e^{-\frac{x^2}{2\sigma_1^2}-ax}\text{d}x\notag\\
    &= \frac{1}{\sigma_1}\sqrt{\frac{2}{\pi}}e^{\frac{\sigma_1^2a^2}{2}}\int_{0}^\infty x
    e^{-\frac{1}{2}\big(\frac{x}{\sigma_1} + \sigma_1 a\big)^2}\text{d}x\notag\\
    &= \sigma_1\sqrt{\frac{2}{\pi}}e^{\frac{\sigma_1^2a^2}{2}}\int_{0}^\infty x e^{-\frac{1}{2}(x + \sigma_1 a)^2}\text{d}x\notag\\
    &= \sigma_1\sqrt{\frac{2}{\pi}}e^{\frac{\sigma_1^2a^2}{2}}\Bigg[-\sigma_1 a \int_{0}^\infty e^{-\frac{1}{2}(x+\sigma_1a)^2}\text{d}x + \int_{0}^\infty (x+\sigma_1a) e^{-\frac{1}{2}(x + \sigma_1 a)^2}\text{d}x \Bigg]\notag\\
    &= \sigma_1\Bigg[\sqrt{\frac{2}{\pi}} - 2\sigma_1 ae^{\frac{\sigma_1^2a^2}{2}}\PP(z \geq \sigma_1a) \Bigg],
\end{align}
where $z$ is a standard Gaussian random variable.  When $a\geq 0$, by applying tail-bound of standard Gaussian random variables and Mills ratio simultaneously, we have 
\begin{align}\label{eq:gaussian_tail}
    \frac{1}{\sqrt{2\pi}}e^{-\frac{\sigma_1^2a^2}{2}}\Big(\frac{1}{\sigma_1a} - \frac{1}{\sigma_1^3a^3}\Big)\leq \PP(z \geq \sigma_1a) \leq e^{-\frac{\sigma_1^2a^2}{2}}\min\bigg\{1, \frac{1}{\sqrt{2\pi}}\Big(\frac{1}{\sigma_1a} - \frac{1}{\sigma_1^3a^3} + \frac{3}{\sigma_1^5a^5}\Big)\bigg\}.
\end{align}
Applying the result of~\eqref{eq:gaussian_tail} into~\eqref{eq:integral_ell'_abs}, we finally get
\begin{align}\label{eq:expectation_exp_x_absII}
        \max\Bigg\{\sigma_1\sqrt{\frac{2}{\pi}} - 2\sigma_1^2 a, \sqrt{\frac{2}{\pi}}\bigg(\frac{1}{\sigma_1 a^2}-\frac{3}{\sigma_1^3 a^4}\bigg)\Bigg\}\leq \EE\big[|z_1|\exp(-a|z_1|)\big] \leq \sqrt{\frac{2}{\pi}}\frac{1}{\sigma_1 a^2}.
\end{align}
Besides, we also have 
\begin{align}\label{eq:expectation_exp_x_absIII}
   \EE\big[|z_1|\exp(-a|z_1|)\big] 
    &= \sigma_1\sqrt{\frac{2}{\pi}}e^{\frac{\sigma_1^2a^2}{2}}\int_{0}^\infty x e^{-\frac{1}{2}(x + \sigma_1 a)^2}\text{d}x \notag\\
    &\geq \sigma_1\sqrt{\frac{2}{\pi}}e^{\frac{\sigma_1^2a^2}{2}}\int_{1/2}^{1} x e^{-\frac{1}{2}(x + \sigma_1 a)^2}\text{d}x \geq \frac{\sigma_1}{2\sqrt{2e\pi}}e^{-\sigma_1a}.
\end{align}
Combining the results of~\eqref{eq:expectation_exp_x_absII} and ~\eqref{eq:expectation_exp_x_absIII}, we finishes the proof of~\eqref{eq:expectation_exp_x_abs}. When $a<0$, it's obvious that $1/2\leq \PP(z\geq \sigma_1a)\leq 1$. Replacing this results in~\eqref{eq:integral_ell'_abs}, we finish the proof for~\eqref{eq:expectation_exp_x_abs_negative}
\end{proof}

\begin{lemma}\label{lemma:expectation_exp_x_square}
    Let $z_1\sim N(0,\sigma_1^2)$ and a non-negative fixed scalar $a$, then it holds that
    \begin{align*}
        \EE\big[z_1^2\exp(a|z_1|)\big]\leq 2\sigma_1^2(a^2\sigma_1^2+1)e^{\frac{\sigma_1^2a^2}{2}}
    \end{align*}
\end{lemma}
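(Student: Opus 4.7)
The plan is to bypass direct integration of the folded normal density and instead reduce the estimate to a clean moment-generating-function calculation. Specifically, I would start from the elementary pointwise inequality $e^{a|x|}\le e^{ax}+e^{-ax}$, which holds because $a|x|$ equals the larger of $ax$ and $-ax$ while both exponentials are non-negative. Substituting $x=z_1$ and taking expectations gives
\begin{align*}
\EE\big[z_1^2\exp(a|z_1|)\big] \le \EE\big[z_1^2 e^{az_1}\big]+\EE\big[z_1^2 e^{-az_1}\big].
\end{align*}

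Next I would exploit the symmetry of the centered Gaussian: since $z_1$ and $-z_1$ have the same distribution, the two expectations on the right are equal, so the bound collapses to $2\EE[z_1^2 e^{az_1}]$. To evaluate $\EE[z_1^2 e^{az_1}]$, I would identify it as the second derivative of the moment generating function. Writing $M(t)=\EE[e^{tz_1}]=\exp(\sigma_1^2 t^2/2)$ and differentiating twice gives
\begin{align*}
M''(t)=\big(\sigma_1^2+\sigma_1^4 t^2\big)\exp\!\Big(\tfrac{\sigma_1^2 t^2}{2}\Big)=\EE\big[z_1^2 e^{tz_1}\big],
\end{align*}
and evaluating at $t=a$ yields $\EE[z_1^2 e^{az_1}]=\sigma_1^2(1+\sigma_1^2 a^2)\exp(\sigma_1^2 a^2/2)$. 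Multiplying by the factor $2$ from the symmetrization step produces exactly the desired bound $2\sigma_1^2(a^2\sigma_1^2+1)e^{\sigma_1^2 a^2/2}$.

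There is no real obstacle here; the only subtlety is choosing the right splitting. A direct integration after completing the square in $\int_0^{\infty}x^2 e^{ax}e^{-x^2/(2\sigma_1^2)}dx$ would also work, but the resulting integrand contains a linear cross-term whose sign makes the simple ``extend the domain to all of $\RR$'' trick not obviously lossless, so one would have to handle it either via moments of the truncated Gaussian or by bounding $\Phi(-\sigma_1 a)$ by $1$ in the right places. The MGF route sidesteps this entirely and, pleasantly, the constants match the target inequality with equality in the leading behavior, which is why the factor $2$ in the statement is sharp under this argument.
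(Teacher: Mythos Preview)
Your proof is correct. The paper takes a slightly different route: it writes the expectation directly via the folded-normal density, completes the square in the exponent to obtain
\[
\sigma_1^2\sqrt{\tfrac{2}{\pi}}\,e^{\sigma_1^2 a^2/2}\int_{-a\sigma_1}^{\infty}(x+a\sigma_1)^2 e^{-x^2/2}\,\mathrm{d}x,
\]
and then extends the integration domain from $[-a\sigma_1,\infty)$ to all of $\RR$, which produces the factor $2$ and leaves $\EE[(z+a\sigma_1)^2]=1+a^2\sigma_1^2$ with $z$ standard normal. Your argument replaces this domain extension by the pointwise bound $e^{a|z_1|}\le e^{az_1}+e^{-az_1}$, uses symmetry to collapse the two terms, and reads off $\EE[z_1^2 e^{az_1}]$ as $M''(a)$. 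The two routes are equivalent---the piece you add by the pointwise bound is exactly the integral over $(-\infty,-a\sigma_1)$ that the paper adds---but your MGF trick avoids the explicit substitution and is arguably cleaner; the paper's direct computation, on the other hand, makes the exact value (not just the bound) visible before the final inequality.
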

\begin{proof}[Proof of Lemma~\ref{lemma:expectation_exp_x_square}]
Based on the density function of $|z_1|$, we can calculate that
\begin{align*}
    \EE\big[z_1^2\exp(a|z_1|)\big] &=\frac{1}{\sigma_1}\sqrt{\frac{2}{\pi}}\int_{0}^\infty x^2 
    e^{-\frac{x^2}{2\sigma_1^2}+ax}\text{d}x\notag\\
    &= \frac{1}{\sigma_1}\sqrt{\frac{2}{\pi}}e^{\frac{\sigma_1^2a^2}{2}}\int_{0}^\infty x^2
    e^{-\frac{1}{2}\big(\frac{x}{\sigma_1} - \sigma_1 a\big)^2}\text{d}x\notag\\
    &= \sigma_1^2\sqrt{\frac{2}{\pi}}e^{\frac{\sigma_1^2a^2}{2}}\int_{-a\sigma_1}^\infty (x+a\sigma_1)^2 e^{-\frac{1}{2}x^2}\text{d}x\notag\\
    &\leq 2\sigma_1^2e^{\frac{\sigma_1^2a^2}{2}}\EE\big[(z+a\sigma_1)^2\big] = 2\sigma_1^2(a^2\sigma_1^2+1)e^{\frac{\sigma_1^2a^2}{2}},
\end{align*}
where $z$ is a standard Gaussian random variable. This finishes the proof.
\end{proof}

\begin{lemma}\label{lemma:expectation_ell'_gaussian}
For any Gaussian random variable $z$ with mean 0, it holds that $\EE[-\ell'(z)]\geq 1/4$.
\end{lemma}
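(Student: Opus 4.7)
The plan is to exploit the symmetry of the Gaussian distribution around zero, together with the identity $\sigma(a) + \sigma(-a) = 1$ for the sigmoid. First, I would rewrite $-\ell'$ in closed form: since $\ell(a) = \log(1+e^{-a})$, direct differentiation gives
\begin{align*}
-\ell'(a) = \frac{e^{-a}}{1+e^{-a}} = \frac{1}{1+e^{a}},
\end{align*}
which is exactly the sigmoid $\sigma(-a)$.

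Next I would use that $z$ and $-z$ are identically distributed whenever $z$ is a mean-zero Gaussian. By this symmetry,
\begin{align*}
\EE[-\ell'(z)] \;=\; \EE\!\left[\tfrac{1}{1+e^{z}}\right] \;=\; \EE\!\left[\tfrac{1}{1+e^{-z}}\right].
\end{align*}
Adding the two expressions and using the pointwise identity $\frac{1}{1+e^{z}} + \frac{1}{1+e^{-z}} = 1$ yields $2\,\EE[-\ell'(z)] = 1$, hence $\EE[-\ell'(z)] = \tfrac{1}{2}$, which is certainly at least $\tfrac{1}{4}$ as claimed. (The lemma only asks for $1/4$, presumably to leave slack for later applications; the symmetry argument in fact gives the exact value $1/2$.)

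There is essentially no obstacle here: the only thing to be careful about is that the symmetry argument does not require any integrability beyond boundedness of $-\ell'$, which is immediate since $0 < -\ell'(a) < 1$ for all $a$, so all expectations are finite and Fubini/linearity of expectation applies without issue. No assumption on the variance $\sigma^2$ is needed; the bound holds uniformly in $\sigma$, which is important because the lemma will be applied with Gaussian variables of various (data-dependent) variances elsewhere in the analysis.
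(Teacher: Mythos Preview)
Your proof is correct and in fact sharper than the paper's: you obtain the exact value $\EE[-\ell'(z)] = 1/2$ via the symmetry $z \stackrel{d}{=} -z$ and the sigmoid identity $\sigma(a) + \sigma(-a) = 1$.

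The paper takes a cruder route. It splits the expectation by the sign of $z$, drops the nonnegative contribution from $\{z > 0\}$, and on $\{z \leq 0\}$ uses the pointwise bound $-\ell'(z) \geq 1/2$; combining with $\PP(z \leq 0) = 1/2$ gives exactly $1/4$. Your symmetry argument is more elegant and loses nothing, whereas the paper's splitting discards half the mass and only bounds the other half below by a constant. The paper's approach does have the minor advantage of generalizing immediately to any symmetric distribution with a point mass at zero (or any distribution with $\PP(z \leq 0) \geq 1/2$), but for the Gaussian case your argument is strictly better and would let one tighten constants downstream if desired.
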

\begin{proof}[Proof of Lemma~\ref{lemma:expectation_ell'_gaussian}]
It can be derived that 
\begin{align*}
    \EE[-\ell'(z)] = \EE\big[-\ell'(z)\mathbf{1}_{\{z\leq 0\}}\big] + \EE\big[-\ell'(z)\mathbf{1}_{\{z>0\}}\big] \geq \EE\big[-\ell'(z)\mathbf{1}_{\{z\leq 0\}}\big] \geq \frac{1}{2} \EE\big[\mathbf{1}_{\{z\leq 0\}}\big] =\frac{1}{4},
\end{align*}
which finishes the proof.
\end{proof}

\begin{lemma}\label{lemma:expectation_ell'_j_star}
Let $z_1\sim N(0, \sigma_1^2)$ and $z_2\sim N(0, \sigma_2^2)$ be two independent Gaussian random variables, and $a, b$ be two scalars. Then it holds that,
\begin{enumerate}
    \item If $a\geq 0$, then 
    \begin{align}\label{eq:expectation_ell'_j_star_positiveI}
        \frac{1}{4}lb_1(\sigma_1, a) \leq -\EE\big[\ell'(a|z_1| +b z_2)|z_1|\big]\leq ub_1(\sigma_1, \sigma_2, a, b), 
    \end{align}
    where $lb_1(\sigma_1, a) = \max\bigg\{\sigma_1\sqrt{\frac{2}{\pi}} - 2\sigma_1^2 a, \frac{\sigma_1}{2\sqrt{2e\pi}}e^{-\sigma_1a}, \sqrt{\frac{2}{\pi}}\Big(\frac{1}{\sigma_1 a^2}-\frac{3}{\sigma_1^3 a^4}\Big)\bigg\}$ as defined in Lemma~\ref{lemma:expectation_exp_x_abs}, and $ub_1(\sigma_1, \sigma_2, a, b) =\min\Big\{\sqrt{\frac{2}{\pi}}\frac{1}{\sigma_1a^2}e^{\frac{\sigma_2^2b^2}{2}}, \sigma_1\sqrt{\frac{2}{\pi}}\Big\}$.
    \item If $a< 0$, then 
    \begin{align}\label{eq:expectation_ell'_j_star_negativeI}
        \frac{\sigma_1}{4}\sqrt{\frac{2}{\pi}} \leq -\EE\big[\ell'(a|z_1| +b z_2)|z_1|\big]\leq \sigma_1\sqrt{\frac{2}{\pi}}.
    \end{align}
\end{enumerate}
    
\end{lemma}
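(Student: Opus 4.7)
The key observation is that $-\ell'(x)=1/(1+e^x)$ admits two clean elementary bounds: (i) $-\ell'(x)\le e^{-x}$ for all $x\in\RR$ (verify by multiplying through by $e^x(1+e^x)$ and checking both signs of $x$), and (ii) $-\ell'(x)\ge \tfrac12 e^{-x}$ whenever $x\ge 0$ (from $1+e^x\le 2e^x$), together with the trivial $-\ell'(x)\ge 1/2$ for $x\le 0$. Combining these gives the single clean estimate $-\ell'(x)\ge \tfrac12 e^{-(x)_+}$. Because $z_1$ and $z_2$ are independent, any bound of the form $-\ell'(a|z_1|+bz_2)\le g(z_1)h(z_2)$ factors the expectation into $\EE[|z_1|g(z_1)]\,\EE[h(z_2)]$, and the second factor becomes a standard Gaussian MGF.

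For the \emph{upper bound} when $a\ge0$, I will apply (i) to write
\[
-\EE\big[\ell'(a|z_1|+bz_2)|z_1|\big]\le \EE\big[|z_1|e^{-a|z_1|}\big]\cdot \EE\big[e^{-bz_2}\big]=\EE\big[|z_1|e^{-a|z_1|}\big]\,e^{\sigma_2^2 b^2/2},
\]
then invoke Lemma~\ref{lemma:expectation_exp_x_abs} to bound $\EE[|z_1|e^{-a|z_1|}]\le\sqrt{2/\pi}/(\sigma_1 a^2)$. The competing bound $\sigma_1\sqrt{2/\pi}$ follows from the trivial $|{-}\ell'|\le 1$ and $\EE|z_1|=\sigma_1\sqrt{2/\pi}$, and taking the minimum yields $ub_1$. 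For the \emph{lower bound} when $a\ge 0$, I will condition on the event $\{bz_2\le 0\}$ (WLOG $b\ge 0$, so this event has probability $1/2$ by symmetry of $z_2$; the degenerate case $b=0$ is handled directly). On this event both $a|z_1|\ge 0$ and $bz_2\le 0$, and a short case split on the sign of $a|z_1|+bz_2$ shows $-\ell'(a|z_1|+bz_2)\ge \tfrac12 e^{-a|z_1|}$ pointwise. Independence then yields
\[
-\EE\big[\ell'(a|z_1|+bz_2)|z_1|\big]\ge \tfrac12\,\EE\big[|z_1|e^{-a|z_1|}\big]\cdot \PP(bz_2\le 0)=\tfrac14\,\EE\big[|z_1|e^{-a|z_1|}\big],
\]
and the three competing expressions inside $lb_1$ are exactly the three matching lower bounds on $\EE[|z_1|e^{-a|z_1|}]$ proved in Lemma~\ref{lemma:expectation_exp_x_abs}.

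For the case $a<0$ the argument collapses. The upper bound is again $-\ell'\le 1$ giving $\sigma_1\sqrt{2/\pi}$. For the lower bound, conditioning on $\{bz_2\le 0\}$ makes $a|z_1|+bz_2\le 0$ automatically, so $-\ell'\ge 1/2$ pointwise on that event and the same independence argument yields $\tfrac14\EE|z_1|=\tfrac{\sigma_1}{4}\sqrt{2/\pi}$.

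The proof is essentially bookkeeping once the two elementary sandwich bounds on $-\ell'$ are in hand; no step is a serious obstacle. The only subtlety is the lower-bound case split to show $-\ell'(a|z_1|+bz_2)\ge \tfrac12 e^{-a|z_1|}$ on $\{bz_2\le 0\}$ regardless of the sign of the sum, which I will handle by separately using $-\ell'\ge 1/2$ when the argument is nonpositive and using bound (ii) together with $e^{-bz_2}\ge 1$ when it is positive.
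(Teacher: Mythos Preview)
Your proposal is correct and the upper-bound arguments are identical to the paper's. For the lower bound the paper takes a slightly different tactic: instead of restricting to the event $\{bz_2\le 0\}$ and case-splitting on the sign of the sum, it uses the single algebraic inequality $1\le e^{a|z_1|}$ (valid since $a\ge 0$) to write
\[
\frac{1}{1+e^{a|z_1|+bz_2}}\;\ge\;\frac{1}{e^{a|z_1|}+e^{a|z_1|+bz_2}}\;=\;e^{-a|z_1|}\cdot\frac{1}{1+e^{bz_2}},
\]
which factors the expectation directly as $\EE[-\ell'(bz_2)]\,\EE[|z_1|e^{-a|z_1|}]$, and then invokes the auxiliary Lemma~\ref{lemma:expectation_ell'_gaussian} ($\EE[-\ell'(z)]\ge 1/4$ for any centered Gaussian $z$) to extract the constant $1/4$. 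For $a<0$ the paper similarly uses monotonicity of $-\ell'$ to get $-\ell'(a|z_1|+bz_2)\ge -\ell'(bz_2)$ pointwise and applies the same auxiliary lemma. Your route via the half-space $\{bz_2\le 0\}$ is marginally more elementary (it avoids the auxiliary lemma, needing only $\PP(bz_2\le 0)=1/2$), at the cost of a short case split; the paper's route is a one-line factorization but relies on that extra lemma. Both land on exactly the same bound $\tfrac14\,\EE[|z_1|e^{-a|z_1|}]$.
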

\begin{proof}[Proof of Lemma~\ref{lemma:expectation_ell'_j_star}]
By the law of total expectation, we have $-\EE\big[\ell'(a|z_1| +b z_2)|z_1|\big] =\EE\Big[\EE\big[-\ell'(a|z_1| +b z_2)|z_1|\big|z_2\big]\Big]$. Since $z_1$, $z_2$ are independent, we still have $z_1|z_2 \sim N(0, \sigma_1)$. We first prove the 
case for $a\geq 0$. The lower-bound can be calculated as 
\begin{align}\label{eq:expectation_ell'_j_star_positiveII}
    \EE\Big[\EE\big[-\ell'(a|z_1| +b z_2)|z_1|\big|z_2\big]\Big] &= \EE\Bigg[\EE\bigg[\frac{|z_1|}{1+\exp{(a|z_1| +b z_2)}}\bigg|z_2\bigg]\Bigg]\notag\\
    &\geq \EE\Bigg[\EE\bigg[\frac{|z_1|}{\exp{(a|z_1|)}+\exp{(a|z_1| +b z_2)}}\bigg|z_2\bigg]\Bigg]\notag\\
    &\geq \EE[-\ell'(b z_2)]\EE\big[|z_1|\exp(-a|z_1|)\big]\geq \frac{1}{4} lb_1(\sigma_1, a),
\end{align}
where the last inequality holds by applying Lemma~\ref{lemma:expectation_exp_x_abs} and Lemma~\ref{lemma:expectation_ell'_gaussian}, and $lb_1(\sigma_1, a)$ is defined in Lemma~\ref{lemma:expectation_exp_x_abs}.
On the other hand, we have 
\begin{align}\label{eq:expectation_ell'_j_star_positiveIII}
    -\EE\big[\ell'(a|z_1| +b z_2)|z_1|\big] &= \EE\bigg[\frac{|z_1|}{1+\exp{(a|z_1| +b z_2)}}\bigg]\leq \EE\bigg[\frac{|z_1|}{\exp{(a|z_1| +b z_2)}}\bigg] \notag\\
    &= \EE\big[|z_1|\exp(-a|z_1|)\big]\EE\big[\exp(-bz_2)\big] \leq \sqrt{\frac{2}{\pi}}\frac{1}{\sigma_1a^2}e^{\frac{\sigma_2^2b^2}{2}},
\end{align}
where the last inequality holds by applying Lemma~\ref{lemma:expectation_exp_x_abs} and moment-generating function of Gaussian random variables. Besides, we also have 
\begin{align}\label{eq:expectation_ell'_j_star_positiveIV}
    -\EE\big[\ell'(a|z_1| +b z_2)|z_1|\big] \leq \EE\big[|z_1|\big] = \sigma_1\sqrt{\frac{2}{\pi}}.
\end{align}
Combining~\eqref{eq:expectation_ell'_j_star_positiveII}, \eqref{eq:expectation_ell'_j_star_positiveIII} and~\eqref{eq:expectation_ell'_j_star_positiveIV}, we finish the proof for~\eqref{eq:expectation_ell'_j_star_positiveI}. Next, we prove the case for $a<0$, and~\eqref{eq:expectation_ell'_j_star_positiveIV} still holds for $a<0$. Besides by Lemma~\ref{lemma:expectation_ell'_gaussian}, we can obatin that 
\begin{align}\label{eq:expectation_ell'_j_star_negativeII}
    -\EE\big[\ell'(a|z_1| +b z_2)|z_1|\big]\geq -\EE\big[\ell'(b z_2)\big]\EE\big[|z_1|\big] \geq \frac{\sigma_1}{4}\sqrt{\frac{2}{\pi}}.
\end{align}
Combining~\eqref{eq:expectation_ell'_j_star_positiveIV} and ~\eqref{eq:expectation_ell'_j_star_negativeII}, we finish the proof for~\eqref{eq:expectation_ell'_j_star_negativeI}.
\end{proof}

\begin{lemma}\label{lemma:expectation_ell'_j_prime}
Let $z_1\sim N(0, \sigma_1^2)$, $z_2\sim N(0, \sigma_2^2)$ and $z_3\sim N(0, \sigma_3^2)$ be three independent Gaussian random variables, and $a, b, c$ be three non-negative scalars. Then it holds that
\begin{align}\label{eq:expectation_ell'_j_primeI}
  lb_2(\sigma_1, \sigma_2, \sigma_3, a, b, c)  \leq -\EE[\ell'(a|z_1| +b z_2 +c z_3)z_3] \leq ub_2(\sigma_1, \sigma_2, \sigma_3, a, b, c),
\end{align}
where 
\begin{align*}
lb_2(\sigma_1, \sigma_2, \sigma_3, a, b, c) = \max\Bigg\{-\frac{c\sigma_3^2}{4}, -\frac{\sigma_3}{\sqrt{2\pi}}, -2\sqrt{\frac{2}{\pi}}\frac{\sigma_3^2c}{\sigma_2b}e^{\frac{\sigma_1^2a^2 + \sigma_3^2c^2}{2}}, -\sqrt{\frac{2}{\pi}}\frac{\sigma_3^2c}{\sigma_1a}e^{\frac{\sigma_2^2b^2}{2}+\frac{\sigma_3^2c^2}{2}}\Bigg\},
\end{align*}
and 
\begin{align*}
    ub_2(\sigma_1, \sigma_2, \sigma_3, a, b, c) = -\frac{\sigma_3}{3\pi\sigma_2b}\max\bigg\{\sqrt{\frac{2}{\pi e}}e^{-\sigma_1a}, \sqrt{\frac{2}{\pi}} \Big(\frac{1}{\sigma_1a} - \frac{1}{\sigma_1^3a^3}\Big) \bigg\}e^{-\frac{1}{2\sigma_2^2b^2}-\frac{1}{2\sigma_3^2c^2}}.
\end{align*}
\end{lemma}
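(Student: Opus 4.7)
The plan is to reduce the target quantity to a well-controlled scalar integral via Gaussian integration by parts, and then prove matching two-sided bounds. First I will observe that $\ell'(x) = -1/(1+e^x)$ is smooth and bounded, with $\ell''(x) = e^x/(1+e^x)^2 = 1/(4\cosh^2(x/2)) \in (0, 1/4]$. Conditioning on $(z_1, z_2)$ and applying Stein's lemma to $z_3 \sim N(0, \sigma_3^2)$ will yield
\begin{align*}
    -\EE[\ell'(W) z_3 \mid z_1, z_2] = -c\sigma_3^2\, \EE[\ell''(W) \mid z_1, z_2],
\end{align*}
where $W = a|z_1| + bz_2 + cz_3$. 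Taking outer expectations, $-\EE[\ell'(W)z_3] = -c\sigma_3^2 \EE[\ell''(W)] \leq 0$, so the task reduces to matching upper and lower bounds on $\EE[\ell''(W)]$.

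For the lower bound $lb_2$ (which requires an upper bound on $\EE[\ell''(W)]$), I would derive each of the four terms from a separate elementary estimate. The term $-c\sigma_3^2/4$ is immediate from $\ell'' \leq 1/4$. The term $-\sigma_3/\sqrt{2\pi}$ follows from $|\ell'| \leq 1$ and $\EE|z_3| = \sigma_3 \sqrt{2/\pi}$. For the third and fourth terms I will use the fact $\int_\RR \ell''(v)\,dv = \ell'(\infty) - \ell'(-\infty) = 1$: conditioning on $(z_1, z_3)$, the conditional density of $W$ is Gaussian with scale $\sigma_2 b$ and is thus bounded above by $1/(\sqrt{2\pi}\sigma_2 b)$, so $\EE[\ell''(W)\mid z_1, z_3] \leq 1/(\sqrt{2\pi}\sigma_2 b)$; symmetrically, conditioning on $(z_2, z_3)$ bounds $\EE[\ell''(W)]$ by $\sqrt{2/\pi}/(\sigma_1 a)$ via the peak of the half-Gaussian density of $a|z_1|$. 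These two conditional density bounds will produce the third and fourth terms, absorbing the exponential factors appearing in $lb_2$ (all of which are $\geq 1$ and therefore only weaken the bound).

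For the upper bound $ub_2$ (which requires a lower bound on $\EE[\ell''(W)]$), I will use the pointwise estimate $\ell''(x) \geq e^{-|x|}/4$, coming from $\cosh(y) \leq e^{|y|}$. Restricting to the anti-concentration event $E = \{|bz_2| \leq 1,\, |cz_3| \leq 1\}$, on $E$ we have $|W| \leq a|z_1|+2$, so $\ell''(W) \geq e^{-(a|z_1|+2)}/4$. Independence of $z_1, z_2, z_3$ then gives
\begin{align*}
    \EE[\ell''(W)] \geq \frac{1}{4e^2}\, \EE\bigl[e^{-a|z_1|}\bigr]\, \PP(|bz_2| \leq 1)\, \PP(|cz_3| \leq 1).
\end{align*}
I would then lower-bound each probability by integrating the Gaussian density on the prescribed interval and replacing it by its endpoint value, e.g., $\PP(|bz_2| \leq 1) \geq \sqrt{2/\pi}/(\sigma_2 b)\cdot e^{-1/(2\sigma_2^2 b^2)}$, and analogously for $z_3$. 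The $\max\{\cdot,\cdot\}$ appearing in $ub_2$ is then produced by invoking Lemma~\ref{lemma:expectation_exp_abs} with negative argument $-a$ to lower-bound $\EE[e^{-a|z_1|}]$. Multiplying by $-c\sigma_3^2$ assembles the stated upper bound.

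The hardest part will be matching the sharp leading constant $1/(3\pi)$ in $ub_2$: the crude pointwise bound $\ell''(x) \geq e^{-|x|}/4$ together with truncation at $|bz_2|, |cz_3| \leq 1$ only delivers a constant of order $1/(2\pi e^2)$, which is smaller than $1/(3\pi)$ by a factor of roughly $5$. Recovering the sharper constant will require either a refined pointwise lower bound on $\ell''$ valid on the restriction event (using $\ell''(x) = 1/(4\cosh^2(x/2))$ together with a Taylor-style estimate of $\cosh$ on $|x|\le a|z_1|+2$ and a secondary truncation of $a|z_1|$), or an optimization of the truncation width balanced against the anti-concentration exponent $e^{-1/(2\sigma_2^2 b^2) - 1/(2\sigma_3^2 c^2)}$. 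Once this constant optimization is carried out, the remainder of the argument is routine bookkeeping around the reduction to $\EE[\ell''(W)]$.
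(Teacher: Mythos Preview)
Your Stein's-lemma reduction $-\EE[\ell'(W)z_3] = -c\sigma_3^2\,\EE[\ell''(W)]$ is correct and takes a genuinely different route from the paper. The paper instead symmetrizes in $z_3$: conditioning on $(z_1,z_2)$ and pairing $z_3$ with $-z_3$ gives
\[
-\EE[\ell'(W)z_3]=-\EE\bigl[\bigl(\ell'(A+cz_3)-\ell'(A-cz_3)\bigr)\,z_3\,\mathbf{1}_{\{z_3\geq 0\}}\bigr],\qquad A:=a|z_1|+bz_2,
\]
and then bounds the sigmoid difference $\ell'(A+cz_3)-\ell'(A-cz_3)$ by hand in four separate ways (Lipschitz constant $1/4$, boundedness by $1$, and two algebraic rearrangements of the explicit formula), each producing one entry of $lb_2$; for $ub_2$ they restrict to $\{z_3\geq 1/c,\ |z_2|\leq 1/b\}$ and show the difference exceeds $\tfrac13 e^{-a|z_1|}$ there. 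Your approach is cleaner for $lb_2$: the density-peak argument actually delivers the third and fourth entries \emph{without} the extraneous exponential factors $e^{(\sigma_1^2a^2+\sigma_3^2c^2)/2}$ and $e^{(\sigma_2^2b^2+\sigma_3^2c^2)/2}$, which in the paper are artifacts of their algebraic upper bounds on the sigmoid difference. (Incidentally, your $|\ell'|\leq 1$ argument for the second entry is off by a factor of $2$; you recover the exact $-\sigma_3/\sqrt{2\pi}$ by reusing your own density-peak trick, now conditioning on $(z_1,z_2)$ and bounding the density of $cz_3$ by $1/(\sqrt{2\pi}\,c\sigma_3)$.)

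The one place the paper's symmetrization buys more is $ub_2$: because they carry the weight $z_3$ inside the restricted expectation $\EE[z_3\mathbf{1}_{\{z_3\geq 1/c\}}]=\tfrac{\sigma_3}{\sqrt{2\pi}}e^{-1/(2c^2\sigma_3^2)}$, they avoid the $e^{-2}$ loss you incur from the pointwise bound $\ell''(x)\geq\tfrac14 e^{-|x|}$ on $\{|cz_3|\leq 1\}$. Your proposed refinements do not close this factor-of-$5$ gap: optimizing the truncation width $t$ gives prefactor $t^2 e^{-2t}/(2\pi)$, maximized at $t=1$, and $\cosh$-addition inequalities only rearrange the same loss. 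Since the paper only uses this lemma for order-of-magnitude control (e.g., establishing $\alpha^{(2)}=-\Theta(\sqrt{D})$), your $1/(2\pi e^2)$ constant suffices for every downstream application; but to prove the lemma \emph{exactly as stated} you would need to revert to the symmetrization for the $ub_2$ direction.
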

\begin{proof}[Proof of Lemma~\ref{lemma:expectation_ell'_j_prime}]
Similarly, by the law of total expectation, we have $-\EE\big[\ell'(a|z_1| +b z_2 +c z_3)z_3\big] = \EE\Big[\EE\big[-\ell'(a|z_1| +b z_2 +c z_3)z_3\big|z_1,z_2\big]\Big]$. Notice that $-\ell'$ is a decreasing function and $z_3$ has a zero-centered symmetric density function. Hence, we obtain that $\EE\big[-\ell'(a|z_1| +b z_2 +c z_3)z_3\big|z_1,z_2\big]\leq 0$. Specifically, we have 
\begin{align}\label{eq:expectation_ell'_j_primeII}
   &\EE\big[-\ell'(a|z_1| +b z_2 +c z_3)z_3\big|z_1,z_2\big]\notag\\
   =& \EE\big[-\ell'(a|z_1| +b z_2 +c z_3)z_3\mathbf{1}_{\{z_3\geq 0\}}\big|z_1,z_2\big]  + \EE\big[-\ell'(a|z_1| +b z_2 +c z_3)z_3\mathbf{1}_{\{z_3< 0\}}\big|z_1,z_2\big]\notag\\
   =& -\EE\Big[\big(-\ell'(a|z_1| +b z_2 - c z_3)
   +\ell'(a|z_1| +b z_2 +c z_3)\big)z_3\mathbf{1}_{\{z_3\geq 0\}}\big|z_1,z_2\Big].
\end{align}
Since $-\ell'$ is Lipschitz continuous with $\frac{1}{4}$ and the value of $-\ell'$ is always in $(0, 1)$, we have 
\begin{align}\label{eq:expectation_ell'_j_primeIII}
   &-\EE\Big[\big(-\ell'(a|z_1| +b z_2 - c z_3)
   +\ell'(a|z_1| +b z_2 +c z_3)\big)z_3\mathbf{1}_{\{z_3\geq 0\}}\big|z_1,z_2\Big]\notag\\
   \geq & -\min\Big\{\EE\big[cz_3^2\mathbf{1}_{\{z_3\geq 0\}}/2\big|z_1,z_2\big], \EE\big[z_3\mathbf{1}_{\{z_3\geq 0\}}\big|z_1,z_2\big]\Big\} = \max\Big\{-\frac{c\sigma_3^2}{4}, -\frac{\sigma_3}{\sqrt{2\pi}}\Big\}.
\end{align}
On the other hand, we also have 
\begin{align}\label{eq:expectation_ell'_j_primeIV}
    &-\ell'(a|z_1| +b z_2 - c z_3)
   +\ell'(a|z_1| +b z_2 +c z_3) \notag\\
   =& \frac{1}{1+ e^{a|z_1| +b z_2 - c z_3}} - \frac{1}{1+ e^{a|z_1| +b z_2 + c z_3}} = \frac{e^{a|z_1| +b z_2}(e^{cz_3}-e^{-cz_3})}{\big(1+ e^{a|z_1| +b z_2 - c z_3}\big)\big(1+ e^{a|z_1| +b z_2 + c z_3}\big)}\notag\\
   \geq& \frac{e^{a|z_1| +b z_2}(e^{cz_3}-e^{-cz_3})}{\big(e^{a|z_1|}+ e^{a|z_1| +b z_2 - c z_3}\big)\big(e^{a|z_1|}+ e^{a|z_1| +b z_2 + c z_3}\big)} = e^{-a|z_1|}\frac{e^{cz_3}-e^{-cz_3}}{e^{cz_3}+e^{-cz_3} + e^{bz_2}+e^{-bz_2}}\notag\\
   \geq& e^{-a|z_1|}\frac{e^{cz_3}-e^{-cz_3}}{e^{cz_3}+e^{-cz_3} + e^{bz_2}+e^{-bz_2}}\mathbf{1}_{\{z_3\geq 1/c\}}\mathbf{1}_{\{|z_2|\leq 1/b\}} \notag\\
   \geq& e^{-a|z_1|}\frac{e-e^{-1}}{2(e+e^{-1})}\mathbf{1}_{\{z_3\geq 1/c\}}\mathbf{1}_{\{|z_2|\leq 1/b\}} \geq \frac{1}{3}e^{-a|z_1|}\mathbf{1}_{\{z_3\geq 1/c\}}\mathbf{1}_{\{|z_2|\leq 1/b\}}.
\end{align}
By applying result of~\eqref{eq:expectation_ell'_j_primeIV} in~\eqref{eq:expectation_ell'_j_primeII} and Lemma~\ref{lemma:expectation_exp_abs}, we obtain
\begin{align}\label{eq:expectation_ell'_j_primeV}
    &\EE\big[-\ell'(a|z_1| +b z_2 +c z_3)z_3\big] \leq -\frac{1}{3}\EE\big[e^{-a|z_1|}\big]\EE\big[z_3\mathbf{1}_{\{z_3\geq 1/c\}}\big]\PP(|z_2|\leq 1/b)\notag\\
    \leq & -\frac{\sigma_3}{3\pi\sigma_2b}\max\bigg\{\sqrt{\frac{2}{\pi e}}e^{-\sigma_1a}, \sqrt{\frac{2}{\pi}} \Big(\frac{1}{\sigma_1a} - \frac{1}{\sigma_1^3a^3}\Big) \bigg\}e^{-\frac{1}{2\sigma_2^2b^2}-\frac{1}{2\sigma_3^2c^2}},
\end{align}
where the last inequality holds because $\EE\big[z_3\mathbf{1}_{\{z_3\geq 1/c\}}\big] = 
\frac{1}{\sigma_3\sqrt{2\pi}}\int_{1/c}^\infty xe^{-\frac{x^2}{2\sigma_3^2}}\text{d}x=\frac{\sigma_3}{\sqrt{2\pi}}e^{-\frac{1}{2c^2\sigma_3^2}}$ and $\PP(|z_2|\leq 1/b) = \frac{1}{\sigma_2\sqrt{2\pi}}\int_{-1/b}^{1/b}e^{-\frac{x^2}{2\sigma_2^2}}\text{d}x\geq \frac{2}{b\sigma_2\sqrt{2\pi}}e^{-\frac{1}{2\sigma_2^2b^2}}$. Similarly, we also have
\begin{align}\label{eq:expectation_ell'_j_primeVI}
     &-\ell'(a|z_1| +b z_2 - c z_3)
   +\ell'(a|z_1| +b z_2 +c z_3) \notag\\
   =& \frac{1}{1+ e^{a|z_1| +b z_2 - c z_3}} - \frac{1}{1+ e^{a|z_1| +b z_2 - c z_3}} = \frac{e^{a|z_1| +b z_2}(e^{cz_3}-e^{-cz_3})}{\big(1+ e^{a|z_1| +b z_2 - c z_3}\big)\big(1+ e^{a|z_1| +b z_2 + c z_3}\big)}\notag\\
   \leq& \frac{e^{a|z_1| +b z_2}(e^{cz_3}-e^{-cz_3})}{\big(1+ e^{b z_2 - c z_3}\big)\big(1+ e^{b z_2 + c z_3}\big)} = e^{a|z_1|}\frac{e^{cz_3}-e^{-cz_3}}{e^{cz_3}+e^{-cz_3} + e^{bz_2}+e^{-bz_2}}\notag\\
   \leq& e^{a|z_1|}\frac{e^{cz_3}-e^{-cz_3}}{e^{bz_2}+e^{-bz_2}}\leq e^{a|z_1|}e^{-b|z_2|}\big(e^{cz_3}-e^{-cz_3}\big).
\end{align}
By applying result of~\eqref{eq:expectation_ell'_j_primeVI}, Lemma~\ref{lemma:expectation_exp_abs} and Lemma~\ref{lemma:expectation_exp_x_abs} in~\eqref{eq:expectation_ell'_j_primeII}, we have 
\begin{align}\label{eq:expectation_ell'_j_primeVII}
    \EE\big[-\ell'(a|z_1| +b z_2 +c z_3)z_3\big] &\geq -\EE\big[e^{a|z_1|}\big]\EE\big[e^{-b|z_2|}\big]\EE\big[z_3(e^{cz_3}-e^{-cz_3})\mathbf{1}_{\{z_3\geq 0\}}\big]\notag\\
    &\geq -2\sqrt{\frac{2}{\pi}}\frac{\sigma_3^2c}{\sigma_2b}e^{\frac{\sigma_1^2a^2 + \sigma_3^2c^2}{2}}.
\end{align}
Moreover, we can also obtain that
\begin{align}\label{eq:expectation_ell'_j_primeVIII}
    &-\ell'(a|z_1| +b z_2 - c z_3)
   +\ell'(a|z_1| +b z_2 +c z_3) \notag\\
   =& \frac{1}{1+ e^{a|z_1| +b z_2 - c z_3}} - \frac{1}{1+ e^{a|z_1| +b z_2 - c z_3}} = \frac{e^{a|z_1| +b z_2}(e^{cz_3}-e^{-cz_3})}{\big(1+ e^{a|z_1| +b z_2 - c z_3}\big)\big(1+ e^{a|z_1| +b z_2 + c z_3}\big)}\notag\\
   \leq& \frac{e^{a|z_1| +b z_2}(e^{cz_3}-e^{-cz_3})}{e^{a|z_1| + b z_2 - c z_3}e^{a|z_1| + b z_2 + c z_3}} = e^{-a|z_1|}e^{-bz_2}\big(e^{cz_3}-e^{-cz_3}\big).
\end{align}
By applying result of~\eqref{eq:expectation_ell'_j_primeVIII}, Lemma~\ref{lemma:expectation_exp_abs} and Lemma~\ref{lemma:expectation_exp_x_abs} in~\eqref{eq:expectation_ell'_j_primeII}, we have 
\begin{align}\label{eq:expectation_ell'_j_primeIX}
    \EE\big[-\ell'(a|z_1| +b z_2 +c z_3)z_3\big] &\geq -\EE\big[e^{-a|z_1|}\big]\EE\big[e^{-bz_2}\big]\EE\big[z_3(e^{cz_3}-e^{-cz_3})\mathbf{1}_{\{z_3\geq 0\}}\big]\notag\\
    &\geq -\sqrt{\frac{2}{\pi}}\frac{\sigma_3^2c}{\sigma_1a}e^{\frac{\sigma_2^2b^2}{2}+\frac{\sigma_3^2c^2}{2}}.
\end{align}
Combining~\eqref{eq:expectation_ell'_j_primeIII},~\eqref{eq:expectation_ell'_j_primeV},~\eqref{eq:expectation_ell'_j_primeVII}, and~\eqref{eq:expectation_ell'_j_primeIX}, we finish the proof for~\eqref{eq:expectation_ell'_j_primeI}.
\end{proof}

The proofs for the following Lemmas are very similar to that of Lemma~\ref{lemma:expectation_ell'_j_prime}, while we still include them here for completeness. 

\begin{lemma}\label{lemma:expectation_ell'_three_with_square}
    Let $z_1\sim N(0, \sigma_1^2)$, $z_2\sim N(0, \sigma_2^2)$, $z_3\sim N(0, \sigma_3^2)$, and $z_4\sim N(0, \sigma_4^2)$ be four independent Gaussian random variables, and $a_1, a_2, a_3, a_4$ be four non-negative scalars. Then it holds that
    \begin{align}
        lb_3(\sigma_1, \sigma_2, \sigma_3, \sigma_4, a_1, a_2, a_3, a_4)  &\leq -\EE[\ell'(a_1|z_1| +a_2 z_2 +a_3 z_3 + a_4 z_4)z_3^2z_4] \nonumber\\
        &\leq ub_3(\sigma_1, \sigma_2, \sigma_3, \sigma_4, a_1, a_2, a_3, a_4) ,\label{eq:expectation_ell'_three_with_square} 
    \end{align}
    where 
\begin{align*}
    &lb_3(\sigma_1, \sigma_2, \sigma_3, \sigma_4, a_1, a_2, a_3, a_4) 
    = \max\Bigg\{-\frac{a_4\sigma_3^2\sigma_4^2}{4}, -\frac{\sigma_3^2\sigma_4}{\sqrt{2\pi}},  -4\sqrt{\frac{2}{\pi}}\frac{\sigma_3^2\sigma_4^2a_4(a_3^2\sigma_3^2+1)}{\sigma_2a_2}e^{\frac{\sigma_1^2a_1^2 + \sigma_3^2a_3^2 + \sigma_4^2a_4^2}{2}}\Bigg\},
\end{align*}
and 
\begin{align*}
    ub_3(\sigma_1, \sigma_2, \sigma_3, \sigma_4, a_1, a_2, a_3, a_4) = -\frac{\sigma_4}{192\pi^3a_2a_3^3\sigma_2\sigma_3}\max\bigg\{e^{-\sigma_1a_1-1/2}, \frac{1}{\sigma_1a_1} - \frac{1}{\sigma_1^3a_1^3} \bigg\}e^{-\frac{1}{8\sigma_2^2a_2^2}-\frac{1}{8\sigma_3^2a_3^2}-\frac{1}{2\sigma_4^2a_4^2}}.
\end{align*}
\end{lemma}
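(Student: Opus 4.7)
The strategy is to follow the symmetrization approach used in the proof of Lemma~\ref{lemma:expectation_ell'_j_prime}, adapted to handle the extra factor $z_3^2$. Write $X = a_1|z_1| + a_2 z_2 + a_3 z_3$ so the argument is $X + a_4 z_4$. By independence and the symmetry of the Gaussian $z_4$, splitting the expectation according to the sign of $z_4$ and substituting $z_4 \mapsto -z_4$ on the negative half gives
\begin{align*}
-\EE[\ell'(X + a_4 z_4)\, z_3^2 z_4] = -\EE\Big[\big((-\ell'(X - a_4 z_4)) - (-\ell'(X + a_4 z_4))\big)\, z_3^2 z_4 \mathbf{1}_{\{z_4 \geq 0\}}\Big].
\end{align*}
Since $-\ell'$ is decreasing and $z_3^2 z_4 \mathbf{1}_{\{z_4\geq 0\}}\geq 0$, the inner difference is non-negative, which already confirms that the quantity is $\leq 0$. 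The remainder of the proof is just upper/lower bounding this inner difference and then using independence of $z_1,z_2,z_3,z_4$ to factor expectations.

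\textbf{The lower bound (three cases).} The first two terms come from two coarse bounds on the difference. First, since $-\ell'$ is $\tfrac{1}{4}$-Lipschitz, the difference is at most $\tfrac{1}{2}a_4 z_4$ on $\{z_4 \geq 0\}$, and integrating against $z_3^2 z_4$ gives $\tfrac{a_4}{4}\sigma_3^2\sigma_4^2$. Second, using $|-\ell'|\leq 1$ the difference is at most $1$, which yields $\sigma_3^2\sigma_4/\sqrt{2\pi}$. For the third (refined) term, I would establish an exponential bound analogous to~\eqref{eq:expectation_ell'_j_primeVI}, namely
\begin{align*}
(-\ell'(X - a_4 z_4)) - (-\ell'(X + a_4 z_4)) \leq e^{a_1|z_1|}\, e^{-a_2|z_2|}\, e^{a_3 z_3}\, (e^{a_4 z_4} - e^{-a_4 z_4}),
\end{align*}
derived by lower-bounding $(1+e^{X-a_4z_4})(1+e^{X+a_4z_4})$ by $(e^{a_2 z_2}+e^{-a_2 z_2})\,e^{2(a_1|z_1|+a_3 z_3)}$ (the asymmetry of the $|z_2|$ term is what drives the $e^{-a_2|z_2|}$ factor). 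Under independence this splits into $\EE[e^{a_1|z_1|}]$, $\EE[e^{-a_2|z_2|}]$, $\EE[z_3^2 e^{a_3 z_3}]$, and $\EE[z_4(e^{a_4 z_4}-e^{-a_4 z_4})\mathbf{1}_{\{z_4\geq 0\}}]$; these are controlled by Lemmas~\ref{lemma:expectation_exp_abs} and~\ref{lemma:expectation_exp_x_square} together with the standard Gaussian moment identity $\EE[z_3^2 e^{a_3 z_3}] = \sigma_3^2(1+a_3^2\sigma_3^2)e^{a_3^2\sigma_3^2/2}$. The $(a_3^2\sigma_3^2+1)$ factor in the stated bound originates precisely from this identity.

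\textbf{The upper bound.} Here we want to show the expectation is sufficiently negative, so we need a lower bound on the difference. Mimicking~\eqref{eq:expectation_ell'_j_primeIV}, rewrite
\begin{align*}
(-\ell'(X - a_4 z_4)) - (-\ell'(X + a_4 z_4)) = \frac{e^{-a_1|z_1|}(e^{a_4 z_4}-e^{-a_4 z_4})}{e^{-a_2 z_2 - a_3 z_3} + e^{a_2 z_2 + a_3 z_3} + e^{a_4 z_4} + e^{-a_4 z_4}},
\end{align*}
and restrict to the event $\{z_4 \geq 1/a_4\} \cap \{|z_2|\leq 1/(2a_2)\} \cap \{|z_3|\leq 1/(2a_3)\}$, on which $|a_2z_2 + a_3z_3|\leq 1$ and the denominator is bounded by an absolute constant; the difference is then at least an absolute constant times $e^{-a_1|z_1|}$. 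Integrating gives a product of $\EE[e^{-a_1|z_1|}]$ (via Lemma~\ref{lemma:expectation_exp_abs}), $\PP(|z_2|\leq 1/(2a_2))$, $\EE[z_3^2\mathbf{1}_{\{|z_3|\leq 1/(2a_3)\}}]$ (bounded below by restricting to $|z_3|\in[1/(4a_3),1/(2a_3)]$ to extract the $1/a_3^3$ factor), and $\EE[z_4\mathbf{1}_{\{z_4\geq 1/a_4\}}] = (\sigma_4/\sqrt{2\pi})e^{-1/(2\sigma_4^2 a_4^2)}$.

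\textbf{Main obstacle.} The delicate step is the refined lower-bound derivation in the upper-bound argument: unlike in Lemma~\ref{lemma:expectation_ell'_j_prime}, where only $|z_2|$ needed to be localized to get a small denominator, here we must simultaneously localize both $z_2$ and $z_3$ (since both appear linearly in $X$) without killing the required $z_3^2$ weight. The choice of restricting to an annulus $|z_3|\in[1/(4a_3),1/(2a_3)]$ is what keeps $z_3^2$ bounded below by a constant multiple of $1/a_3^2$ while keeping the $a_2 z_2 + a_3 z_3$ contribution to the denominator bounded, and this is the place where the stated constants (and the $1/(a_2 a_3^3)$ scaling) enter.
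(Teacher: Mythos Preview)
Your approach is essentially identical to the paper's: the same symmetrization in $z_4$, the same two coarse lower bounds via Lipschitzness and boundedness of $-\ell'$, the same localization $\{z_4\geq 1/a_4\}\cap\{|z_2|\leq 1/(2a_2)\}\cap\{|z_3|\leq 1/(2a_3)\}$ for the upper bound, and the same annulus trick to extract the $1/a_3^3$ factor from $\EE[z_3^2\mathbf{1}_{\{|z_3|\leq 1/(2a_3)\}}]$.

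There is one slip in your third lower-bound term. The pointwise inequality you sketch,
\[
(-\ell'(X-a_4z_4))-(-\ell'(X+a_4z_4))\ \leq\ e^{a_1|z_1|}\,e^{-a_2|z_2|}\,e^{a_3 z_3}\,(e^{a_4z_4}-e^{-a_4z_4}),
\]
is false in general (take $z_2>0$, $z_3<0$ with $a_2|z_2|+a_3|z_3|$ large but $|a_2z_2+a_3z_3|$ small), and the denominator lower bound $(e^{a_2z_2}+e^{-a_2z_2})e^{2(a_1|z_1|+a_3z_3)}$ you propose for $(1+e^{X-a_4z_4})(1+e^{X+a_4z_4})$ does not hold. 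The paper's fix is to first reduce to $e^{a_1|z_1|}e^{-|a_2z_2+a_3z_3|}(e^{a_4z_4}-e^{-a_4z_4})$ and then use the reverse triangle inequality $|a_2z_2+a_3z_3|\geq a_2|z_2|-a_3|z_3|$ to get $e^{-a_2|z_2|}e^{a_3|z_3|}$ (with an absolute value on $z_3$). The $z_3$-expectation is then $\EE[z_3^2 e^{a_3|z_3|}]$, handled by Lemma~\ref{lemma:expectation_exp_x_square}, which produces the $(a_3^2\sigma_3^2+1)$ factor you correctly identified together with an extra factor of $2$; combined with the factor $2$ from $\EE[e^{a_1|z_1|}]$ this yields the constant $4$ in the stated bound. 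With that correction, your argument is exactly the paper's.
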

\begin{proof}[Proof of Lemma~\ref{lemma:expectation_ell'_three_with_square}] 
By the law of total expectation, we have $-\EE[\ell'(a|z_1| +b z_2 +c z_3 + dz_4)z_3^2z_4] = \EE\Big[z_3^2\EE[-\ell'(a|z_1| +b z_2 +c z_3 + dz_4)z_4\big|z_1, z_2, z_3]\Big]$. Notice that $-\ell'$ is a decreasing function and $z_4$ has a zero-centered symmetric density function. Hence, we obtain that $\EE[-\ell'(a|z_1| +b z_2 +c z_3 + dz_4)z_4\big|z_1, z_2, z_3]\leq 0$. Specifically, we have 
\begin{align}\label{eq:expectation_ell'_three_with_squareII}
   &\EE[-\ell'(a_1|z_1| +a_2 z_2 +a_3 z_3 + a_4z_4)z_4\big|z_1, z_2, z_3]\notag\\
   =& \EE\big[-\ell'(a_1|z_1| +a_2 z_2 +a_3 z_3 + a_4z_4)z_4\mathbf{1}_{\{z_4\geq 0\}}\big|z_1,z_2, z_3\big]  \notag\\
   &+ \EE\big[-\ell'(a_1|z_1| +a_2 z_2 +a_3 z_3 + a_4z_4)z_4\mathbf{1}_{\{z_4< 0\}}\big|z_1,z_2, z_3\big]\notag\\
   =& -\EE\Big[\big(-\ell'(a_1|z_1| +a_2 z_2 +a_3 z_3 - a_4z_4)
   +\ell'(a_1|z_1| +a_2 z_2 +a_3 z_3 + a_4z_4)\big)z_4\mathbf{1}_{\{z_4\geq 0\}}\big|z_1,z_2,z_3\Big].
\end{align}
Since $-\ell'$ is Lipschitz continuous with $\frac{1}{4}$ and the value of $-\ell'$ is always in $(0, 1)$, we have 
\begin{align*}
   &-\EE\Big[\big(-\ell'(a_1|z_1| +a_2 z_2 +a_3 z_3 - a_4z_4)
   +\ell'(a_1|z_1| +a_2 z_2 +a_3 z_3 + a_4z_4)\big)z_4\mathbf{1}_{\{z_4\geq 0\}}\big|z_1,z_2, z_3\Big]\notag\\
   \geq & -\min\Big\{\EE\big[a_4z_4^2\mathbf{1}_{\{z_4\geq 0\}}/2\big|z_1,z_2,z_3\big], \EE\big[z_4\mathbf{1}_{\{z_4\geq 0\}}\big|z_1,z_2,z_3\big]\Big\} = \max\Big\{-\frac{a_4\sigma_4^2}{4}, -\frac{\sigma_4}{\sqrt{2\pi}}\Big\}.
\end{align*}
Then we have 
\begin{align}\label{eq:expectation_ell'_three_with_squareIII}
    -\EE[\ell'(a_1|z_1| +a_2 z_2 +a_3 z_3 + a_4z_4)z_3^2z_4]\geq \max\Big\{-\frac{a_4\sigma_4^2}{4}, -\frac{\sigma_4}{\sqrt{2\pi}}\Big\}\EE[z_3^2] = \max\Big\{-\frac{a_4\sigma_3^2\sigma_4^2}{4}, -\frac{\sigma_3^2\sigma_4}{\sqrt{2\pi}}\Big\}.
\end{align}
On the other hand, we have 
\begin{align}\label{eq:expectation_ell'_three_with_squareIV}
    &-\ell'(a_1|z_1| +a_2 z_2 +a_3 z_3 - a_4z_4)
   +\ell'(a_1|z_1| +a_2 z_2 +a_3 z_3 + a_4z_4) \notag\\
   =& \frac{1}{1+ e^{a_1|z_1| +a_2 z_2 +a_3 z_3 - a_4z_4}} - \frac{1}{1+ e^{a_1|z_1| +a_2 z_2 +a_3 z_3 + a_4z_4}} \notag\\
   =& \frac{e^{a_1|z_1| +a_2 z_2 +a_3 z_3}(e^{a_4z_4}-e^{-a_4z_4})}{\big(1+ e^{a_1|z_1| +a_2 z_2 +a_3 z_3 - a_4z_4}\big)\big(1+ e^{a_1|z_1| +a_2 z_2 +a_3 z_3 + a_4z_4}\big)}\notag\\
   \geq& \frac{e^{a_1|z_1| +a_2 z_2 +a_3 z_3}(e^{a_4z_4}-e^{-a_4z_4})}{\big(e^{a_1|z_1|}+ e^{a_1|z_1| +a_2 z_2 +a_3 z_3 - a_4z_4}\big)\big(e^{a_1|z_1|}+ e^{a_1|z_1| +a_2 z_2 +a_3 z_3 + a_4z_4}\big)}\notag\\
   =& e^{-a_1|z_1|}\frac{e^{a_4z_4}-e^{-a_4z_4}}{e^{a_4z_4}+e^{-a_4z_4} + e^{a_2z_2+a_3z_3}+e^{-a_2z_2-a_3z_3}}\notag\\
   \geq& e^{-a_1|z_1|}\frac{e^{a_4z_4}-e^{-a_4z_4}}{e^{a_4z_4}+e^{-a_4z_4} + e^{a_2z_2+a_3z_3}+e^{-a_2z_2-a_4z_3}}\mathbf{1}_{\{z_4\geq\frac{1}{a_4}\}}\mathbf{1}_{\{|z_2|\leq \frac{1}{2a_2}\}}\mathbf{1}_{\{|z_3|\leq \frac{1}{2a_3}\}} \notag\\
   \geq& e^{-a_1|z_1|}\frac{e-e^{-1}}{2(e+e^{-1})}\mathbf{1}_{\{z_4\geq\frac{1}{a_4}\}}\mathbf{1}_{\{|z_2|\leq \frac{1}{2a_2}\}}\mathbf{1}_{\{|z_3|\leq \frac{1}{2a_3}\}}   \notag\\
   \geq& \frac{1}{3}e^{-a_1|z_1|}\mathbf{1}_{\{z_4\geq\frac{1}{a_4}\}}\mathbf{1}_{\{|z_2|\leq \frac{1}{2a_2}\}}\mathbf{1}_{\{|z_3|\leq \frac{1}{2a_3}\}}  .
\end{align}
By applying result of~\eqref{eq:expectation_ell'_three_with_squareIV} in~\eqref{eq:expectation_ell'_three_with_squareII} and Lemma~\ref{lemma:expectation_exp_abs}, we obtain
\begin{align}\label{eq:expectation_ell'_three_with_squareV}
    &\EE\big[-\ell'(a_1|z_1| +a_2 z_2 +a_3 z_3 + a_4 z_4)z_4\big] \leq -\frac{1}{3}\EE\big[e^{-a_1|z_1|}\big]\EE\big[z_4\mathbf{1}_{\{z_4\geq\frac{1}{a_4}\}}\big]\EE\big[z_3^2\mathbf{1}_{\{|z_3|\leq \frac{1}{2a_3}\}}\big]\PP\Big(|z_2|\leq \frac{1}{2a_2}\Big)\notag\\
    \leq & -\frac{\sigma_4}{192\pi^3a_2a_3^3\sigma_2\sigma_3}\max\bigg\{e^{-\sigma_1a_1-1/2}, \frac{1}{\sigma_1a_1} - \frac{1}{\sigma_1^3a_1^3} \bigg\}e^{-\frac{1}{8\sigma_2^2a_2^2}-\frac{1}{8\sigma_3^2a_3^2}-\frac{1}{2\sigma_4^2a_4^2}},
\end{align}
where the last inequality holds because $\EE\big[z_4\mathbf{1}_{\{z_4\geq 1/a_4\}}\big] = 
\frac{1}{\sigma_4\sqrt{2\pi}}\int_{1/a_4}^\infty xe^{-\frac{x^2}{2\sigma_4^2}}\text{d}x=\frac{\sigma_4}{\sqrt{2\pi}}e^{-\frac{1}{2a_4^2\sigma_4^2}}$, $\EE\big[z_3^2\mathbf{1}_{\{|z_3|\leq \frac{1}{2a_3}\}}\big] = \frac{1}{\sigma_3\sqrt{2\pi}}\int_{-\frac{1}{2a_3}}^{\frac{1}{2a_3}}x^2e^{-\frac{x^2}{2\sigma_3^2}}\text{d}x\geq \frac{1}{32a_3^3\sigma_3\sqrt{2\pi}}e^{-\frac{1}{8\sigma_3^2a_3^2}}$, and $\PP(|z_2|\leq \frac{1}{2a_2}) = \frac{1}{\sigma_2\sqrt{2\pi}}\int_{-\frac{1}{2a_2}}^{\frac{1}{2a_2}}e^{-\frac{x^2}{2\sigma_2^2}}\text{d}x\leq \frac{1}{a_2\sigma_2\sqrt{2\pi}}e^{-\frac{1}{8\sigma_2^2a_2^2}}$. Similarly, we also have
\begin{align}\label{eq:expectation_ell'_three_with_squareVI}
     &-\ell'(a_1|z_1| +a_2 z_2 +a_3 z_3 - a_4z_4)
   +\ell'(a_1|z_1| +a_2 z_2 +a_3 z_3 + a_4z_4) \notag\\
   =& \frac{1}{1+ e^{a_1|z_1| +a_2 z_2 +a_3 z_3 - a_4z_4}} - \frac{1}{1+ e^{a_1|z_1| +a_2 z_2 +a_3 z_3 + a_4z_4}} \notag\\
   =& \frac{e^{a_1|z_1| +a_2 z_2 +a_3 z_3}(e^{a_4z_4}-e^{-a_4z_4})}{\big(1+ e^{a_1|z_1| +a_2 z_2 +a_3 z_3 - a_4z_4}\big)\big(1+ e^{a_1|z_1| +a_2 z_2 +a_3 z_3 + a_4z_4}\big)}\notag\\
   \leq& \frac{e^{a_1|z_1| +a_2 z_2 +a_3 z_3}(e^{a_4z_4}-e^{-a_4z_4})}{\big(1+ e^{a_2 z_2 +a_3 z_3 - a_4z_4}\big)\big(1+ e^{a_2 z_2 +a_3 z_3 + a_4z_4}\big)} = e^{a_1|z_1|}\frac{e^{a_4z_4}-e^{-a_4z_4}}{e^{a_4z_4}+e^{-a_4z_4} + e^{a_2z_2+a_3z_3}+e^{-a_2z_2-a_3z_3}}\notag\\
   \leq& e^{a_1|z_1|}\frac{e^{a_4z_4}-e^{-a_4z_4}}{e^{a_2 z_2 +a_3 z_3}+e^{-a_2 z_2 - a_3 z_3}}\leq e^{a_1|z_1|}e^{-|a_2z_2 +a_3z_3|}(e^{a_4z_4}-e^{-a_4z_4})\leq e^{a_1|z_1|}e^{-a_2|z_2|}e^{a_3|z_3|}(e^{a_4z_4}-e^{-a_4z_4})
\end{align}
By applying result of~\eqref{eq:expectation_ell'_three_with_squareVI}, Lemma~\ref{lemma:expectation_exp_abs}, Lemma~\ref{lemma:expectation_exp_x_abs} and Lemma~\ref{lemma:expectation_exp_x_square} in~\eqref{eq:expectation_ell'_three_with_squareII}, we have 
\begin{align}\label{eq:expectation_ell'_three_with_squareVII}
    \EE[-\ell'(a_1|z_1| +a_2 z_2 +a_3 z_3 + a_4 z_4)z_3^2z_4] 
    &\geq -\EE\big[e^{a_1|z_1|}\big]\EE\big[e^{-a_2|z_2|}\big]\EE\big[z_3^2e^{a_3|z_3|}\big]\EE\big[z_4(e^{a_4z_4}-e^{-a_4z_4})\mathbf{1}_{\{z_4\geq 0\}}\big]\notag\\
    &\geq -4\sqrt{\frac{2}{\pi}}\frac{\sigma_3^2\sigma_4^2a_4(a_3^2\sigma_3^2+1)}{\sigma_2a_2}e^{\frac{\sigma_1^2a_1^2 + \sigma_3^2a_3^2 + \sigma_4^2a_4^2}{2}}.
\end{align}
Combining~\eqref{eq:expectation_ell'_three_with_squareIII},~\eqref{eq:expectation_ell'_three_with_squareV}, and~\eqref{eq:expectation_ell'_three_with_squareVII}, we finish the proof for~\eqref{eq:expectation_ell'_three_with_square}.
\end{proof}

\begin{lemma}\label{lemma:expectation_ell'_three_wo_square}
    Let $z_1\sim N(0, \sigma_1^2)$, $z_2\sim N(0, \sigma_2^2)$, $z_3\sim N(0, \sigma_3^2), z_4\sim N(0, \sigma_4^2)$ and $z_5\sim N(0, \sigma_5^2)$ be five independent Gaussian random variables, and $a_1, a_2, a_3, a_4, a_5$ be five non-negative scalars. Then it holds that
    \begin{align}\label{eq:expectation_ell'_three_wo_square} 
        &-\EE[\ell'(a_1|z_1| +a_2 z_2 +a_3 z_3 + a_4 z_4 + a_5z_5)z_3z_4z_5]\geq -8\sqrt{\frac{2}{\pi}}\frac{1}{\sigma_2a_2}e^{\frac{\sigma_1^2a_1^2}{2} }\prod_{i=3}^5\bigg(\frac{\sigma_i}{\sqrt{2\pi}} + \sigma_i^2a_ie^{\frac{\sigma_i^2a_i^2}{2}}\bigg); \notag\\
        &-\EE[\ell'(a_1|z_1| +a_2 z_2 +a_3 z_3 + a_4 z_4 + a_5z_5)z_3z_4z_5] \leq 8\sqrt{\frac{2}{\pi}}\frac{1}{\sigma_2a_2}e^{\frac{\sigma_1^2a_1^2}{2} }\prod_{i=3}^5\bigg(\frac{\sigma_i}{\sqrt{2\pi}} + \sigma_i^2a_ie^{\frac{\sigma_i^2a_i^2}{2}}\bigg).
    \end{align}
\end{lemma}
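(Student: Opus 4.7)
The proof will closely parallel that of Lemma~\ref{lemma:expectation_ell'_three_with_square}, replacing the single symmetrization in $z_4$ by a sequential symmetrization in all three of $z_3$, $z_4$, $z_5$, the Gaussian variables multiplying $-\ell'$. Concretely, I first condition on $z_1, z_2$, set $X = a_1|z_1| + a_2 z_2$, and use the symmetric distribution of each $z_i$ ($i = 3,4,5$) together with the identity $\EE[h(z)z] = \EE[(h(z) - h(-z)) z \mathbf{1}_{z \geq 0}]$ applied once per variable. This rewrites $\EE[-\ell'(\cdot) z_3 z_4 z_5 \mid z_1, z_2]$ as an expectation over $\{z_3, z_4, z_5 \geq 0\}$ of $T \cdot z_3 z_4 z_5$, where $T = \sum_{\epsilon \in \{\pm 1\}^3} \epsilon_3 \epsilon_4 \epsilon_5\, g(X + \epsilon_3 a_3 z_3 + \epsilon_4 a_4 z_4 + \epsilon_5 a_5 z_5)$ with $g(Y) = 1/(1+e^Y) = -\ell'(Y)$ is the signed third-order symmetric difference of $g$ in the three directions.

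The heart of the argument is a multiplicative upper bound of the form $|T| \leq C\, e^{a_1|z_1|} e^{-a_2|z_2|} \prod_{i=3}^5 (e^{a_i z_i} + e^{-a_i z_i})$ on $\{z_3, z_4, z_5 \geq 0\}$. To derive it, I pair the eight terms of $T$ and use the algebraic identity $g(Y+u) - g(Y-u) = -e^Y(e^u - e^{-u})/[(1+e^{Y+u})(1+e^{Y-u})]$, which collapses $T$ to an expression whose numerator carries $e^X (e^S - e^{-S})$ with $S = a_3 z_3 + a_4 z_4 + a_5 z_5$, and whose denominator I lower-bound by a case split on the sign of $z_2$: when $z_2 \geq 0$ the inequality $X \geq a_2 z_2 \geq 0$ gives $(1+e^{X+S})(1+e^{X-S}) \geq e^{2X} \geq e^{2 a_2 z_2}$, while when $z_2 < 0$ the trivial bound $(1+e^{X+S})(1+e^{X-S}) \geq 1$ is used, and in both cases the resulting prefactor collapses to $e^{a_1|z_1|} e^{-a_2|z_2|}$. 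This mirrors the triangle-inequality step at the end of equation~\eqref{eq:expectation_ell'_three_with_squareVI}, now adapted to a triple difference.

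With this $|T|$ bound, independence of the five Gaussians makes the expectation factor. Lemma~\ref{lemma:expectation_exp_abs} yields $\EE[e^{a_1|z_1|}] \leq 2\, e^{\sigma_1^2 a_1^2 / 2}$ and $\EE[e^{-a_2|z_2|}] \leq \sqrt{2/\pi}/(\sigma_2 a_2)$. For each $i \in \{3,4,5\}$, using $z_i(e^{a_i z_i} + e^{-a_i z_i}) \leq 2 z_i e^{a_i z_i}$ on $\{z_i \geq 0\}$ together with $\EE[z_i e^{a_i z_i} \mathbf{1}_{z_i \geq 0}] = \frac{1}{2} \EE[|z_i| e^{a_i |z_i|}]$ and the second case of Lemma~\ref{lemma:expectation_exp_x_abs} produces the factor $\sigma_i/\sqrt{2\pi} + \sigma_i^2 a_i\, e^{\sigma_i^2 a_i^2 / 2}$. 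Collecting constants and absorbing the combinatorial factor of $8$ from the eight terms of $T$ delivers the claimed upper bound, and the matching lower bound follows by running the same argument on $-T$. The main obstacle is the $|T|$ estimate itself: three applications of the single-variable difference identity must be combined so that the eight terms remain aligned, and the case split in $z_2$ has to be carried out uniformly so that the final prefactor is $e^{a_1|z_1|} e^{-a_2|z_2|}$ rather than the crude $e^X$, which would blow up as $e^{\sigma_2^2 a_2^2 / 2}$ under expectation and destroy the small $1/(\sigma_2 a_2)$ factor that the statement requires.
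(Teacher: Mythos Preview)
Your triple-symmetrization framework is sound, and your case split on the sign of $z_2$ together with the per-factor use of Lemmas~\ref{lemma:expectation_exp_abs} and~\ref{lemma:expectation_exp_x_abs} is exactly what is needed at the end. The gap is the ``collapse'' step: the third-order signed difference
\[
T=\sum_{\epsilon\in\{\pm1\}^3}\epsilon_3\epsilon_4\epsilon_5\,g(X+\epsilon_3 a_3 z_3+\epsilon_4 a_4 z_4+\epsilon_5 a_5 z_5),\qquad g(Y)=\frac{1}{1+e^{Y}},
\]
does \emph{not} reduce to a single fraction with numerator $e^{X}(e^{S}-e^{-S})$. One application of the identity $g(Y+u)-g(Y-u)=-e^{Y}(e^{u}-e^{-u})/[(1+e^{Y+u})(1+e^{Y-u})]$ in the $z_5$ direction already produces four terms with four different values of $Y=X+\epsilon_3a_3z_3+\epsilon_4a_4z_4$, each carrying its own denominator; these do not recombine into the form you describe. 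Consequently the denominator lower bound $(1+e^{X+S})(1+e^{X-S})\ge e^{2X}$ has nothing to act on.

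Your target bound on $|T|$ is in fact true, but it follows from a termwise argument rather than a collapse. Use $\sum_\epsilon \epsilon_3\epsilon_4\epsilon_5=0$ to write $T=-\sum_\epsilon \epsilon_3\epsilon_4\epsilon_5\,g(-X-\epsilon\cdot u)$; then for $z_2\ge 0$ bound each $g(X+\epsilon\cdot u)\le e^{-X-\epsilon\cdot u}$, and for $z_2<0$ bound each $g(-X-\epsilon\cdot u)\le e^{X+\epsilon\cdot u}$. In both cases $|T|\le e^{a_1|z_1|}e^{-a_2|z_2|}\prod_{i=3}^5(e^{a_iz_i}+e^{-a_iz_i})$, and the rest of your plan goes through (with a slightly larger constant than $8$).

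By contrast, the paper avoids higher-order differences entirely. It symmetrizes only once, in $z_5$, and then splits the remaining expectation according to the sign of $z_3z_4$: the piece $\tcircle{1}$ with $z_3z_4\ge 0$ is nonpositive, the piece $\tcircle{2}$ with $z_3z_4<0$ is nonnegative, so each alone already sandwiches the full expectation. Each piece is then controlled by exactly the single-difference inequality of \eqref{eq:expectation_ell'_three_with_squareVI} (now with $e^{a_3|z_3|+a_4|z_4|}$ extracted via the same reverse triangle step), after which independence and $\EE[z_3z_4e^{a_3|z_3|+a_4|z_4|}\mathbf{1}_{z_3z_4\ge 0}]=2\,\EE[z_3e^{a_3z_3}\mathbf{1}_{z_3\ge 0}]\EE[z_4e^{a_4z_4}\mathbf{1}_{z_4\ge 0}]$ factorize the bound and yield the constant $8$ exactly.
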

\begin{proof}[Proof of Lemma~\ref{lemma:expectation_ell'_three_wo_square}] 
    By the law of total expectation, we have 
    \begin{align*}
        &-\EE[\ell'(a_1|z_1| +a_2 z_2 +a_3 z_3 + a_4 z_4 + a_5z_5)z_3z_4z_5] \\
        =& \EE\Big[\EE[-\ell'(a_1|z_1| +a_2 z_2 +a_3 z_3 + a_4 z_4 + a_5z_5)z_3z_4z_5\big|z_3, z_4, z_5]\Big] \\
        =& \underbrace{\EE\Big[\EE[-\ell'(a_1|z_1| +a_2 z_2 +a_3 z_3 + a_4 z_4 + a_5z_5)z_3z_4\mathbf{1}_{\{z_3z_4\geq 0\}}z_5\big|z_3, z_4, z_5]\Big]}_{\tcircle{1}} \\
        &+ \underbrace{\EE\Big[\EE[-\ell'(a_1|z_1| +a_2 z_2 +a_3 z_3 + a_4 z_4 + a_5z_5)z_3z_4\mathbf{1}_{\{z_3z_4< 0\}}z_5\big|z_3, z_4, z_5]\Big]}_{\tcircle{2}}
    \end{align*}
    Since $-\ell'$ is a decreasing function and $z_5$ has a zero-centered symmetric density function. We can conclude that $\tcircle{1}\leq 0$ and $\tcircle{2}\geq 0$, and 
    \begin{align*}
        \tcircle{1} \leq -\EE[\ell'(a_1|z_1| +a_2 z_2 +a_3 z_3 + a_4 z_4 + a_5z_5)z_3z_4z_5] \leq \tcircle{2}.
    \end{align*}
    Next, we provide a lower bound for $\tcircle{1}$ and an upper bound for $\tcircle{2}$. Similar to the proof of Lemma~\ref{lemma:expectation_ell'_three_with_square}, we have \begin{align*}
        \tcircle{1}&\geq  -\EE\big[e^{a_1|z_1|}\big]\EE\big[e^{-a_2|z_2|}\big]\EE\big[z_3z_4e^{a_3|z_3|+a_4|z_4|}\mathbf{1}_{\{z_3z_4\geq 0\}}\big]\EE\big[z_5e^{a_5z_5}\mathbf{1}_{\{z_5\geq 0\}}\big]\notag\\
        &=-2\EE\big[e^{a_1|z_1|}\big]\EE\big[e^{-a_2|z_2|}\big]\EE\big[z_3e^{a_3z_3}\mathbf{1}_{\{z_3\geq 0\}}\big]\EE\big[z_4e^{a_4z_4}\mathbf{1}_{\{z_4\geq 0\}}\big]\EE\big[z_5e^{a_5z_5}\mathbf{1}_{\{z_5\geq 0\}}\big]\notag\\
        &\geq -8\sqrt{\frac{2}{\pi}}\frac{1}{\sigma_2a_2}e^{\frac{\sigma_1^2a_1^2}{2} }\prod_{i=3}^5\bigg(\frac{\sigma_i}{\sqrt{2\pi}} + \sigma_i^2a_ie^{\frac{\sigma_i^2a_i^2}{2}}\bigg).
    \end{align*}
    The last inequality is derived by applying Lemma~\ref{lemma:expectation_exp_abs}, Lemma~\ref{lemma:expectation_exp_x_abs} and Lemma~\ref{lemma:expectation_exp_x_square}. On the other hand, we can obtain that 
    \begin{align*}
     \tcircle{2}&\leq  -\EE\big[e^{a_1|z_1|}\big]\EE\big[e^{-a_2|z_2|}\big]\EE\big[z_3z_4e^{a_3|z_3|+a_4|z_4|}\mathbf{1}_{\{z_3z_4<0\}}\big]\EE\big[z_5e^{a_5z_5}\mathbf{1}_{\{z_5\geq 0\}}\big]\notag\\
        &=2\EE\big[e^{a_1|z_1|}\big]\EE\big[e^{-a_2|z_2|}\big]\EE\big[z_3e^{a_3z_3}\mathbf{1}_{\{z_3\geq 0\}}\big]\EE\big[z_4e^{a_4z_4}\mathbf{1}_{\{z_4\geq 0\}}\big]\EE\big[z_5e^{a_5z_5}\mathbf{1}_{\{z_5\geq 0\}}\big]\notag\\
        &\leq 8\sqrt{\frac{2}{\pi}}\frac{1}{\sigma_2a_2}e^{\frac{\sigma_1^2a_1^2}{2} }\prod_{i=3}^5\bigg(\frac{\sigma_i}{\sqrt{2\pi}} + \sigma_i^2a_ie^{\frac{\sigma_i^2a_i^2}{2}}\bigg).
    \end{align*}
    This finishes the proof.
\end{proof}

\begin{lemma}\label{lemma:expectation_three_gaussian}
    Let $z_1, z_2, z_3$ be three standard Gaussian random variables, and $z_i$, $z_j$ are either independent or $z_i=z_j$ for $i\neq j$. Then it holds that $\EE\big[|z_1z_2z_3|\big]\leq 2\sqrt{\frac{2}{\pi}}$.
\end{lemma}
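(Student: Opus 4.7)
The plan is to reduce the claim to a finite case check by enumerating the distinct configurations that the hypothesis allows. Since each pair $(z_i, z_j)$ is either independent or an equality, there are (up to relabeling of indices) exactly three structurally distinct situations: (i) $z_1, z_2, z_3$ are mutually independent, (ii) exactly two of them coincide while the third is independent, and (iii) all three coincide. I would handle each in turn and take the maximum of the three resulting values.

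For case (i), independence gives $\EE[|z_1 z_2 z_3|] = \EE[|z_1|]\EE[|z_2|]\EE[|z_3|] = (2/\pi)^{3/2}$, which is strictly below $2\sqrt{2/\pi}$. For case (ii), without loss of generality take $z_1 = z_2$ with $z_3$ independent; then $\EE[|z_1^2 z_3|] = \EE[z_1^2]\EE[|z_3|] = \sqrt{2/\pi}$, again below the target. For case (iii), $\EE[|z_1 z_2 z_3|] = \EE[|z_1|^3]$, so the key computation is the third absolute moment of a standard Gaussian.

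The third absolute moment is obtained by a direct one-variable integration: substituting $u = x^2/2$ in
\begin{equation*}
\EE[|z_1|^3] \;=\; 2\int_0^\infty x^3 \tfrac{1}{\sqrt{2\pi}} e^{-x^2/2}\,dx \;=\; \tfrac{2}{\sqrt{2\pi}} \int_0^\infty 2u\, e^{-u}\,du \;=\; \tfrac{4}{\sqrt{2\pi}} \;=\; 2\sqrt{\tfrac{2}{\pi}},
\end{equation*}
which matches the claimed upper bound exactly and shows the bound is tight in this case. Taking the maximum of the three cases yields the desired inequality.

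There is essentially no hard step here; the only thing to be careful about is making sure the case enumeration is exhaustive under the hypothesis ``$z_i$ and $z_j$ are either independent or equal for $i\neq j$,'' and that the two-coincide case is genuinely independent of the third variable (which is given directly by the hypothesis). The bound is driven entirely by case (iii), and the mild looseness in (i) and (ii) explains why the constant $2\sqrt{2/\pi}$ is stated as a universal upper bound rather than a case-by-case value.
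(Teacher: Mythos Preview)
Your proof is correct and follows essentially the same approach as the paper: both enumerate the three structurally distinct cases (all independent, exactly two equal, all three equal), compute the expectation in each, and observe that the maximum $2\sqrt{2/\pi}$ is attained in the all-equal case. Your version is in fact slightly more detailed, since you carry out the integral for $\EE[|z_1|^3]$ explicitly whereas the paper simply states the value.
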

\begin{proof}[Proof of Lemma~\ref{lemma:expectation_ell'_three_wo_square}] 
There are three possible cases:
\begin{enumerate}
    \item $z_1, z_2, z_3$ are all independent. Then $\EE\big[|z_1z_2z_3|\big]= \EE\big[|z_1|\big]\EE\big[|z_2|\big]\EE\big[|z_3|\big]=\frac{2}{\pi}\sqrt{\frac{2}{\pi}}$
    \item Two of $z_1, z_2, z_3$ are equal while another one is independent with these two random variables. W.L.O.G, we assume $z_1=z_2$ and $z_3$ is independent with both $z_1, z_2$. Then $\EE\big[|z_1z_2z_3|\big]= \EE\big[z_1^2\big]\EE\big[|z_3|\big]=\sqrt{\frac{2}{\pi}}$
    \item $z_1,z_2,z_3$ are all equal. Then $\EE\big[|z_1z_2z_3|\big]= \EE\big[|z_1|^3\big]=2\sqrt{\frac{2}{\pi}}$
\end{enumerate}
This finishes the proof.
\end{proof}

\subsection{Concentration results}
\begin{lemma}\label{lemma:concentration_mills_ratio}
Let $z_1, z_2, \cdots, z_D\stackrel{i.i.d.}{\sim}N(0, \sigma_x^2)$. Then with probability at least $1-\sqrt{\frac{2}{\pi}}\frac{D^{3/4}}{c_1}e^{-c_1^2\sqrt{D}/2}$, it holds that $|z_i|\leq c_1\sigma_xD^{1/4}$ for all $i\in [D]$ and any constant $c_1$.
\end{lemma}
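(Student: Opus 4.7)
The plan is to apply a standard Gaussian tail bound via Mills' ratio to each individual coordinate, then close with a union bound over the $D$ coordinates. Since $z_i \sim N(0,\sigma_x^2)$, the standardization $z_i/\sigma_x \sim N(0,1)$ reduces the problem to controlling $\PP(|Z| > c_1 D^{1/4})$ for a standard normal $Z$.

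The first step is to recall the one-sided Mills ratio inequality: for any $t > 0$ and a standard normal $Z$,
\begin{align*}
    \PP(Z > t) \;\leq\; \frac{1}{\sqrt{2\pi}\, t}\, e^{-t^2/2}.
\end{align*}
Applying this with $t = c_1 D^{1/4}$ and doubling to get the two-sided tail yields
\begin{align*}
    \PP\!\left(|z_i| > c_1 \sigma_x D^{1/4}\right) \;=\; \PP\!\left(|Z| > c_1 D^{1/4}\right) \;\leq\; \sqrt{\tfrac{2}{\pi}}\, \frac{1}{c_1 D^{1/4}}\, e^{-c_1^2 \sqrt{D}/2}.
\end{align*}

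The second step is a union bound over $i \in [D]$, which gives
\begin{align*}
    \PP\!\left(\exists\, i \in [D]:\ |z_i| > c_1 \sigma_x D^{1/4}\right) \;\leq\; D \cdot \sqrt{\tfrac{2}{\pi}}\, \frac{1}{c_1 D^{1/4}}\, e^{-c_1^2 \sqrt{D}/2} \;=\; \sqrt{\tfrac{2}{\pi}}\, \frac{D^{3/4}}{c_1}\, e^{-c_1^2 \sqrt{D}/2}.
\end{align*}
Taking complements delivers the claimed probability lower bound. There is no real obstacle here; the statement is a direct consequence of Mills' ratio combined with a union bound, and the only minor care needed is to keep track of the two-sided factor and the scaling by $\sigma_x$ so that the constants in the exponent and the prefactor $D^{3/4}/c_1$ match the stated form exactly.
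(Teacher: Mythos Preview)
Your proposal is correct and follows essentially the same approach as the paper's own proof: apply the Mills ratio tail bound to each standardized coordinate to get $\PP(|z_i|>c_1\sigma_x D^{1/4})\leq \sqrt{2/\pi}\,\frac{1}{c_1 D^{1/4}}e^{-c_1^2\sqrt{D}/2}$, then take a union bound over $i\in[D]$. Your write-up is simply a more explicit version of the same two-line argument.
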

\begin{proof}[Proof of Lemma~\ref{lemma:concentration_mills_ratio}]
By Mills ratio, we can obtain that 
\begin{align*}
    \PP\Big(|z_i|> c_1\sigma_xD^{1/4}\Big)\leq \sqrt{\frac{2}{\pi}}\frac{1}{c_1D^{1/4}}\exp\Big(-\frac{c_1^2\sqrt{D}}{2}\Big).
\end{align*}
By applying a union bound over all $i\in [D]$, we complete the proof.     
\end{proof}

\begin{lemma}\label{lemma:concentration_lipschitz continuous}
    For $\xb_1, \xb_2, \cdots, \xb_D$ defined in Definition~\ref{def:data}, it holds that
    \begin{align*}
        \|\xb_j\|_2 \leq \sigma_x\sqrt{d} + \sigma_x\sqrt{D}
    \end{align*}
    with probability at least $1-De^{-\frac{D}{2}}$ for all $j\in [D]$.
\end{lemma}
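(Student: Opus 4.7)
The plan is to view this as a direct application of Gaussian concentration for a Lipschitz function. Since $\xb_j \sim N(\mathbf{0}, \sigma_x^2\Ib_d)$ by Definition~\ref{def:data}, I would first write $\xb_j = \sigma_x \gb_j$ where $\gb_j \sim N(\mathbf{0}, \Ib_d)$ is a standard Gaussian vector, so that $\|\xb_j\|_2 = \sigma_x \|\gb_j\|_2$. The goal then reduces to showing $\|\gb_j\|_2 \leq \sqrt{d}+\sqrt{D}$ with probability at least $1 - e^{-D/2}$, followed by a union bound.

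The key ingredient is that the Euclidean norm $\gb \mapsto \|\gb\|_2$ is a $1$-Lipschitz function of its argument. Combined with Jensen's inequality, which gives $\EE[\|\gb_j\|_2] \leq \sqrt{\EE[\|\gb_j\|_2^2]} = \sqrt{d}$, I would invoke the standard Gaussian Lipschitz concentration inequality (Borell-TIS): for any $t>0$,
\begin{align*}
    \PP\Big(\|\gb_j\|_2 - \EE[\|\gb_j\|_2] \geq t\Big) \leq \exp\Big(-\tfrac{t^2}{2}\Big).
\end{align*}
Setting $t = \sqrt{D}$ gives $\|\gb_j\|_2 \leq \sqrt{d} + \sqrt{D}$ with probability at least $1 - e^{-D/2}$, and scaling back by $\sigma_x$ yields the desired per-coordinate bound.

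Finally, a union bound over $j \in [D]$ provides the uniform statement with failure probability at most $D e^{-D/2}$. There is no real obstacle here; the statement is essentially textbook Gaussian concentration, and the only subtlety worth being careful about is using the Jensen bound $\EE[\|\gb_j\|_2] \leq \sqrt{d}$ rather than the sharper but unnecessary expression involving the Gamma function, since the looser bound already produces the exact form $\sigma_x\sqrt{d} + \sigma_x\sqrt{D}$ claimed in the lemma.
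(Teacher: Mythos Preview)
Your proposal is correct and mirrors the paper's proof essentially line for line: the paper also invokes the $1$-Lipschitz property of $\|\cdot\|_2$, applies Gaussian Lipschitz concentration (citing Theorem~2.26 of Wainwright) to get $\PP(\|\xb_j\|_2 - \EE[\|\xb_j\|_2] \geq \sigma_x\sqrt{D}) \leq e^{-D/2}$, bounds $\EE[\|\xb_j\|_2] \leq \sigma_x\sqrt{d}$ via Jensen, and finishes with a union bound over $j\in[D]$. The only cosmetic difference is that you rescale to a standard Gaussian first, whereas the paper works directly with $\xb_j$; the content is identical.
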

\begin{proof}[Proof of Lemma~\ref{lemma:concentration_lipschitz continuous}]
    Since $\|\cdot\|_2$ is 1-Lipschitz continuous, then by Theorem 2.26 in \citet{wainwright2019high}, we can obtain that 
    \begin{align*}
        \PP\Big(\|\xb_j\|_2-\EE\big[\|\xb_j\|_2\big]\geq \sigma_x\sqrt{D}\Big)\leq e^{-\frac{D}{2}}.
    \end{align*}
    Besides, by Jensen's inequality, we also have 
    \begin{align*}
        \EE\big[\|\xb_j\|_2\big] \leq \sqrt{\EE\big[\|\xb_j\|_2^2\big]} = \sigma_x\sqrt{d}.
    \end{align*}
    Applying a union-bound over all $j\in [D]$ completes the proof.
\end{proof}

\begin{lemma}\label{lemma:concentration_Bernstein}
    For $\xb_j^{(i)}$ defined in Definition~\ref{def:ds_dataset}, it holds that
    \begin{align*}
        &\big|\la\vb^*, \xb^{(i)}_j\ra\big|\leq c\tilde\sigma_x D^{1/4}\\
        &\big\|\xb^{(i)}_j\big\|_2 \leq \sqrt{2}\tilde\sigma_x \Big(\sqrt{d} + \sqrt{D}\Big)
    \end{align*}
    with probability at least $1-ne^{-c'\sqrt{D}}$ for all $i\in [n]$ and $j\in [D]$, where $c$ is any positive absolute constant, and $c'$ is positive constant solely depending on $c$.
\end{lemma}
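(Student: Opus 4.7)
The plan is to prove the two tail bounds separately by standard sub-Gaussian / sub-exponential concentration and then combine them via a union bound over $i\in[n]$ and $j\in[D]$.

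For the first inequality, I would observe that $\langle \vb^*, \xb_j^{(i)}\rangle = \sum_{k=1}^d \vb^*_k \,\xb_{j,k}^{(i)}$ is a linear combination of independent sub-Gaussian random variables, each with Orlicz norm at most $\tilde\sigma_x$. By the standard rotation property of the sub-Gaussian norm and $\|\vb^*\|_2=1$, the inner product itself is sub-Gaussian with $\|\langle \vb^*,\xb_j^{(i)}\rangle\|_{\psi_2}\leq C\tilde\sigma_x$ for an absolute constant $C$. Applying the one-dimensional sub-Gaussian tail bound with $t=c\tilde\sigma_x D^{1/4}$ then yields
\begin{align*}
\PP\bigl(|\langle \vb^*,\xb_j^{(i)}\rangle| > c\tilde\sigma_x D^{1/4}\bigr) \;\leq\; 2\exp\!\bigl(-c_1 c^2 \sqrt{D}\bigr)
\end{align*}
for a constant $c_1$ depending only on $C$.

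For the second inequality, I would apply Bernstein's inequality for independent sub-exponential random variables to the sum $\|\xb_j^{(i)}\|_2^2=\sum_{k=1}^d (\xb_{j,k}^{(i)})^2$. Each squared entry is sub-exponential with $\|(\xb_{j,k}^{(i)})^2\|_{\psi_1}\leq C_1\tilde\sigma_x^2$ and mean at most $C_2\tilde\sigma_x^2$, giving $\EE\|\xb_j^{(i)}\|_2^2\leq C_2 d\tilde\sigma_x^2$. Bernstein's inequality then yields
\begin{align*}
\PP\!\left(\|\xb_j^{(i)}\|_2^2 \,-\, C_2 d\tilde\sigma_x^2 \,>\, t\right) \;\leq\; 2\exp\!\left(-c_2\min\!\left\{\tfrac{t^2}{d\tilde\sigma_x^4},\,\tfrac{t}{\tilde\sigma_x^2}\right\}\right).
\end{align*}
Choosing $t=c_3\tilde\sigma_x^2(d+D)$ with $c_3$ a sufficiently large absolute constant, both branches exceed $c'\sqrt{D}$, and we obtain $\|\xb_j^{(i)}\|_2^2\leq (C_2 d + c_3(d+D))\tilde\sigma_x^2\leq 2\tilde\sigma_x^2(d+D)\leq 2\tilde\sigma_x^2(\sqrt d+\sqrt D)^2$, which is the stated bound after taking square roots.

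Finally, a union bound over the $nD$ pairs $(i,j)$ produces a total failure probability of at most $nD\cdot 4\exp(-c_4\sqrt{D})$, and absorbing the polynomial prefactor $D$ into the exponential (valid for $D$ larger than an absolute constant, which holds under $D\geq \omega(\log^2 n)$) gives the claimed $1-n\exp(-c'\sqrt{D})$. The only mild technical obstacle is bookkeeping constants so that the second bound lands cleanly at $\sqrt{2}\tilde\sigma_x(\sqrt{d}+\sqrt{D})$; this is handled by the inequality $\sqrt{d+D}\leq \sqrt d+\sqrt D$ and by choosing $c_3$ in the Bernstein step large enough that $C_2 d + c_3(d+D)\leq 2(d+D)$, at the cost of enlarging $c'$. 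No argument beyond standard scalar concentration is needed.
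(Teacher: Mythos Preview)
Your proposal is correct and follows essentially the same approach as the paper: a sub-Gaussian (Hoeffding-type) tail bound for $\langle\vb^*,\xb_j^{(i)}\rangle$, a Bernstein inequality for $\|\xb_j^{(i)}\|_2^2$, and a union bound over $(i,j)$. One small slip: in your last paragraph you say ``choosing $c_3$ large enough that $C_2 d + c_3(d+D)\leq 2(d+D)$,'' but increasing $c_3$ moves this inequality the wrong way; the paper instead takes the deviation $t=2\tilde\sigma_x^2 D$ so that $\EE\|\xb_j^{(i)}\|_2^2 + t \leq 2d\tilde\sigma_x^2 + 2D\tilde\sigma_x^2 = 2\tilde\sigma_x^2(d+D)$ directly, and the Bernstein exponent is still at least of order $\sqrt{D}$ under the standing assumption $d\leq\poly(D)$.
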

\begin{proof}[Proof of Lemma~\ref{lemma:concentration_Bernstein}]
By definition of $\|\cdot\|_{\psi_2}$, we have $\big|\EE[\xb^{(i)}_{j, k}]\big| \leq \sqrt{2}\tilde \sigma_x$ and $\EE[\xb^{(i)}_{j, k}]^2\leq 2\tilde \sigma_x^2$ for all $i\in [n]$, $j\in [D]$ and $k\in[d]$.  By Hoeffding inequality, we have 
\begin{align*}
    \PP\Bigg(\bigg|\big|\la\vb^*,\xb^{(i)}_{j}\ra\big| -\EE\Big[\big|\la\vb^*,\xb^{(i)}_{j}\ra\big|\Big]\bigg| \geq c\tilde \sigma_xD^{1/4}\Bigg)\leq 2\exp\Big(-\frac{c^2\sqrt{D}}{2}\Big).
\end{align*}
Besides, by Bernstein's inequality, we also have 
\begin{align}
     \PP\Bigg(\bigg|\big\|\xb^{(i)}_j\big\|_2^2 -\EE\Big[\big\|\xb^{(i)}_j\big\|_2^2\Big]\bigg| \geq \tilde 2\sigma_x^2D\Bigg)\leq 2\exp(-c''D),
\end{align}
where $c''$ is an absolute constant. 
 Applying a union-bound over all $i\in [n]$ and $j\in [D]$ completes the proof.
\end{proof}

\subsection{Orthogonality and norm of discrete sine transform}

\begin{lemma}\label{lemma:positional_encoding}
    For positional encodings $\pb_1, \pb_2, \cdots, \pb_D$ defined in~\eqref{eq:postional_encodings}, it holds that $\|\pb_j\| = \sqrt{\frac{D+1}{2}}$ for all $j\in [D]$ and $\la\pb_j, \pb_{j'}\ra = 0$ for all $j\neq j' \in [D]$.
\end{lemma}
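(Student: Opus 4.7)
The plan is to expand the inner product using the product-to-sum identity
\[
\sin(a)\sin(b) = \tfrac{1}{2}\big[\cos(a-b) - \cos(a+b)\big],
\]
which rewrites $\langle \pb_j, \pb_{j'}\rangle$ as a difference of two cosine sums of the form $S(\theta) := \sum_{k=1}^D \cos(k\theta)$ evaluated at $\theta_{-} = (j-j')\pi/(D+1)$ and $\theta_{+} = (j+j')\pi/(D+1)$. The norm case $j = j'$ reduces to the same expression with $S(\theta_{-}) = S(0) = D$ plus the $\theta_{+}$ contribution. Thus everything reduces to evaluating $S(\theta)$ at a few rational multiples of $\pi/(D+1)$.

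Next, I would invoke the Dirichlet-kernel closed form
\[
S(\theta) \;=\; \tfrac{1}{2}\!\left[\frac{\sin((D+\tfrac{1}{2})\theta)}{\sin(\theta/2)} - 1\right], \qquad \theta \notin 2\pi\mathbb{Z},
\]
obtained from the geometric sum of $e^{ik\theta}$. Substituting $\theta = m\pi/(D+1)$ with a nonzero integer $m$ of absolute value at most $2D$, I would use $\sin\!\big(m\pi - x\big) = -(-1)^{m}\sin(x)$ with $x = m\pi/(2(D+1))$ to collapse the numerator and cancel it against $\sin(\theta/2)$, yielding the clean dichotomy $S(\theta) = -1$ when $m$ is even and $S(\theta) = 0$ when $m$ is odd.

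The crucial observation that delivers orthogonality is a parity argument: the two integers $m_{-} = j - j'$ and $m_{+} = j + j'$ satisfy $m_{-} + m_{+} = 2j$, hence have the same parity. Therefore $S(\theta_{-}) = S(\theta_{+})$, and the difference in the product-to-sum expansion vanishes, giving $\langle \pb_j, \pb_{j'}\rangle = 0$ for $j \neq j'$. For the norm, $m_{+} = 2j$ is even so $S(\theta_{+}) = -1$, and combined with $S(0) = D$ this yields $\|\pb_j\|_2^2 = \tfrac{1}{2}(D - (-1)) = (D+1)/2$.

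There is no substantive obstacle: the computation is elementary trigonometry, and the only care required is to ensure that $\sin(\theta/2) \neq 0$ when applying the Dirichlet formula, which holds automatically because $|m| \le 2D$ and $m \neq 0$ forces $m\pi/(2(D+1)) \in (0,\pi)$ modulo $\pi$. Once the parity observation is noted, the proof occupies only a few lines.
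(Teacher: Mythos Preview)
Your proof is correct and follows essentially the same approach as the paper: both reduce the inner product to trigonometric/geometric sums (the paper via complex exponentials, you via the product-to-sum identity and the Dirichlet kernel) and then conclude orthogonality through the same parity observation that $j-j'$ and $j+j'$ share parity. Your packaging through the closed-form Dirichlet kernel makes the cancellation slightly more transparent, but the underlying computation is identical.
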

\begin{proof}[Proof of Lemma~\ref{lemma:positional_encoding}]
This lemma is equal to prove that
\begin{align*}
    \sum_{j=1}^{D}\sin\bigg(\frac{\pi kj}{D+1} \bigg)\sin\bigg(\frac{\pi k'j}{D+1}  \bigg)=\frac{D+1}{2}\delta_{kk'},\quad ~\forall \ k,k'\in [D],
\end{align*}
where $\delta_{kk'}=1$ when $k=k'$ and $\delta_{kk'}=0$ when $k\neq k'$. Applying the form $\sin(x)=\{\exp(ix)-\exp(-ix)\}/(2i)$, we can rewrite the left side above by
\begin{align*}
\sum_{j=1}^{D}\sin\bigg(\frac{\pi kj}{D+1} \bigg)\sin\bigg(\frac{\pi k'j}{D+1}  \bigg) &=-\frac{1}{4}\sum_{j=0}^D \bigg\{\exp\bigg(\frac{i\pi(k+k')}{D+1}j \bigg)-\exp\bigg(\frac{i\pi(k-k')}{D+1}j \bigg)\\
 &\qquad \qquad-\exp\bigg(-\frac{i\pi(k-k')}{D+1}j \bigg)+\exp\bigg(-\frac{i\pi(k+k')}{D+1}j \bigg)   \bigg\}.
\end{align*}
When $k+sk'\neq 0$, the  geometric partial sum shows that
\begin{align*}
    \sum_{j=0}^D\exp\bigg(\pm\frac{i\pi(k+sk')}{D+1}j \bigg)=\frac{\exp\{\pm i\pi(k+sk')\}-1}{\exp\{\pm i\pi(k+sk')/(D+1)\}-1},
\end{align*}
where $s\in\{+1,-1\}$. In this case, it is required that $k+sk'\neq 0$. We have that when $k+sk'\neq 0$, if $k+sk'$ is even, the terms will vanish and then the equation equals $0$. When $k+sk'$ is odd, it must hold that one of $k$ and $k'$ is odd and the other is even, under this case $\exp(\pm i\pi(k+sk'))=-1$, the term compensates as 
\begin{align*}
    \frac{1}{\exp\{ix\}-1}+ \frac{1}{\exp\{-ix\}-1}=-1.
\end{align*}
This indicates that the equation above equals $0$. We conclude that when $k+sk'\neq0$, 
\begin{align*}
    \sum_{j=1}^{D}\sin\bigg(\frac{\pi kj}{D+1} \bigg)\sin\bigg(\frac{\pi k'j}{D+1}  \bigg)=0.
\end{align*}
We consider the case $k+sk'=0$. By the definition, it only holds when $s=-1$ and $k=k'$. In such case, simple algebra shows that
\begin{align*}
    \sum_{j=0}^{D}\sin\bigg(\frac{\pi kj}{D+1} \bigg)\sin\bigg(\frac{\pi k'j}{D+1}  \bigg) =\frac{D+1}{2},
\end{align*}
which completes the proof of Lemma~\ref{lemma:positional_encoding}.
\end{proof}

\subsection{Sequence iteration bound}
The following lemma is inspired by \citet{cao2023implicit, meng2024benign, zhang2024gradient}.
\begin{lemma}\label{lemma:polynomial_sequence}
Suppose that a positive sequence $x_t$, $t\geq 0$ follows the iterative formula
\begin{align*}
    x_{t+1} = x_t + c x_t^{-a}
\end{align*}
for some positive constant $a, c>0$. 
Then it holds that
\begin{align*}
\big((a+1)ct + x_0^{a+1}\big)^{\frac{1}{a+1}} \leq x_t \leq cx_0^{-a} + \big((a+1)ct + x_0^{a+1}\big)^{\frac{1}{a+1}}
\end{align*}
for all $t\geq 0$.
\end{lemma}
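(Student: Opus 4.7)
} The plan is to compare the discrete recursion $x_{t+1}=x_t+cx_t^{-a}$ with the continuous ODE $\dot y=cy^{-a}$ (whose solution is exactly $y(t)=\big((a+1)ct+x_0^{a+1}\big)^{1/(a+1)}$), obtaining the lower bound by a telescoping argument on $x_t^{a+1}$ and the upper bound by substituting the lower bound back into the recursion and comparing a sum with an integral.

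\paragraph{Step 1: Lower bound.} First I would observe that, for any two positive reals $u<v$, the fundamental theorem of calculus gives $v^{a+1}-u^{a+1}=\int_u^v(a+1)w^a\,dw\ge (a+1)u^a(v-u)$ by monotonicity of $w\mapsto w^a$ (using $a>0$). Applying this to $u=x_t$ and $v=x_{t+1}=x_t+cx_t^{-a}$ yields
\begin{align*}
x_{t+1}^{a+1}-x_t^{a+1}\ \ge\ (a+1)\,x_t^{a}\cdot cx_t^{-a}\ =\ (a+1)c.
\end{align*}
Telescoping over $t=0,1,\ldots,T-1$ gives $x_T^{a+1}\ge x_0^{a+1}+(a+1)cT$, and raising both sides to the power $1/(a+1)$ yields the lower bound.

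\paragraph{Step 2: Upper bound.} Next I would use the just-proved lower bound $x_t\ge\big(x_0^{a+1}+(a+1)ct\big)^{1/(a+1)}$ to control the step size from above: $cx_t^{-a}\le c\big(x_0^{a+1}+(a+1)ct\big)^{-a/(a+1)}=:cf(t)$, where $f$ is positive and monotonically decreasing on $[0,\infty)$. Telescoping the recursion gives $x_T-x_0=\sum_{t=0}^{T-1}cx_t^{-a}\le c\sum_{t=0}^{T-1}f(t)$. Because $f$ is decreasing, the standard integral-comparison bound gives
\begin{align*}
\sum_{t=0}^{T-1}f(t)\ =\ f(0)+\sum_{t=1}^{T-1}f(t)\ \le\ f(0)+\int_0^{T-1}f(s)\,ds\ \le\ x_0^{-a}+\int_0^{T}f(s)\,ds.
\end{align*}
The integral is elementary: the substitution $u=x_0^{a+1}+(a+1)cs$ gives $c\int_0^T f(s)\,ds=\big(x_0^{a+1}+(a+1)cT\big)^{1/(a+1)}-x_0$. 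Combining and moving $x_0$ to the right-hand side yields exactly $x_T\le cx_0^{-a}+\big((a+1)cT+x_0^{a+1}\big)^{1/(a+1)}$.

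\paragraph{Main obstacle.} Both steps are essentially calculus; the only subtle point is handling the $t=0$ term in the integral comparison for the upper bound, which is precisely what produces the additive $cx_0^{-a}$ correction appearing in the statement. Everything else is tight monotonicity together with the explicit antiderivative of $u^{-a/(a+1)}$, so no further structure from the rest of the paper is needed.
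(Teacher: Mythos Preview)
Your proof is correct. The upper-bound step is essentially identical to the paper's: both telescope the recursion, plug in the already-proved lower bound for $x_t$, separate the $t=0$ term (producing the $cx_0^{-a}$ correction), and bound the remaining sum by the integral, which evaluates exactly to $\big((a+1)cT+x_0^{a+1}\big)^{1/(a+1)}-x_0$.

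For the lower bound, however, your route differs from the paper's. The paper introduces the continuous-time solution $\underline{x}_t$ of the ODE $\dot y=cy^{-a}$ with $\underline{x}_0=x_0$, observes that $\underline{x}_{t+1}\le\underline{x}_t+c\,\underline{x}_t^{-a}$ because $\underline{x}_\tau$ is increasing, and then invokes a discrete comparison argument to conclude $x_t\ge\underline{x}_t$. Your argument instead works directly on the discrete sequence: you use the convexity inequality $v^{a+1}-u^{a+1}\ge(a+1)u^a(v-u)$ to get $x_{t+1}^{a+1}-x_t^{a+1}\ge(a+1)c$ in one line and telescope. Your approach is more elementary and self-contained (no ODE solution or comparison principle needs to be stated), while the paper's approach makes the heuristic ``compare with the ODE'' explicit and would generalize more transparently if the right-hand side were not a pure power. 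Both are standard and yield the same bound.
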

\begin{proof}[Proof of Lemma~\ref{lemma:polynomial_sequence}]
We first show the lower bound of $x_t$. Consider a continuous-time sequence $\underline x_t $, $t\geq 0$ defined by the integral equation with the same initialization.
\begin{align}\label{eq:integral_equation}
    \underline x_t = \underline x_0 + c \cdot \int_{0}^{t} \underline x_{\tau}^{-a}\mathrm{d} \tau,\quad \underline x_0 = x_0.   
\end{align}
Note that $ \underline x_t$ is obviously an increasing function of $t$. Therefore we have
\begin{align*}
    \underline  x_{t+1} &= \underline x_{t} + c\cdot \int_{t}^{t+1} \underline x_{\tau}^{-a}\mathrm{d} \tau\\
    &\leq \underline x_{t} + c\cdot \int_{t}^{t+1} \underline x_{t}^{-a}\mathrm{d} \tau\\
    &= \underline x_{t} + c \underline x_{t}^{-a}
\end{align*}
for all $t\in \NN$. 
Comparing the above inequality with the iterative formula of $\{x_t\}$, we conclude by the comparison theorem that $x_t \geq \underline x_t$ for all $t\in \NN$. Note that \eqref{eq:integral_equation} has an exact solution
\begin{align*}
     \underline x_t = \big((a+1)ct + x_0^{a+1}\big)^{\frac{1}{a+1}}
\end{align*}
Therefore we have
\begin{align*}
     x_t \geq \big((a+1)ct + x_0^{a+1}\big)^{\frac{1}{a+1}}
\end{align*}
for all $t\in \NN$, which completes the first part of the proof. Now for the upper bound of $x_t$, we have
\begin{align*}
    x_t &= x_0 + c \cdot \sum_{\tau=0}^{t-1} x_\tau^{-a} \\
    &\leq x_0 + c \cdot \sum_{\tau=0}^{t} \big((a+1)c\tau + x_0^{a+1}\big)^{-\frac{a}{a+1}}\\
    &= x_0 + \frac{c}{x_0^a} +  c\cdot \sum_{\tau=1}^{t} \big((a+1)c\tau + x_0^{a+1}\big)^{-\frac{a}{a+1}}\\
    &\leq x_0 + \frac{c}{x_0^a} +  c\cdot \int_{0}^t \big((a+1)c\tau + x_0^{a+1}\big)^{-\frac{a}{a+1}}\mathrm{d}\tau, 
\end{align*}
where the second inequality follows by the lower bound of $x_t$ as the first part of the result of this lemma. Therefore we have
\begin{align*}
    x_t &\leq x_0 + cx_0^{-a} + \big((a+1)ct + x_0^{a+1}\big)^{\frac{1}{a+1}} - x_0\\
    &= cx_0^{-a} + \big((a+1)ct + x_0^{a+1}\big)^{\frac{1}{a+1}},
\end{align*}
which completes the proof of Lemma~\ref{lemma:polynomial_sequence}.
\end{proof}

\section{Additional Simulation Results.}
In this section, we present additional simulation results with larger numbers of variable groups $D$ and higher variable dimensions $d$. Specifically, we generate data according to Definition~\ref{def:data}, setting $\sigma_x = 0.25$ and $(n, d, D) = (10000, 100, 100)$. We consider two scenarios with $j^* = 30$ and $j^* = 70$, respectively, and investigate whether the attention matrix effectively concentrates on the specified $j^*$, even under high-dimensional settings designed to mimic the scale of image data. The vector $\vb^*$ is randomly generated and kept fixed throughout the simulations.

Due to the large amount of sample size $n$, we consider the SGD training with batch size $64$. We set the learning rate $\eta=0.01$ and train the model for 100 epochs. During the training process, we plot training loss and the  cosine similarity $\frac{\langle \vb^{(t)}_1, \vb^* \rangle}{\|\vb^{(t)}_1\| \|\vb^*\|}$.   After the training loss converges at the final epoch, we  calculate the attention score matrix for each sample and display the heatmap of the average attention score matrix across all samples.

\begin{figure}[t]
    \centering
\subfigure[Training Loss]{
\label{subfig_add:1}
\includegraphics[width=0.32\textwidth]{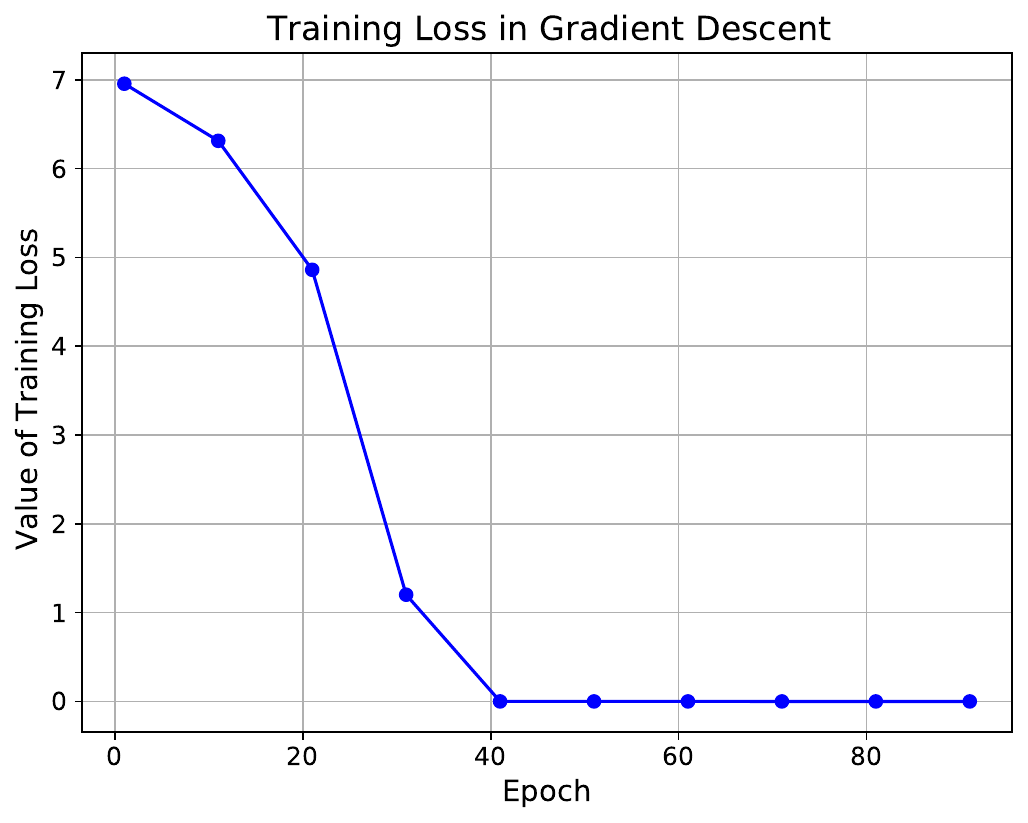}}
\subfigure[Angle in $\vb_1^{(t)}$ and $\vb^*$]{
\label{subfig_add:2}
\includegraphics[width=0.32\textwidth]{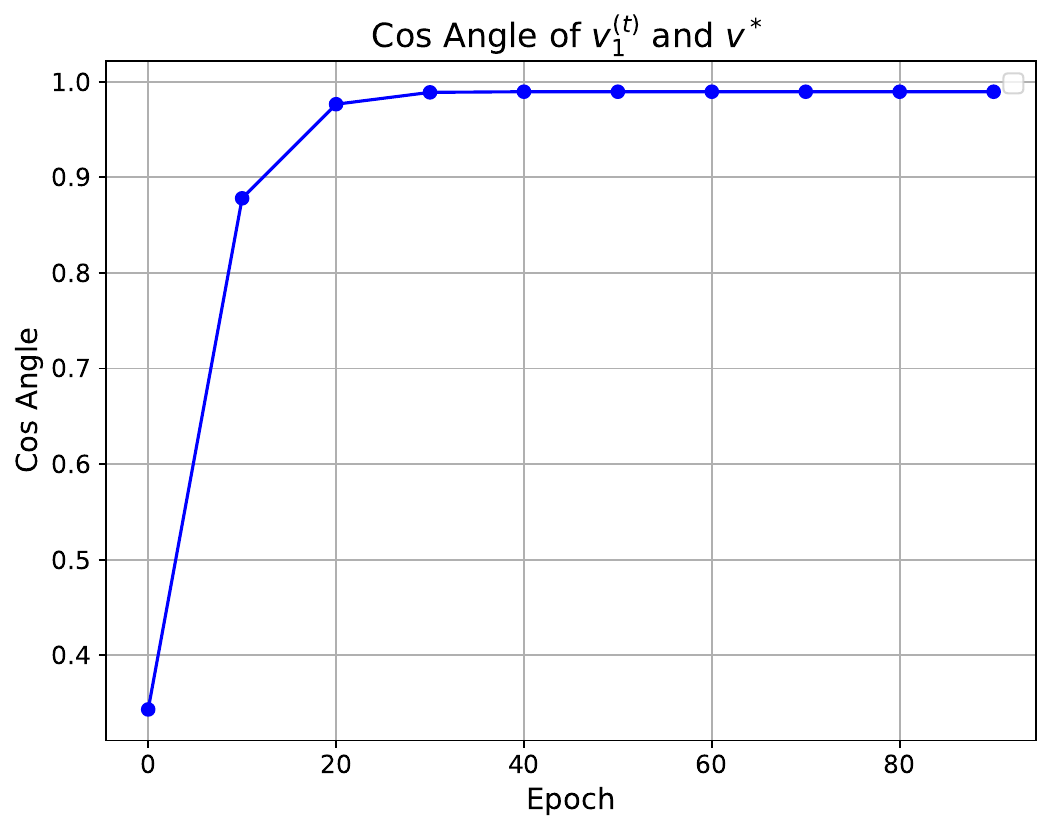}}
\subfigure[Attention Matrix]{
\label{subfig_add:3}
\includegraphics[width=0.32\textwidth]{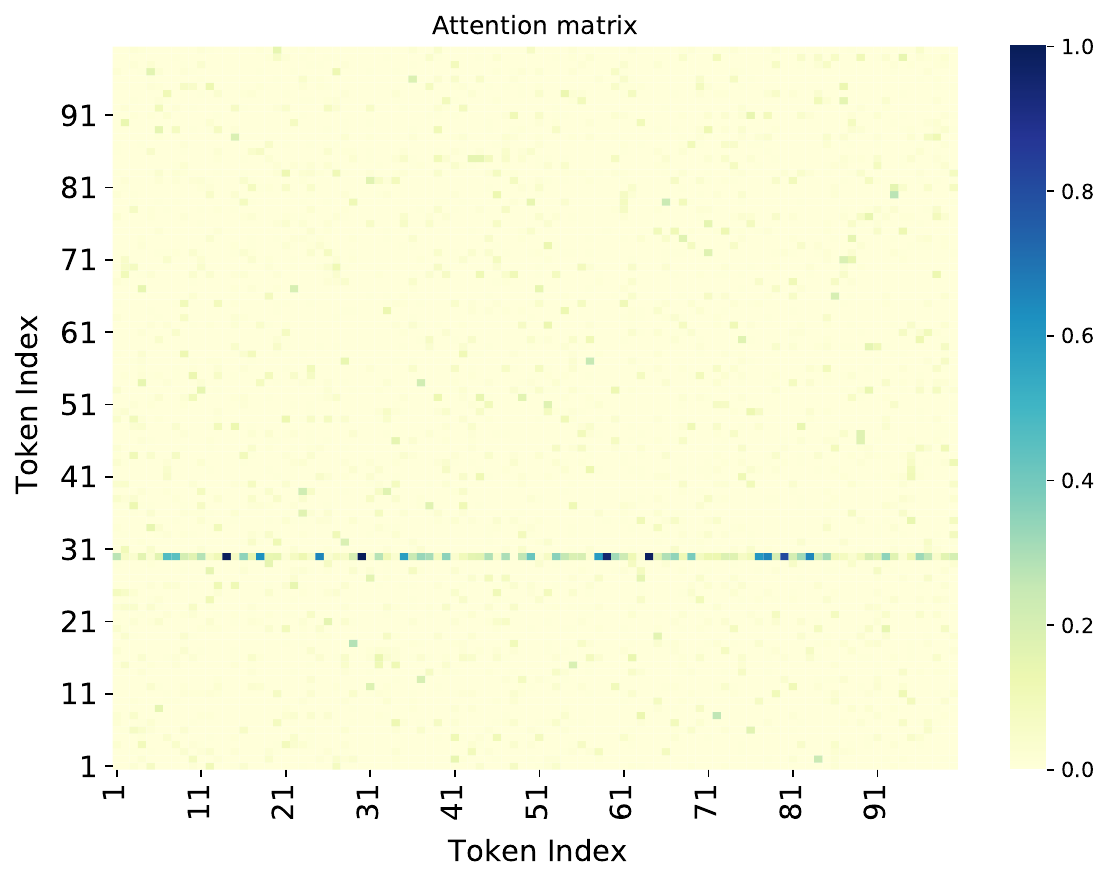}}

\subfigure[Training Loss]{
\label{subfig_add:4}
\includegraphics[width=0.32\textwidth]{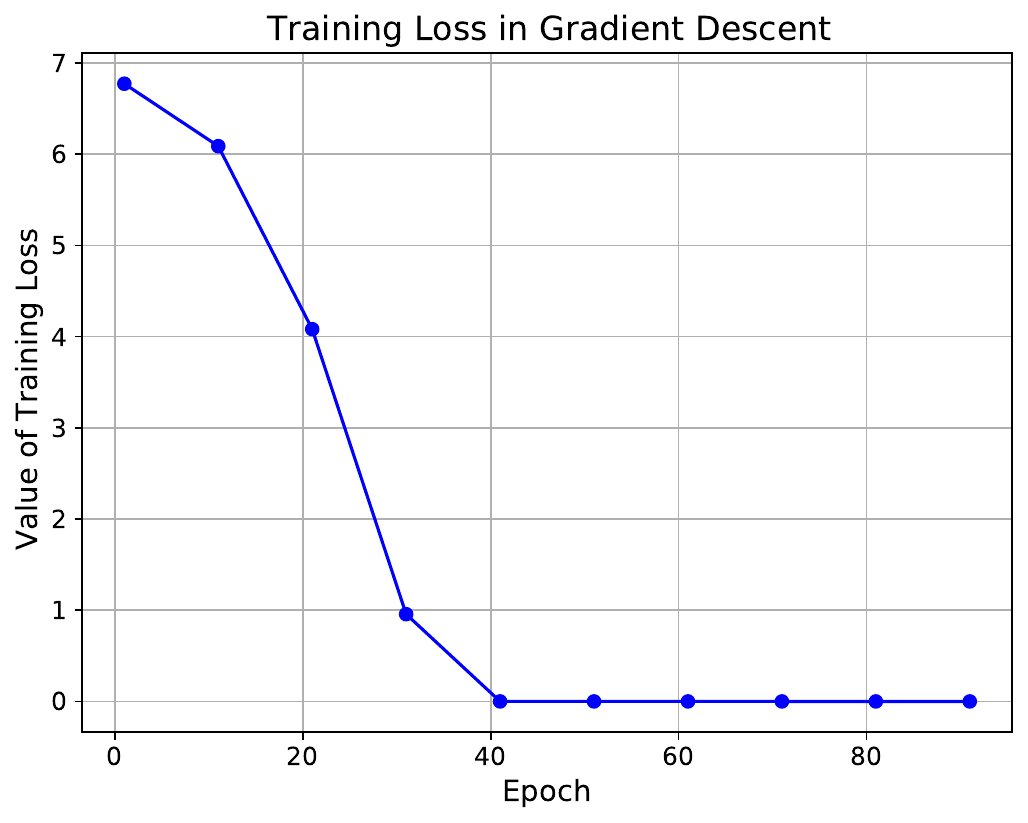}}
\subfigure[Angle in $\vb_1^{(t)}$ and $\vb^*$]{
\label{subfig_add:5}
\includegraphics[width=0.32\textwidth]{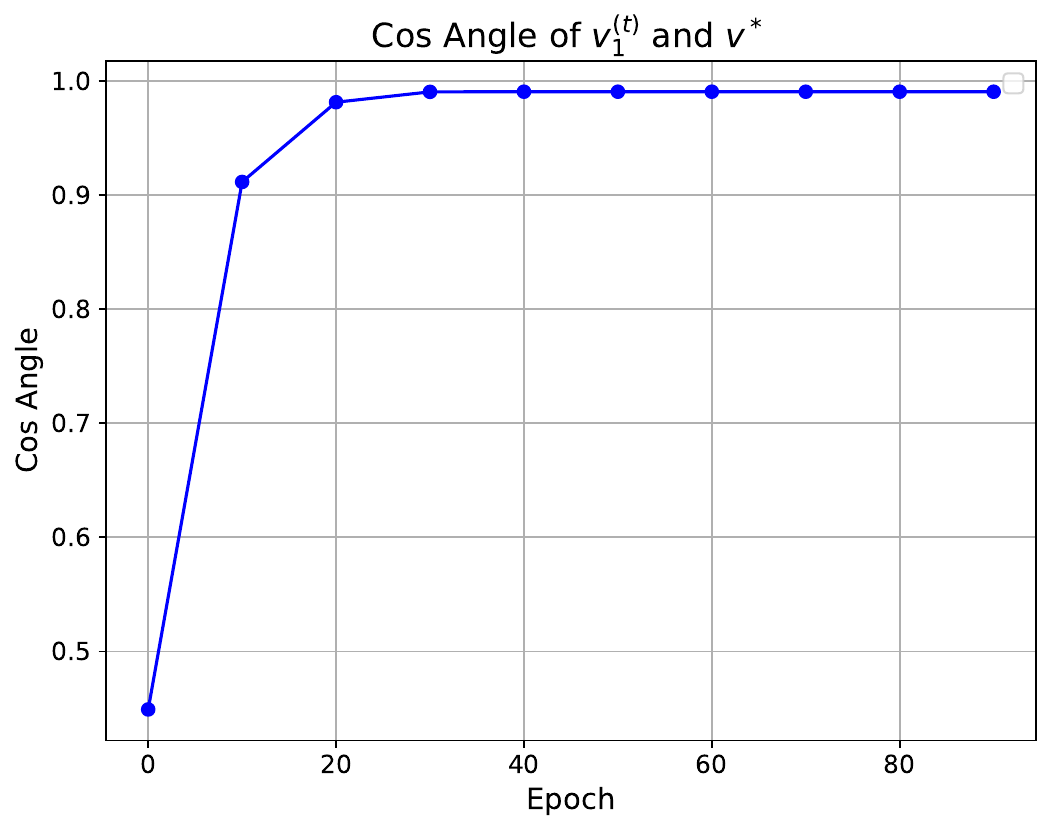}}
\subfigure[Attention Matrix]{
\label{subfig_add:6}
\includegraphics[width=0.32\textwidth]{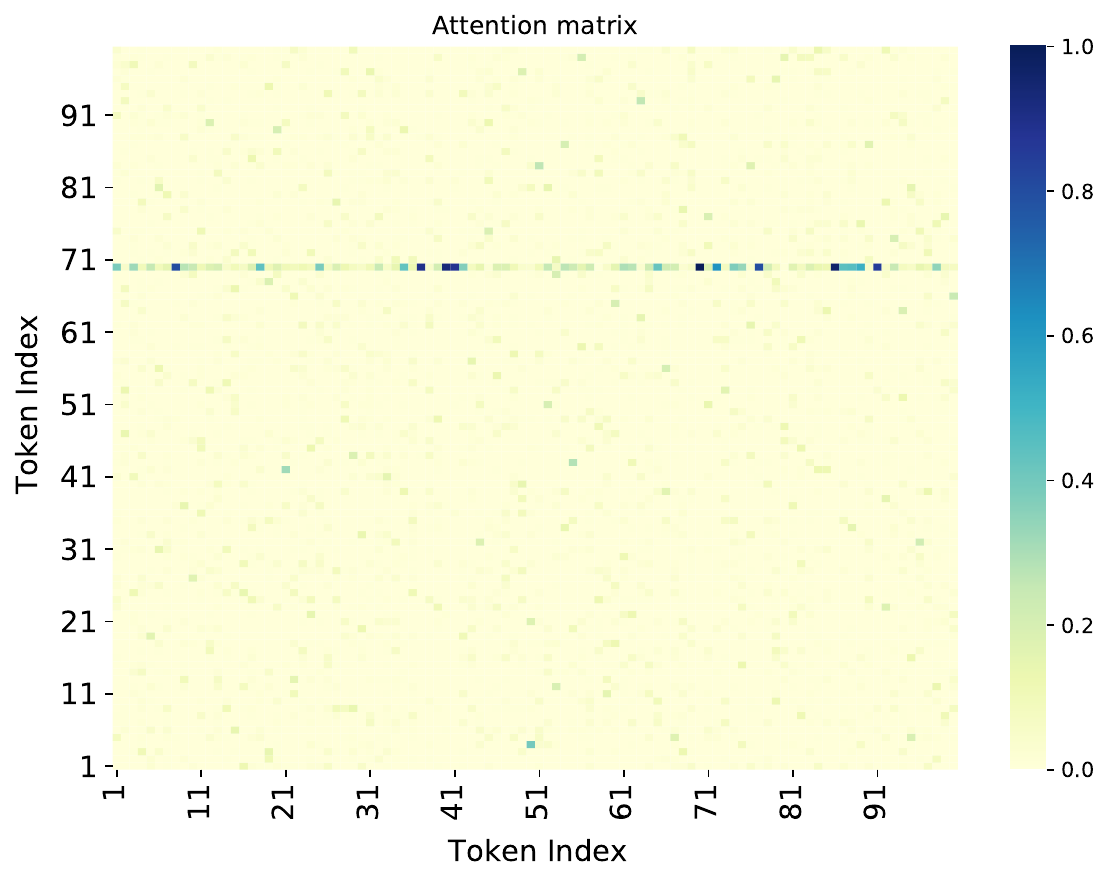}}
\caption{Figures on training loss, cosine similarity and attention matrix. The first line presents the training results with $j^*=30$. The second line shows the training results for $j^*=70$.}
\label{fig:simulationsadd}
\end{figure}

As shown in Figure~\ref{fig:simulationsadd}, the training loss steadily decreases, eventually approaching zero after sufficient training iterations, and $\vb_1^{(t)}$ rapidly aligns with the direction of $\vb^*$ even when $d$ and $D$ have a higher dimension. More interestingly, when the value of $j^*$ varies, the attention mechanism consistently adapts to the target index. For instance, when $j^*= 30$, the attention matrix sharply focuses on the 30th row and effectively isolates the label-relevant group. The focus shifts with the same precision to $j^*=70$. This behavior highlights the model's ability to redirect its attention to the specified index, even in high-dimensional settings designed to mimic the complexity of image data.

\section{Real Data Experiments}

In this section, we conduct experiments using the CIFAR-10 dataset, where each image has a shape of $3 \times 32 \times 32$, representing three color channels (RGB). For this experiment, we select two labels, “Frog” and “Airplane,” and use 500 images from each label. To prepare the input for our framework, each CIFAR-10 image is embedded as either the first patch (positioned at (1,1)) or the 25th patch (positioned at (4,4)) in a grid, while the remaining 48 patches are filled with noise. These noise patches, each of size $3 \times 32 \times 32$, are generated using random values sampled from a Gaussian distribution with a mean of $0$ and a standard deviation of $1/3$. This arrangement forms a $7 \times 7$ grid (49 patches in total), resulting in a final input with dimensions of $3 \times 224 \times 224$. Examples of the processed images are shown in Figure~\ref{figreal:example}. 

\begin{figure}[ht!]
    \centering
\subfigure[``Frog'' Images at position (1,1)]{
\label{subfigreal:example}
\includegraphics[width=0.48\textwidth]{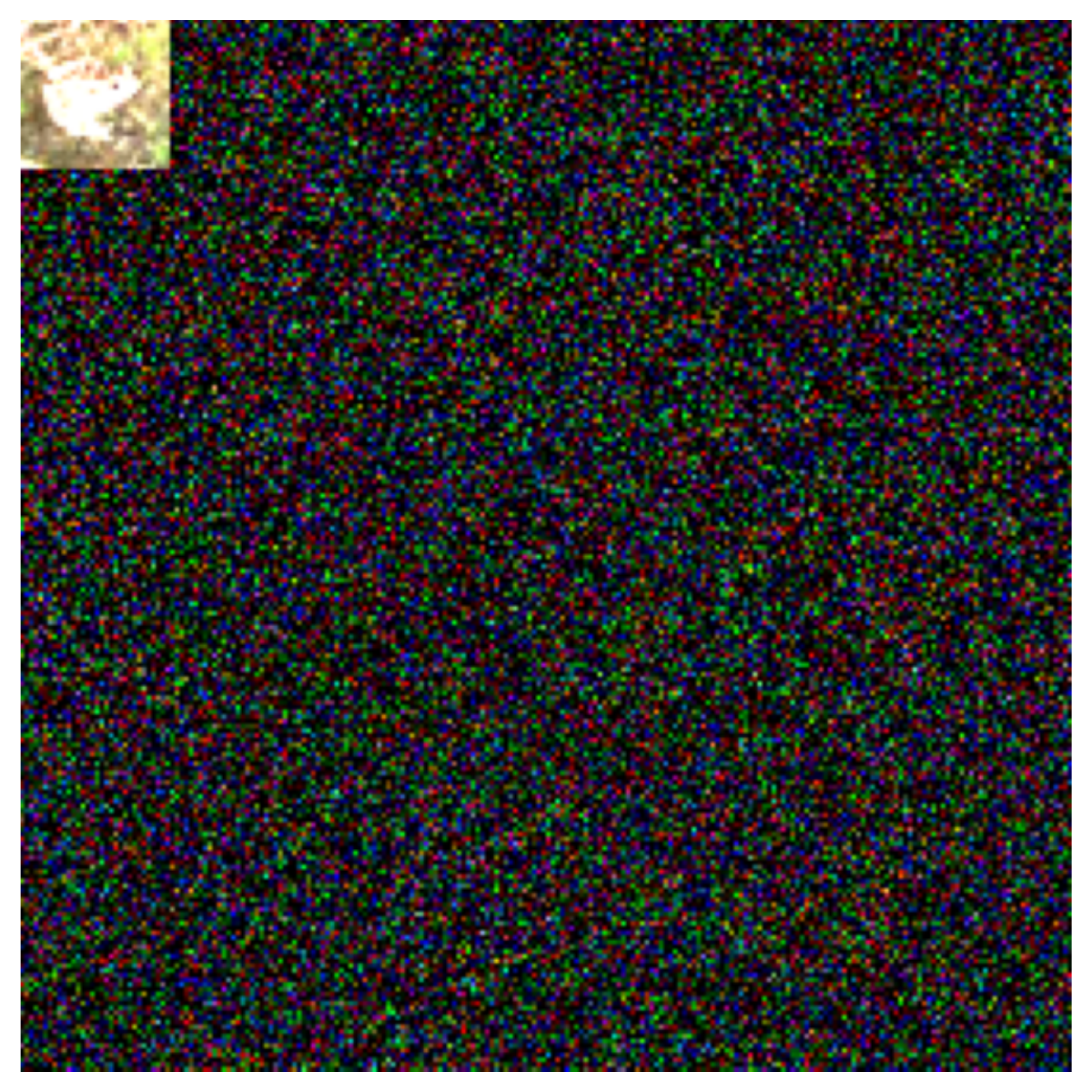}}
\subfigure[``Airplane'' Images at position (1,1)]{
\label{subfigreal:example1}
\includegraphics[width=0.48\textwidth]{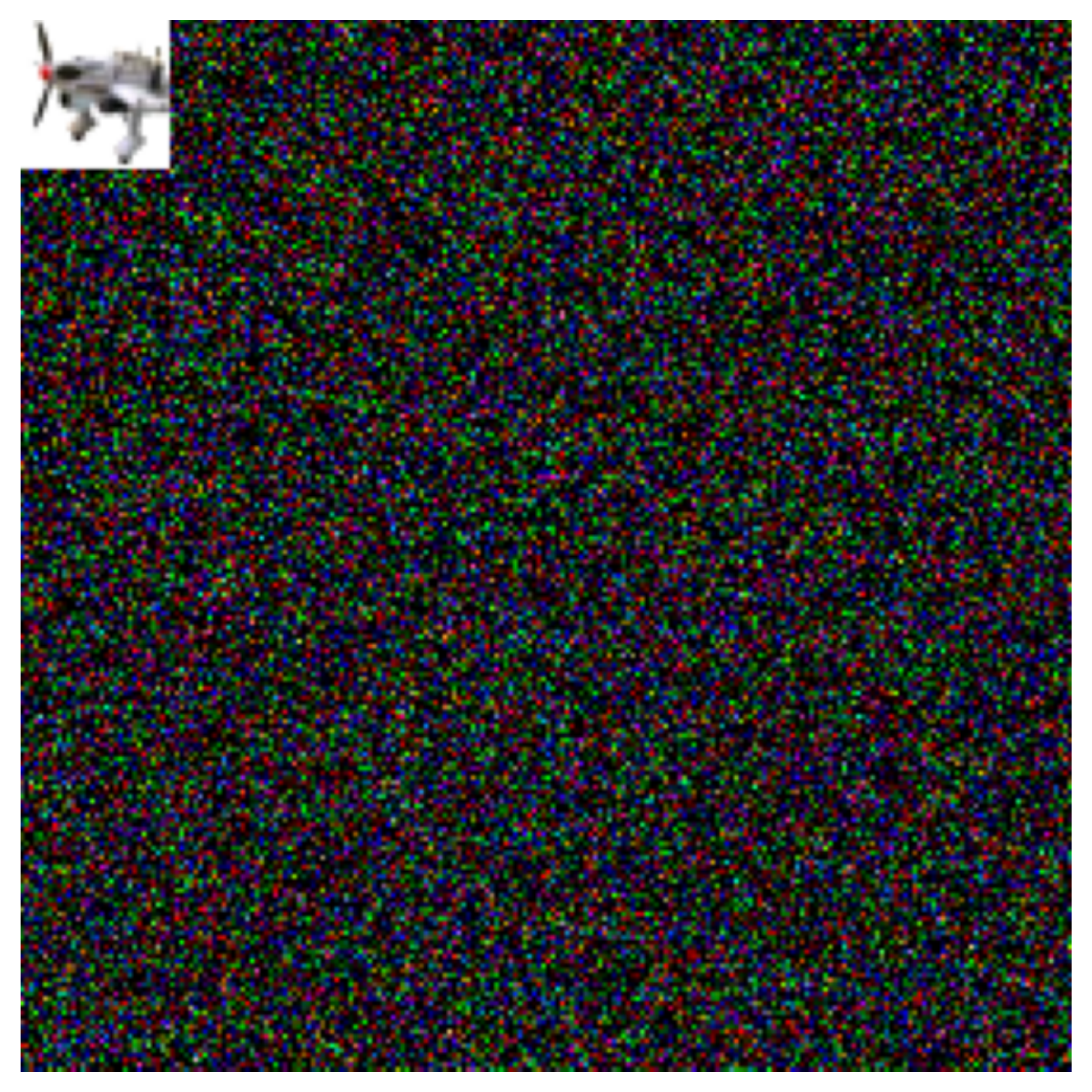}}

\subfigure[``Frog'' Images at position (4,4)]{
\label{subfigreal:examplediff}
\includegraphics[width=0.48\textwidth]{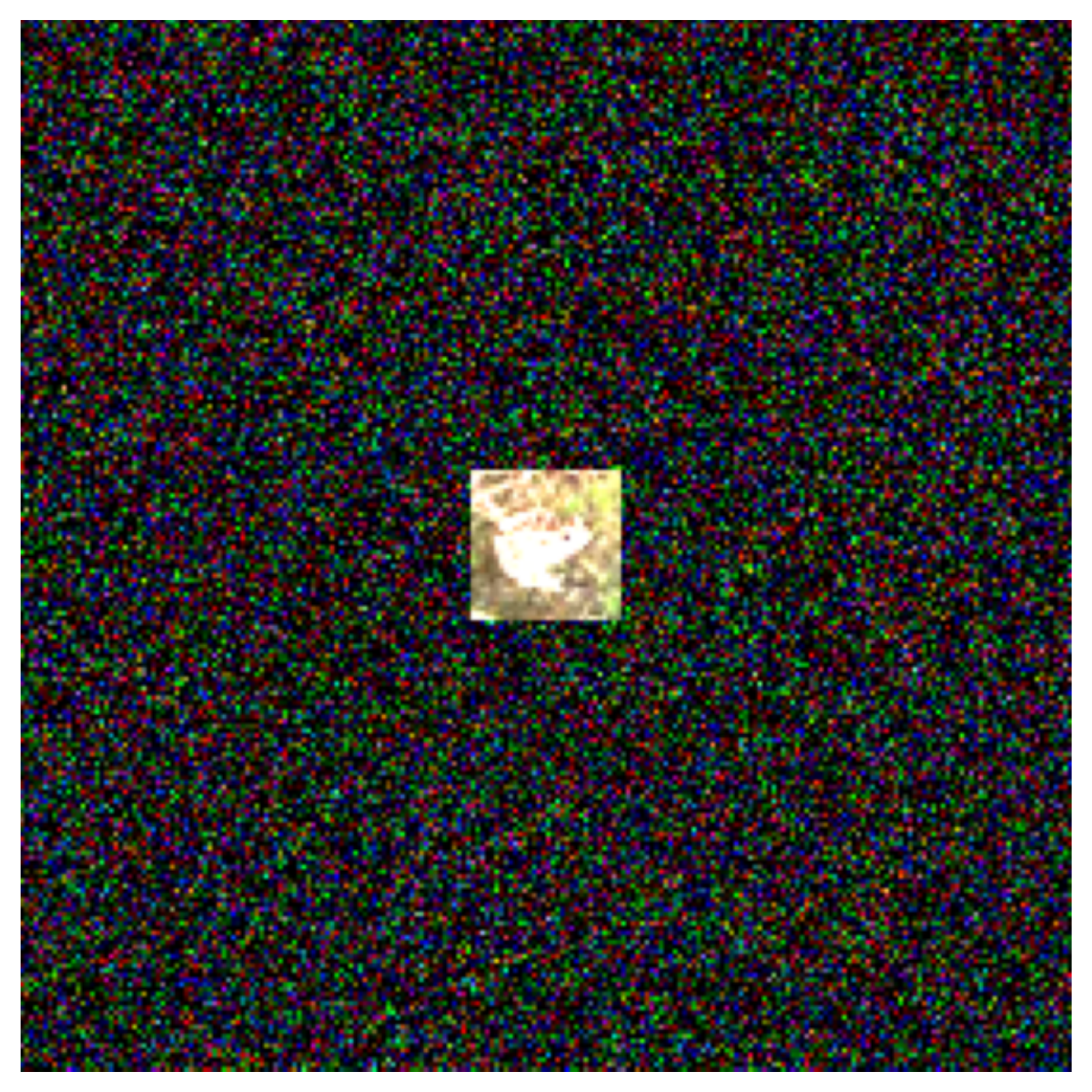}}
\subfigure[``Airplane'' Images at position (4,4)]{
\label{subfigreal:example1diff}
\includegraphics[width=0.48\textwidth]{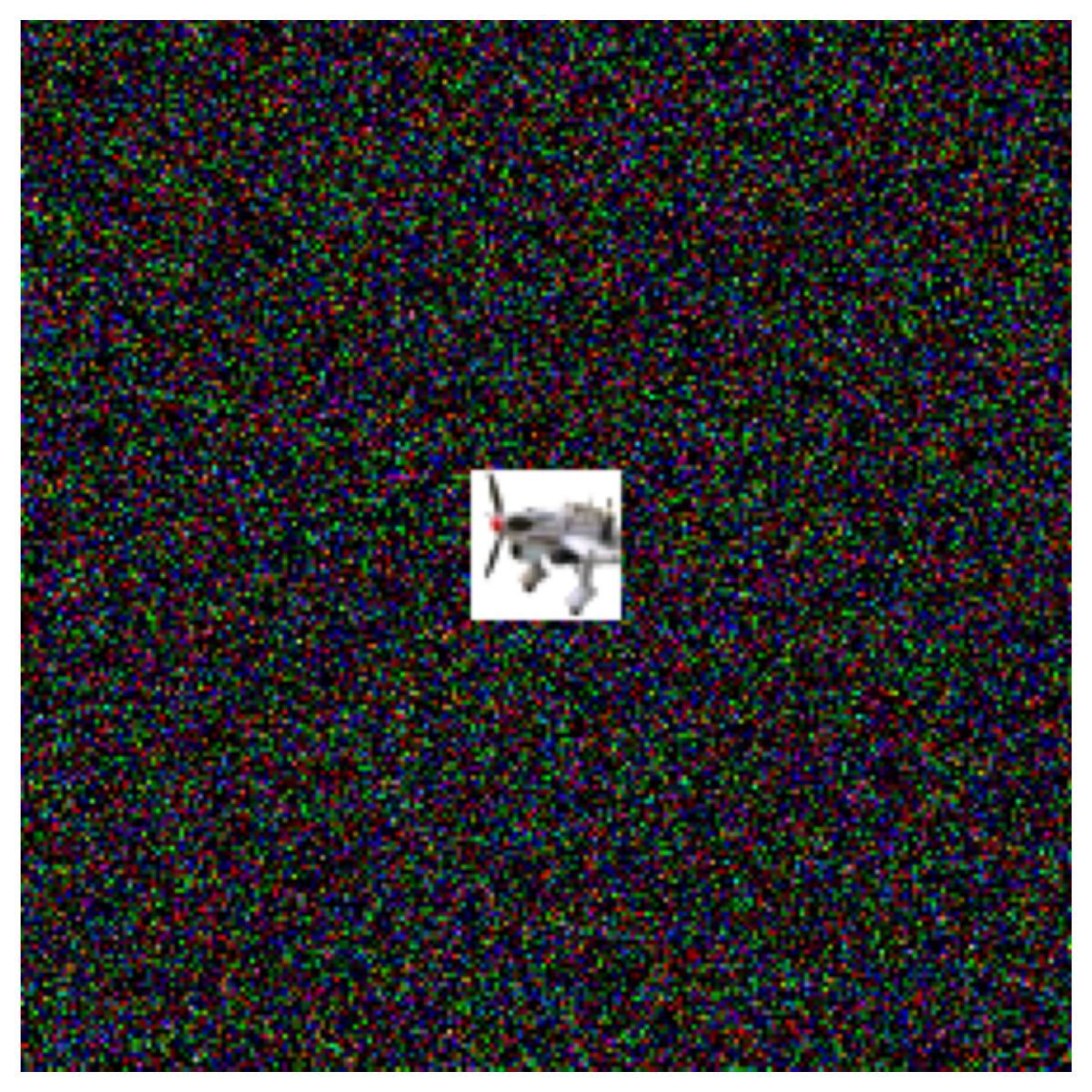}}
\caption{Examples of embedded images. Figure~\ref{subfigreal:example} and Figure~\ref{subfigreal:example1} show images labeled as “Frog” and “Airplane,” respectively, embedded at position (1,1) with a token index of $1$. Figure~\ref{subfigreal:examplediff} and Figure~\ref{subfigreal:example1diff} show images labeled as “Frog” and “Airplane,” respectively, embedded at position (4,4) with a token index of $25$.}
\label{figreal:example}
\end{figure}

\begin{figure}[ht!]
    \centering
\subfigure[Training Loss at position (1,1)]{
\label{subfigreal:loss}
\includegraphics[width=0.4\textwidth]{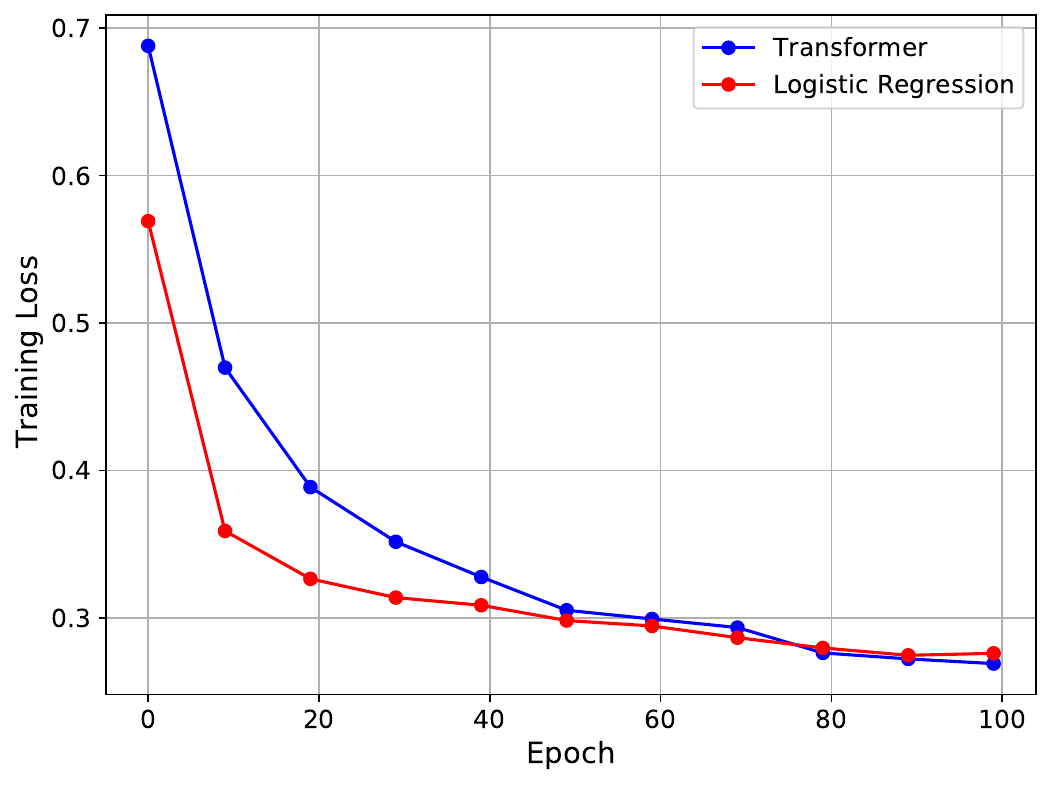}}
\subfigure[Training Loss at position (4,4)]{
\label{subfigreal:lossdiffpos}
\includegraphics[width=0.4\textwidth]{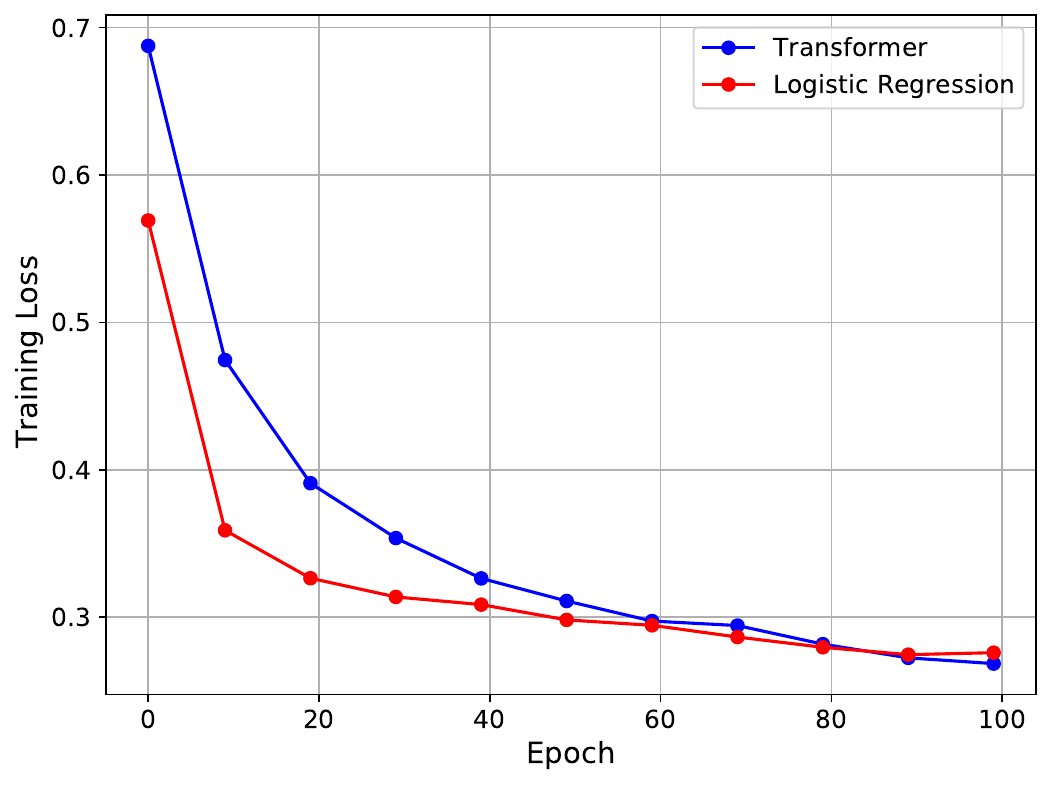}}

\subfigure[Attention Matrix at position (1,1)]{
\label{subfigreal:attenS}
\includegraphics[width=0.4\textwidth]{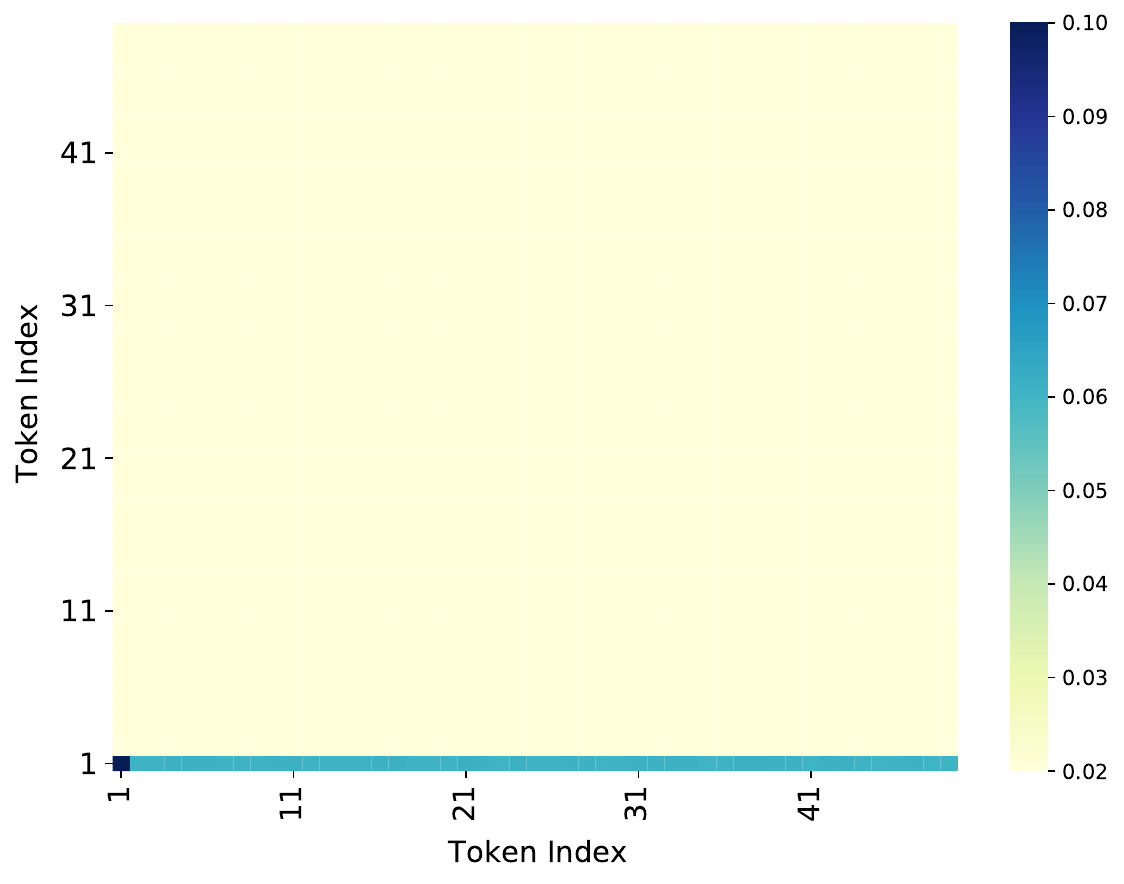}}
\subfigure[Attention Matrix at position (4,4)]{
\label{subfigreal:attenSdiffpos}
\includegraphics[width=0.4\textwidth]{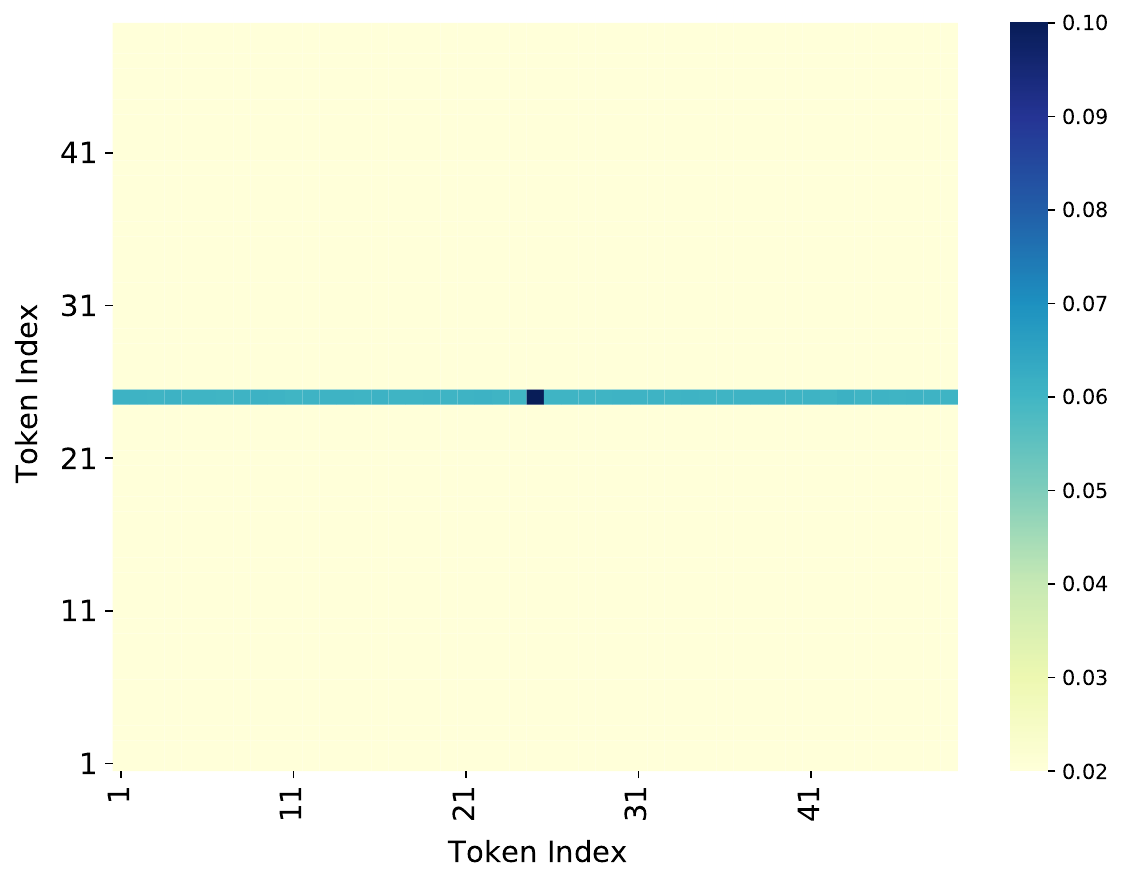}}

\subfigure[Testing Accuracy at position (1,1)]{
\label{subfigreal:test}
\includegraphics[width=0.4\textwidth]{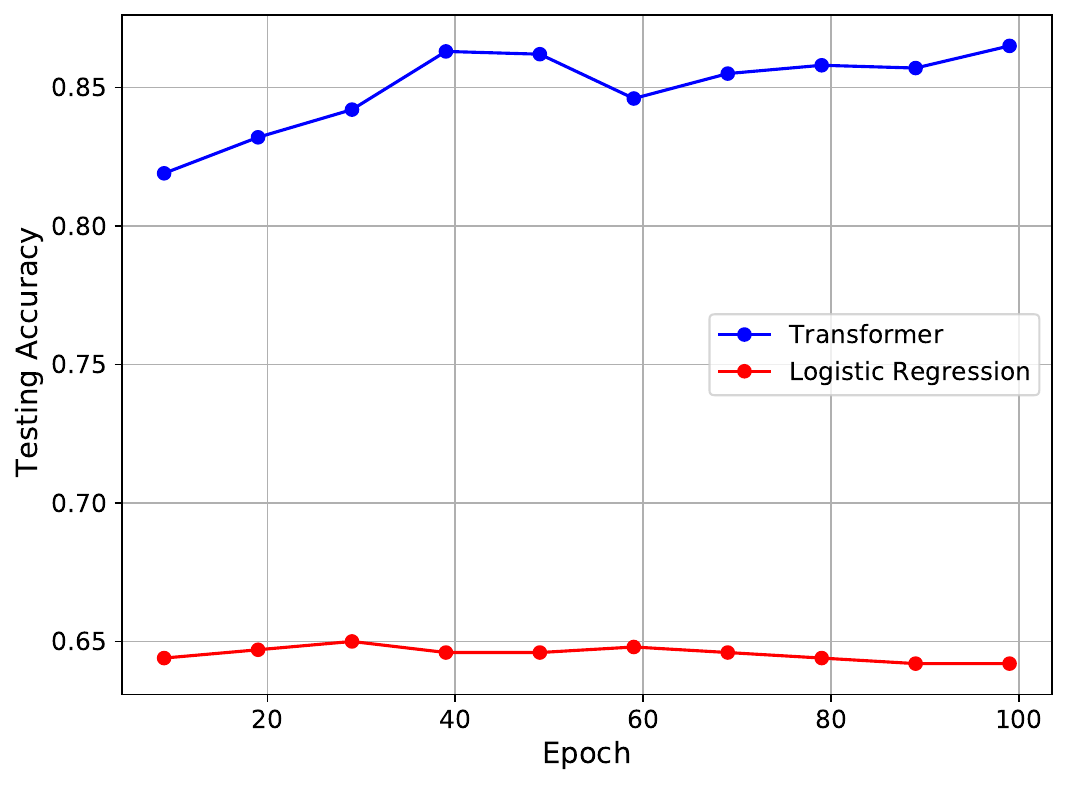}}
\subfigure[Testing Accuracy at position (4,4)]{
\label{subfigreal:testdiffpos}
\includegraphics[width=0.4\textwidth]{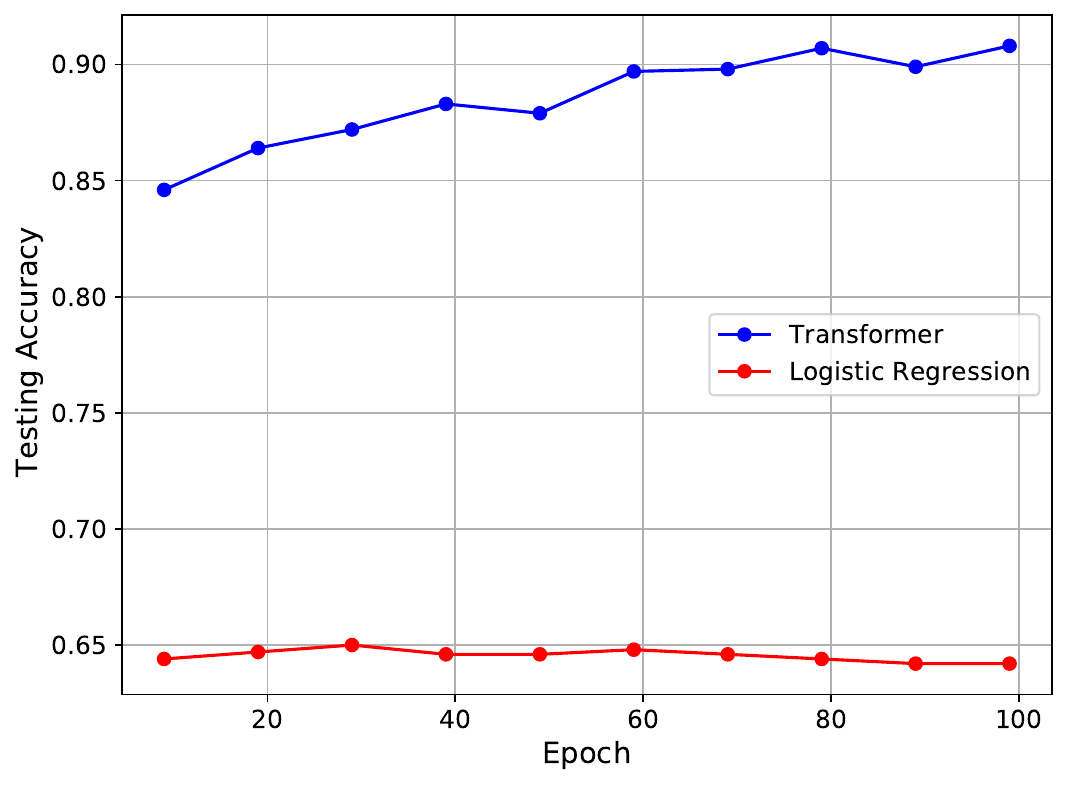}}
\caption{Experiment results on training loss, attention matrix and testing accuracy.  
The first column shows the results when images have position at (1,1). The second column shows the results when images have position at (4,4).}
\label{figreal:result}
\end{figure}

We apply a one-layer transformer to learn from the processed images. Each patch, including the original CIFAR-10 image and the 48 noise patches, is flattened into a vector of size $3 \times 32 \times 32 = 3072$, resulting in a total of 49 vectors corresponding to the $7 \times 7$ grid of patches. These vectors, each with a dimension of $3072$, form the input sequence for the transformer model. As shown in Figure~\ref{subfigreal:example} and Figure~\ref{subfigreal:example1}, images positioned at (1,1) correspond to $j^* = 1$, while Figure~\ref{subfigreal:examplediff} and Figure~\ref{subfigreal:example1diff} show images positioned at (4,4), corresponding to $j^* = 25$. In this setup, $d = 3072$ represents the dimensionality of the data, and $D = 49$ denotes the number of variable groups. The transformer model is initialized to $0$, and we train it using a batch size of $64$ and a learning rate of $10^{-3}$.  For comparison, we also present results from directly applying logistic regression to the clean data points, where each CIFAR-10 image is flattened into a single vector of size $3072$, and logistic regression is performed on these single vectors.



The results of our experiment are presented in Figures~\ref{figreal:result}. Figures~\ref{subfigreal:loss} and \ref{subfigreal:lossdiffpos} show the training loss over 100 epochs using gradient descent. The plot demonstrates a steady decrease in the loss during training, which closely aligns with the decreasing trend observed in the clean logistic regression model. This indicates the effectiveness of the optimization process and the transformer model’s ability to focus on the true images.
Figures~\ref{subfigreal:attenS} and \ref{subfigreal:attenSdiffpos} display the attention matrices for the trained images, further confirming the model’s ability to focus on relevant features. For images positioned at (1,1) with $j^* = 1$, the attention matrix shows that the model focuses predominantly on the first row, corresponding to the original CIFAR-10 image. Similarly, for images positioned at (4,4) with $j^* = 25$, the attention matrix highlights that the model concentrates on the 25th row, again corresponding to the true image.
Figures~\ref{subfigreal:test} and \ref{subfigreal:testdiffpos} demonstrate the generalization performance of the transformer on an unseen test dataset. The results show that the transformer maintains strong performance, further verifying its ability to effectively generalize to new data.

\bibliography{ref}
\bibliographystyle{ims}

\end{document}